\newif\ifarxiv
\newif\ifanon
    \definecolor{mylinkcolor}{HTML}{CC0000} %
\newcolumntype{Y}{>{\centering\arraybackslash}X}
\newcolumntype{Z}{>{\raggedright\arraybackslash\small}X}
\newcommand{\vecW}{{\bf{W}}}
\newcommand{\vecA}{{\bf{A}}}
\newcommand{\vecV}{{\bf{V}}}
\newcommand{\vecP}{{\bf{P}}}
\newcommand{\vecD}{{\bf{D}}}
\newcommand{\vecI}{{\bf{I}}}
\newcommand{\vecE}{{\bf{E}}}
\newcommand{\vecC}{{\bf{C}}}
\newcommand{\vecL}{{\bf{L}}}
\newcommand{\rowsoftmax}{\tt{row\_softmax}}
\newcommand{\vecu}{{\bf{u}}}
\newcommand{\vecv}{{\bf{v}}}
\newcommand{\vecw}{{\bf{w}}}
\newcommand{\vece}{{\bf{e}}}
\newcommand{\vecr}{{\bf{r}}}
\newcommand{\vecLambda}{{\bf{\Lambda}}}
\newcommand{\nbr}{{\tt nbr}}
\newcommand{\Wassoc}{\vecW_{\tt assoc}}
\newcommand{\umap}{{\tt UMAP}}
\newcommand{\nodetovec}{\texttt{Node2Vec}}
\newcommand{\wordtovec}{\texttt{Word2Vec}}
\newtheorem{lemma}{Lemma}
\newtheorem{proposition}{Proposition}
\newtheorem{fact}{Fact}
\newtheorem{remark}{Remark}
\newcounter{hypothesis}
\newcounter{subhypothesisctr}[hypothesis]
\crefname{hypothesis}{Expl.}{Expls.}
\Crefname{hypothesis}{Plausible Explanation}{Plausible Explanations}
\def\p@subhypothesisctr{\thehypothesis} %
\newcommand{\starthypothesisset}{%
    \stepcounter{hypothesis}%
    \setcounter{subhypothesisctr}{0}%
}
\newenvironment{hypothesisinner}[1][]
{%
    \ifstrempty{#1}%
        {\begin{trivlist}\item[\hskip\labelsep \bfseries Explanation.]}%
        {\begin{trivlist}\item[\hskip\labelsep \bfseries #1.]}%
    \itshape
}
{\end{trivlist}}
\newenvironment{subhypothesis}[1][]{%
    \gdef\thehypothesis{\arabic{hypothesis}\alph{subhypothesisctr}}%
    \refstepcounter{subhypothesisctr}%
    \def\hypothesislabel{\textbf{Plausible Explanation \thehypothesis}\ifstrempty{#1}{}{ (#1)}}%
    \begin{hypothesisinner}[\hypothesislabel]%
}{%
    \end{hypothesisinner}%
}
\newcounter{observation}
\newcounter{subobservationctr}[observation]
\crefname{observation}{Observation}{Observations}
\def\p@subobservationctr{\theobservation} %
\newcommand{\startobservationset}{%
    \stepcounter{observation}%
    \setcounter{subobservationctr}{0}%
}
\newenvironment{observationinner}[1][]
{%
    \ifstrempty{#1}%
        {\begin{trivlist}\item[\hskip\labelsep \bfseries Observation.]}%
        {\begin{trivlist}\item[\hskip\labelsep \bfseries #1.]}%
    \itshape
}
{\end{trivlist}}
\newenvironment{observation}[1][]{%
    \gdef\theobservation{\arabic{observation}}%
    \refstepcounter{observation}%
    \def\observationlabel{\textbf{Observation \theobservation}\ifstrempty{#1}{}{ (#1)}}%
    \begin{observationinner}[\observationlabel]%
}{%
    \end{observationinner}%
}
\newenvironment{subobservation}[1][]{%
    \gdef\theobservation{\arabic{observation}\alph{subobservationctr}}%
    \refstepcounter{subobservationctr}%
    \def\observationlabel{\textbf{Observation \theobservation}\ifstrempty{#1}{}{ (#1)}}%
    \begin{observationinner}[\observationlabel]%
}{%
    \end{observationinner}%
}
\newcounter{refutation}
\newcounter{subrefutationctr}[refutation]
\crefname{refutation}{Refutation}{Refutations}
\Crefname{refutation}{Refutation}{Refutations}
\def\p@subrefutationctr{\therefutation} %
\newcommand{\startrefutationset}{%
    \stepcounter{refutation}%
    \setcounter{subrefutationctr}{0}%
}
\newenvironment{refutationinner}[1][]
{%
    \ifstrempty{#1}%
        {\begin{trivlist}\item[\hskip\labelsep \bfseries Refutation.]}%
        {\begin{trivlist}\item[\hskip\labelsep \bfseries #1.]}%
    \itshape
}
{\end{trivlist}}
\newenvironment{subrefutation}[1][]{%
    \gdef\therefutation{\arabic{refutation}\alph{subrefutationctr}}%
    \refstepcounter{subrefutationctr}%
    \def\refutationlabel{\textbf{Refutation \therefutation}\ifstrempty{#1}{}{ (#1)}}%
    \begin{refutationinner}[\refutationlabel]%
}{%
    \end{refutationinner}%
}
\newcounter{definition}
\newcounter{subdefinitionctr}[definition]
\crefname{definition}{Def.}{Defs.}
\Crefname{definition}{Definition}{Definitions}
\def\p@subdefinitionctr{\thedefinition} %
\newcommand{\startdefinitionset}{%
    \stepcounter{definition}%
    \setcounter{subdefinitionctr}{0}%
}
\newenvironment{definitioninner}[1][]
{%
    \ifstrempty{#1}%
        {\begin{trivlist}\item[\hskip\labelsep \bfseries Definition.]}%
        {\begin{trivlist}\item[\hskip\labelsep \bfseries #1.]}%
    \itshape
}
{\end{trivlist}}
\newenvironment{subdefinition}[1][]{%
    \gdef\thedefinition{\arabic{definition}\alph{subdefinitionctr}}%
    \refstepcounter{subdefinitionctr}%
    \def\definitionlabel{\textbf{Definition \thedefinition}\ifstrempty{#1}{}{ (#1)}}%
    \begin{definitioninner}[\definitionlabel]%
}{%
    \end{definitioninner}%
}
\newtheorem{conjecture}{Conjecture}
\newenvironment{intuition}{\begin{proof}}{\end{proof}}
\theoremstyle{definition}
\newtheorem{assumption}{Assumption}
\newcommand{\granularity}{\Delta}
\newcommand{\recallpred}{\texttt{Predec}}
\newcommand{\vocab}{\mathbb{V}}
\newcommand{\embeddingdim}{m}
\newcommand{\embedding}{{\bf{\Phi}}}
\newcommand{\embeddinggeom}{{\bf{\Phi}}_{\tt geom}}
\newcommand{\pathembedding}{\mathbf{z}}
\newcommand{\numentities}{n}
\newcommand{\pathlength}{\ell}
\newcommand{\degree}{d}
\newcommand{\prefix}{\bm{p}}
\newcommand{\response}{\bm{r}}
\newcommand{\responsetoken}{r}
\newcommand{\distr}{\mathcal{D}}
\newcommand{\edgedistr}{\mathcal{D}_{\texttt{edge}}}
\newcommand{\forwardedgedistr}{\mathcal{D}_{\texttt{edge}}^{\to}}
\newcommand{\backwardedgedistr}{\mathcal{D}_{\texttt{edge}}^{\gets}}
\newcommand{\forwardpathdistr}{\mathcal{D}_{\texttt{path}}^{\to}}
\newcommand{\backwardpathdistr}{\mathcal{D}_{\texttt{path}}^{\gets}}
\newcommand{\goal}{{v}_{\tt{goal}}}
\newcommand{\leaf}{{v}_{\tt{leaf}}}
\newcommand{\start}{{v}_{\tt{root}}}
\newcommand{\node}{v}
\newcommand{\asseq}{\texttt{adj}}
\newcommand{\vertices}{{V}}
\newcommand{\neighbor}{\texttt{nbr}}
\newcommand{\graph}{\mathcal{G}}
\newcommand{\treeGraph}{\mathcal{T}}
\newcommand{\edges}{E}
\newcommand{\pause}{\texttt{[PAUSE]}}
\newcommand{\gptmid}{\texttt{GPT-mid}}
\newcommand{\NTP}{\texttt{NTP}}
\newcommand{\SSM}{\texttt{SSM}}
\newcommand{\numlayers}{N_{\tt layer}}
\newcommand{\modelwidth}{m_{\tt width}}
\newcommand{\numheads}{m_{\tt head}}
\newcommand{\numpauses}{N_{\tt pause}}
\newcommand{\embeddingmatrix}{\vecV}
\newcommand{\tinyGPT}{\texttt{TinyGPT}}
\newcommand{\tinyNN}{\texttt{TinyNN}}
\newcommand{\tinySSM}{\texttt{TinySSM}}
\newcommand{\SVD}{\texttt{SVD}}
\definecolor{mymagenta}{HTML}{CC79A7}
\definecolor{myyellow}{HTML}{E6C373}
    \newcommand{\flexicitet}[1]{\citep{#1}}
    \newcommand{\flexicitet}[1]{\citet{#1}}
\newcommand{\chunkbreak}{
    \begin{center}
\textcolor{lightgray}{\rule{0.25\columnwidth}{0.3pt}}
\end{center}
}
\newtcolorbox{explanationbox}{
    colback=gray!5,       %
    colframe=gray!40,     %
    leftrule=2pt,         %
    rightrule=0pt,
    toprule=0pt,
    bottomrule=0pt,
    arc=0pt,
    left=5pt,
    right=5pt,
    top=5pt,
    bottom=5pt,
    enhanced
}
\newtcolorbox{refutationbox}{
    colback=red!3,        %
    colframe=red!30,      %
    leftrule=2pt,         %
    rightrule=0pt,
    toprule=0pt,
    bottomrule=0pt,
    arc=0pt,
    left=5pt,
    right=5pt,
    top=5pt,
    bottom=5pt,
    enhanced
}
\newif\ifdraft
\icmltitlerunning{Deep Sequence Models Tend to Memorize Geometrically; It is Unclear Why.}
\begin{document}

    \twocolumn[
      \icmltitle{Deep sequence models tend to memorize geometrically; \\ it is unclear why.}
      
      \icmlsetsymbol{equal}{*}
      \icmlsetsymbol{sr}{$\dagger$} %
      \begin{icmlauthorlist}
        \icmlauthor{Shahriar Noroozizadeh}{cmu,sr}
        \icmlauthor{Vaishnavh Nagarajan}{google}
        \icmlauthor{Elan Rosenfeld}{google}
        \icmlauthor{Sanjiv Kumar}{google}
      \end{icmlauthorlist}
        
      \icmlaffiliation{cmu}{
        Machine Learning Department \& Heinz College,
        Carnegie Mellon University,
        Pittsburgh, PA, USA
      }
        
      \icmlaffiliation{google}{
        Google Research, NY, USA
      }

        \icmlcorrespondingauthor{Shahriar Noroozizadeh}{snoroozi@cs.cmu.edu}
        \icmlcorrespondingauthor{Vaishnavh Nagarajan}{vaishnavh@google.com}
    
      \icmlkeywords{Geometric Memory, ICML}
      \vskip 0.3in
    ]
    
    \printAffiliationsAndNotice{{\textsuperscript{$\dagger$}} Work done during internship at Google Research.}  %

\begin{abstract}
Deep sequence models are said to store atomic facts predominantly in the form of \textit{associative} memory: a brute-force lookup of co-occurring entities. We identify a dramatically different form of storage of atomic facts that we term as \textit{geometric} memory. Here, the model has synthesized embeddings encoding novel \textit{global} relationships between all entities, including ones that do not co-occur in training. Such  storage is powerful: for instance, we show how it transforms a hard reasoning task involving an $\ell$-fold composition into an easy-to-learn $1$-step navigation task.
\ifarxiv

\else{ }\fi
From this phenomenon, we extract fundamental aspects of neural embedding geometries that are hard to explain. We argue that 
the rise of such a geometry, as against a lookup of {local} associations, cannot be straightforwardly attributed to typical supervisory, architectural, or optimizational pressures. Counterintuitively, a geometry is learned even when it is more complex than the brute-force lookup.\looseness=-1
\ifarxiv

\else{ }\fi
Then, by analyzing a connection to \nodetovec{}, we demonstrate how the geometry stems from a spectral bias that---in contrast to prevailing theories---indeed arises naturally despite the lack of various pressures.
This analysis also points out to practitioners a visible headroom to make Transformer memory more strongly geometric.
We hope the geometric view of parametric memory encourages revisiting the default intuitions that guide researchers in areas like knowledge acquisition, capacity, discovery, and unlearning. \footnotemark
\end{abstract}

\footnotetext{
Code for reproduction is at:\\ \url{https://github.com/shahriarnz14/geometric_memory}.}

\section{Introduction}

When succinct high-level patterns explain the data, deep sequence models produce corresponding high-level representations, as often witnessed in natural language \ifarxiv~\citep{elhage22superposition,park24geometry,mikolov13concepts} \fi and arithmetic tasks\ifarxiv ~\citep{liu22arithmetic,nanda23arithmetic,musat2024arithmetic,gromov2023arithmetic}. \fi  When no such patterns exist (e.g., as in the capitals of countries), models are said to default to a brute-force lookup, known as \textit{associative} memory \citep{radhakrishnan20overparameterized,bietti23birth}. These
two narratives have so far roughly guided our understanding of how neural networks fit sequential
data. We 
highlight a third behavior overlooked in this narrative: even when memorizing such incompressible atomic co-occurrences, a deep sequence model can produce powerful representations. This implies a distinct form of parametric memory that we term as \textit{geometric} memory---glimpses of which can be found scattered in recent observations \citep{khona24synthetic,nishi25shattering,huang2025factorization,ye2025implicit}.  In opposition to the well-studied associative memory, a geometric memory has synthesized ``global'' information not explicit in the local co-occurrences,
enabling new forms of reasoning. 
From this behavior,
we extract aspects of neural embedding geometries that are hard to explain, raising fundamental questions about memorization in deep sequence models. To these questions, we offer some preliminary answers.

To understand parametric memory, we study simple tasks involving graph memorization. These are tasks where the model is made to memorize the edge bigrams of an underlying graph in the model's weights. As a starting point, we consolidate a fragmented set of recent demonstrations
\citep{khona24synthetic,wang24grokking,feng2024extractive,geerts25relational,ye2025implicit,huang2025factorization} that, over such graphs, the Transformer can do some level of \textit{implicit in-weights reasoning},  i.e., reasoning over parametric knowledge without emitting an explicit chain of thought. %
We sharpen these results by crafting a scenario where 
this ability is unexpected, plays out vividly, and can be cleanly isolated and analyzed. Specifically, we study path-finding on path-star graphs, a  (symbolic) implicit reasoning task. The task was adversarially designed \citep{bachmann24pitfalls} to cause failure of next-token trained deep sequence models---Transformer \citep{vaswani17attention} and Mamba \citep{gu2023mamba} models alike. Whereas in the original task, the model is given the graph  in-context, here we make 
the model memorize the graph's edges in its weights. 
Where before the model spectacularly failed to learn path-finding even on small graphs, %
in our in-weights task
the model succeeds even on massive graphs. %

\begin{figure*}[t]
  \centering
  \ifarxiv 
  \includegraphics[width=\textwidth]{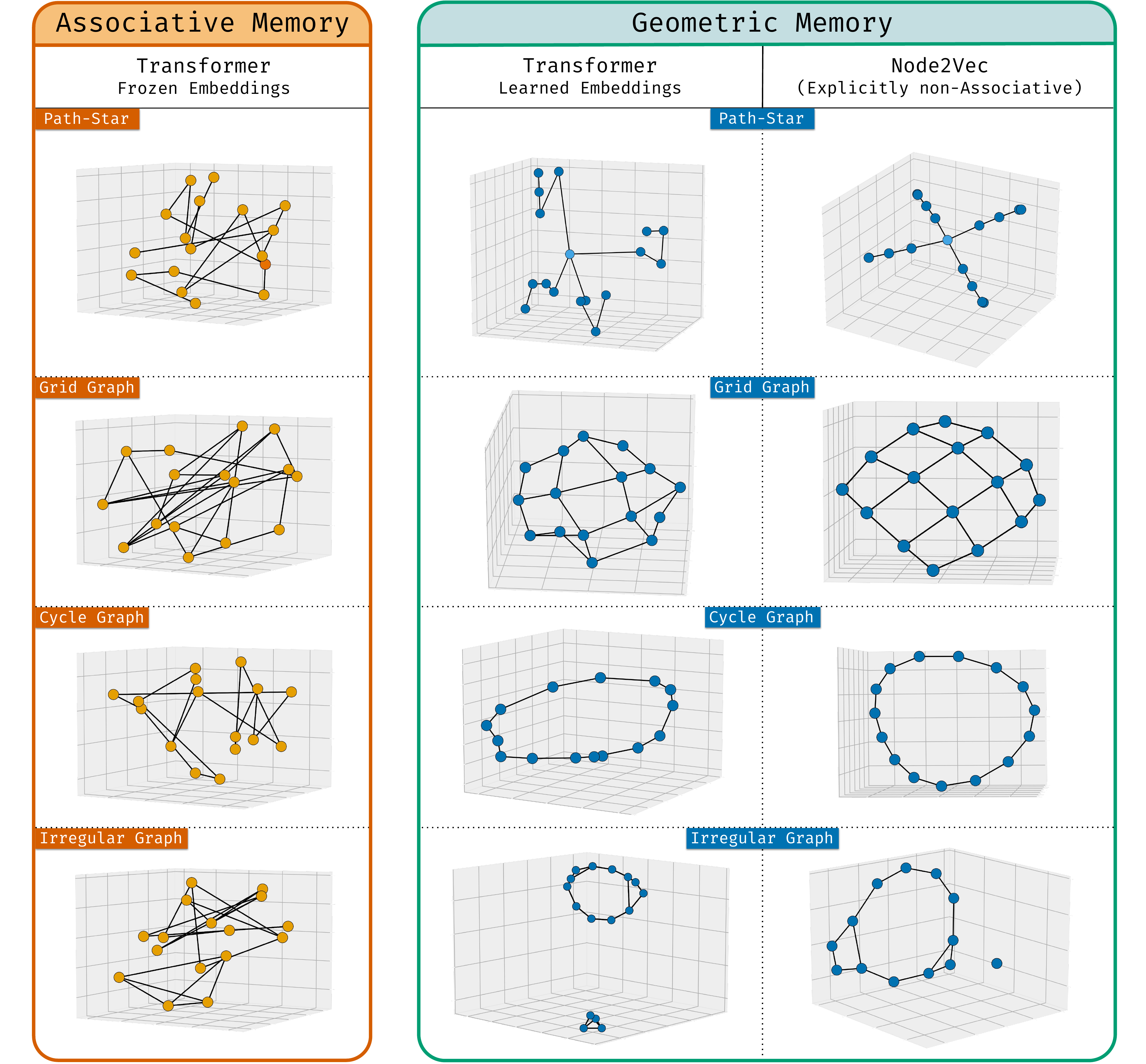}
\else
  \includegraphics[width=\textwidth]{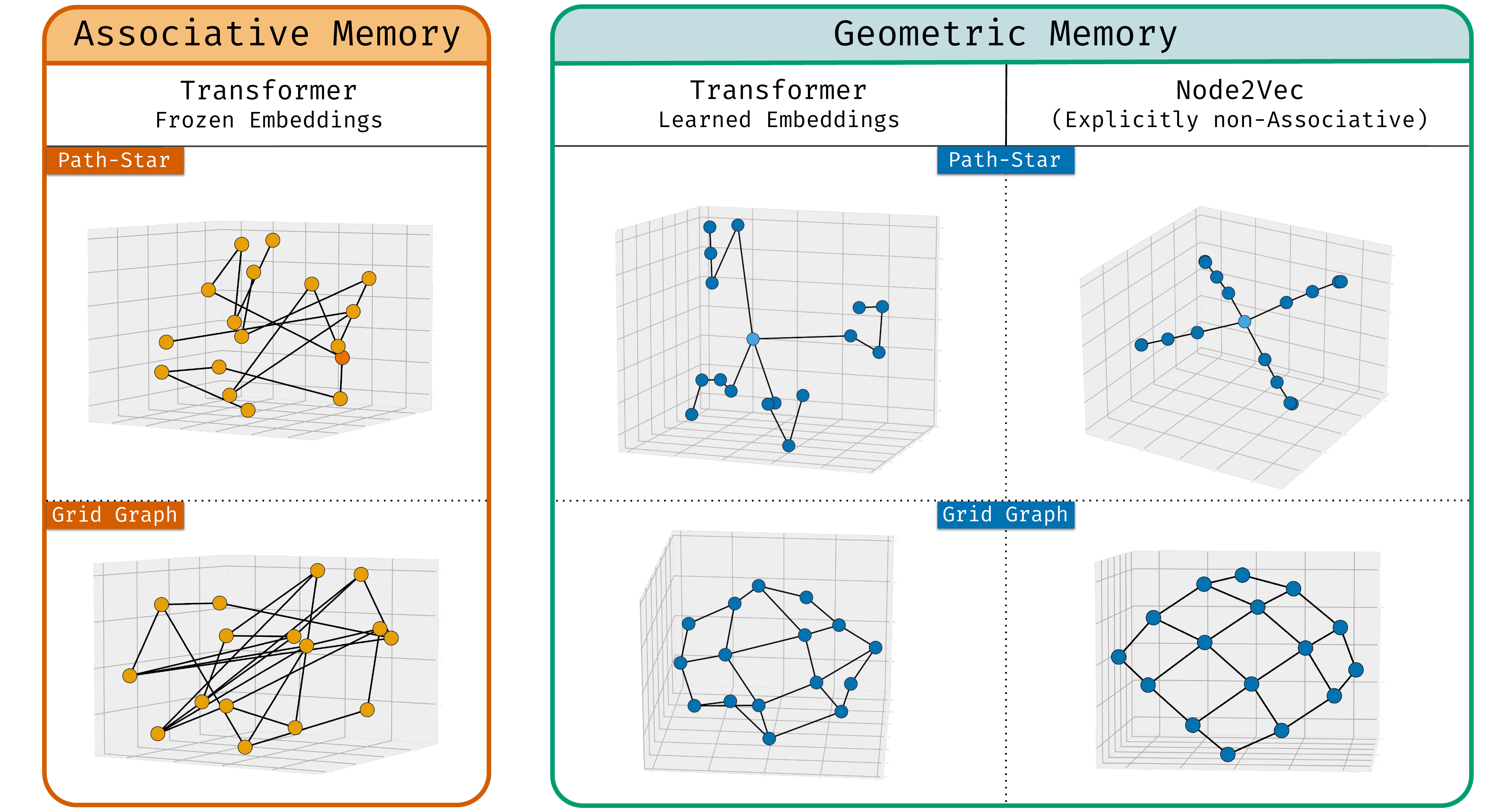}
\fi  
\caption{\textbf{Associative vs. geometric memory of models trained on various graphs.}  There are two dramatically different ways to memorize a dataset of atomic facts. The common view is of associative memory: entities are embedded arbitrarily, and co-occurrences are stored in weight matrices.   (\textbf{left}). %
  \S\ref{sec:competing-views}: In practice, we find a {geometric} memory: the learned embeddings of a Transformer (\textbf{middle}) reflect \textit{global} structure inferred from the \textit{local} co-occurrences in training data. 
  \S\ref{sec:geometry-spectral-bias}: When associative memory is explicitly prohibited (by removing intermediate layers), as in a 2-layer, \nodetovec{}-style model (\textbf{right}),
 a more elegant geometry materializes. This points to a clear headroom to improve the geometric nature of a Transformer's memory. Details of the Transformer architecture used for this visualization are provided in \S\ref{app:tiny-model-architecture}. Similar geometries for Mamba and neural networks are presented in \S\ref{sec:tiny-graphs}.
 \ifarxiv (\textbf{Note:} While these visualizations can tell us if a geometry exists, they do not definitively affirm how global the geometry is; for that, we recommend other tests such as examining the inner products. We discuss this in \S\ref{app:geometry-tests}.) \else Also see Fig~\ref{fig:overview-other-graphs} for more graphs.  \fi }
  \label{fig:overview}
\end{figure*}

The success in the in-weights path-star task, we argue, is hard to reconcile within %
the \textit{associative} view of parametric memory, a convenient and highly effective abstraction of neural network memory, popular in literature \ifarxiv \citep{bietti23birth,cabannes24scaling,jiang24elephants,wang24alpine,ghosal24fact,zhu24reversaltheory,Cabannes24gd,nichani24factrecall,wang2025rethinking}\else
(see \S\ref{sec:contradiction} for references)\fi. In this abstraction (see \cref{hyp:associative}), each entity is embedded randomly/near-orthogonally via  a ``key'' function $\embedding(\cdot)$, while  associations are stored in what is effectively a (rotated) adjacency matrix, $\Wassoc$. To perform a ``lookup'', one computes the quantity $\embedding(v)^T \Wassoc \embedding(u)$ which is large  if \textit{and only if} entities $u$ and $v$ co-occur during training.  With such a data structure, our path-finding task requires composing the lookup operation $\ell$ times (for path length $\ell$). Unless there is step-wise supervision for each composition, %
learning the $\pathlength$-fold composition should be a daunting needle-in-the-haystack task demanding      $\exp(\pathlength)$ time. By the specific design of our task, {\em every possible form of step-wise guidance is  eliminated}. Yet our model appears to find the needle.

This apparent paradox begins to be resolved by a preliminary observation in \flexicitet{jiang24elephants,khona24synthetic,nishi25shattering,ye2025implicit}: that the node embeddings $\embedding$ reflect some useful notion of distance; and a separate finding in \flexicitet{huang2025factorization,saxe19semantics}: that (two-hop) reasoning can emerge from ``factorized'' parameterizations. We flesh out a unified understanding of these observations and their  implications. First, the existence of structured node embeddings implies that models can store atomic facts in an altogether distinct paradigm of parametric memory that we term as {geometric} memory. This memory is in dramatic contrast to the associative one. In the simplest form of the geometric view (see \cref{hyp:global-geometric-memory}), the associations are no longer stored verbatim in a lookup table $\Wassoc$, but are \textit{(low-rank) factorized} into the embeddings as $\embeddinggeom(u)^T \embeddinggeom(v)$.  Our key insight is that these embeddings are no longer arbitrary. Rather, they are carefully arranged such that they encode \textit{global} relationships without explicit supervision to do so: even if entities $u$ and $v$ never appeared in the same context, the dot-product $\embeddinggeom(u)^T \embeddinggeom(v)$ captures the model's own notion of multi-hop distance between the pair. This global geometry can open up new forms of reasoning, since what seemed to be a hard-to-learn $\ell$-fold composition of local associations is now approximated as an easy-to-learn spatial navigation task. %
We contrast the associative (\cref{hyp:associative}) and geometric (\cref{hyp:global-geometric-memory}) views of memory visually in  \cref{fig:overview} and also in \cref{tab:memory_comparison_wrapped}.

\looseness-1 The observed geometry raises fundamental questions. Most importantly, there must be competition between the two parametric memories, both equally valid solutions to the training objective; why does the geometric prevail over the associative? To some readers, a geometric bias may seem familiar and intuitive at first sight, but we isolate aspects that cannot be easily explained: the global geometry is learned even when there is no ``global" supervision (e.g., a path-finding objective), even when there is no rank constraint, even when the geometry is much harder to find, and even when it is just as succinct as a lookup.  Thus, typical  pressures from the supervision, architecture or optimization do not explain why a highly non-trivial geometry is synthesized. We term this the \textit{memorization puzzle}. %
Towards understanding this, we reduce the setup to a minimal two-layer, \nodetovec{}-style architecture \citep{nodevec16grover}, where we find that global geometries emerge from well-known spectral biases in such architectures. But, in contrast to prevailing theories (e.g., \citet{levy14imf}; see \S\ref{sec:node-to-vec-theories}), we identify how the spectral bias arises  naturally from cross-entropy loss minimization, independent of typically-assumed pressures. %
While this gives us a preliminary insight into the natural rise of a geometric memory in a simple model, we leave open a foundational question for deep sequence models. %

\ifarxiv
\subsection{Implications}  
\looseness-1 Although the evidence of implicit reasoning is so far limited to symbolic tasks, we believe that the geometry we isolate is a clean nucleus of well-established geometries  in language modeling \citep{elhage22superposition,mikolov13concepts,gurnee24spacetime} and arithmetic tasks ~\citep{liu22arithmetic,nanda23arithmetic,musat2024arithmetic,gromov2023arithmetic}.
These insights suggest broader directions, both practical and theoretical. First, we find that the embeddings learned by more naive \nodetovec{} models are more strongly geometric than those of Transformers; in hindsight, this points to a well-specified headroom for making Transformer memory more geometric and less associative in practice. 
If the geometric bias can be improved in natural language tasks, it would also benefit natural implicit reasoning tasks where results have so far been mixed \citep{press23compositionality,biran24hop,yang24llm,yang24shortcuts,balesni25twohop}. Since geometric memory encodes global relationships, it could help discover novel connections between information scattered in a large pretraining set \citep{yang25synthetic} and thus, some types of creativity \citep{boden03creative,giorgio23creativity,nagarajan2025roll}. On the flip side, the interdependencies in geometric storage may impose limits on knowledge editing (e.g., causing effects such as representation shattering \citep{nishi25shattering}), unlearning and accurate retrieval; it may also hallucinate associations \citep{huang2025factorization} (cf. ``illusory correlations'' in \citep{saxe19semantics}). Another implication of our study is a support for why parametric memory may be superior to in-context memory, echoing \citet{wang24grokking,geerts25relational}. 
Finally, the gap between the Transformer and \nodetovec{} geometries may also be of interest to practitioners in retrieval systems, when choosing between modern generative retrieval models \citep{tay22gr,wang22gr,rajput23gr} and traditional dual-encoder models \citep{gillick19de,huang13de}. %

Many foundational directions follow. Foremost are the questions of how associative and geometric memory compete under gradient descent and what aspects of the data or optimization are ideal for the geometry to arise.
The examples and analyses in this work act as minimal and tractable sandboxes (with closed form solutions) to study a variety of complex geometric empirical phenomena: grokking \citep{power2022grokking}, the emergence of ``world models'' \citep{ferry2023emergence,gurnee24spacetime,sawmya2025birth}, linear representations and superposition in interpretability \citep{mikolov13concepts,park24geometry,elhage22superposition} and why different models share similar representations, i.e.,
the Platonic representation hypothesis \citep{lee2025shared,huh24prh}---and \textit{what} these representations are.
Finally, we speculate that the associative view forms the default unstated set of intuitions that guide research in numerous areas such as knowledge acquisition, discovery, unlearning, reasoning, storage capacity and scaling laws. The geometric view may inspire researchers to revisit, spell out and widen these latent intuitions. For instance, one may re-examine capacity \citep{mahdavi24memorization,kajitsuka25capacity,madden25memorization,Kajitsuka24universal,kim2023provable}
 and scaling laws \citep{allenzhu25physicsknowledge,morris25memorize,roberts20knowledge,pan25understanding} under a geometric lens and find very different answers.
 \else 
 \looseness-1 \textbf{Implications.} Although the evidence of implicit reasoning is so far limited to symbolic tasks, our
 insights open up broader directions. First, we find that the embeddings learned by more naive \nodetovec{} models are more strongly geometric than those of Transformers; this raises the question of how to make Transformer memory more geometric and less associative in practice. 
The geometry, when established in larger scale settings, could help reason and discover creative connections between information scattered in a large pretraining set. On the flip side, the interdependencies in geometric storage may impose limits on knowledge editing, unlearning and accurate retrieval; it may also hallucinate associations. Our study is also support for why parametric memory may be superior to in-context memory. 
Broadly, we speculate that the associative view forms the default unstated set of intuitions that guide research in areas such as knowledge acquisition, discovery, unlearning, reasoning and storage capacity; the geometric view may inspire researchers to revisit, spell out and widen these latent intuitions.  %
\fi

\subsection{Summary of contributions} 
\begin{enumerate}[leftmargin=0.5cm]
  \setlength\itemsep{0em}
    \item We devise a clean instance of implicit in-weights reasoning
    in deep sequence models that succeeds even without any form of step-wise supervision. This isolates a  behavior that is hard to explain within the predominant associative view of parametric memory.  (\S\ref{sec:experiments})
    \item We establish an alternative, global \textit{geometric} view of memory in deep sequence models (simple neural networks, Transformers and Mamba). (\S\ref{sec:competing-views})
    \item \looseness-1 We demonstrate why the geometry is surprising: the emergence of geometric over  associative memory cannot be attributed to obvious architectural, optimizational or supervisory pressures. With this, we formulate a \textit{memorization puzzle} in sequence modeling. (\S\ref{sec:emergence-global-geometry})
    \item We connect the global geometry to the spectral bias of 2-layer \nodetovec{} models. We empirically intuit how it emerges without typically-assumed pressures, when minimizing the cross-entropy loss. This makes progress towards an open question in \nodetovec{}. We also highlight significant headroom in the embedding geometry of the current architectures. (\S\ref{sec:geometry-spectral-bias})
\end{enumerate}

\ifarxiv
\clearpage
\begin{table*}[t]
    \centering
    \caption{Comparison of two forms of parametric memory in a deep sequence model, given a dataset of edges from a graph $\graph = (\vertices, \edges)$}
    \label{tab:memory_comparison_wrapped}
    \begin{tabularx}{\textwidth}{  Z  Z  Z  } %
        \toprule
         & \textbf{Associative Memory} (\cref{hyp:associative}) & \textbf{Geometric Memory} (\cref{hyp:global-geometric-memory}) \\
        \midrule
        \textbf{Storage mechanism} & 
            A lookup table of pairwise edge associations & 
            A graph embedding of the vertices \\
        \addlinespace
        \textbf{Embedding $\embedding$} & 
            Arbitrary (random or orthogonal); acts as keys for lookup; doesn't reveal the associations in itself &
            Reveals the graph structure \\
                    \addlinespace 
        \textbf{Simplest neural implementation (of the learned similarity between input token, node $u$ and next-token, node $v$, denoted by $f(u)[v]$)} & 
            $f(u)[v] = \embedding(u)^T \Wassoc \embedding(v)$ where \[{ \Wassoc \approx  \underset{(u,v) \in \graph}{\sum} \embedding(u) \otimes \embedding(v)}\]  and $\embedding(u) \cdot \embedding(v) \approx 0$ for $u \neq v$.& 
            $f(u)[v] =  \embeddinggeom(u)^T \embeddinggeom(v)$ where $\embeddinggeom$ corresponds to eigenvectors of graph Laplacian (potentially, only the top eigenvectors). \\
        \addlinespace
        \textbf{Relationship to graph $\graph$} & $\Wassoc$ can be thought of as adjacency matrix (up to a rotation) & $\embeddinggeom$ can be thought of as a \textit{factorization} of adjacency matrix, typically low-rank or skewed towards the top eigenvectors. \\
        \addlinespace
        \textbf{Global information} & 
            Only local edge information is readily available (or via a probe). That is,  the logit $f(u)[v]$ high only for $(u,v) \in \edges$ & 
            Global information readily available. Concretely, $f(u)[v]$ is proportional to multi-hop distance between $u$ and $v$ in $\graph$ (but if factorization is full rank and not skewed towards the top eigenvectors, global information can still be accessed through a probe that focuses on the top directions). Intuitively, the top eigenvectors contain global information as they are also the components dominant in multi-hop adjacency matrix, $\vecA^\ell$ for large $\ell$.  \\
        \addlinespace
        \textbf{Novel information not present in data} & 
            None; only stores seen associations. &
            Has synthesized novel information by integrating information scattered across multiple datapoints. \\
                    \addlinespace
         \textbf{Implicit compositional reasoning e.g., path-finding} & 
            Intuitively, exponentially hard to learn. &
            Approximated as an easy-to-learn geometric navigation task. \\
                    \addlinespace
        \textbf{Accuracy of retrieval} & 
            Precise & 
            Approximate \\
            \addlinespace
        \textbf{Description length} &
            Number of edges, $|\edges|$ (upto logarithmic factors) &
            Number of vertices times the embedding dimensionality, $|\vertices| \embeddingdim$ \\
        \addlinespace
        \bottomrule
    \end{tabularx}
\end{table*}

\else
\fi

\ifarxiv
    \clearpage
    \onecolumn
    \tableofcontents
    \clearpage
    \twocolumn
\else
\fi

\section{Implicit in-weights reasoning is learned}
\label{sec:experiments}

\looseness-1 We investigate planning on a \textit{path-star graph} (\cref{fig:overview-in-weights_main}), a graph designed to be adversarial towards next-token learning \citepalias{bachmann24pitfalls}. %
The task has a clear notion of a chain-of-thought, and a well-understood mechanism of failure for learning in-context reasoning. Thus, we repurpose this to
cleanly analyze a different type of reasoning---in-\textit{weights} reasoning---in a way that was not possible in earlier studies of in-weights reasoning.

\begin{figure}[H]
    \centering
    \includegraphics[width=\linewidth]{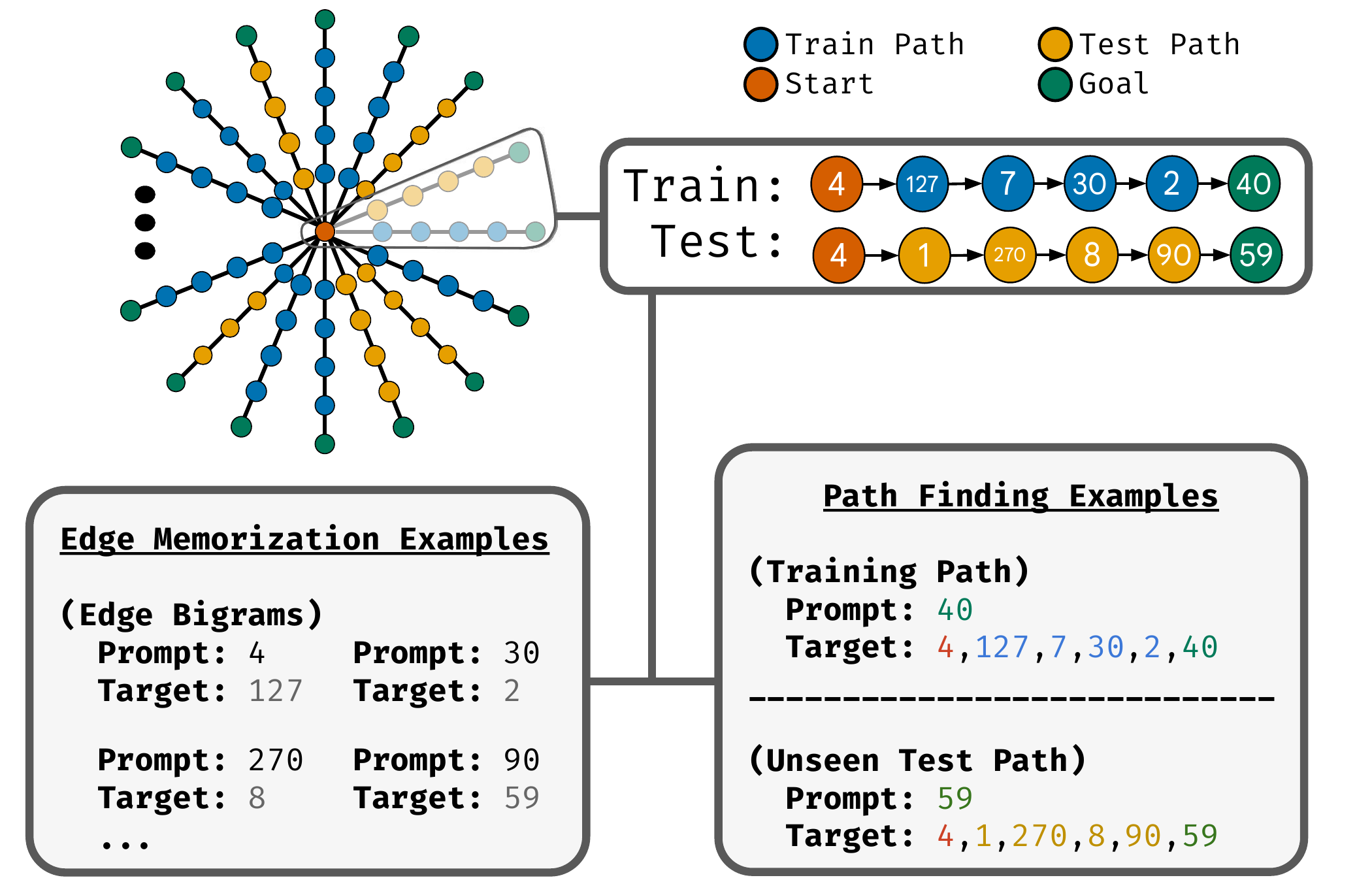}
  \caption{\textbf{Our in-\textit{weights} path-star task.} All train/test examples are generated from a fixed path-star graph: (i) \textbf{edge memorization} examples (covering all edges during training) and (ii) \textbf{path-finding} examples, partitioned into a non-overlapping train and test set.  Fuller version in \cref{fig:overview-in-weights}. }
  \label{fig:overview-in-weights_main}
\end{figure}

\subsection{The in-weights path-star task} 
\label{sec:in-weights-path-star-task}

\looseness-1 \textbf{Our task definition.} The path-star topology is a tree graph where multiple {\textit{disjoint}} \textit{uniform-length} paths branch out from the root. For our task, all examples are generated from a \textit{fixed} (giant)  graph $\graph$ of degree $\degree$ and path length $\pathlength$.  Part of the training data consists of ``edge-memorization'' examples where the input prompt is some node $\node$ and the target is an adjacent node $\node'$ in $\graph$. These training examples cover \textit{all} the edges so that the model memorizes all of $\graph$ in its weights. Separately, we generate path-finding examples by picking as input prompt a random leaf node $\leaf$ and as target, the unique root-goal path (a sequence of tokens from $\start$ to $\leaf$). We train for path-finding on only a \textit{subset} of such leaves, and test on the remaining ``unseen'' leaves, leaves whose root-to-leaf paths are never seen end-to-end---only the constituent edges are seen individually, which doesn't trivialize the path-finding task. (We extend our experiments to harder graphs in \S\ref{sec:tree-star}.)

\textbf{Our experimental setup.} We make some notes about our setup. First, for the most stable results, we interleave the edge-memorization and path-finding examples during training as in \cref{fig:overview-in-weights} and also use pause tokens \citep{goyal23think}. We found that it is important to provide both forward and reverse edges for edge-memorization \ifarxiv  in order to dodge the reversal curse \citep{qi23reversal,zhu23knowledge2}, \else, \fi 
but this may not be necessary for smaller graphs; see \S\ref{sec:other-regularizers}. Reverse augmentation is \textit{not} given for the path-finding examples. We use from-scratch, decoder-only Transformer (\gptmid{}) \citep{Radford2019LanguageMA} and corroborate all findings on Mamba in \S\ref{sec:mamba}.
 The formatting and the hyperparameters are in \S\ref{app:exp-setup}, followed by additional analyses in \S\ref{app:other-insights}.

\textbf{Comparison to the original in-\textit{context} task.}  Our task is inspired by, yet fundamentally different from the original task in \citetalias{bachmann24pitfalls}. There, the model was given in \textit{context}, a fresh example of a path-star graph as an adjacency list (with randomized node labeling and edge ordering), along with a leaf node specified as goal (see \cref{fig:overview-in-context} for this original version). We however generate all examples from a fixed in-weights graph, and give no adjacency list in-context as we hope to understand how well the model recalls and reasons over information stored in the weights.

\textbf{The adversarial design of the graph.} \ifarxiv The solution to the path-star task is to notice a simple right-to-left structure: take the unique path back from the leaf node and reverse it. Indeed, this is the implicit chain-of-thought required before emitting the first token. Despite this simplicity, in the in-context version of this task, left-to-right next-token learners are known to fail in-distribution even on tiny graphs. This unfolds in two stages during optimization: (1) On all but the first token, the model fits the data trivially: predict the token as the unique child of the previous ground-truth token that is revealed in the context. This deprives the first, key decision-making step of gradients from the rest of the path. (2) Consequently, the first token must be learned in isolation, which is a computationally hard $\ell$-hop composition-learning problem (as detailed later). At test-time, this model defaults to guessing a random first token and continuing along that wrong path. \else  In the in-context version of \citetalias{bachmann24pitfalls},  next-token learners  are known to fail in-distribution even on tiny graphs. The failure is due to the deliberate design of the graph: during training, the optimizer can fit all but the first token in the path trivially based on the preceding token; the subsequent lack of gradients from these tokens renders the {first} (decision-making)  token (following the root) into an exponentially hard-to-learn implicit reasoning task. This hardness is particularly of interest to us and will be discussed shortly. \fi
For reference, we elaborate on this in-context task in \S\ref{app:path_star_in_context} and reproduce this failure in \cref{fig:in_context_failure_results}.

\begin{figure}
    \centering
    \includegraphics[width=0.75\linewidth]{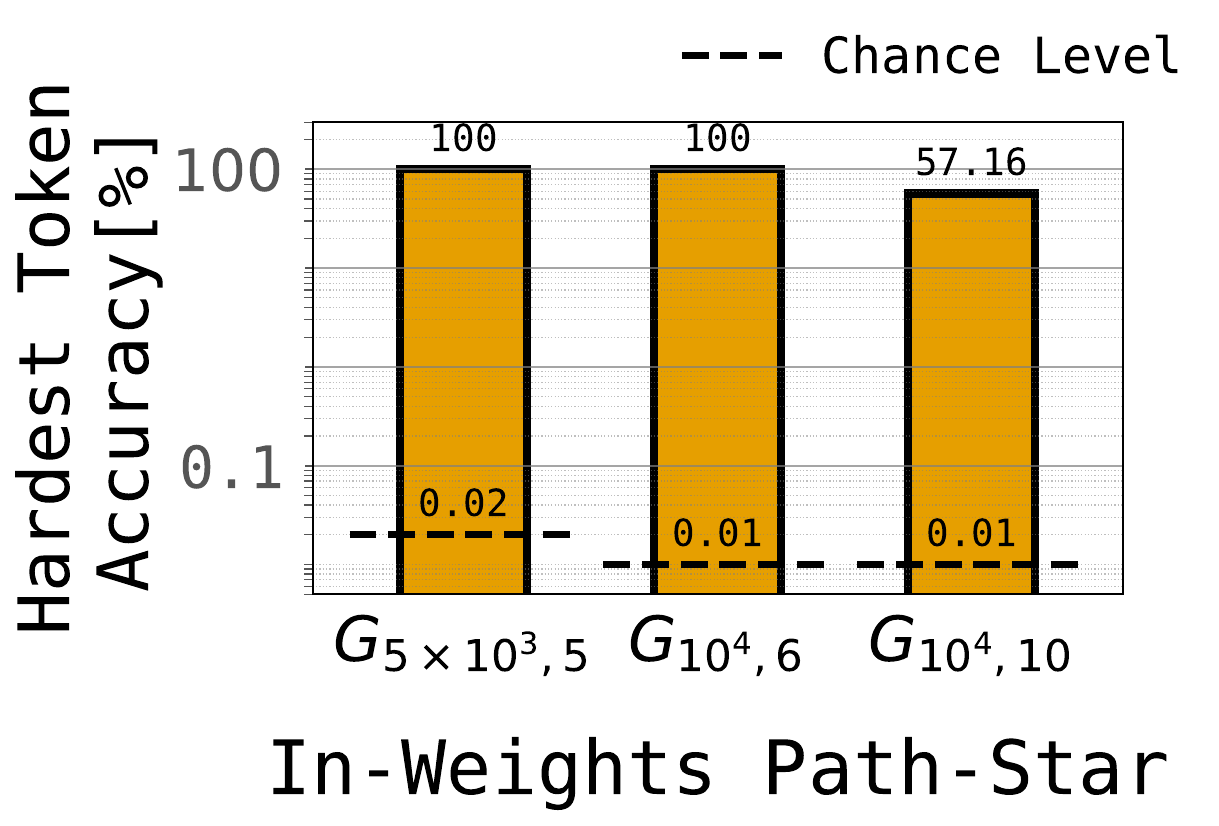}
  \caption{\textbf{Success of Transformer in in-weights path-star task. (\S\ref{sec:experiments})} On large path-star graphs $\graph_{\degree, \pathlength}$, a next-token-trained Transformer achieves perfect or highly non-trivial accuracy, even when trained only on the first-token in isolation.  the $Y$-axis is in $\log$ scale to compare against chance accuracy ($1/\degree$; dashed line). More detailed plots in \cref{fig:main_results}; analogous plots for Mamba in \cref{fig:main_results_mamba}.   We invite the reader to contrast these in-weights task results with the in-context task ones in \cref{fig:in_context_failure_results}.  }
  \label{fig:hardest-token-in-weights_main}
\end{figure}

\subsection{Success of implicit in-weights reasoning}

\looseness-1 The path-star graph is adversarially constructed and involves a particularly hard-to-learn token. Yet, in our in-weights version of this task, we find that next-token-trained models succeed at path-finding, even on path lengths of $6$ to $10$ hops and graphs as large as $10^4$ nodes. In fact, even when we manually eliminate all supervision from the later tokens in the path, and train only on the loss of the first token loss---the key decision-making token---the model predicts this token successfully on unseen paths (\cref{fig:hardest-token-in-weights_main}). This is striking because learning the first token in isolation, one would intuitively expect, is (theoretically and empirically) hard---an intuition we will shortly elaborate on. This success also isolates an instance of implicit in-weights reasoning sharper than what exists in the literature. Known instances are on \ifarxiv small scales of 200 or fewer entities ~\citep{khona24synthetic, geerts25relational} or on 2-hop tasks ~\citep{wang24grokking,feng2024extractive,ye2025implicit,huang2025factorization}\else small scales of 200 or fewer entities or on 2-hop tasks (see \S\ref{sec:related-inweights-reasoning})\fi. %
We report success on some harder topologies in \S\ref{sec:tree-star}.

\begin{observation}
    \label{obs:first-token-easy}\textbf{(Hardest, first token is learned in isolation)}
    \looseness-1 
    On  in-weights path-star graphs of as many as $5\times10^4$ nodes, when the model is trained on only edge-memorization examples and first-token-training examples from $75\%$ of all the paths, on unseen paths, both the Transformer and Mamba are able to predict the first token when conditioned on held-out leaves, with as much as 100\% accuracy (\cref{fig:hardest-token-in-weights_main,fig:main_results_mamba} (right)).
\end{observation}

\subsection{The contradiction behind learning the hardest token}
\label{sec:contradiction}

That the hardest (first) token is learned, we argue, is difficult to square with the default abstraction of memory in neural networks.  Atomic facts are often imagined to be stored in the parameters as local associations, where a matrix acts as a lookup table. We make this concrete. For any tokens $u$ and $v$, let $f(u)[v]$ denote the logit of a sequence model $f$ on the target $v$, conditioned on the input $u$. Then:

\begin{explanationbox}
    \startdefinitionset
\startdefinitionset
\looseness-1 \begin{subdefinition}(\textbf{Local associative parametric memory})\label{hyp:associative}
We say that a deep sequence model $f$ has memorized a graph $\graph = (\vertices, \edges)$ associatively iff for any vertices $u$ and $v$, the logit $f(u)[v]$ is high only on adjacent vertices, $(u,v) \in \edges$. In its simplest form, this model is typically abstracted as $f(u)[v] = \embedding(u)^T \Wassoc \embedding(v)$. Here, $\Wassoc$ encodes the adjacencies as $\sum_{(u,v) \in \edges} \embedding(u) \otimes \embedding(v)$; the embeddings $\embedding$ are arbitrary ``keys'' that by themselves encode no associations in graph $\graph$, e.g., 
\ifarxiv they are orthogonal embeddings \citep{wang24alpine,ghosal24fact,jiang24elephants,zhu24reversaltheory,Cabannes24gd} or random (and thus, near-orthogonal) embeddings \citep{nichani24factrecall,cabannes24scaling,bietti23birth,wang2025rethinking}.\else they are modeled as orthogonal or random embeddings in literature (see \cref{rem:associative-embeddings} for references).\fi
\end{subdefinition}
\end{explanationbox}

Observe that, in this abstraction, the first node in our task requires implicitly composing a local recall function $\pathlength$-many times:  (implicitly) recall the predecessor of $\leaf$ from the weights, namely $\node_{\pathlength-1},$ then the predecessor of that, $\node_{\pathlength-2}$, and so on (all without an explicit chain of thought). In short, one can write $\node_1 = \recallpred^\pathlength(\leaf)$.

\textbf{Hardness of learning compositions.} With this structure however, the first-token $\pathlength$-fold composition task should intuitively require $\Omega(\exp(\pathlength))$ compute to learn---just like the first token was demonstrably hard-to-learn in the in-context path-star task \citepalias{bachmann24pitfalls}. \ifarxiv One \else This hardness can be theoretically formulated in many ways (see \S\ref{sec:composition-related}); one \fi intuition is that the optimization task is a search for a needle in the haystack: there is an $\exp(\pathlength)$ space of possible discrete compositions, and all but the correct one have an equally miserable loss value; with the loss terrain rendered flat and the gradients uninformative, the learner is forced to sift through the vast hypothesis space to find a needle. \ifarxiv As a preliminary corroboration of this intuition, in \S\ref{sec:failure_associative}, we find that models fail at this task when the embeddings are frozen. We also refer the reader to existing results that empirically demonstrate that compositions are hard to learn \citep{shwartz16endtoend,abbe2020poly,gulcehre16knowledge,glasmachers17endtoend,abbe94nonuniversality,abbe2020poly} and others that theoretically demonstrate hardness
\citep{malach23auto,chen2024limitations,peng2024limitations,shwartz16endtoend,wang25composition,shoshani2025hardness} (although only in a certain sense; see \cref{rem:hardness-results}). \fi

\ifarxiv
\textbf{The power of step-wise aid.} This compositionality barrier could be surmounted if, rather than providing supervision from only the end output of the composition, there was supervision from each hop.
This could mean providing a data curriculum \citep{sanford24logdepth,wang25composition,lm25yao,abbe2023provable,cornacchia23curriculum} or providing chain-of-thought supervision \citep{weis23subtask,nye21scratchpad} or reducing compositionality through various means. Indeed, prior positive results on implicit in-weights reasoning involve one such aid or the other. First, \citet{khona24synthetic} have paths of varying lengths (see their Fig. 11), which provides an implicit curriculum; they also provide full path supervision, not just the first-token loss. Furthermore, their test and train paths overlap, reducing compositionality, potentially allowing the model to stitch substrings it has seen (an ability demonstrated as possible  \citep{wang24alpine}). Other results \citep{wang24grokking,feng2024extractive,ye2025implicit,huang2025factorization} are on 2-hop tasks, implying a friendly exponent.
\else
\looseness-1
\textbf{The power of step-wise aid.} This barrier could be surmounted if, rather than providing supervision from only the end output of the composition, there was supervision from each hop, say in the form of a data curriculum or chain-of-thought supervision. Indeed, prior positive results on implicit in-weights reasoning involve one such aid or the other (see \S\ref{sec:related-inweights-reasoning}). 
\fi

\textbf{Our setting, by design, offers none of these aids.} Our paths have fixed lengths (hence, no implicit curriculum), all later token gradients eliminated (so no supervision from the intermediate hops), the paths  disjoint (so, no test-train overlaps), and there is as much as a $10$-fold composition (so, a daunting exponent). What we isolate, therefore, is a stronger and more sterile instance of implicit in-weights reasoning, one that is cleanly inconsistent within the associative  view of parametric memory. %

\ifarxiv
\subsection{Two competing  data structures for parametric memory}
\else
\subsection{Two competing  forms for parametric memory}
\fi
\label{sec:competing-views}
\ifarxiv
The paradox begins to resolve with fragmented yet salient observations in literature\ifarxiv---\citep{jiang24elephants,khona24synthetic,ye2025implicit} and \citep{huang2025factorization}---that \fi we will flesh out more generally. First, \citet{jiang24elephants,khona24synthetic,ye2025implicit} find that the embeddings of their tokens reflect some useful notion of distance, aiding their respective implicit reasoning tasks. Independently, \citet{huang2025factorization,saxe13exact} find that 
memorizing associations through an explicitly factorized architecture allows the model to draw unseen (two-hop) inferences. We will generalize these observations to our much larger graphs and many-hop tasks; but first, we point out that these observations are related and lead to profound implications. First, these observations paint an altogether different picture of parametric memory itself:\else
The resolution to this paradox lies in a dramatically different view of parametric memory: 
\fi { }a memory of atomic facts that does not take shape as a lookup over arbitrary embeddings but as a geometry of highly-organized embeddings. While the former directly stores the adjacency matrix $\vecA$ in the matrix $\Wassoc$ (up to some rotations defined by $\embedding$), the latter factorizes the associations into $\embeddinggeom(u)^T\embeddinggeom(v)$.  Importantly, whereas an associative memory only makes local information accessible, a geometric memory readily exposes global multi-hop relationships.
 We define the notion of geometric memory below, and compare the two forms of memory in \cref{tab:memory_comparison_wrapped}. (The definition below intentionally leaves the notion of ``multi-hop'' abstract, which \S\ref{sec:geometry-spectral-bias} makes more concrete)

\begin{explanationbox}
\begin{subdefinition}\textbf{(Global geometric  parametric memory)}\label{hyp:global-geometric-memory}
We say that a deep sequence model $f$ has memorized a graph $\graph = (\vertices, \edges)$ geometrically if for any pairs of vertices $u$ and $v$, the logit $f(u)[v]$ reflects some notion of multi-hop closeness in $\graph$. At its simplest, this can be abstracted as   $f(u)[v] = \embeddinggeom(u)\cdot\embeddinggeom(v)$, where $\embeddinggeom$ reflects some graph embedding of the vertices.  A fuller definition is in \S\ref{sec:extended-def}.
\end{subdefinition}
\end{explanationbox}

\looseness=-1 \textbf{The resolution.} We view the two parametric memories  of \cref{hyp:associative,hyp:global-geometric-memory} as two competing data structures, \ifarxiv both representable by a deep sequence model, \fi each yielding its own learning complexity\ifarxiv{ }(much like how a heap or an array would yield different search complexities). \else. While the  associative memory may incur an $\exp(\pathlength)$ cost to learn an $\pathlength$-hop lookahead, the geometric memory is powerful enough to drive this down to $\Theta(1)$. \fi 
Suppose the embeddings of all nodes in path $i$ are clustered tightly around a unique path vector  $\pathembedding_i$.  In this data structure, learning the first token given the leaf node is a one-hop task: simply find
a one-hop neighbor of the central node that is best-aligned with $\embeddinggeom(\leaf^{(i)})$. A similar structure indeed materializes both in our massive graphs and in tiny graphs; this geometry turns out to be fundamental to any gradient-descent-trained deep sequence model and not specific to attention or residual connections:

\begin{figure}
    \centering
        \includegraphics[width=0.65\linewidth]{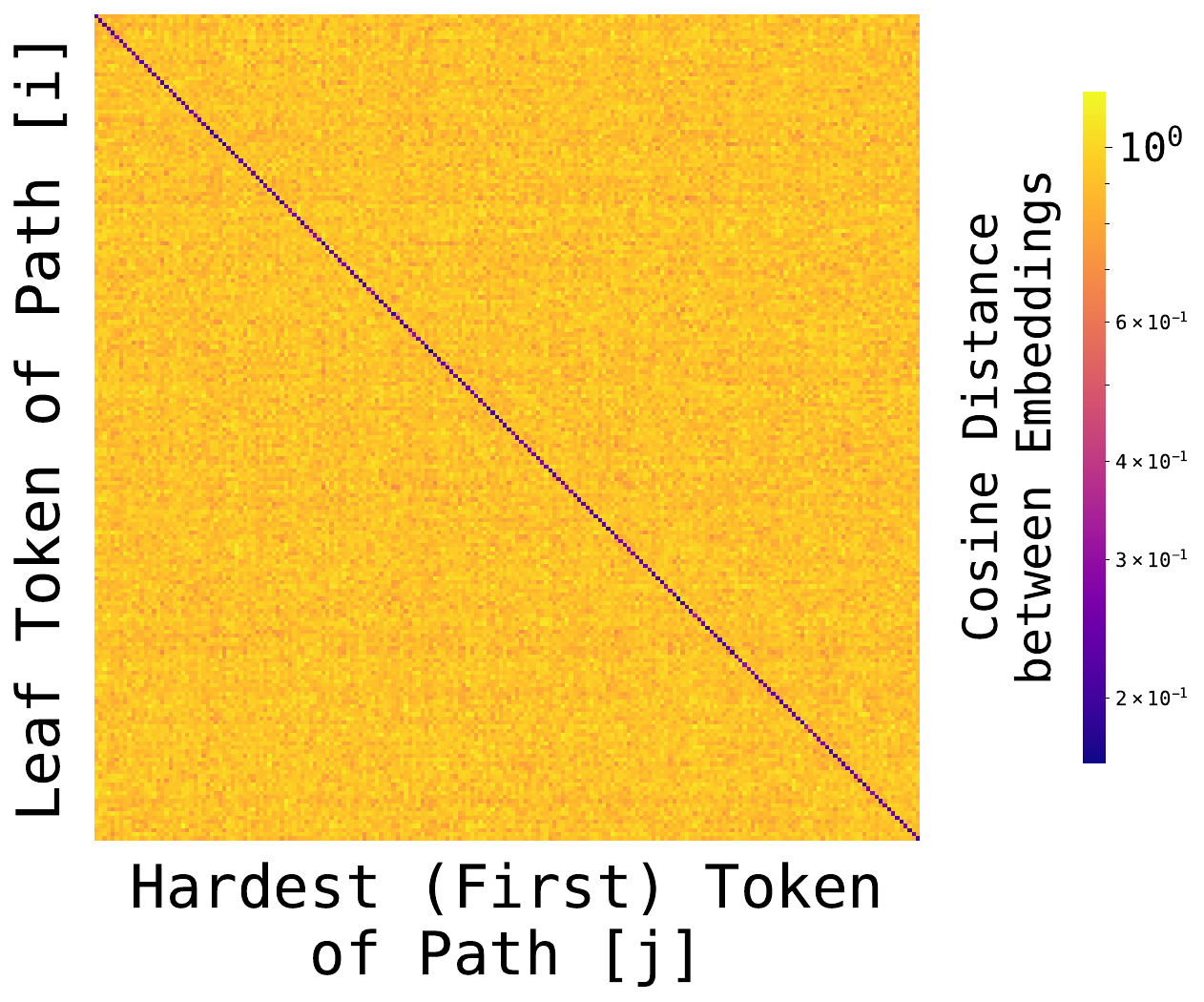}
  \caption{\textbf{Evidence of global geometry of Transformer in path-star task.} \looseness-1   In the heatmap, entry $(i,j)$ is the $\text{cosine}$ distance between the \emph{leaf} embedding of path $i$ (row) and the \emph{first-hop} embedding of path $j$ (col). The clear diagonal line implies that embeddings within each path are more aligned, reflecting global structure. Analogous plot for Mamba in \cref{fig:heatmap-leaf-first-mamba}}
\label{fig:heatmap-leaf-first-main}
\end{figure}

\begin{observation}\label{obs:global-embedding}
\textbf{(Evidence of global geometry across various deep sequence models)} For our large path-star graphs, the (token) embeddings of the leaf and first node of a path are clustered closer to each other than those of other paths in both the Transformer and Mamba (\cref{fig:heatmap-leaf-first-main,fig:heatmap-leaf-first-mamba}). Similar geometries arise for a variety of small graphs for the Transformer, Mamba and a simple 3-layer neural network model---see visualizations (\cref{fig:overview} and \S\ref{sec:tiny-graphs}: \cref{fig:tiny-path-star,fig:tiny-grid,fig:tiny-cycle,fig:tiny-irregular})  and node-node cosine-similarity heatmaps (\S\ref{sec:tiny-graphs}; \cref{fig:node2vec_tied_adjacency,fig:transformer_tied_adjacency}).
\end{observation}

\ifarxiv
While these geometries help make sense of why implicit reasoning succeeds, it also leaves us with a foundational question of why the model prefers one form of memory over the other. We elaborate on this in \S\ref{sec:emergence-global-geometry}.

\subsection{Extended definitions of geometric memory}
\label{sec:extended-def}

Below, we provide a more complete and general definition of geometric memory.

\textbf{Alternative factorizations.} The term ``geometric memory'' can be applied to broader forms of factorized storage where multi-hop information is not present in the logits but can be accessed via a probe. The definition in \cref{hyp:global-geometric-memory}, for convenience, implicitly considers a {\em low-rank, positive definite} factorization of the adjacency matrix. Two variations in this factorization are possible. One is a \textit{full-rank} (or non-low-rank) factorization;  another factorization is one that is not positive definite but takes the form $\embeddinggeom(u)^T \mathbf{\Lambda} \embeddinggeom(v)$ where $\mathbf{\Lambda}$ is a diagonal matrix of both positive and negative values; such factorizations can arise in multi-layered networks under various settings. However, these factorizations may not yield nice ``multi-hop'' logit values (as we will see in \S\ref{sec:weight-tying-and-svd}). Yet, both these factorizations are geometric in that the nodes are embedded in a highly non-trivial way; one can still extract multi-hop information via a probe trained to detect the global directions of $\embeddinggeom$. (The reason why the dominant directions correspond to multi-hop information will be made clear later in \S\ref{sec:why-global}.)

\textbf{Location of geometry.}  In the simple settings of this paper, the geometric embeddings $\embeddinggeom$ in practice are found in the token embeddings themselves. But more generally, such an embedding may reside in any other layer. We consider even that to be geometric memorization.

\textbf{Self-edges.} The simplistic abstraction in \cref{hyp:global-geometric-memory} may assign the highest output logit to the input vertex itself (i.e., $\arg\max_v \embeddinggeom(u)\cdot\embeddinggeom(v) = u$) regardless of whether a self-loop $(u,u)$ exists in the graph. Neural networks typically seem to handle this well through an auxiliary circuit that suppresses such loops. While we discuss one such mechanism in \S\ref{sec:weight-tying-and-svd}, for a tidy discussion,  we will ignore it as a pedantic detail.

\fi

\ifarxiv
    \clearpage
\fi 

\ifarxiv
\section{The memorization puzzle: The emergence of geometric memory is not easy to explain}
\else
\section{The memorization puzzle}
\fi 
\label{sec:emergence-global-geometry}

\ifarxiv
During training, the two types of parametric memory---two equally valid ways to fit the training data---must compete with each other. Why does the geometric memory prevail over the associative? We demonstrate that this behavior is hard to explain. 

The reader may offer various plausible explanations for this behavior by borrowing intuition from other well-known phenomena. Some readers may find the rise of a geometry familiar  in light of well-known geometries of features in language \citep{elhage22superposition,park24geometry,mikolov13concepts} and arithmetic tasks \citep{liu22arithmetic,nanda23arithmetic,musat2024arithmetic,gromov2023arithmetic}. Some others may find the phenomenon reminiscent of the competition between memorization and generalization studied in the \textit{generalization} puzzle \citep{Zhang17rethinking,neyshabur2015bias,bartlett98puzzle,breiman2018puzzle} in regression and classification settings. These settings suggest that representations arise due to pressures from either the supervision or the architecture or the optimizer; such pressures forbid a lookup. Thus, it may be tempting to cite these well-known learning pressures as plausible explanations for what we observe in our sequence models.
However, we design experiments refuting such explanations: we show that a geometric representation arises over the associative lookup even in the absence of such learning pressures. Through this analysis, we pose the {\em memorization puzzle} in sequence modeling.
\else
In this section, we demonstrate that the geometry cannot be easily explained by well-understood learning pressures, be it from the architecture, the optimizer, or the supervision. During training, the two types of parametric memory---two equally valid ways to fit the training data---must compete with each other.  Depending on the reader's prior, the rise of the geometric memory may seem familiar especially in light of well-known geometries of high-level concepts and features. While this impression is valid in part, we will carefully isolate aspects that are unexpected. 
\fi

\subsection{Supervisory pressure does not  explain geometric memory}
\label{sec:supervisory-pressure}

\ifarxiv
Geometric representations are known to arise in various tasks with ``multi-hop'' objectives, from arithmetic
\citep{liu22arithmetic,nanda23arithmetic,musat2024arithmetic,gromov2023arithmetic} to path-finding \citep{khona24synthetic} to two-hop reasoning \citep{wang24grokking,feng2024extractive,ye2025implicit,huang2025factorization}. Reasoning supervision has also been shown to act as a regularizer that elicits useful structures in the model's storage \citep{huang2025factorization}. This trail of findings overwhelmingly directs us to the following guess for why we see a geometry:
\else
Geometric representations are known to arise in various tasks with ``multi-hop'' objectives, from arithmetic
\citep{liu22arithmetic} to path-finding \citep{khona24synthetic} to two-hop reasoning \citep{wang24grokking}. Reasoning supervision has also been shown to act as a regularizer that elicits useful structures in the model's storage \citep{huang2025factorization}. This trail of findings overwhelmingly directs us to the following guess for why we see a geometry:
\fi

\starthypothesisset
\starthypothesisset
\starthypothesisset
\begin{explanationbox}
\looseness-1 \begin{subhypothesis}\textbf{(Pressure from global supervision)} \label{hyp:planning-supervision} Perhaps, the model memorizes via a global geometry rather than via local associations  due to the \ifarxiv supervisory pressure of a \fi multi-hop objective.
\end{subhypothesis}
\end{explanationbox}
\ifarxiv
\looseness-1 This hypothesis however is weakened by the distinctive design of the path-star task: the test paths, having no overlap with the training paths, are untouched by any path-finding  supervision, and yet these paths exhibit a global geometry. 
As an even cleaner refutation of this hypothesis, we ablate the reasoning supervision, and train purely on local supervision i.e., on edge memorization: 
\else
We refute this by ablating the reasoning supervision: 
\fi

\begin{refutationbox}
\startrefutationset
\startrefutationset
\startrefutationset
\begin{subrefutation} \label{hyp:unmerged-task}\label{obs:local-supervision-suffices}\textbf{(Geometry arises even from memorizing local, atomic co-occurrences)}
A global geometry emerges even in models trained only on local edge-memorization, as seen in the heatmaps for the large path-star graph (\cref{fig:heatmap-leaf-first-edge-main,fig:heatmap-leaf-first-edge-mamba}). Such models can be subsequently finetuned purely on the hardest-token task and achieve high test accuracy on unseen paths (\S\ref{sec:interleaving}). Also, the  geometries seen on the small graphs (\cref{fig:overview} \& \S\ref{sec:tiny-graphs}) are for various architectures trained only on edge-memorization. 
\end{subrefutation}
\end{refutationbox}

\ifarxiv
From this, we obtain the following insight: geometries can arise instead of an associative lookup, even when memorizing atomic co-occurrences, in the absence of any pressure to reason or to find high-level, multi-hop structures in data.
\fi

\begin{figure}
    \centering
        \includegraphics[width=0.65\linewidth]{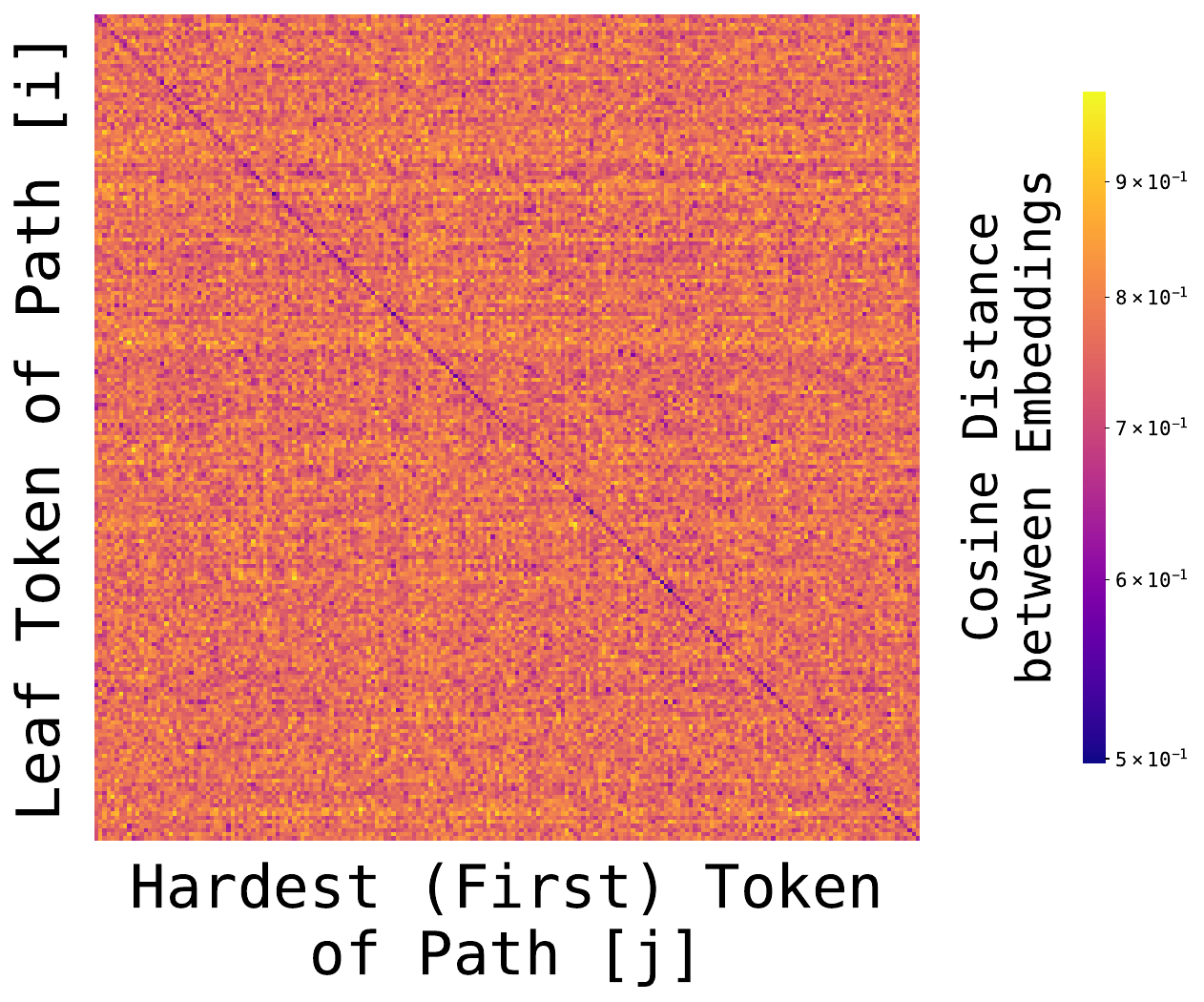}
  \caption{\textbf{Evidence of global geometry of Transformer in edge-memorization task.} \looseness-1 The geometry observed in \cref{fig:heatmap-leaf-first-main} persists even when we train only on the edges of the path-star task, without path-finding supervision. Mamba shows an even stronger heatmap (\cref{fig:heatmap-leaf-first-mamba}).}
\label{fig:heatmap-leaf-first-edge-main}
\end{figure}

\ifarxiv
\chunkbreak
\fi

\subsection{Capacity or optimizer pressures do not explain geometric memory} 
\label{sec:capacity-pressures}

\ifarxiv
\looseness-1 We may seek yet other explanations out of canonical intuitions in theories of generalization. In regression or classification,  conventional wisdom was that models learn representations rather than a lookup when there is an explicit capacity pressure e.g., the architecture or the regularizers forbid a lookup. In modern theories since the popularization of the generalization puzzle \citep{neyshabur2015bias,Zhang17rethinking}, renewed wisdom has been that this pressure is implicit: even if the model can express a lookup, gradient descent implicitly prefers solutions that are easier-to-find or are more succinct, thus leading to useful representations. It may seem that one of these theories, when imported to the world of sequence models, can tell us why a geometric representation arises over a naive lookup of atomic associations.

We scrutinize (straightforward versions of) both these traditional and modern theories  as plausible explanations for the geometry, and we refute them one by one. We begin with the traditional notion of explicit capacity pressures:
\else
In theories of generalization, there are various intuitions for why models do not store a lookup table. We scrutinize (straightforward versions of) both these traditional and modern theories as plausible explanations for our geometry, and we refute them one by one. We begin with the traditional notion of explicit capacity pressures:
\fi

\begin{explanationbox}
\begin{subhypothesis}(\textbf{Explicit capacity pressures})\label{hyp:explicit-capacity-pressure}
Perhaps, the model memorizes geometrically because a lookup of local associations is impossible to represent, either due to the parameter count or the design of the architecture, especially the embedding bottleneck.
\end{subhypothesis}
\end{explanationbox}

\ifarxiv
This logic can be dismantled as follows. First, our architectures \textit{can} express the associative lookup. Positive theoretical results in \citet{nichani24factrecall} roughly prove that, with $\embeddingdim$ (frozen) embedding dimensions, and at least $\embeddingdim^2$ free parameters, an {\tt MLP} layer can store $\embeddingdim^2$ many associations. Our models have $\embeddingdim\approx400$ which means our large graphs \textit{can} be stored associatively in our models.

But maybe, there are yet other explicit pressures such as dropout and weight decay that discourage associative storage.   As a definitive refutation of such pressures, we make two empirical demonstrations. First, we show that with those pressures switched off, a geometry still arises in high-model-width settings. Second, in the very settings where geometries emerge, an associative memory can be learned with the embeddings frozen and \textit{all else unchanged}. Thus, there is demonstrably no explicit capacity pressure restricting an associative memory---and yet, a geometric one naturally emerges.
\else 
We refute this empirically by showing that associative memory can not only be represented, but be learned by models that memorize geometrically:
\fi

\begin{refutationbox}
\begin{subrefutation}\label{obs:associative-is-possible}\textbf{(Associative memory can be artificially learned with our architectures)} First, a geometry arises even without the pressures of weight decay or dropout (\S\ref{sec:other-regularizers}). Next, in various sequence models (Transformers, Mamba, neural networks)  and various tiny graphs \ifarxiv in \cref{fig:overview} and \S\ref{sec:tiny-graphs}\else
(\cref{fig:overview,fig:overview-other-graphs})\fi,  a geometric memory arises naturally even though the same training setup can fit the data associatively, with embeddings frozen and one trainable intermediate layer.\ifarxiv\footnote{This fit is purely associative since only one trainable layer exists in these architectures (see \S\ref{app:tiny-model-architecture}).}\fi
\end{subrefutation}
\end{refutationbox}

\ifarxiv
\chunkbreak
\fi

\ifarxiv
Thus, explicit capacity pressures are not required to force a geometric memory in place. \fi
Maybe then the optimizer has a role to play. For instance, it is known that in classification, gradient descent takes longer to discover a brute-force lookup \citep{arpit17memorization}\ifarxiv; it is also well-known that optimization tends to fit the data in progressively more complex ways \citep{rahaman19spectral,xu2018understanding,kalimeris2019sgd,arpit17memorization,ronen2019convergence,bietti2019inductive} and so a lookup would take relatively long to discover than other solutions\fi. A similar effect can be proposed here: 

\begin{explanationbox}
\begin{subhypothesis}(\textbf{Optimizational pressure})\label{hyp:capacity-pressure}
Perhaps, the model memorizes geometrically because an associative storage takes longer to find by gradient descent.
\end{subhypothesis}
\end{explanationbox}

\begin{figure}[t]
  \centering
  \begin{subfigure}[b]{0.45\linewidth}
    \centering
    \includegraphics[width=\linewidth]{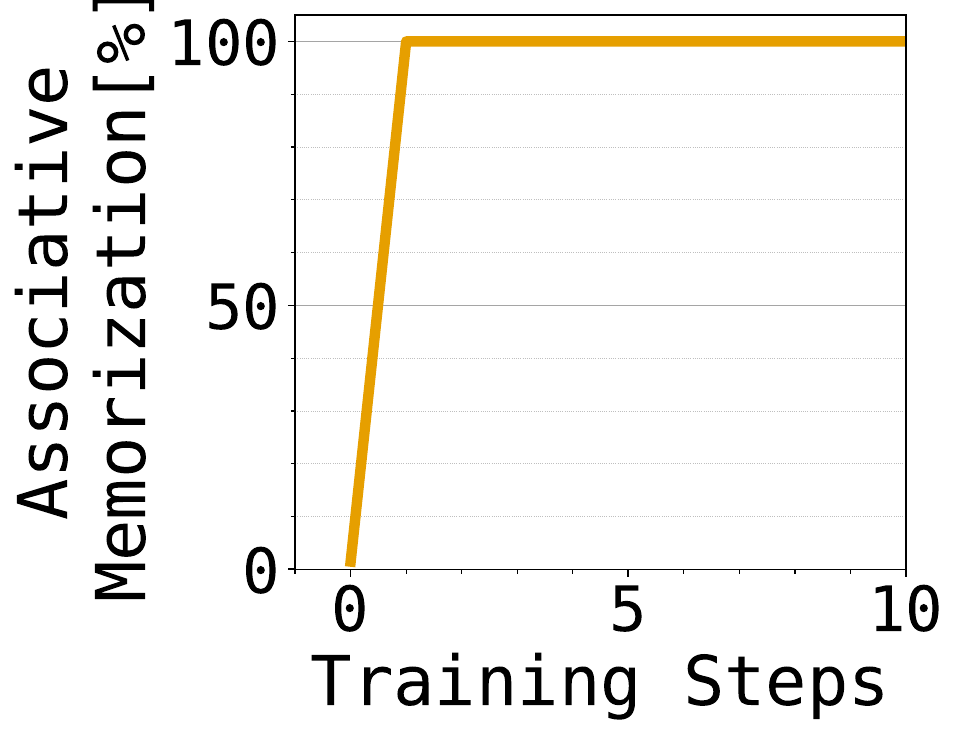}
    \caption{Associative Memorization}
    \label{fig:associative_memorization_path_star_conf}
  \end{subfigure}
  \hfill
  \begin{subfigure}[b]{0.54\linewidth}
    \centering
    \includegraphics[width=\linewidth]{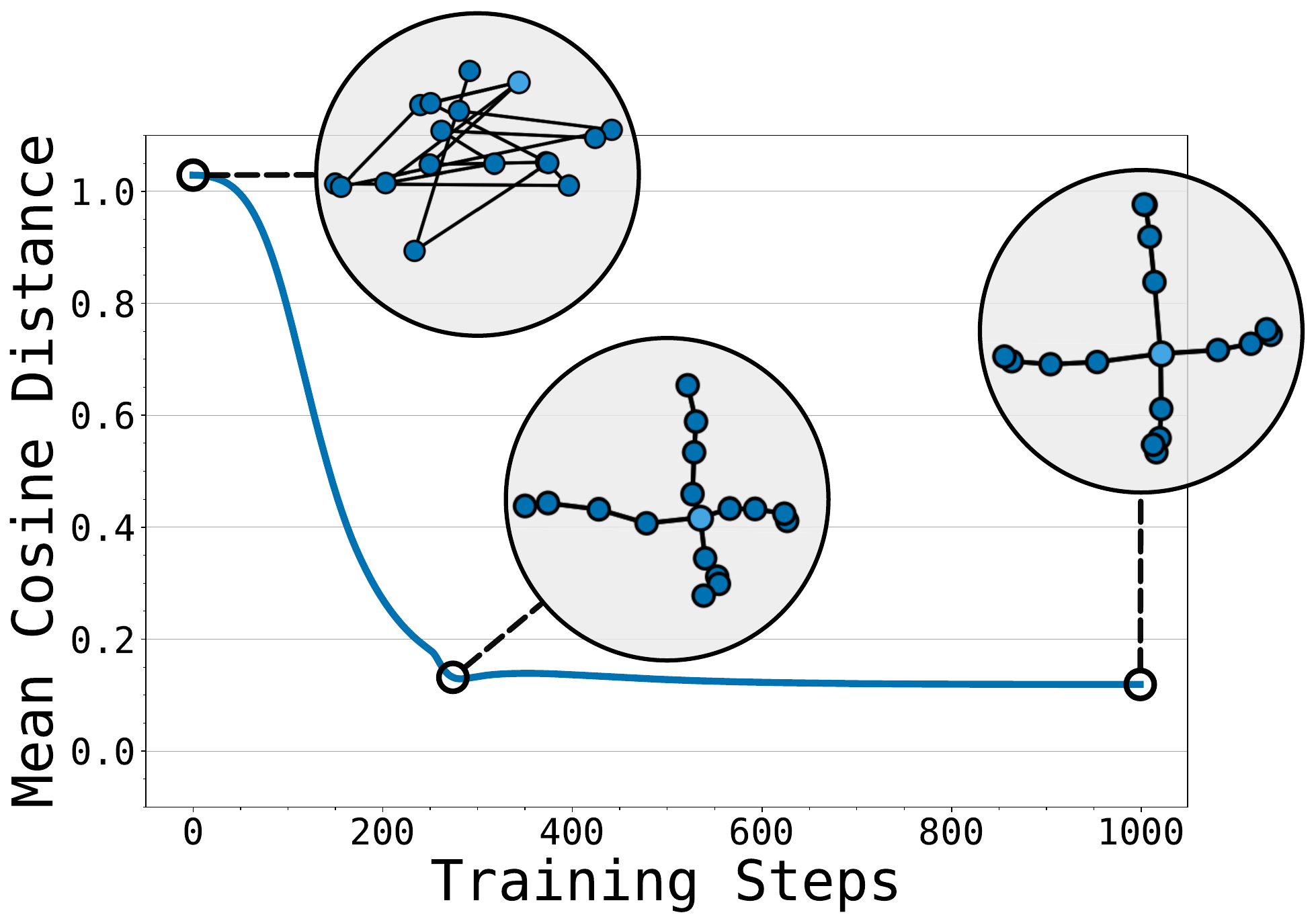}
    \caption{Geometric Memorization}
    \label{fig:geometric_memorization_best_path_star_conf}
  \end{subfigure}
  \caption{\textbf{Associative first, geometric later:} Associative memory (\textbf{left}) is only two steps away under gradient descent whereas geometric memory (\textbf{right}) takes much longer to form  for the tiny path-star graph. More details and similar plots for various other graphs and learning rates are in \cref{fig:associative_memorization_speed,fig:geometric_memorization_speed_all_lr}.}
  \label{fig:geometric_and_associative_memorization_speed_path_star}
\end{figure}

\ifarxiv
Refuting this, we argue that in sequence modeling, an associative lookup is significantly easier to find than the geometric one. In fact, during training, \textit{models typically first store the data associatively before storing it geometrically}. First, this is already (indirectly) evidenced at various points in literature (although not interpreted this way). In the graph tasks of \citep{ye2025implicit,wang24grokking}, the Transformer fits atomic facts well before it ``groks'' to learn circuits/representations that aid two-hop reasoning; in natural language \citep{li2025tracing}  models memorize n-gram statistics well before representations become more organized, giving way to generalization.   Indeed, we propose our setting as arguably the most minimal instance of grokking \citep{power2022grokking}.

A more precise and vivid claim can be made here:  \citet{nichani24factrecall} observe that the associative lookup table $\sum_{(u,v) \in \edges} \embedding(u) \otimes \embedding(v)$ (from \cref{hyp:associative})  can be found by  \textit{one step} of gradient descent given a reasonably large embedding size
on the correlation loss, $-\sum_{(u,v)\in\edges}f(u)[v]$.\footnote{This ``one-step-lookup-table'' phenomenon is distinctive of sequence modeling but not of a typical classification/regression task: for the lookup to be learned easily, we want the inputs near-orthogonal. This is true by design of the initial token embeddings of any sequence modeling task, but not necessarily true of the input distribution in classification/regression.}

We empirically verify this in \cref{fig:geometric_and_associative_memorization_speed_path_star} for the cross-entropy loss on our tiny graphs: for wide models, associative memory forms in $2$ steps, whereas a geometric memory requires $100$ steps.
Our intuition is that an associative memory is easy to find for it merely stores the adjacency matrix $\vecA$ surficially available in the data; a geometric memory requires unearthing implicit information from the data, which requires more iterations---as we will see in \S\ref{sec:geometry-spectral-bias}, this computation corresponds to low-rank
factorizing the adjacency matrix, or equivalently, computing a multi-hop adjacency matrix $\vecA^\ell$.

In summary, an associative memory is easier to find than the geometric one, and yet the model (eventually) organizes its memory geometrically:
\else
This is already (indirectly) refuted by various prior observations that models ``memorize verbatim'' well before learning representations that solve downstream tasks \citep{li2025tracing,ye2025implicit,wang24grokking}. Our intuition is that, 
intuitively, an associative memory merely stores the adjacency matrix $\vecA$ immediately available in the data (provably  within $1$ gradient step---see \citet{nichani24factrecall}); a geometric memory takes longer since the model has to unearth information implicit in the data. We empirically verify this for the cross-entropy loss on our tiny graphs:
\fi

\begin{refutationbox}
\begin{subrefutation}\label{obs:easy-to-find}\textbf{(Associative memory is easier to find with gradient descent)} Given sufficient embedding dimensionality and capacity, associative storage can be discovered 
in as little as two steps on all our tiny graphs (\cref{fig:associative_memorization_path_star_conf,fig:associative_memorization_speed}) while a geometric memory takes much longer to form (\cref{fig:geometric_memorization_best_path_star_conf,fig:geometric_memorization_speed_all_lr}). \ifarxiv Similarly, we interpret the earlier findings of  \citep{ye2025implicit,wang24grokking,li2025tracing} as the data being fit associatively early during training, while a geometry (and therefore, reasoning) arising only much later.\fi
\end{subrefutation}
\end{refutationbox}

\ifarxiv
\chunkbreak
\fi

Yet another plausible explanation for the geometry can be derived from modern intuition 
about generalization \citep{neyshabur2015bias,Zhang17rethinking}: gradient descent implicitly finds succinct fits of the data even without being explicitly told to do so, and thus evades a complex lookup. \ifarxiv So, even if gradient descent is quick to discover associative memory, 
over time, perhaps it discovers a geometric organization as it gradually ``compresses'' the data. \fi Indeed, such compressive or complexity-reducing effects have been brought up in recent analyses of geometry in language \citep{li2025tracing} and reasoning  \citep{wang24grokking}. We translate this into our terminology:

\begin{explanationbox}
\begin{subhypothesis}(\textbf{Implicit capacity pressure})\label{hyp:implicit-capacity-pressure}
Perhaps, the model memorizes geometrically (over the course of training) because gradient descent converges to a more succinct fit of the data.
\end{subhypothesis}
\end{explanationbox}

\looseness-1 To refute this, we challenge the underlying premise itself. Counterintuitively, a geometric storage is not necessarily more succinct than an associative storage. \ifarxiv To see why, we first clarify that unlike a typical classification task, or even a typical natural language task, our setting must be recognized as a \textit{memorization} task (in the sense of \citet{feldman20memorization}): the existence of an edge cannot be statistically surmised from the rest of the training set. Thus, straightforward notions of succinctness do not break the tie between the two ways of representing the data, at least for sparse graphs. Concretely, in the path-star or cycle memorization task, we find no complexity gap, when measured through bit or norm complexity---contrast this against the generalization setting, where the complexity of a lookup table would be significantly larger than the succinct one, typically \textit{by a factor that scales polynomially with the number of datapoints.} \else In a typical language task, which contains redundancies, a lookup table would scale rapidly in complexity with the dataset size while a geometric representation would scale more gracefully; in our \textit{memorization} tasks, this gap becomes a constant or non-existent:
 \fi

\begin{refutationbox}
\begin{subrefutation}
\label{prop:geometry_complexity} \textbf{(Geometric and associative memory are equally succinct)}
For sparse graphs where the vertex and edge counts scale similarly (e.g., path-star or cycle), the complexity of associative memory both in terms of bits and $\ell_2$ norms is either equal to or at most twice those of geometric memory.\ifarxiv\footnote{Equality is achieved if there is no weight-tying or if only forward edges are stored.}
\fi{ }Informal proof and more explanations in \S\ref{app:proof_geometry_complexity}.
\end{subrefutation}
\end{refutationbox}

\ifarxiv
Thus, while geometries are well-known in word embedding tasks \citep{elhage22superposition,park24geometry,mikolov13concepts} where a lookup is clearly more verbose, this result isolates what is not obvious about them:  geometries arise even when redundancies are absent and a lookup may be just as succinct. 
\else
Together, \cref{obs:local-supervision-suffices,obs:associative-is-possible,prop:geometry_complexity,obs:easy-to-find} lead to our \textit{\textbf{memorization puzzle}} in sequence modeling: why does a geometric memory arise instead of an associative lookup even without pressures from the supervision, the architecture and the optimizer?
\fi

\ifarxiv

\subsection{The memorization puzzle}

The above arguments isolate a fundamental aspect of well-known neural geometries that cannot be easily explained by well-known learning pressures. Geometric representations emerge even in an {atomic fact-memorization} task that lacks statistical redundancies, and attempts to understand this through familiar intuitions from classification/regression tasks---from the lens of ease-of-optimization and succinctness---seem to fail us in the land of sequence models.\footnote{Intuitively, this is because the geometric representations are of a fundamentally different nature from the well-studied \textit{feature} representations in classification/regression tasks (see \cref{rem:representation-diff}).} This gives rise to what we call the memorization puzzle: 

\begin{question}{The Memorization Puzzle in Sequence Modeling}{oq:1}{ \it
Why does a gradient-descent-trained sequence model memorize atomic co-occurrences geometrically, rather than associatively, even in cases where the associative storage is expressible, easier-to-find and
just as succinct as the geometric storage? }
\end{question}

Answering this requires a more thoughtful argument, perhaps by borrowing other notions of complexity (e.g., flatness \citep{keskar17flatness} or spectral norms \citep{neyshabur17exploring} or something else not considered so far), or by invoking other well-understood effects of {depth} and factorization in neural networks \citep{huang2025factorization,arora19matrix}---although, we cast doubt even on these effects; see \cref{rem:matrix-factorization}---or by invoking certain rank-minimizing and feature-sparsifying effects \citep{wang24wdjump,galanti25do,yunis2024grokking,cavazza18dropout,Andriushchenko23lrsparsity}. 

\clearpage
\fi

\ifarxiv
    \clearpage
\fi

\section{Geometry arises from naturally-occurring spectral bias, without pressures}
\label{sec:geometry-spectral-bias}

\looseness-1 Setting aside the competition with the associative memory, we can still ask questions about the geometry: \ifarxiv what is the  geometry, how does it correspond to global information, and how does it arise without explicit pressures? \else where does the global geometry come from, and how does it arise without explicit pressures? \fi
Recall that we find geometries not only in models with attention layers, but also in Mamba and in classical deep networks; to focus on the core phenomenon, we study the simplest yet non-trivial
multi-layered gradient-descent-trained model: a 2-layer linear network consisting only of an embedding and unembedding layer (weight-tied), trained on  the same loss (cross entropy) and the same dataset (of local edges). Equivalently, this is a 1-hop \nodetovec{} model with a 1-layer embedder. Here, associative memory is architecturally prohibited,\ifarxiv\footnote{\nodetovec{} with weight-tying cannot implement associative memory at least in the form we care about. A more contrived form is possible; see \S\ref{sec:nodetovec-associative}.}\fi{ }so we can pursue a clean analysis of the geometry alone.

\ifarxiv
\else
\looseness-1 \textbf{Where does the global geometry come from?} Precisely what embeddings are learned even in such simple models is an open question. But a rich line of work---with key assumptions about various pressures outlined shortly---points to a  \emph{low-rank spectral bias} of gradient descent: the model learns a low-rank eigendecomposition of the adjacency matrix $\vecA$. We corroborate this in \cref{sec:tiny-graphs}, showing that the graph's top eigenvectors indeed match what our $2$-layer, \nodetovec{} models learn. Our insight is that these top eigenvectors are the very source of global information: intuitively, these vectors are ones that should dominate the $\ell$-hop adjacency matrix $\vecA^\pathlength$.

\looseness-1 \textbf{Deep sequence models vs. (non-associative) \nodetovec{}-style models.}  Two more notable insights come from comparing the geometries of the \nodetovec{} model against deeper sequence models like the Transformer
(e.g., in \cref{fig:overview} or \cref{sec:tiny-graphs}). First, the geometries look alike suggesting that the known spectral bias of \nodetovec{} models is characteristic of all such deep models, regardless of architectural quirks. But strikingly, the \nodetovec{} geometries are much more well-organized than that which the deeper models exhibit (e.g., compare right two columns in \cref{fig:overview} or \cref{sec:tiny-graphs}). Thus, we conjecture a similar spectral geometry in sequence models like the Transformer but---in hindsight---mildly adulterated by the competition with local associative memory.
Despite this adulteration, the implications are profound: unlike the two-layer models, the Transformer comes with the power to \textit{reason} over the global geometry, an ability lacking in the  models. 
\fi

\ifarxiv
\subsection{Where does the geometry come from?}
\label{sec:adulterated-geometries}

A precise characterization of what embeddings are learned even in such simple models is an open question. But a rich line of work---with key assumptions about various pressures outlined shortly---points to a  \emph{low-rank spectral bias} of gradient descent: the learned embeddings align with the top (non-degenerate) eigenvectors of the negative graph Laplacian (roughly given by $\vecA-\vecD$, the difference between the adjacency and the diagonal degree matrix). Our figures corroborate this: the \nodetovec{} embeddings in \cref{fig:overview} right column match the top eigenvectors---called the Fiedler vector(s)---in \cref{fig:fiedler_path_star} left column. 

There are two notable insights from comparing the geometries of \nodetovec{} against our various deeper sequence models. First, these two geometries are similar, suggesting that the known spectral bias in 2-layer models must indeed be at the heart of all such deep models, regardless of architectural quirks. But notably, the \nodetovec{} geometries are much more well-organized than what the Transformer exhibits (in \cref{fig:overview} middle column). In total, we conjecture a similar spectral geometry in Transformers but---in hindsight---mildly adulterated by the competition with local associative memory. 

\begin{conjecture}\label{hyp:spectral-bias}\textbf{(Adulterated spectral bias)} The geometric memory of a Transformer has significant headroom in the quality of its geometry; we conjecture that this is because a gradient-descent-trained Transformer has spectral bias that is adulterated with some level of associative memory. 
\end{conjecture}

\ifarxiv

\begin{figure*}[ht]
  \centering
  \begin{subfigure}[b]{0.47\linewidth}
    \centering
    \includegraphics[width=\linewidth]{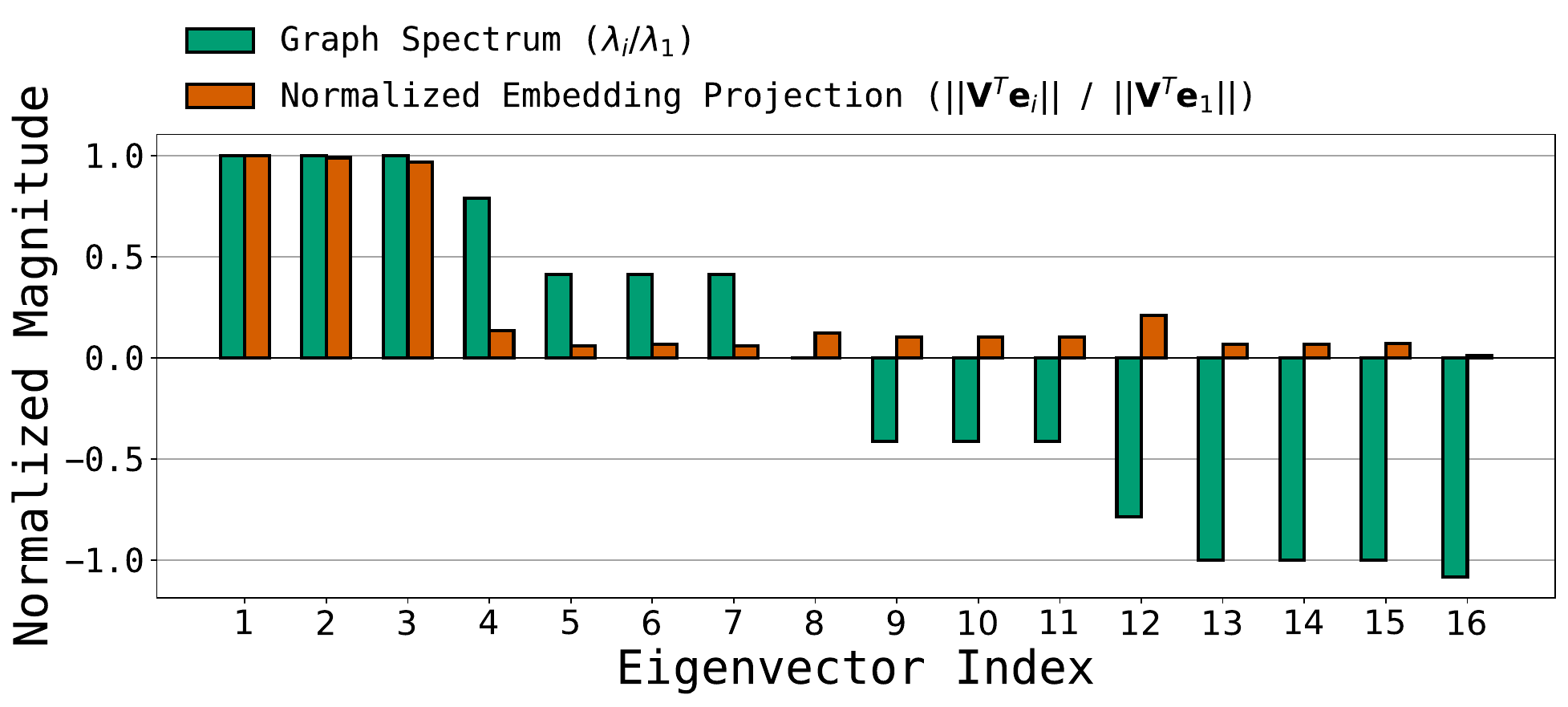}
    \caption{Tiny Path-Star}
    \label{}
  \end{subfigure}
  \hfill %
  \begin{subfigure}[b]{0.47\linewidth}
    \centering
    \includegraphics[width=\linewidth]{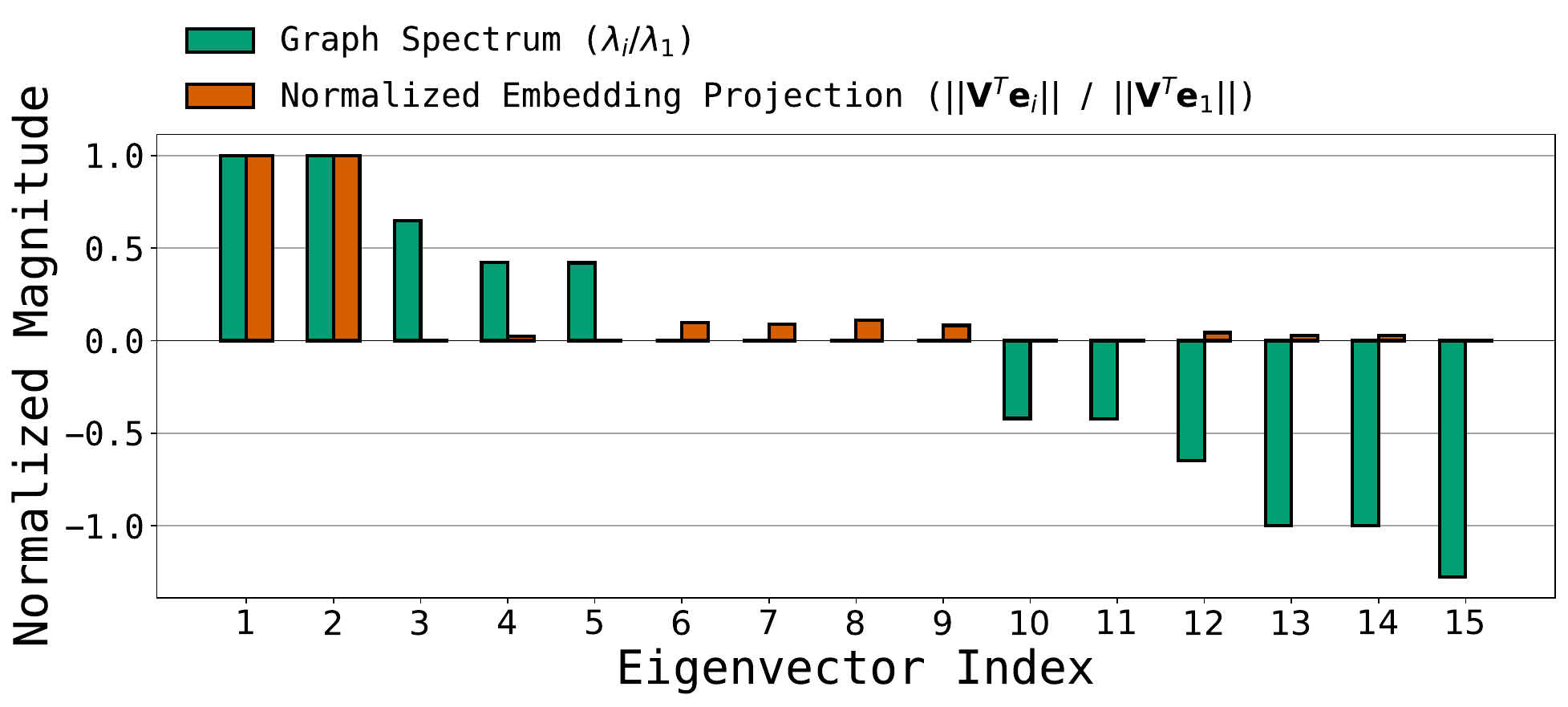}
    \caption{Tiny Grid}
    \label{}
  \end{subfigure}

    \begin{subfigure}[b]{0.47\linewidth}
    \centering
    \includegraphics[width=\linewidth]{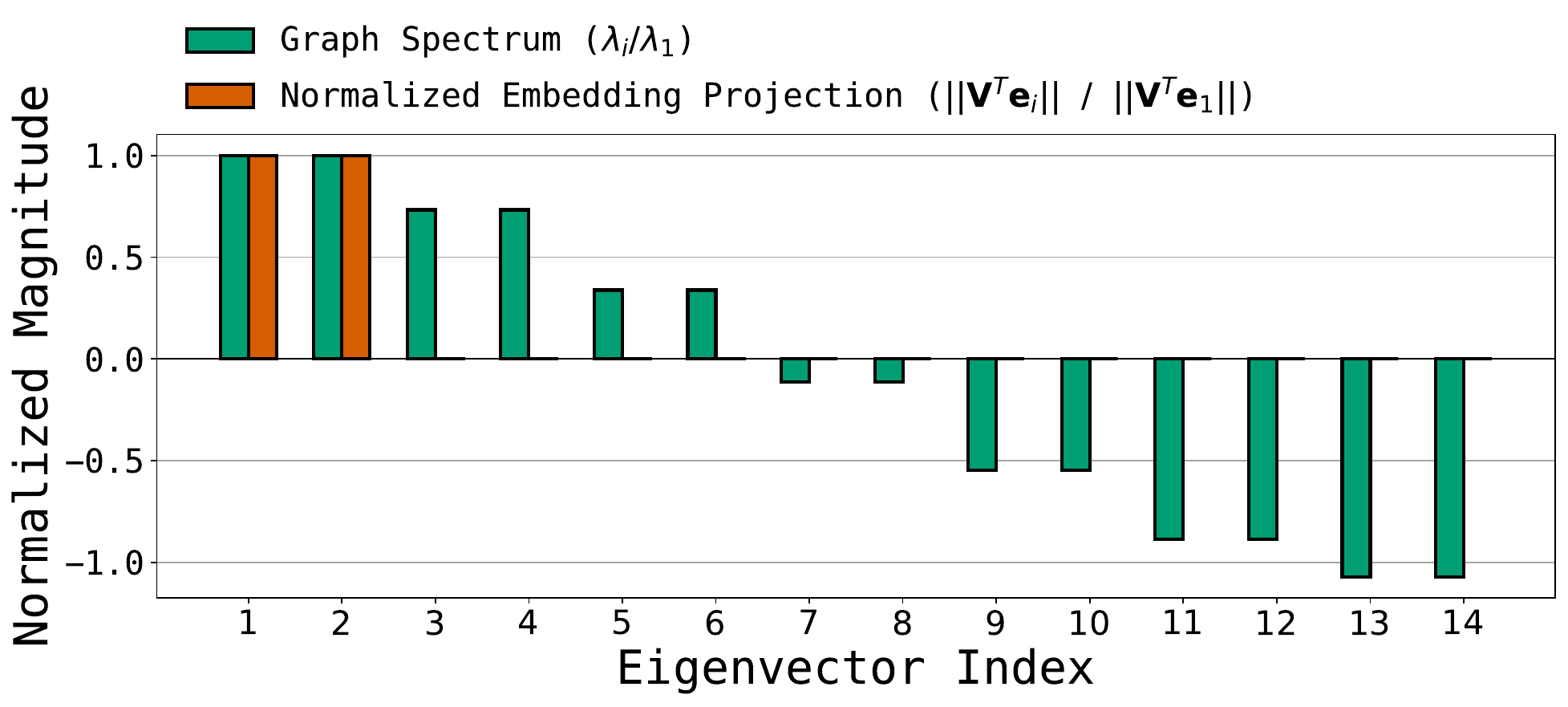}
    \caption{Tiny Cycle}
    \label{}
  \end{subfigure}
  \hfill %
  \begin{subfigure}[b]{0.47\linewidth}
    \centering
    \includegraphics[width=\linewidth]{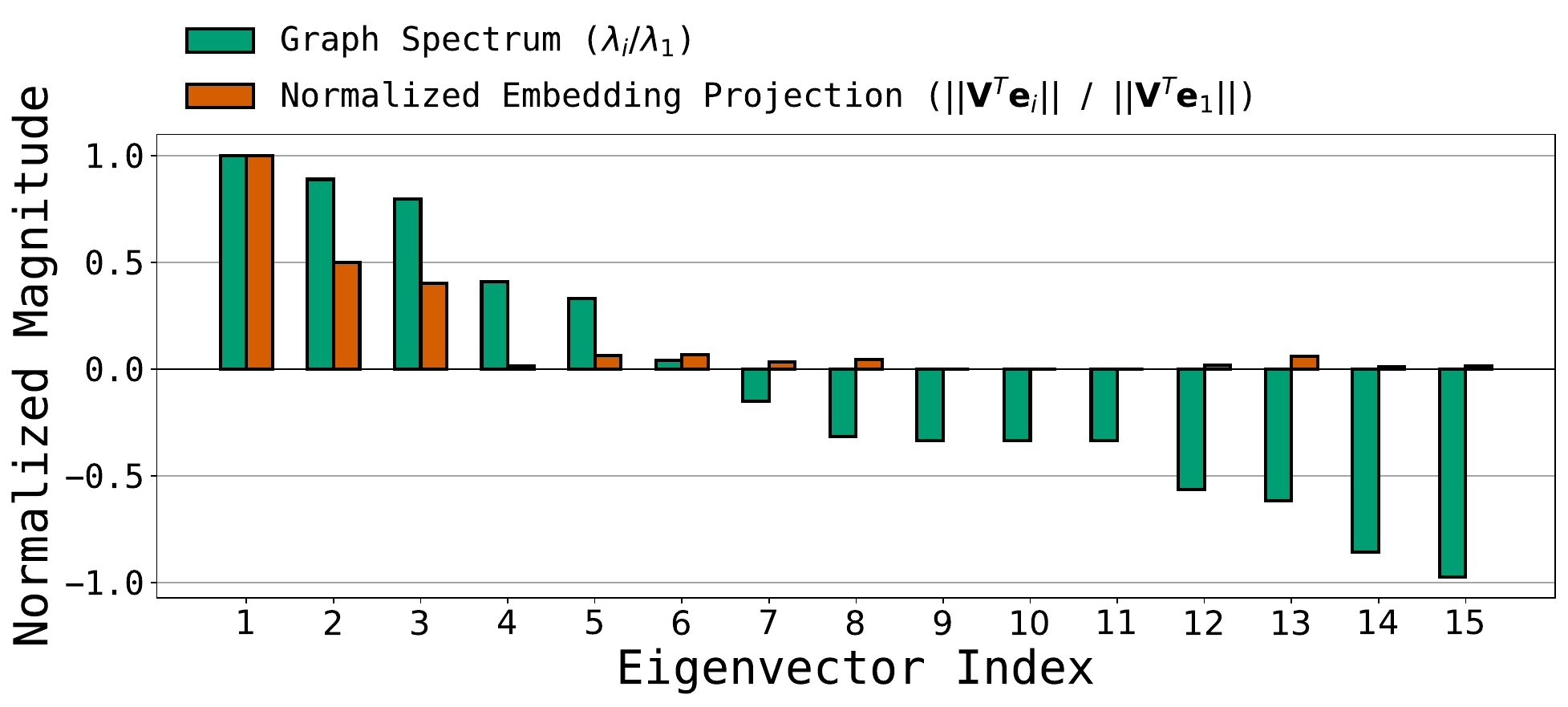}
    \caption{Tiny Irregular}
    \label{}
  \end{subfigure}
 
  \caption{
  \textbf{Low-rank spectral bias arises naturally in $2$-layer cross-entropy-trained models even without explicit pressures}. We consider a wide model ($d_{\text{embedding}}=100$, much larger than nodes in graph, so no rank constraint) trained on local supervision. We report the projection of the weights along the eigenvectors (normalized by that of the top eigenvector). As stated in \cref{obs:ce-natural-bias}, the model is naturally skewed towards the top vectors, in fact even more skewed than the eigenvalues themselves e.g., the model learns only a 2-dimensional embedding of cycle and grid graphs. Note that this observation is specific to the skew within the positive eigenvectors; the absence of negative directions is simply because the weight-tied \nodetovec{} can only express positive semi-definite matrices.}
  \label{fig:ce_lr_bias_full}
  \label{fig:ce_lr_bias_main}
\end{figure*}

\else
\fi
\subsection{Where does the global information come from?}
\label{sec:why-global}

The spectral bias also tells us where the global information comes from: the top eigenvectors, which are known to encode global structures in the graph. Intuitively, the top eigenvectors of the adjacency matrix $\vecA$ 
are also the directions that become dominant in the $\ell$-hop adjacency matrix $\vecA^{\ell}$, directions that translate to global information. An alternative perspective is to view geometric memory as having factorized the adjacency matrix $\vecA$  while (softly) filtering out the lower eigenvectors; this extra computation has produced new information not surficially available in the adjacency data.

In effect, we view Transformers as approximating the global geometry of a \nodetovec{}. Although only an approximation, the implications are profound: the Transformer comes with the power to \textit{reason} over the global geometry, an ability lacking in \nodetovec{}-style models. 
\fi

\ifarxiv
\subsection{Where does the low-rank spectral bias come from?}
\else
\textbf{Where does the low-rank spectral bias come from?}
\fi
\looseness-1 While the geometry stems from a low-rank spectral bias, we argue it is not clear where the bias itself comes from.
\ifarxiv Going back to \nodetovec{} theories (see \S\ref{sec:node-to-vec-theories}), the literature suggests that \nodetovec{}-like models rely on the top eigenvectors by assuming the explicit pressure of a bottleneck \citep{levy14imf,haochen21simclr,tan24simclr,jaffe2020word2vec,saxe13exact,saxe19semantics,baldi89pca} or explicit regularization \citep{karkada2025closedform,zhao2025ntpgeometry} or an explicitly  multi-hop supervision \citep{qiu18embedding}. Our setting defies all these assumptions: \else The literature suggests that \nodetovec{} models rely on the top eigenvectors assuming explicit pressures from a narrow bottleneck, regularization or  multi-hop supervision. \fi

\begin{observation}
    \label{obs:ce-natural-bias} \textbf{(Spectral bias without explicit pressures)}
 A 2-layer \nodetovec{}-style model under cross-entropy loss naturally exhibits a low-rank spectral bias even without explicit pressures (i.e., with large width, no regularization, and 1-hop supervision) as in \cref{fig:ce_lr_bias_main}. \footnote{We emphasize that 2-layer models show a particularly strong low-rank behavior not seen in deeper models. For instance, compare the two-layer cycle graph model in \cref{fig:ce_lr_bias_main} with the three-layer ones in \S\ref{sec:other-regularizers}.}
\end{observation}

\ifarxiv
Prior analyses inadvertently introduce explicit pressures in an attempt to simplify the dynamics under other losses, e.g., nearly all analyses are on simpler-to-analyze losses such as the squared-error or a negative-sampling loss. On the other hand, the (more complex-to-analyze) cross-entropy loss is known to provide implicit pressures in certain other settings, such as in classification where it induces margin-maximization \citep{soudry2018the}; it is also the loss of choice in modern language models.
Thus, our subsequent interest is in analyzing this loss, without assuming any explicit pressures, with the hope of explaining \cref{obs:ce-natural-bias}. \else We hypothesize that explicit pressures are required only for the simpler-to-analyze losses studied in literature. In contrast, for the cross-entropy loss,  we find that such a spectral bias naturally emerges (much like how such losses induce a max-margin bias \citep{soudry2018the}): \fi

\ifarxiv
\looseness-1 \textit{A priori}, one would expect the unregularized cross-entropy-based \nodetovec{} system to behave in one of a few ways: the system may diverge (to drive the probabilities to $0$ or $1$), or may converge but only in direction (like in logistic regression \citep{soudry2018the}); the converged direction in turn, may be degenerate (e.g., node representations collapse to a single value) or not. Through empirical analysis, we suggest that none of these are true. Instead, the system appears to converge to a zero-gradient solution. It does so by elegantly achieving two properties, one of which is a low-rank spectral bias:
\fi

\ifarxiv
\begin{observation}\label{hyp:fiedler_vectors} \looseness-1 \textbf{(How spectral bias emerges without typical pressures)}
In a 2-layer \nodetovec{}-style model  with the embedding $\vecV \in \mathbb{R}^{\numentities \times \embeddingdim}$ of $\numentities$ nodes, with the dynamics denoted as $\dot{\vecV}(t) = \eta \vecC(t) \vecV(t)$ (where $\vecC(t) \in \mathbb{R}^{\numentities \times \numentities}$ is a time-dependent co-efficient matrix), the converged solution for our small graphs is such that (a) the columns of embedding matrix $\vecV$ span the graph's Fiedler-like vectors\ifarxiv\footnote{We use ``Fiedler-like'' to denote not just the second-top eigenvectors of the negative Laplacian matrix,  but also some of the subsequent top eigenvectors.}\fi, and (b) the co-efficient matrix $\vecC$ has those same vectors in its null space (\cref{fig:fiedler_path_star}). Crucially, this dynamic does not need a low rank constraint; the embedding size $\embeddingdim$ can be larger than the graph. 
\end{observation}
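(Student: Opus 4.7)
The plan is to characterize the gradient flow in two phases: an early linear phase where $\vecC(t)$ is approximately constant and resembles a centered adjacency operator, and a late nonlinear phase where the direction of $\vecV$ converges while its norm diverges. The key mechanism is that the early phase acts as continuous-time power iteration on a Laplacian-like matrix, aligning $\vecV$ with its top eigenvectors (the Fiedler-like vectors), after which a subspace-invariance argument locks in this alignment through the rest of training.

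First, I would derive $\vecC(t)$ explicitly. A standard softmax-gradient calculation for the cross-entropy loss with weight-tying gives $\vecC(t) = (\vecA + \vecA^T) - (\vecP(t) + \vecP(t)^T)$, where $\vecA$ is the edge multiplicity matrix and $\vecP_{ij}(t) \propto \deg(i)\cdot\mathrm{softmax}_j(\vecV\vecV^T)_{ij}$ is the model's current predicted count of neighbor $j$ given context $i$. At initialization $\vecV$ is small, so the softmax is nearly uniform and $\vecC(0) \approx (\vecA + \vecA^T) - \tfrac{2}{\numentities}\vecD\vecone\vecone^T$. After projecting out the trivial $\vecone$-direction, this is, up to sign and scaling, the negative graph Laplacian $-\vecL$, whose top non-trivial eigenvectors are precisely the Fiedler-like vectors.

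Second, I would linearize the early dynamics. On an initial window where $\vecC(t) \approx \vecC(0)$, the solution is $\vecV(t) \approx e^{\eta\vecC(0)t}\vecV(0)$, which is exactly continuous-time power iteration on $-\vecL$. The top Fiedler-like eigenspace grows exponentially faster than all other non-trivial directions, so after finite time the columns of $\vecV$ concentrate in this subspace, independently of $\embeddingdim$: power iteration naturally filters out sub-leading directions regardless of how many dimensions are available, so no rank constraint is needed. To handle the late phase, I would invoke subspace invariance: if $\vecV$ lies in a subspace $\mathcal{S}$ that is $(\vecA + \vecA^T)$-invariant, then $\vecV\vecV^T$ preserves $\mathcal{S}$, so $\vecP(t)$ and hence $\vecC(t)$ preserve $\mathcal{S}$, and consequently $\dot{\vecV}(t) \in \mathcal{S}$; the flow never leaves $\mathcal{S}$.

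Finally, since cross-entropy has no finite stationary points, I would rescale the dynamics to track direction rather than norm, in the spirit of margin-convergence analyses like \citet{soudry2018the}. A directional fixed point must satisfy $\vecC_\infty \vecV_\infty = 0$, giving claim~(b) automatically; combined with the subspace lock-in from the early phase, $\vecV_\infty$ spans the Fiedler-like eigenspace, giving claim~(a). The main obstacle will be showing that the early-phase alignment \emph{persists} through saturation, when the softmax leaves its uniform regime and $\vecC(t)$ moves substantially; one would need either a conservation law keeping $\vecV$ exactly inside the Fiedler-like subspace, or a quantitative two-timescale argument showing that $\vecV$ locks in before $\vecC$ can change direction. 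Secondary hurdles are the gauge freedom in $\vecV$ (so only subspace claims are possible, not specific vectors), and the qualifier ``Fiedler-like'': on non-regular graphs or those with near-degenerate spectra, the limiting subspace is only approximately the top eigenspace of $-\vecL$, so a sharp spectral statement may need to be weakened or restricted to graph families with well-separated spectra.
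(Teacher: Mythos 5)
First, note that the paper does not prove this statement: it is an empirical \emph{observation}, supported by the experiments in \cref{fig:fiedler_path_star} and by an explicitly informal, ``empirically-informed intuition'' in \S\ref{app:spectral-bias-global-geometry}; a formal analysis is stated as open. Your sketch follows the same broad arc as that intuition (derive $\dot{\vecV}(t)=\eta\vecC(t)\vecV(t)$, identify $\vecC(0)$ with a Laplacian-like operator, argue spectral filtering, argue persistence of the alignment), but it silently relies on the very step the paper is careful to isolate as unproven. Your late-phase argument ``$\vecV\vecV^T$ preserves $\mathcal{S}$, so $\vecP(t)$ and hence $\vecC(t)$ preserve $\mathcal{S}$'' is precisely \cref{obs:row-softmax-transform}: that the row-softmax---an entrywise exponential followed by row normalization---maps a matrix to one with (approximately) the same eigenvectors. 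The paper calls this ``highly non-trivial'' and assumes it ``without even an intuitive proof''; it is the load-bearing hypothesis behind \cref{prop:time-invariant}. Neither the entrywise exponential nor the row normalization preserves invariant subspaces in general, so this cannot be asserted as a routine consequence of invariance of $\vecV\vecV^T$. Your closing caveat about alignment ``persisting through saturation'' gestures at the right worry, but the obstacle is not merely saturation: it is the nonlinearity of the row-softmax at every time step.

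Two further mismatches with the paper's account are worth flagging. (i) Initialization and the identity of the operator: the paper assumes a \emph{large} random initialization so that $\vecP(0)\approx\vecI$ and hence $\vecC(0)\approx-\vecL$, the degree-normalized random-walk Laplacian (\cref{ass:random-init}, \cref{fact:init-ev}); its eigenvalues are all nonpositive, so the mechanism is \emph{decay}---longer training filters out the bottom eigendirections (\cref{prop:negative-ev})---which is why the paper contrasts its picture with early-stopping-based accounts. Your small-initialization regime gives a near-uniform softmax and $\vecC(0)\approx(\vecA+\vecA^T)$ minus a rank-one correction; without the degree normalization coming from the objective (\cref{lem:nodetovec}), the top eigenvectors of $\vecA+\vecA^T$ coincide with the Fiedler vectors of $\vecL$ only for (near-)regular graphs, and the positive top eigenvalues turn the mechanism into growth-based power iteration. (ii) Your appeal to directional convergence in the spirit of \citet{soudry2018the}, on the grounds that cross-entropy has no finite stationary points, conflicts with what the paper actually observes: the system reaches a genuine zero-gradient configuration with $\vecC(t)\vecV(t)\to 0$, achieved not by $\vecC\to 0$ (impossible, since that would force $p(i,i)=0$) but by the column space of $\vecV$ entering the null space of $\vecC$; the only degeneracy is along the near-constant top eigenvector discussed in \cref{rem:early-stopping}.
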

\fi

\ifarxiv

\else
\begin{figure}[t]
  \centering
    \includegraphics[width=\linewidth]{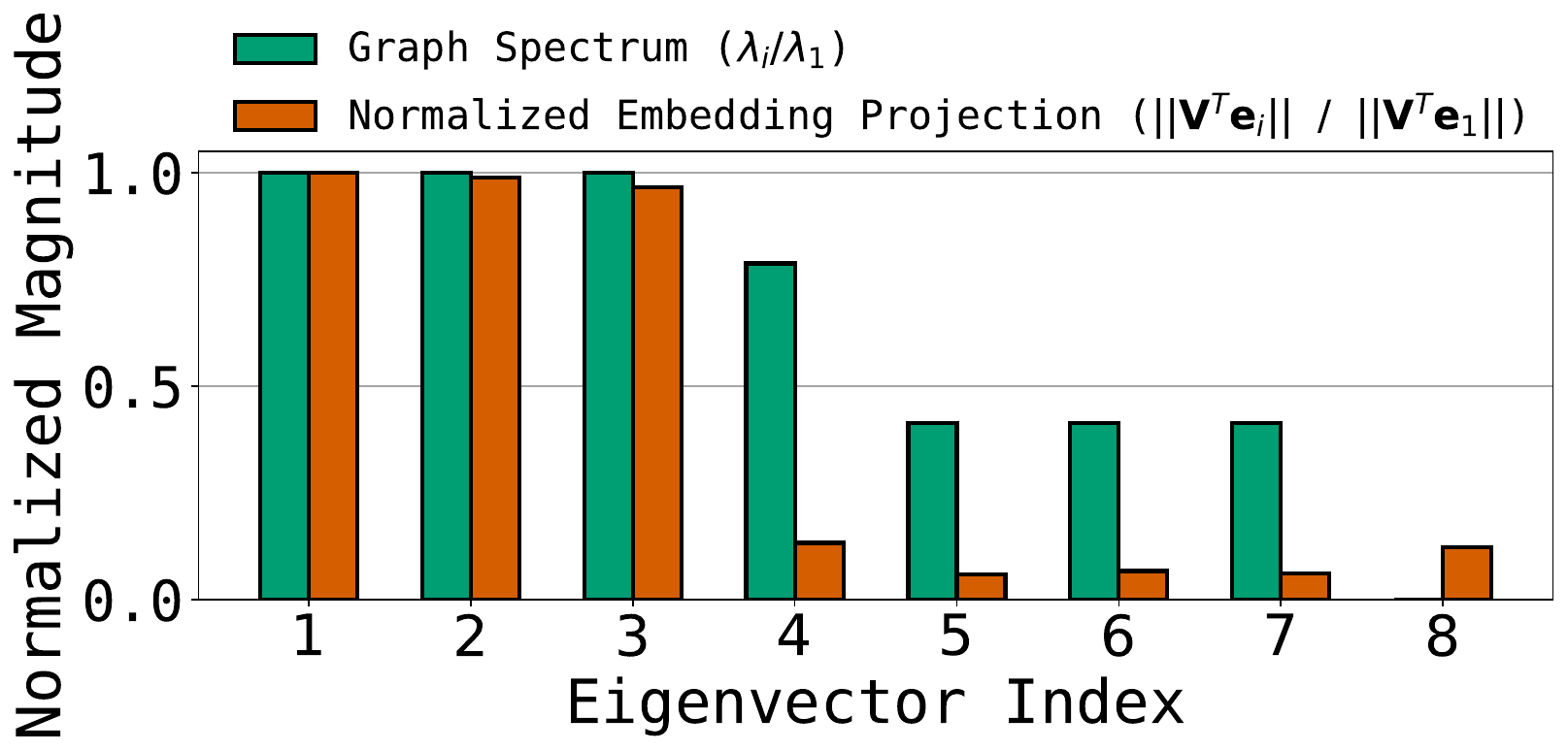}
  \caption{
  \textbf{Low-rank spectral bias arises naturally in $2$-layer cross-entropy-trained models even without explicit pressures}. We consider a wide model ($d_{\text{embedding}}=100$, much larger than nodes in the tiny path-star graph). We report the projection of the embeddings along the eigenvectors (normalized by that of the top eigenvector). Observe that the model is naturally skewed towards the top vectors, in fact even more skewed than the eigenvalues themselves. Further details and plots for other graphs are in \S\ref{sec:two-fold-convergence-plots}. 
  }
  \label{fig:ce_lr_bias_main}
\end{figure}
\fi

\ifarxiv 

\begin{figure*}[t]
  \centering
\begin{subfigure}[b]{0.28\linewidth}
    \centering
    \raisebox{0pt}{\includegraphics[width=\linewidth]{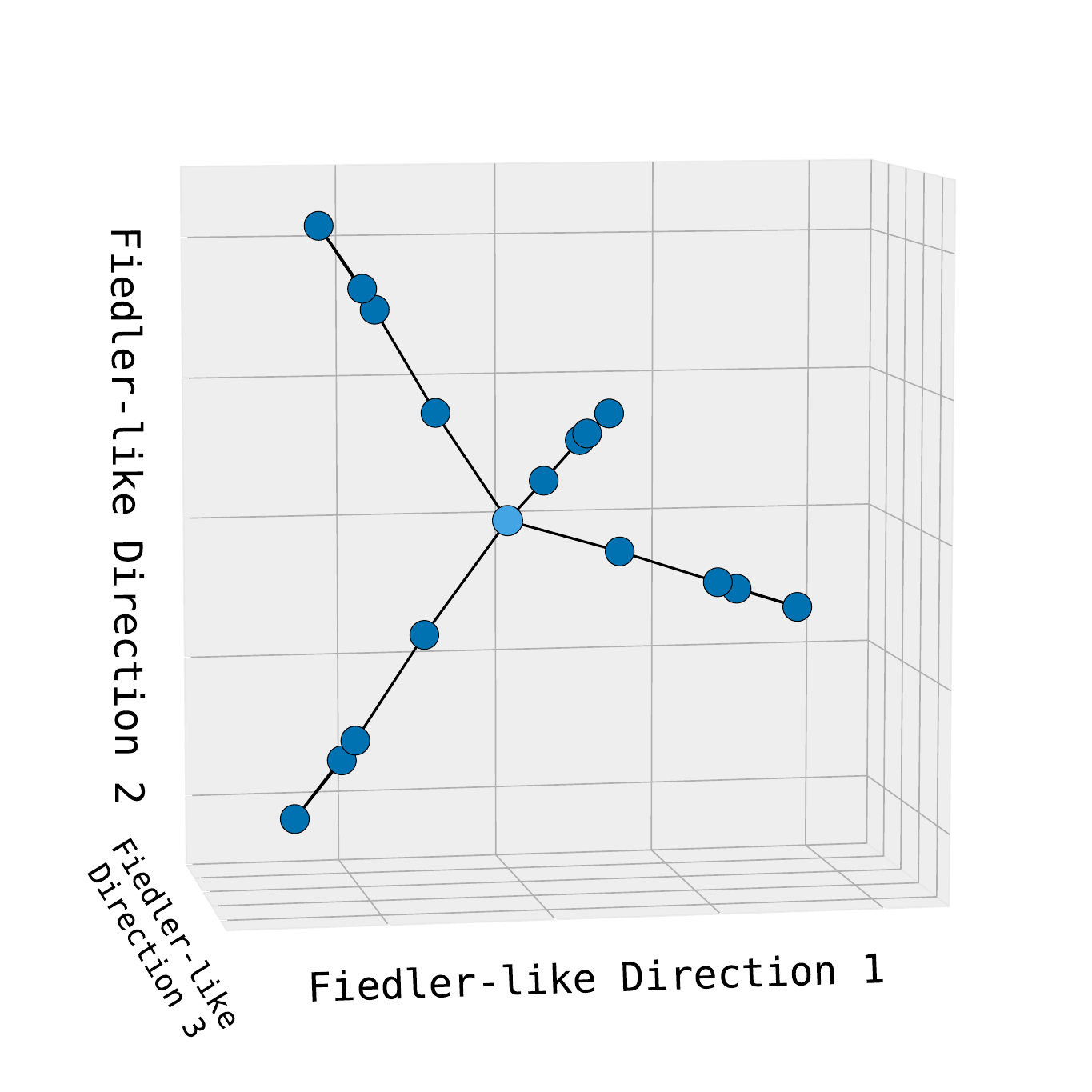}}
  \end{subfigure}
  \begin{subfigure}[b]{0.53\linewidth}
    \centering
    \includegraphics[width=\linewidth]{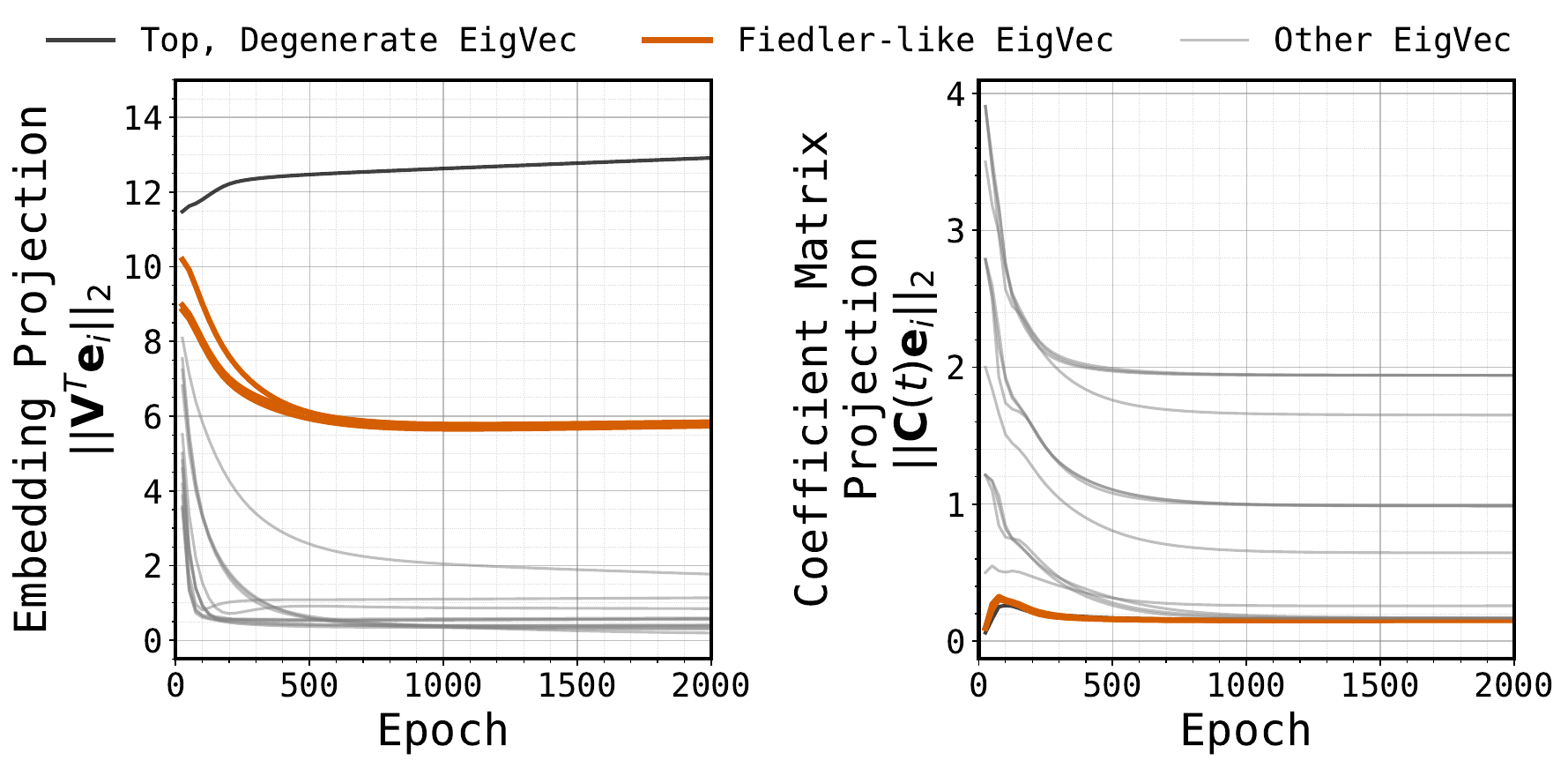}
  \end{subfigure}

  \begin{subfigure}[b]{0.28\linewidth}
    \centering
    \raisebox{0pt}{\includegraphics[width=\linewidth]{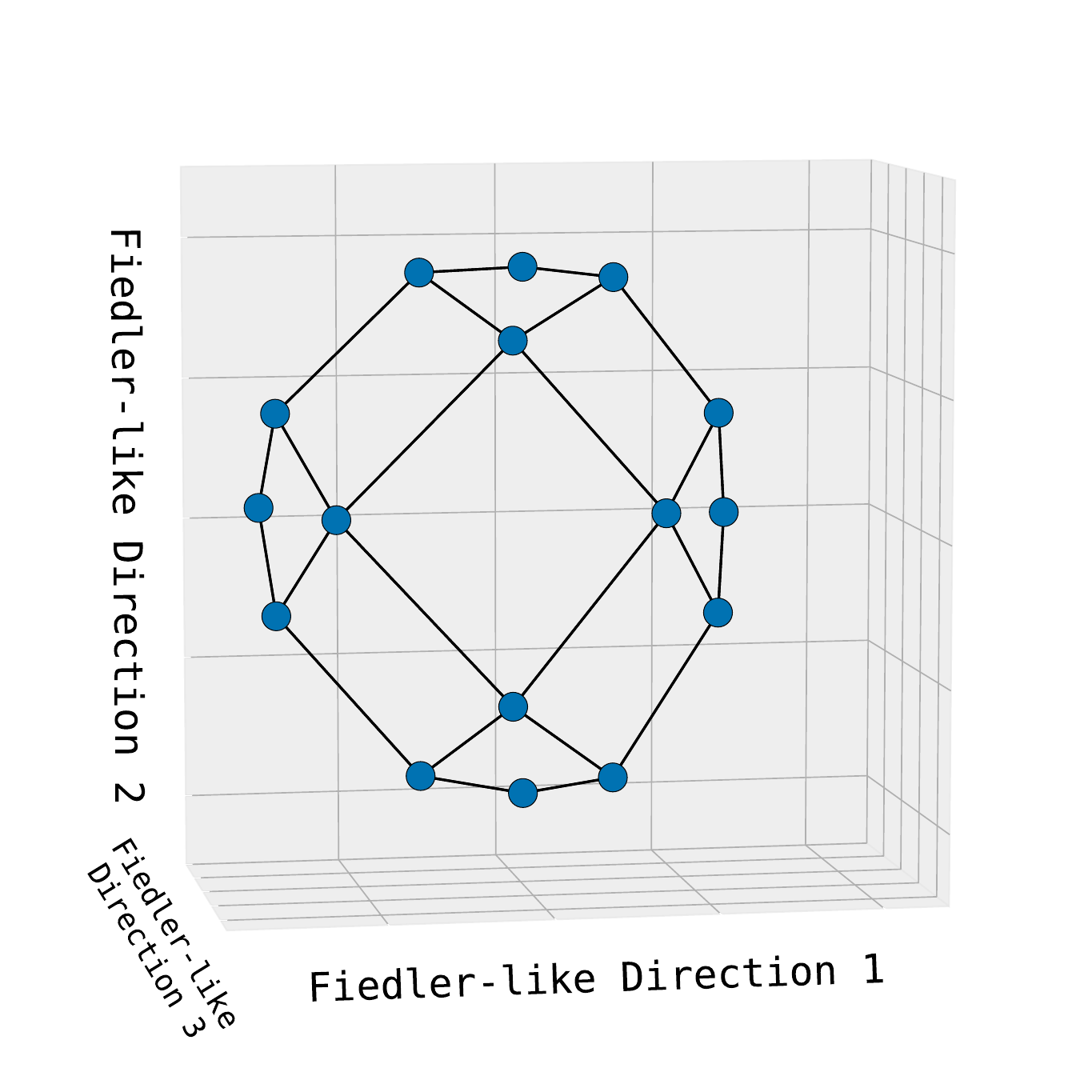}}
  \end{subfigure}
  \begin{subfigure}[b]{0.53\linewidth}
    \centering
    \includegraphics[width=\linewidth]{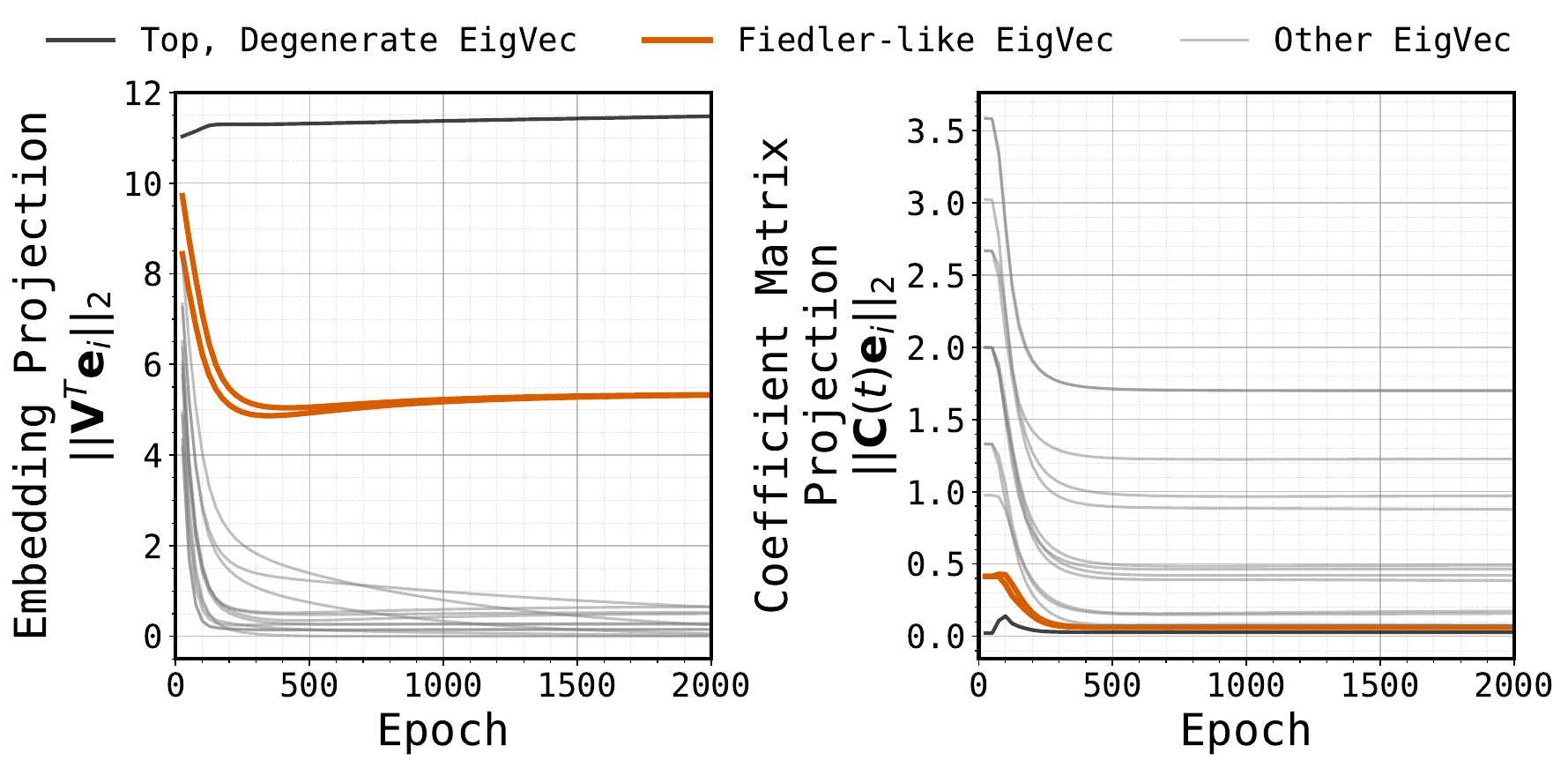}
  \end{subfigure}
  
  \begin{subfigure}[b]{0.28\linewidth}
    \centering
    \raisebox{0pt}{\includegraphics[width=\linewidth]{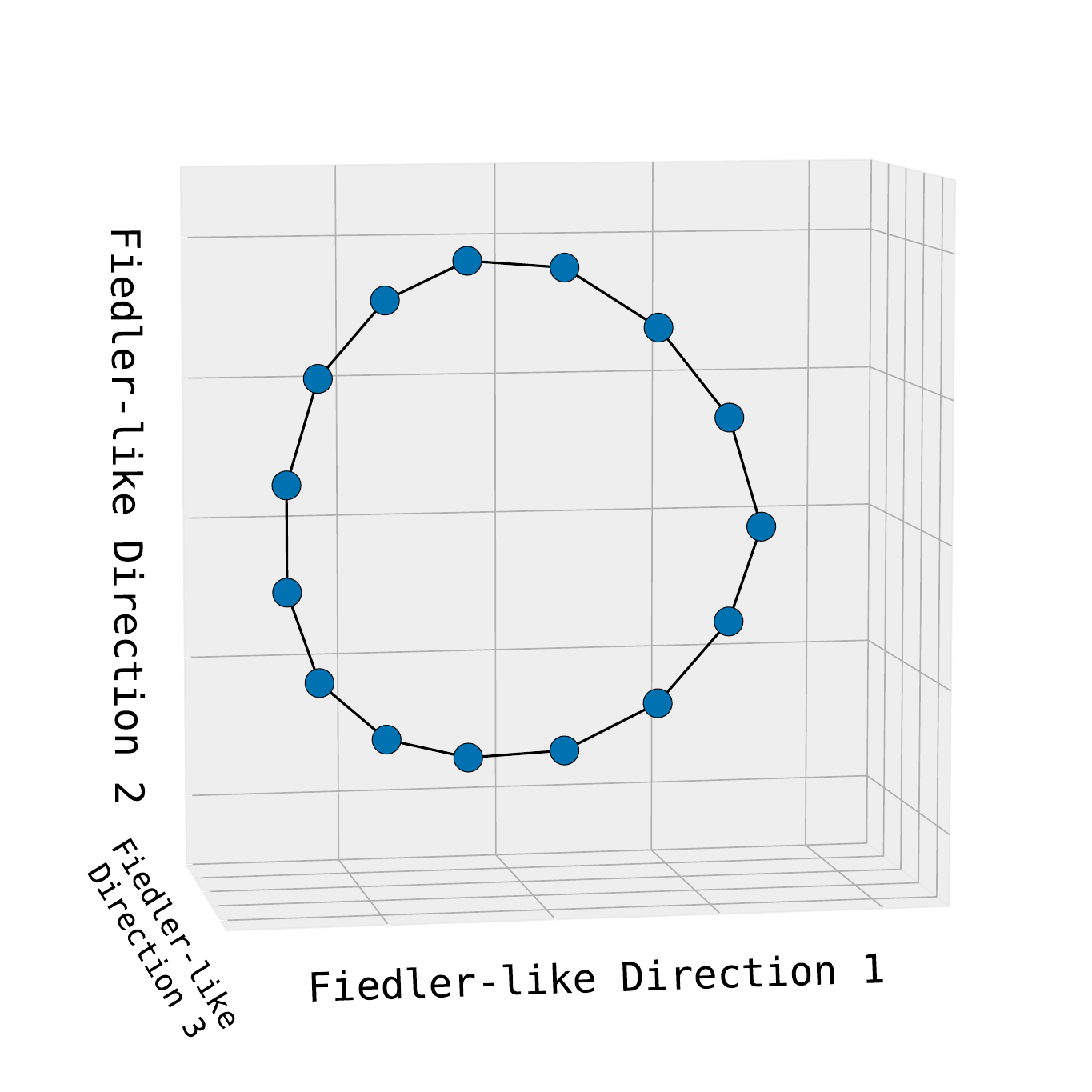}}
  \end{subfigure}
  \begin{subfigure}[b]{0.53\linewidth}
    \centering
    \includegraphics[width=\linewidth]{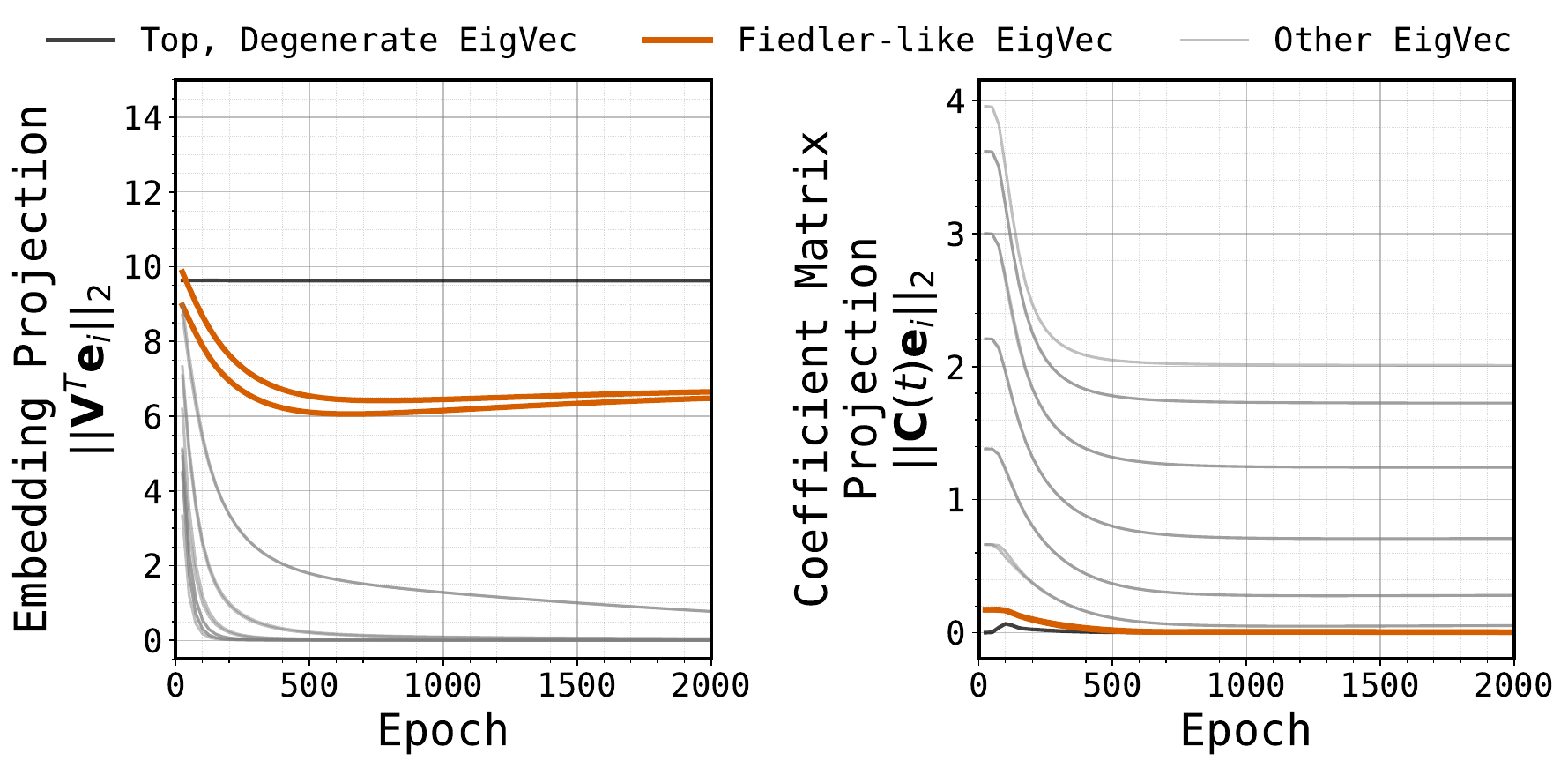}
  \end{subfigure}
  
  \begin{subfigure}[b]{0.28\linewidth}
    \centering
    \raisebox{0pt}{\includegraphics[width=\linewidth]{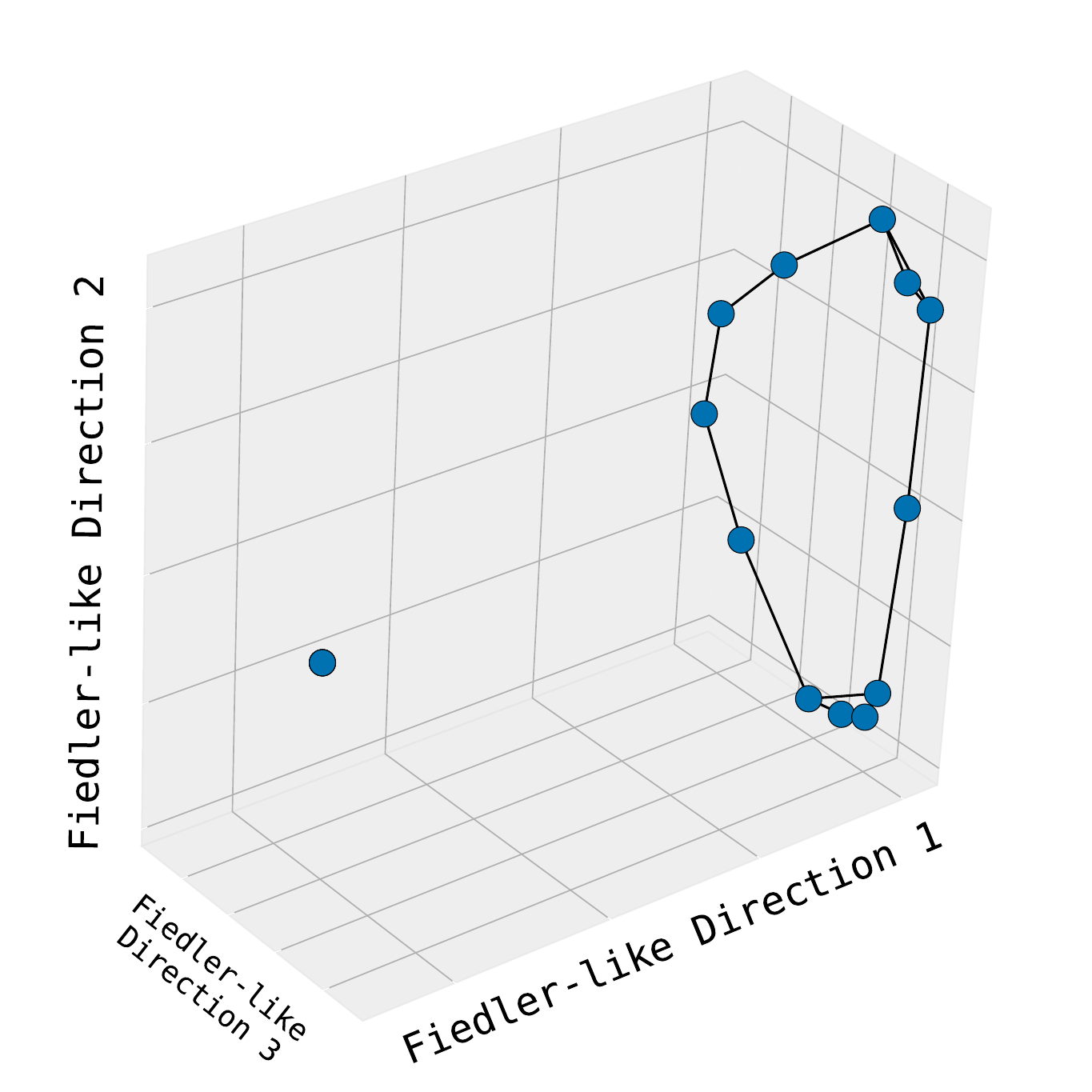}}
    \caption{Fiedler(-like) vector of the graph}
    \label{fig:fiedler_random-eigscatter}
  \end{subfigure}
  \begin{subfigure}[b]{0.53\linewidth}
    \centering
    \includegraphics[width=\linewidth]{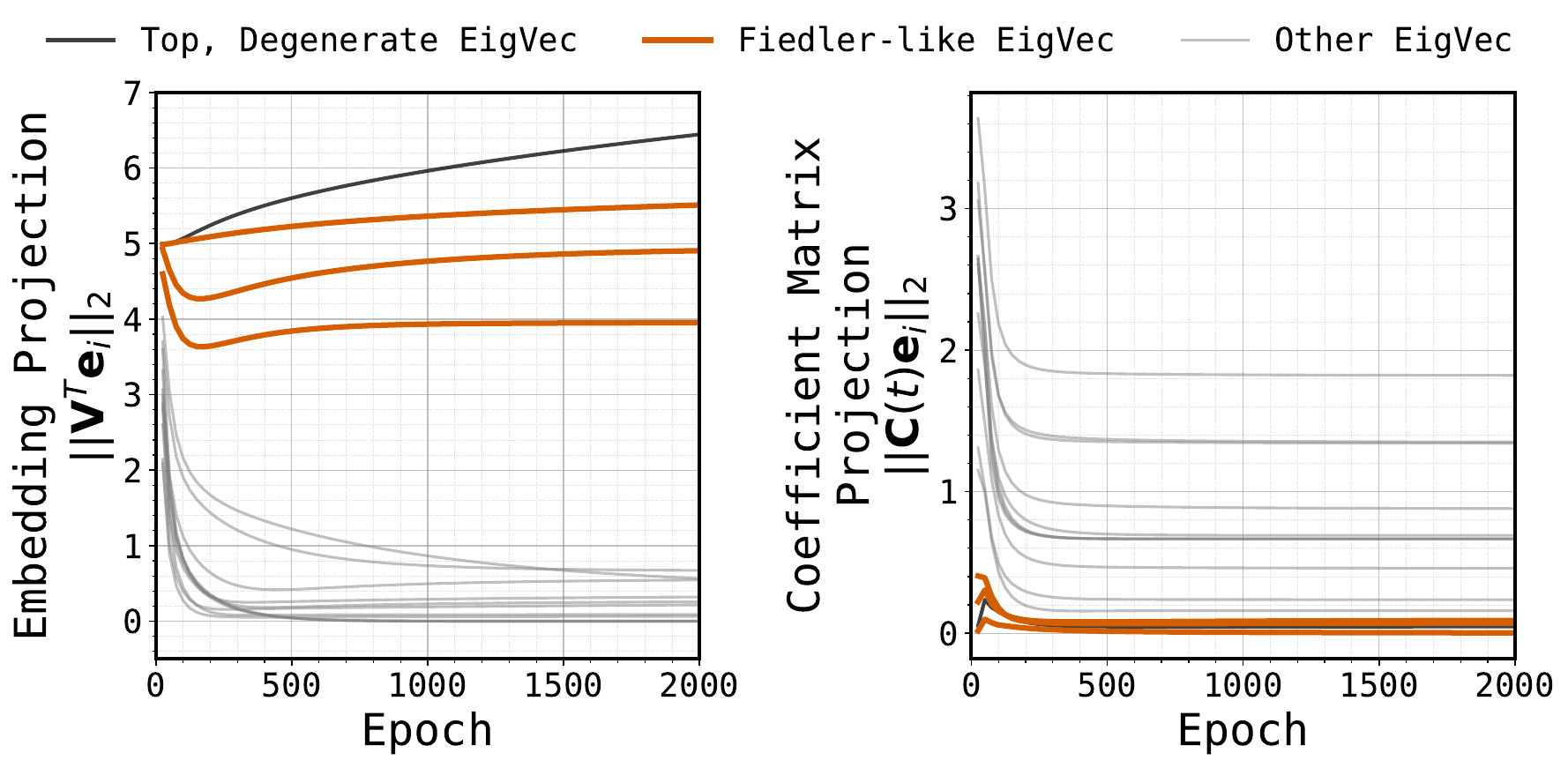}
    \caption{Eigenspace projection of \nodetovec{} dynamics}
    \label{fig:fiedler_random-projections}
  \end{subfigure}
   \caption{
  \textbf{\textit{How} a spectral geometry arises in \nodetovec{} without low-rank pressure (\cref{hyp:fiedler_vectors}) for various graphs (top to bottom)}. 
  \textbf{(a)}  The Fiedler-like vectors of the graphs encode global structure; this structure mirrors the \nodetovec{} embeddings shown in \cref{fig:overview}.
  \textbf{(b; left)} The evolution of eigenvector projections during training. \textbf{\emph{(middle)}} The embedding matrix $\vecV$ contracts into space spanned by the \textcolor{orange}{Fiedler-like eigenvectors}, evidenced by the projection norm $||\vecV^T \vece_i||_2$ converging to a stable, non-zero value; projections of \textcolor{gray}{other eigenvectors} diminish towards zero.
  \textbf{(b; right)} Concurrently, the null space of the co-efficient matrix $\vecC$ subsumes the Fiedler-like eigenvectors, in that the norm $||\vecC \vece_i||_2$ converges to 0 (or to a small value). This spectral bias arises without a bottleneck assumed in literature (embedding size $\embeddingdim=100$, much larger than nodes in graph).
  }
  \label{fig:fiedler_path_star}
\end{figure*}
\else

\fi

\ifarxiv
Why do the dynamics play out this way? We provide in \S\ref{app:spectral-bias-global-geometry} an empirically-informed intuition (and a partial proof) for a ``self-stabilizing''  dynamic. The system gradually filters out lower eigenvectors from the embedding matrix $\vecV$, while increasing their presence in the co-efficient matrix $\vecC$. Together, this zeroes out the gradients $\vecC\vecV$ over time. We leave open a formal analysis of what exactly the low-dimensional model converges to, the convergence rate, and whether this can be characterized into a form of max-margin-style bias:

\begin{question}{Low-rank Bias of 2-Layer Cross-Entropy Model}{oq:3}{\it 
 Why (and under what conditions) does a wide, 2-layer, weight-tied model, trained on the cross-entropy loss over edge bigrams from a graph dataset, converge to only a few top eigenvectors of the adjacency matrix even in the absence of explicit pressures? What exactly is the solution learned and what is the rate of convergence? }
\end{question}

\else
\looseness-1 How does this bias naturally arise? In \S\ref{app:spectral-bias-global-geometry}, we provide an empirically-informed intuition (and a partial proof) for how the cross-entropy system gradually filters out lower eigenvectors, reaching a zero-gradient minimum. But we leave open a formal analysis of convergence and whether this can be characterized into a form of max-margin-style bias. 
\fi

\ifarxiv
We emphasize that a two-layer analysis (including our partial analysis) does not divulge anything about the competition between associative and geometric memories. That requires a study of deeper models, where the dynamics of different spectral components begin cross-interfering. What our own two-layer analysis does is make progress on an open question for the simplest model possible. It also gives us an idea of how and what global information can arise naturally out of local supervision, devoid of any supervisory, bottleneck-driven, or regularizing pressure. These spectral dynamics must be central to any gradient-descent-trained deep sequence model.
\else
\looseness-1 We emphasize that this 2-layer analysis does not divulge anything about the competition between associative and geometric memories that troubles deeper models. 
The 2-layer analysis illustrates how and what global information can arise naturally, out of local supervision, without any supervisory, bottleneck-driven, or regularizing pressure. These spectral dynamics must be relevant to any gradient-descent-trained deep sequence model.
\fi

\ifarxiv 
With the above insights from \nodetovec{}, we revisit our original deep sequence models where the geometric memory is 
in competition with, and adulterated by, associative
memory (as noted in \S\ref{sec:adulterated-geometries});  here, we pose the following concrete question(s) in quantifying this competition:

\begin{question}{Characterizing Associative-vs-Geometric Bias of Deeper Models}{oq:2}{\it
\begin{enumerate}[label=(\roman*)]
\item To what precise extent does a deep sequence model memorize different parts of a graph dataset geometrically vs. associatively? 
\item What local or global notions of \textbf{graph complexity}---such as the graph spectrum, connectivity, node \& vertex count and degree---and what aspects of optimization---such as the architecture, learning rate, training time
---control this bias? 
\item What interventions in the data, architecture or optimization can be made to make memorization more geometric or more associative as desired in practice? 
\end{enumerate}
}
\end{question}

In \S\ref{sec:other-regularizers}, we provide some preliminary analysis for (iii), by demonstrating how various optimization hyperparameters (e.g., learning rate, weight decay, initialization) affect this bias. We leave a dedicated analysis for future work. 

\fi

\ifarxiv
\clearpage

\subsection{Zigzagging and bipartite memory}
\label{sec:weight-tying-and-svd}

Sometimes, we observe in various settings a zigzagging geometry: along certain top directions, adjacent vertices get embedded far away, while vertices that are exactly two-hop away get embedded close. This happens noticeably in two settings: (a) multi-layered models (especially when there is no weight decay) and/or (b) models with untied (un)embedding weights (e.g., \cref{fig:weight-untied}). 
We explain why zigzagging occurs, and clarify that this still falls within our extended definition of a geometric memory in \S\ref{sec:extended-def}. In short, zigzagging corresponds to negative eigendirections in the adjacency matrix, and these directions are picked up only when the architecture has the expressive power for it.

\textbf{Zigzagging directions and negative eigenvalues.}  A key insight is that zigzagging directions correspond to \textit{negative} eigendirections in the adjacency matrix.  An adjacency matrix (without self-edges) has both positive and negative eigenvalues. Typically, the most negative/bottommost eigendirections assign, alternatingly, $+1$ and $-1$ values as we walk along a path. Likewise, if we were to consider a singular value decomposition (\SVD{}) of the adjacency matrix---which becomes relevant for weight-untied models---such negative eigendirections  are ``folded up'': all strong eigendirections, with the most positive and most negative eigenvalues $\lambda$, become top singular directions with singular value $|\lambda|$ (see \citep{zhang2018svd} for relevant facts on spectral graph theory). Thus, in all these cases, even though the geometry looks unruly, and even though the cosine heatmaps don't reveal a multi-hop structure, the graph data has been stored by the model in a factorized form. 

As discussed in \S\ref{sec:extended-def}, this storage is still geometric. Furthermore,  the well-behaved ``global'' directions may still exist, and extractable via an appropriate probe on the embeddings. One way to test this is to add self-edges to the graph and see if the zigzags disappear from the top directions (as this would reduce the singular value of the zigzagging direction). We find this to be the case in \S\ref{sec:self-edges} \cref{fig:selfedge-ablation}. This also offers an explanation for recent observations that adding identity statements improves two-hop reasoning \cite{lin2025identitybridgeenablingimplicit}.

\textbf{Why are zigzagging directions learned?} We claim that such directions are picked up by the model
(a) because they help drive the loss down and (b) because they can be expressed only by certain models.
A model relying only on the top positive eigendirections would recover a positive definite approximation of the adjacency matrix, whose diagonals are highly positive i.e., the logit $f(u)[u]$ would be high since $\embeddinggeom(u) \cdot \embeddinggeom(u)$ would be high. Such a model would suffer a high softmax loss if there are no self-edges in the dataset. The negative eigendirections $\vece_{\tt neg}$ help correct for these highly positive diagonal logits by introducing a negative definite component of the form $\vece_{\tt neg} \lambda_{\tt neg} \vece_{\tt neg}^T$ where $\lambda_{\tt neg} < 0$. Indeed, when we add self-edges to the dataset, the zigzagging usually disappears! 

Such zigzagging however can only be expressed by some architectures. A weight-tied \nodetovec{} model cannot, for such a model can only express positive definite matrices. A weight-untied model can express such directions as $(\vece_{neg})(-\vece_{neg})^T$; indeed, weight-untied two-layer models are known to learn an \SVD{} decomposition \citep{zhao2025ntpgeometry,karkada2025closedform,saxe13exact}  which must include such directions as top singular directions. A multi-layered weight-tied model can also express these directions: it can use the intermediate layer to express the negative $\lambda_{\tt neg}$. Thus, we see zigzagging only where it is helpful (when no self-edges exist) and only when the model can express it (weight-untied or multi-layered).

\textbf{Open question.} Although visually unruly, these zigzagging eigendirections 
are highly-structured: in spectral graph theory, they are understood to be an approximate bipartite cut of the graph. Thus, they 
correspond to a form of geometric memory  distinct from the ``global'' one, best termed as a \textit{bipartite} geometric memory. 
Next, while the ``global'' geometry endows the model with navigational powers (as witnessed in \S\ref{sec:experiments}), what benefits does the bipartite memory bring to a model's reasoning abilities?

\begin{figure}[h]
  \centering
  \begin{subfigure}[b]{0.28\linewidth}
    \centering
    \includegraphics[width=\linewidth]{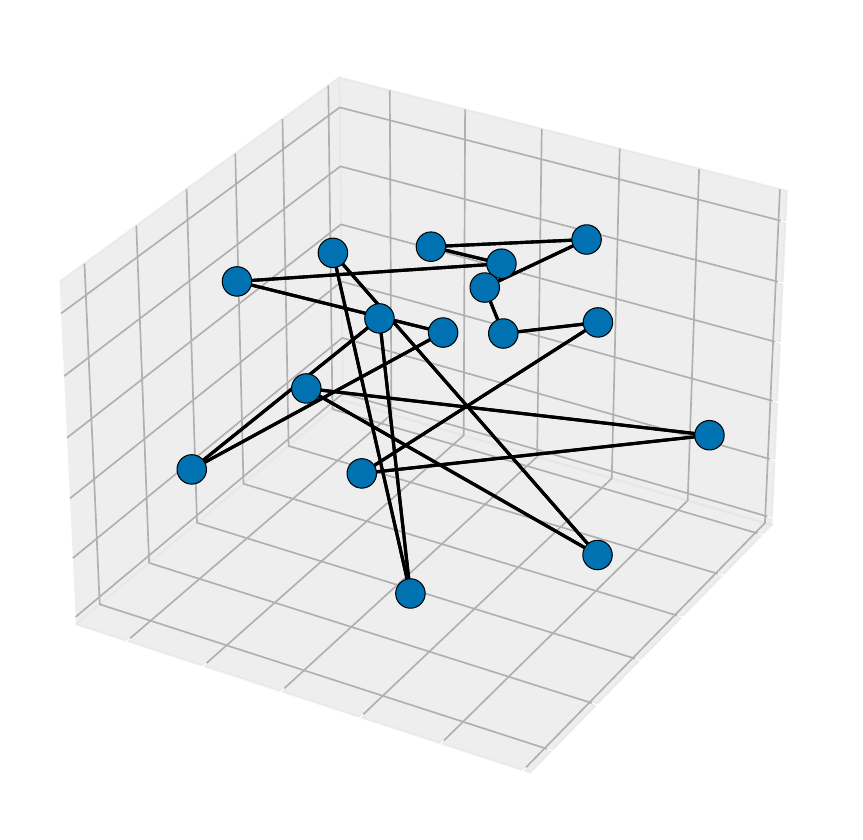}
    \caption{Cycle}
  \end{subfigure}%
  \begin{subfigure}[b]{0.72\linewidth}
  \centering
\includegraphics[width=\linewidth]{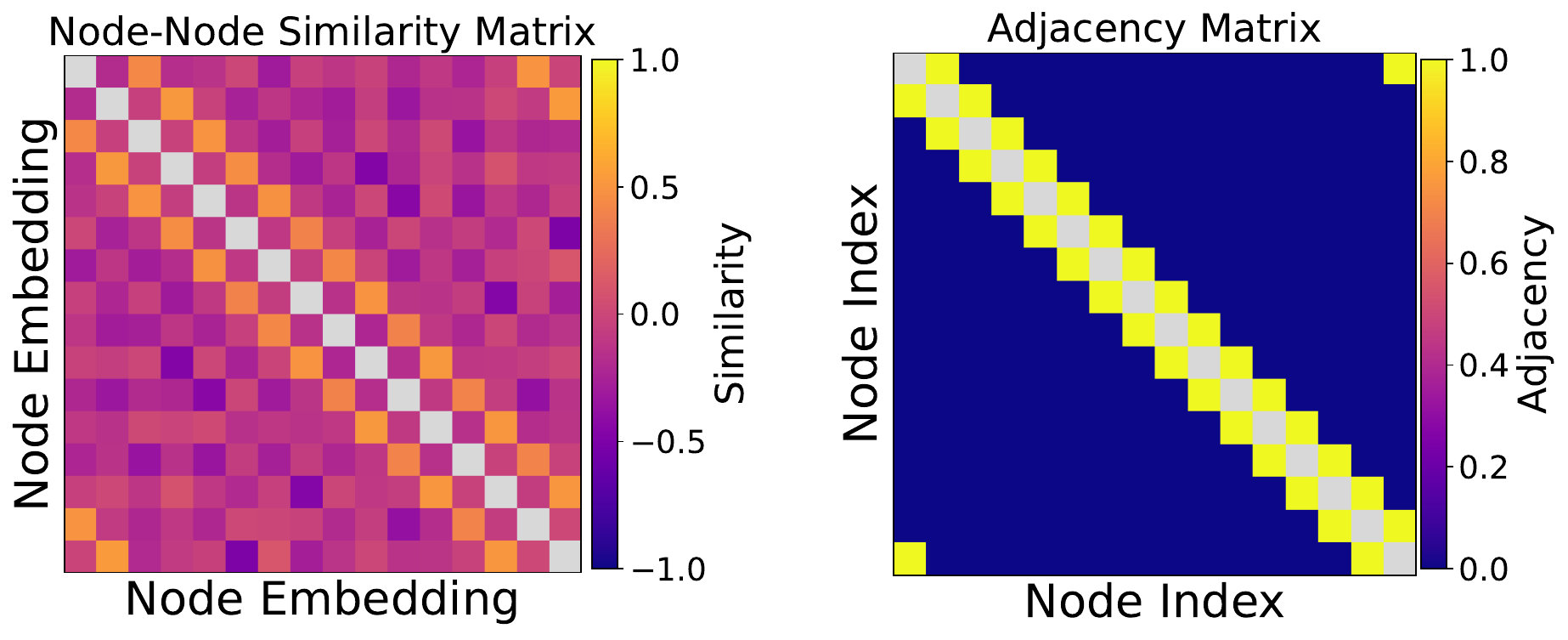}
  \end{subfigure}
  \caption{\textbf{Embeddings of a weight-\textit{untied} Transformer shows a zigzagging geometry.} Both the 3D visualizations of the embeddings and their cosine similarities show a zigzagging pattern (adjacent vertices have very low, even negative, similarity).}
  \label{fig:weight-untied}
\end{figure}

\fi

\ifarxiv
    \clearpage
    
\section{Related work}
\label{sec:related}

  Our work consolidates fragments of a nascent phenomenon and brings together distinct lines of theoretical and empirical work on memorization, learning compositional functions, and interpreting model representations. We elaborate on each of these threads below.

\subsection{Analysis of Transformer memory}

\textbf{Associative memory.} 
The concept of associative memory dates back to theories of how information is stored in the brain \citep{longeut70theories,willshaw1969non}. These ideas have since been explicitly modeled through various architectures such as Hopfield networks \citep{hopfield82associative,Ramsauer21hopfield}, energy-based models \citep{krotov16eb}, and other modern Transformer-style inventions  \citep{sam20am,hoover23energy}. 
Closest to us are works that analyze how architectures implicitly behave as associative memory storage, such as in autoencoders \citep{radhakrishnan20overparameterized} or Transformers \citep{bietti23birth,cabannes24scaling,nichani24factrecall,geva21keyvalue,schlag21fastwt,sukhbataar19augmenting}. Others have established connections between the attention mechanism and Hopfield networks \citep{Ramsauer21hopfield,smart2025associative}.
We emphasize that this view is sufficient to understand Transformer behavior on disjoint facts, evidenced by the rich empirical literature built on this view. The geometric view only seems necessary when the facts become interdependent.

\ifarxiv
\else
\begin{remark} \label{rem:associative-embeddings} In literature, associative memory is abstracted with the assumption that the embeddings are orthogonal as in  \citep{wang24alpine,ghosal24fact,jiang24elephants,zhu24reversaltheory,Cabannes24gd} or random in sufficiently high-dimensions (and thus, near-orthogonal) as in \citep{nichani24factrecall,cabannes24scaling,bietti23birth,wang2025rethinking}.
\end{remark}
\fi

\textbf{Geometric memory as node representations.} Although geometric memory has not been explicitly labeled as a distinct form of parametric memory, we find scattered observations of it in  literature in weaker forms or more limited settings. 
\flexicitet{jiang24elephants,khona24synthetic,ye2025implicit,nishi25shattering} show that nodes in graph-like datasets are embedded in a geometric way that encodes distances. These settings, which are on smaller-scale graphs, involve some form of global supervision or latent high-level structure, while we find that such a geometry can arise even with atomic facts. \citet{jiang24elephants} further juxtapose these structured embeddings against associative memory. 
However, they suggest that the structure becomes relevant only in the underparameterized setting, and that an associative memory suffices as an
abstraction in the  overparameterized setting; we find this is not the case. 
Independently,  on two-hop datasets, \flexicitet{ye2025implicit,wang24grokking} report how models first ``memorize'' the training data, before ``generalizing'' on their reasoning task; under the hood, this translates to associative and geometric memorization respectively. \citet{nishi25shattering} argue and report that knowledge editing becomes tricky when memorizing cyclic graphs. Indeed, we suspect that a geometric memory poses problems to knowledge editing, since information is not stored loosely. This is an important open problem for future work to address.

\textbf{Geometric memory as factorized knowledge.}  Recall that the geometric memory can be viewed as storing a low-rank factorization of the adjacency matrix. The advantages of such factorizations is again scattered in literature. \citet{saxe19semantics,saxe13exact} prove that a two-layer model naturally learns an {\tt SVD} factorization of input-output associations. This is a powerful result demonstrating the value of depth in learning hierarchical representations; but the global, multi-hop nature of these representations was not highlighted. Also, unlike our setting, theirs requires explicit pressures to demonstrate a low rank spectral bias (either through a bottleneck or early-stopping) for they study the squared-error loss.\footnote{This requires some nuance. Under their squared error loss (applied without a softmax layer), their (weight-untied) fully-trained 2-layered model would learn the 1-hop adjacency matrix perfectly, with no global, multi-hop information revealed in the logits. Although the associations are stored in a factorized form, the final hidden representations do not skew further towards the higher directions than the eigenvalues themselves. For the cross-entropy loss, there is such a skew (\cref{fig:ce_lr_bias_main}).} More recently, \citet{huang2025factorization} point out that two-hop reasoning emerges if  and only if the key-query matrices in an attention layer are kept separate (as against being collapsed into a single matrix); we show that such favorable effects occur in more general architectures without attention layers.  They also highlight the role of an implicit regularization effect from their reasoning supervision; while this may indeed explain the benefits of a reasoning objective, we emphasize that this does not resolve why a geometry arises under pure local supervision---see \cref{rem:matrix-factorization}.

We also interpret these prior analyses \citep{saxe19semantics,huang2025factorization} as a study of how $2$-layered models memorize a special type of graphs: \textit{bipartite} graphs (although they do not express it as such).\footnote{Concretely, \citet{saxe19semantics} study mappings between ``entity'' tokens and binary feature vectors; we translate this as a bipartite graph where ``entity'' nodes are connected to various ``feature'' nodes. \citep{huang2025factorization} study a mapping between ``subject'' nodes and ``answer'' nodes.} Importantly, as noted in our own analysis in \S\ref{sec:geometry-spectral-bias}, such 2-layered models do not capture the competition between associative and geometric memory.\footnote{At least not in its full glory; in $2$-layered weight-tied models, there is only a contrived form of associative memory (\cref{prop:no-associative-in-nodetovec}); if weight-untied, a natural form can arise in a single step (\cref{prop:non-weight-tied}).} For that, a study of deeper models is required, where the dynamics along the various spectral components become entangled; to keep them untangled, unnatural assumptions about the initializations need to be made, as shown in \citep{saxe13exact}. Another nuance in these prior 2-layered analyses is the lack of weight-tying, which creates significant differences in what is learned, as we identify in \S\ref{sec:weight-tying-and-svd}.

\ifarxiv\else\vspace{2em}\fi
\begin{remark}\label{rem:matrix-factorization}\textbf{(Implicit regularization in matrix factorization is an insufficient explanation of geometric memory)}  \citet{huang2025factorization} 
attribute the success of reasoning to implicit regularization effects that arise in matrix completion tasks; we argue that these effects do not explain why a geometry arises over associative storage in our experiments.  The ``matrix completion'' view in \citet{huang2025factorization} is that  during training the model attempts to fit certain observed entries in a $2$-hop adjacency matrix $\vecA^2$; any test query in effect demands that the model ``complete'' a missing $2$-hop entry. While many minima exist in such underdetermined matrix completion tasks, it is known \citep{gunasekar17matrix,arora19matrix}  that gradient descent on a deep factorized model finds the least nuclear norm matrix consistent with the observed entries. However, our edge-memorization-only task  is a well-determined task that requires storing the adjacency matrix  $\vecA$ as is, without any missing entries to infer; thus
no such implicit pressure must exist.  Yet a geometry arises in this task as shown in \cref{obs:local-supervision-suffices}. While it is still possible that there is a certain nuclear norm minimization effect even in edge-memorization tasks---perhaps because the logits themselves are not well-determined---the connection is not as obvious as it may seem.
\end{remark}

     \textbf{Expressive capacity.} Theoretical works have quantified bounds on the expressive capacity of models when it comes to memorizing sequences \citep{mahdavi24memorization,kajitsuka25capacity,madden25memorization,Kajitsuka24universal,kim2023provable} as opposed to associations between pairs of bigrams. These do not comment on the learning dynamics. These works typically assume that the token embeddings are all well-separated from each other (an assumption that empirically breaks in our setting). In light of the geometric view, it is necessary to restate expressive capacity bounds in terms of geometric capacity of a network and the ``geometric complexity'' of the dataset. 

     \textbf{Empirical analyses.} Other works \citep{allenzhu25physicsknowledge,morris25memorize,roberts20knowledge,pan25understanding} have performed careful empirical analyses of scaling laws for memorization, and quantified memorization in terms of ``bits per parameter count'', known as bit complexity \citep{vardi22bitcomplexity}, which is related to our notion of bit count in \cref{prop:geometry_complexity}. All these analyses are worth revisiting 
     with the newfound lens of a geometric memory.
     \citet{zucchet25facts} empirically analyze the dynamics behind how facts are memorized in a model. Others \citep{liu24knowformer,zhang2025twostage} have proposed methodological improvements to acquiring knowledge in a Transformer; of relevance to us is the finding in \citet{zhang2025twostage} that training a model simultaneously on both facts and question-answering is a better way to integrate knowledge into the parameters.

\textbf{Mechanistic interpretability.}  There have been mechanistic investigations into how Transformers perform fact recall  \citep{lv21recall,geva23recall} and where facts are stored in a transformer \citep{geva21keyvalue} and how it can be edited \citep{zhu2020modifying,meng22editing}. Similar attempts have been made in traditional classifier networks \citep{baldock21difficulty,maini23localized,stephenson21geometry}.
Directly related to us are  
the works of \citet{khona24synthetic} and \citet{lm25yao,biran24hop} who perform a mechanistic interpretability analysis of how multi-hop recall works.

\subsection{In-weights reasoning tasks}
\label{sec:related-inweights-reasoning}
\textbf{Synthetic graph tasks.} Our work consolidates positive results of in-weights reasoning in literature, and presents a stronger instance of it, removing various confounders that make composition-learning easy. \citet{khona24synthetic} report successful path-finding  on $200$ nodes-large in-weights graphs, with varying path lengths and test-train overlap. Others ~\citep{wang24grokking,feng2024extractive,ye2025implicit,huang2025factorization} report positive results on much shorter 2-hop tasks over a thousand entities or fewer. \citet{geerts25relational} look at in-weights transitive inference, a special type of $\pathlength$-fold composition query where the model is trained on local comparisons and is queried on more distant comparisons. On settings with $7$ objects, \citet{geerts25relational} find a clear difference between an in-context  version of their task  (where the model struggles) and an in-weights version (where the model succeeds). %
Our task of finding the first node is a much harder \textit{search} task akin to finding the smallest node in an ordered relationship. \citet{nagarajan2025roll} discuss the limitations of next-token prediction, including on open-ended in-weights tasks, in lower data regimes. The fact that their next-token predictor achieves non-trivial performance on their in-weights task could be attributed to the effects of a geometric memory.  Tangentially related is the positive finding in \citet{yin2025are} that the Transformer can compose in-weights knowledge given in-context demonstrations. It is worth noting that the (theoretical) arguments in  both these works \citep{nagarajan2025roll,yin2025are} rest on the associative memory view.

   As a negative result, \citet{wang24alpine} report that, on in-weights graphs of less than $500$ nodes, models are only able to infer already-seen paths or sub-paths, but not beyond them. We suspect this may stem from the fact that their model is only trained on the paths themselves (while our work and \citet{khona24synthetic} make the model memorize edge bigrams).

\textbf{Multi-hop question-answering.} A line of work has looked at how pretrained models respond to two-hop questions in natural language e.g., ``What is the calling code of the birthplace of Frida Kahlo?'' (example from \citet{press23compositionality}). 
Results here have been limited \citep{press23compositionality,biran24hop} or mixed \citep{yang24llm,yang24shortcuts,balesni25twohop}. Success has been shown on synthetic knowledge provided
 there is exponential amounts of data, or a curriculum, or very long amounts of training \citep{lm25yao,wang24grokking}. Orthogonally, \citet{wang2025largerlanguagemodelsimply} identify that such in-weights implicit reasoning can be hurt by scaling up the parameters, perhaps due to ``excessive memorization'', which we interpret as associative memorization. It is possible that some of these negative results may be attributed to the reversal curse \citep{berglund24reversal,zhu23knowledge2}, or the lack of extended computation e.g., we use pause tokens \citep{goyal23think}.
But a dedicated study of this gap between our synthetic settings and these settings is important and left for future work.

\textbf{Reversal curse and (a)symmetric knowledge.} The reversal curse \citep{berglund24reversal,zhu23knowledge2} is a well-known out-of-distribution, in-weights failure mode of next-token-trained Transformers. Such models are unable to recall $u$ given $v$, when trained to recall $v$ given $u$, suggesting asymmetric storage in parametric memory. 
Fixes for the reversal curse have involved reversed or permuted data augmentation \citep{guo24reversal,lu24reversalfix,kitouni24factorization,golovneva24reversal}. In our settings, we find that reverse edges are critical to elicit implicit reasoning in our large path-star tasks (see \S\ref{sec:reverse-edges}), but is not necessary to elicit a geometry in the smaller graphs.\footnote{It appears that the observations on small graphs in \citet{khona24synthetic} do not require memorizing the reverse edges, which aligns with our findings on small graphs.} Various theories have been proposed to understand this failure \citep{lin24reversal,zhu24reversaltheory,wang2025binding}, which may be worth revisiting under a geometric view.  One may also view  our 
contrast between associative and geometric memory (of a generic graph data) as a generalization of the aforementioned contrast between the asymmetric and symmetric knowledge storage (of a more specific, disjoint set of associations).  Future work may discover a nuanced connection between our work and  the reversal curse.

\subsection{Failure of end-to-end composition learning} 
\label{sec:composition-related}

While $\pathlength$-fold composition functions are surprisingly easy to \textit{express} in transformers \citep{sanford24logdepth}, empirical results have time and again demonstrated that they are hard to \textit{learn} through gradient-based methods, both in traditional deep network settings  \citep{shwartz16endtoend,abbe2020poly,gulcehre16knowledge,glasmachers17endtoend,abbe94nonuniversality,abbe2020poly} and more recently in language models too \citep{nye21scratchpad,ling17induction,cobbe21training,
piekos20measuring,
zelikman22star,
recchia21teaching,
cobbe21training,
hsieh2023distilling,shridhar2022distilling}. Others \citep{bachmann24pitfalls,hu25belief,shalevshwartz2025superintelligence} demonstrate how next-token learning can trap training at a stage where composition learning becomes a problem. Theoretical works have attempted to formalize these failures by demonstrating 
limits due to expressivity \citep{malach23auto,chen2024limitations,peng2024limitations} or sample complexity \citep{shwartz16endtoend} or computational complexity  \citep{weis23subtask,hu25belief} or in terms of statistical queries \citep{wang25composition}.

\begin{remark} \label{rem:hardness-results} \textbf{(Theoretical hardness of learning compositions)}
    Existing theoretical hardness results typically only show that
{\em some} worst-case function in a class of compositional functions is hard-to-learn; no proclamations are made about how hard it is to learn a \textit{fixed} function we care about. Indeed, with contrived initial conditions, a singleton function class can be provably learned
\citep{abbe2021power,abbe2020universality,nachum2020symmetry}. Yet, in practice, learning a fixed function is hard, proving which has been an open question---until the recent negative results of \citet{abbe2025learning,shoshani2025hardness}. These show how the (fixed) full parity function is hard to learn with gradient descent which, through a reduction, proves that in-context composition tasks are hard to learn \citep{hu25belief}. We leave it as an open question to link our in-weights composition task to one of these hardness results.
\end{remark}

\subsection{In-context graph tasks}

Graph tasks have been studied extensively in the setting where each context corresponds to a unique graph. We emphasize that this is a very different setting. Indeed, a takeaway from our work is to be deliberate not to conflate insights from the in-context setting with that of the in-weights setting (a distinction that is rarely made explicit in literature). 

While \citet{bachmann24pitfalls} identify the path-star topology as a failure case for next-token learning, \citet{frydenlund24mystery,frydenlund25language} demarcate the extent of this failure in the same in-context setting, whereas \citet{brinkmann24mechanstic} report positive path-finding results in other graph topologies. Others \citep{saparov2024search,sanford24understanding,yehudai2025depth} study Transformers in various in-context graph tasks like search and counting.
Connections between in-context graph tasks and spectral biases exist  \citep{cohen25spectral,park25iclr} but should not be confused with the spectral bias  in in-weights tasks.
While all these works study symbolic graph tasks, other works have empirically identified the limitations on graphs described in natural language \citep{guo23nlpgraph,wang23nlpgraph,dai2025nlpgraph}.
\citet{kim22puretransformers,ying21graph} propose algorithmic ideas for encoding graphs as inputs to Transformers. Finally, various failures \citep{momennejad23eval,dziri23faith,valmeekam23critique,valmeekam2023planbench,planning23valmeekam,shojaee2025illusion}  have been reported  on in-context tasks, including planning tasks, framed as word problems.

\subsubsection{In-context vs. in-weights learning}

The dichotomy between drawing information from context vs. drawing information from the weights has been studied in various angles. Some have looked at this from the aspect of two competing circuits relying on one source vs. the other \citep{chan22data,chan22incontext,neeman23disentqa,cheng2024interplay}. Others have looked at it in the context of the learning paradigms of in-context learning and finetuning the weights \citep{mosbach23icl,lampinen25icl}. Closer to us,
the stark in-context vs. in-weights disparity when it comes to handling global relationships has been emphasized in \citet{wang24grokking,geerts25relational}, for which our results provide further evidence. %

\subsection{Analyses of graph and word embedding methods} 
\label{sec:node-to-vec-theories}

Much attention has been given to characterizing what embeddings are learned by various contrastive losses such as \nodetovec{}. Most of these are on losses simpler than the cross-entropy loss, with the exception of a recent line of work  \citep{zhao2025ntpgeometry,zhao2025semanticgeometry,thrampoulidis24ntpgeometry}
which studies the geometry of cross-entropy-trained (weight-untied) \wordtovec{} models. However, certain assumptions are made, such as freezing one of the layers \citep{thrampoulidis24ntpgeometry} or heuristically approximating via a regularized proxy model \citep{zhao2025ntpgeometry}. Curiously, \citep{zhao2025ntpgeometry} find that their system only directionally converges, while the weights themselves diverge; we however point out that it's reasonable to expect the model to converge to a zero-gradient solution. Regardless, \citep{zhao2025ntpgeometry} suggest an insightful connection to the converged direction to support vector machines; this may inspire better characterizations of what  solution is found by our unregularized \nodetovec{} system.

The connection to a graph spectrum has been made in many other analyses. These analyses focus on a simpler loss called the negative sampling loss where the closed form expression for the inner products is straightforward (namely, the so-called pointwise mutual information (PMI) matrix, as discovered in \citet{levy14imf}). This analysis does not however tell us what exactly the embeddings are. The connection between these embeddings and the graph spectrum has been established in adjacent settings, like with DeepWalk \citep{qiu18embedding} (where the objective is explicitly multi-hop), in low-rank settings like SimCLR \citep{haochen21simclr,tan24simclr}  or with quadratic losses with early stopping \citep{karkada2025closedform} or the softmax loss with rank $1$ \citep{jaffe2020word2vec}.  
Other analyses of \nodetovec{} focus on specific types of graphs such as stochastic block models \citep{davison24sbm,harker24sbm}. We refer the reader to \citet{goyal18survey} for a survey of graph embedding methods. Finally, we clarify that these methods and our insights must not be confused with graph neural networks \citep{ying21gnn,zhou20gnn}, where the graphs are not stored in parametric memory, but presented as input.

 \subsection{Other foundational works on generalization and memorization}

\textbf{Spectral and simplicity bias.}    The type of spectral bias we study in memorization and sequence modeling must be distinguished from the one studied in generalization and traditional classification and regression settings \citep{rahaman19spectral,xu2018understanding,kalimeris2019sgd,arpit17memorization,ronen2019convergence,bietti2019inductive}. In these earlier studies, the spectrum is that of a continuous function or a decision boundary (e.g., say, the Fourier components of a polynomial), whereas the spectrum we are concerned with is of a discrete, combinatorial object, namely the graph adjacency matrix. This corresponds to a difference in the nature of representations as remarked below.

\begin{remark} \label{rem:representation-diff} \textbf{(On the nature of representations in classification vs. sequence modeling)} The geometric representations learned in our setting (i.e., sequence modeling of atomic facts) are of a fundamentally different nature from that learned in the classification/regression setting. The representations in classification/regression correspond to \textit{features} {inherent} to each datapoint, such as the edges detected in an image. No such features are inherent to a one-hot token; a token's representation is synthesized from its co-occurrence patterns with other tokens. This difference is central to why the analogy between the generalization puzzle and the memorization puzzle breaks down in \S\ref{sec:capacity-pressures}, both in terms of succinctness and ease of discovery with gradient descent. 
\end{remark}

\textbf{Memorization.} Our work is also orthogonal to the seminal works of \citet{Zhang17rethinking,neyshabur2015bias} who were concerned with classical generalization tasks that possess statistical redundancies. Their argument is that explicit pressures cannot explain why representations arise in such tasks, but implicit pressures may. Our {memorization} task on the other hand is in sequence modeling, and lacks statistical redundancies; both explicit and implicit pressures do not suffice to explain the geometric representations.
 Our work is also orthogonal to the foundational work of \citet{feldman20memorization} who argue that memorizing the quirks of a training set can be \textit{necessary} for generalization in long tail datasets.

\textbf{Linear representation hypothesis.} A long-studied geometric concept in language models is the concept of linear representations in analogies \citep{park24geometry,park25geometry,elhage22superposition,mikolov13concepts}. The introduction of certain concepts often takes a linear direction, surprisingly, independent of context i.e., going from {\tt{cow}} to {\tt{calf}} takes the same direction as   {\tt{cat}} to {\tt{kitten}}, independent of the source in the context,  ({\tt{cow}}, {\tt{cat}}). This linear structure, is related, but neither reducible to nor reducible from geometric memorization. Many theories have been proposed to model the linear geometry and semantics of these embeddings
\citep{gittens17skipgram,allen19analogies,Ethayarajh19analogies,allen19vec,hashimoto16metric,jiang24originslrh}. 
These studies are orthogonal since their contribution lies in identifying what structures exist in word-context relationships for such geometries to arise in the embeddings. This is akin to identifying structures in the adjacency matrix, while our analysis is agnostic to such structures. 
There are many other complementary analyses of these embeddings, both theoretical
\citep{grohe20word2vec} and empirical \citep{mimno17geometry,chen21geometry,chang22geometry,li20geometry}.

    \clearpage
    \section{Limitations}
\label{sec:limitations}

\begin{enumerate}
    \item Our positive result of implicit in-weights reasoning is on a purely symbolic task, and on a specific graph topology (path-star, and tree-star in \S\ref{sec:tree-star}). It is unclear how well this generalizes to other topologies, and to graphs of other sizes. 
    
    \item Whether our insights extend to natural language is highly non-trivial, since the way the entities are tokenized and the way relationships are presented are much more unstructured.

    \item We use small to mid-sized Transformers trained from scratch (\gptmid{}). We have not explored the effect of large model sizes or of large-scale pretraining.

    \item We emphasize that all our arguments are empirical and informal (or left as propositions), and lack a precise, fuller characterization of our observations. \begin{enumerate}
        \item Perhaps, a slightly more nuanced form of architectural or statistical pressure (e.g., more nuanced norm complexity such as flatness of the loss) may indeed explain why associative memory is less preferred by the model.
        \item Conversely, perhaps the lack of pressures in the learning setup is indeed why the Transformer learns a sub-optimal kind of geometric memory compared to \nodetovec{}.
    \end{enumerate}

    \item Although we illustrate a clear contrast between associative and geometric memory in their caricatured forms (i.e., as $\embedding(u)^T \Wassoc \embedding(v)$ vs. $\embeddinggeom(u) \cdot \embeddinggeom(v)$), it is unclear how to conceptually disentangle these two modes of storage in a given multi-layered deep network. 
\end{enumerate}

\else
    \section{Related Work}
\label{sec:short-related}
We briefly discuss key related lines of work here, deferring details and many other lines to \S\ref{sec:related}.

\textbf{Analysis of Transformer memory} Various works have analyzed how neural networks implicitly behave as associative memory stores, such as in autoencoders \citep{radhakrishnan20overparameterized} or Transformers \citep{bietti23birth,cabannes24scaling,nichani24factrecall,geva21keyvalue,schlag21fastwt,sukhbataar19augmenting}. We emphasize that the associative memory view is sufficient to understand Transformer behavior on disjoint facts, evidenced by the rich empirical literature built on this view. There have also been mechanistic studies of how Transformers perform fact recall  \citep{lv21recall,geva23recall,meng22editing} and where memory is stored \citep{geva21keyvalue}. Directly related to us are  the works of \citet{khona24synthetic,lm25yao,biran24hop} who interpret multi-hop recall circuits.

\looseness-1 \textbf{Multi-hop question-answering} A line of work has looked at natural language based two-hop questions on pretrained models, %
where results here have been limited \citep{press23compositionality,biran24hop} or mixed \citep{yang24llm,yang24shortcuts,balesni25twohop}. \citet{lm25yao,wang24grokking}  find that these queries require exponential amounts of data, or a curriculum, or very long amounts of training. %
Studying the gap between these and the symbolic settings is important and left for future work.

\textbf{Analysis of \nodetovec{} embedding geometries.}
 Most earlier analyses are on a simpler, ``negative sampling'' loss \citep{levy14imf}, or involve explicit multi-hop objectives  \citep{qiu18embedding}, low-rank constraints \citep{haochen21simclr,tan24simclr,jaffe2020word2vec}  or early stopping \citep{karkada2025closedform}.
Other analyses focus on specific types of graphs like stochastic block models \citep{davison24sbm,harker24sbm}. Only recently has the softmax loss become a focus under assumptions such as freezing one layer \citep{thrampoulidis24ntpgeometry} or a heuristic, regularized proxy \citep{zhao2025ntpgeometry}. 
Regardless, these establish insightful connections to SVMs which may prove to be useful in understanding the precise closed form solution.

\fi

\ifarxiv
\section{Conclusions and Future Work}
\else
\section{Conclusion and Open Questions}
\fi

\ifarxiv
While the associative view of parametric memory is a simple and highly effective view of neural networks, we isolate an instance of implicit in-weights reasoning that necessitates a geometric view. The emergence of this geometry is not easily explained by various pressures in the learning setup, raising fundamental questions about neural network training. Making some progress into this, we attribute this to a spectral bias that arises even  with   local supervision and even independent of the embedding dimensionality. 

Various practical questions arise from these findings. The elegant geometries of \nodetovec{} models indicate that  Transformer memory can be made much more geometric. It is also unclear whether Transformers exhibit such desirable behaviors in more complex graph topologies. Importantly, empirical works on implicit reasoning in natural language have so far been mixed. 
More careful empirical research and ideation may be needed to make the geometric view more broadly applicable.  Orthogonally, our findings are of relevance to making choices between parametric vs. contextual memory, and also between generative retrieval vs. dual encoder retrieval models.

Our work raises the foundational question of when and how associative and geometric memory compete with each other during optimization, and what factors---such as graph connectivity, training time, learning rate, weight decay---can foster one over the other. To answer these, our insights into the spectral bias need to be extended from \nodetovec{}-style architectures (where associative memory is prohibited) to deep sequence models (where associative memory becomes a competitor).  We also hope that our simple examples can be useful for orthogonal conceptual studies on interpretability, world models, and in characterizing convergent representations across model families. Broadly, we expect these findings to inspire revisiting unstated associative assumptions underlying research on knowledge and memory in language models.
\else
\looseness-1 While the associative view of parametric memory is a simple and highly effective view of neural networks, we 
demonstrate that a geometric view of storage is necessary to explain the behaviors of deep sequence models. Our findings give rise to various foundational and practical questions.

First, is the memorization puzzle of why models memorize geometrically even when there are no pressures. But second, in the more general, complex graphs, we ask what \textit{notions of graph complexity}---such as its spectrum, connectivity, and size---determine to what extent the model memorizes geometrically vs. associatively. Third, is the practical question of what interventions  in the data, architecture, or optimization can make memorization more geometric (if one wants approximate reasoning and creativity) or more associative (if one wants accurate retrieval). 
 Broadly, we also expect our findings to inspire revisiting unstated associative assumptions underlying research on knowledge and memory in language models. We discuss limitations in \S\ref{sec:limitations}.

\fi

\ifanon
\else
    \textbf{Acknowledgments.} We would like to thank Gaurav Ghosal, Gintare Karolina Dziugaite, George H Chen, Christina Baek, Jacob Springer 
 for valuable feedback on earlier versions of this manuscript. We are also deeply grateful to Omar Salemohamed, Andrej Risteski, Alberto Bietti, Ekdeep Singh Lubana and Andrew Lampinen for many discussions and pointers to related work.
\fi

\ifarxiv
\else

\ifarxiv
\else
\section*{Impact Statement}

This paper presents work whose goal is to improve our understanding of sequence models. There are many potential downstream consequences of work in this field, none of
which we feel must be specifically highlighted here.
\fi

\ifarxiv
\else

\fi

\ifarxiv
\else
\fi

\fi

\ifarxiv
\bibliographystyle{plainnat}
\else
\bibliographystyle{icml2026}
\fi
\clearpage

\bibliography{references}

\clearpage
\appendix
\onecolumn

\section*{\textbf{Appendix}}

\ifarxiv
\else

\fi

\section{Detailed background on the path-star task}
\label{app:path_star_in_context}

A path-star graph $\graph = (\vertices, \edges)$ is a special tree graph from \citet{bachmann24pitfalls} that has a central root named $\start$ with multiple disjoint, uniform-length paths emanating from it. One of the leaf nodes is specified as $\goal$. The task is to find the unique path from $\start$ to $\goal$. To solve this task, one may either plan---by searching over the paths and backtracking---or execute a much simpler solution by following the unique path back from $\goal$, and then outputting the reverse of this path. Although this solution is algorithmically straightforward, and although a Transformer can even be shown to learn this solution under a simple multi-token modification to the objective, the next-token objective itself fails to learn this task even with sufficient amounts of data. Thus, the failure comes from optimization under the next-token objective.  In this sense, the path-star task is known to be a minimal textbook adversarial example to next-token learning. 

\textbf{In-context path-star task.} In the in-context version of the task, the model is given the prefix $\prefix = (\asseq(\graph), \start, \goal)$ that provides a randomized adjacency list, and indicates the start and goal nodes, in the model's context. The model must then produce the true path as response, $\response = (\start, \hdots, \goal)$. To cast this as a learning task, we define a distribution $\distr_{d, l}$ corresponding to graphs of the same topology, degree $d$ and path length $l$. To sample $\prefix \sim \distr_{d, l}$, we uniformly sample node values from a vocabulary $\vocab$ to create the graph $G$; given this, we can randomize the adjacency list $\asseq(\graph)$.

\Cref{fig:overview-in-weights,fig:overview-in-context} contrast the two evaluation regimes we use throughout: in-weights (edge memorization \& path finetuning) versus in-context (graph in the prompt).

\subsection{Failure of next-token learning in the in-context path-star task}

Next-token learners trained on samples from the above task are known to fail \textit{in-distribution} i.e., even on unseen graphs of the \textit{same} topology (with only variations in the node identities and adjacency list randomization), the model uniformly at random picks a path to follow from the center. The mechanism of failure plays out in two stages during training.

\textbf{The Clever Hans Cheat.} Early on during training, the model fits the later tokens in the target --- concretely nodes $\node_2, \hdots,
\goal$ --- as the unique child of the previous token provided in the input during next-token training. This is known as a \textit{Clever Hans cheat}: the model uses a local rule that relies on witnessing part of the ground truth prefix ($\prefix, \response_{< i}$) to predict the next token $\responsetoken_i$ --- as against only using the prefix $\prefix$ to predict the answer tokens. Arguably, this happens because the cheat is much simpler to learn --- as an induction head \citep{olsson22icl} --- than the more complex solutions that rely only on $\prefix$ (which involves search-and-backtracking or the simpler lookahead-and-reverse). Simpler predictive features are prioritized early in training \citep{arpit17memorization,shah20simplicity, rosenfeld2024outliers}, which starves gradient signals from more complex features \citep{pezeshki21starvation}.

\textbf{The Indecipherable Token.} Therefore, in the second stage towards failure, the model attempts to learn the first token $\responsetoken_i$---the key decision-making token---purely based on information about the graph in the prefix $\prefix$, and without any gradient signal about the later tokens.   This is however a computationally hard ``needle-in-the-haystack problem'' \citep{weis23subtask}: the model needs to find a complex end-to-end algorithm with $\pathlength$ subroutines in it, without any intermediate supervision for those subroutines. The first token thus becomes ``indecipherable'' and the model simply memorizes it on the training data; during test-time the model subsequently predicts the first token as a random neighbor, and then continues following down the path by applying the local follow-the-child rule it learned through the Clever Hans cheat. 

\cref{fig:in_context_failure_results} demonstrates these failure modes empirically, showing that next-token prediction achieves only chance-level performance on path-star graphs of varying sizes, with the first token remaining particularly difficult to learn in isolation.

\begin{figure}
    \centering
    \includegraphics[width=0.9\linewidth]{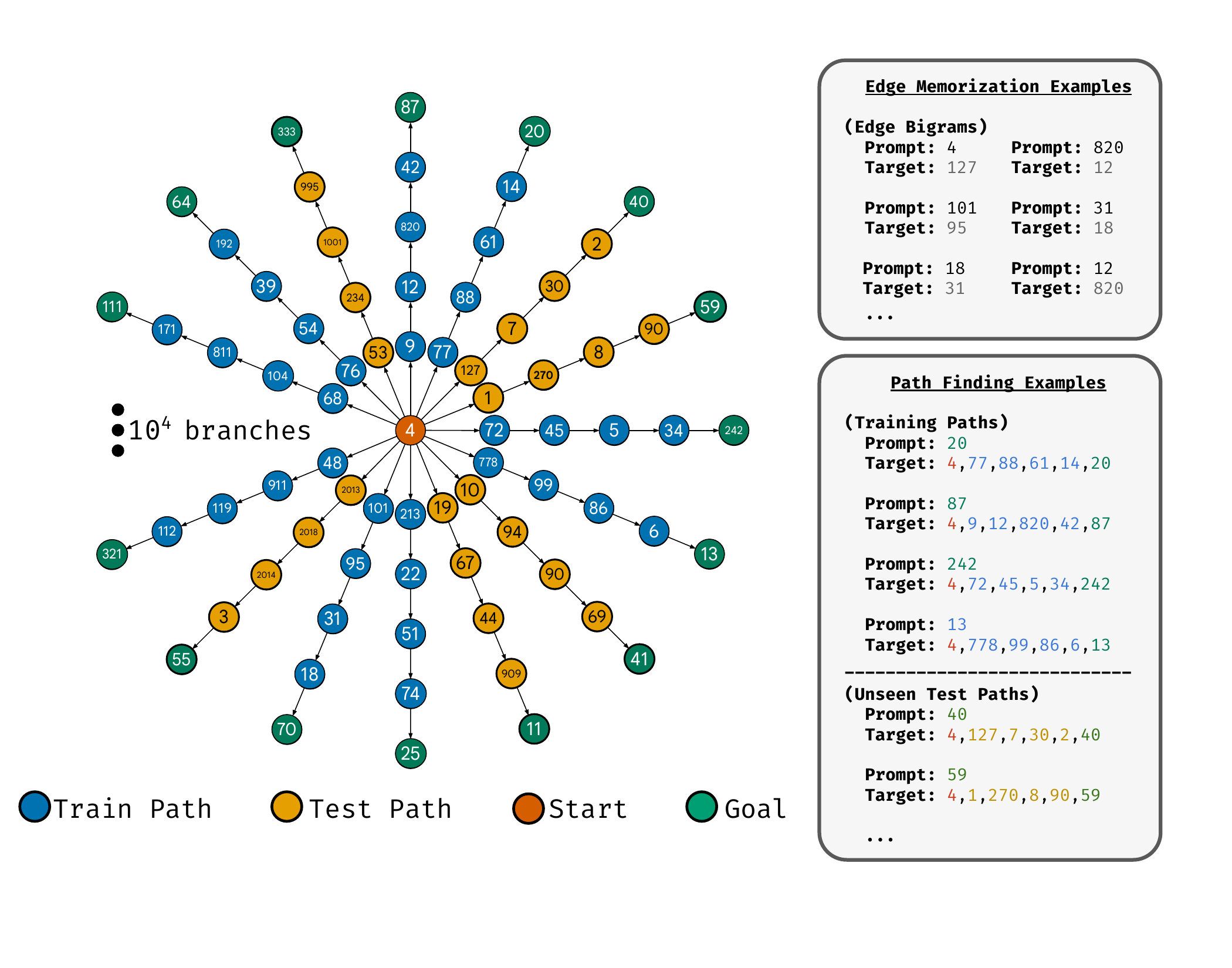}
  \caption{\textbf{Overview of our in-\textit{weights} path-star task.} All examples are derived from a fixed path-star graph. Training involves two types of examples: (i) \textbf{edge memorization} examples; (ii) \textbf{path-finding} examples, where the prefix is some leaf, and the target is the full path. Test examples are path examples corresponding to a held-out set of leaves.   }
  \label{fig:overview-in-weights}
\end{figure}

\begin{figure}[H]
  \centering
    \centering
    \includegraphics[width=\linewidth]{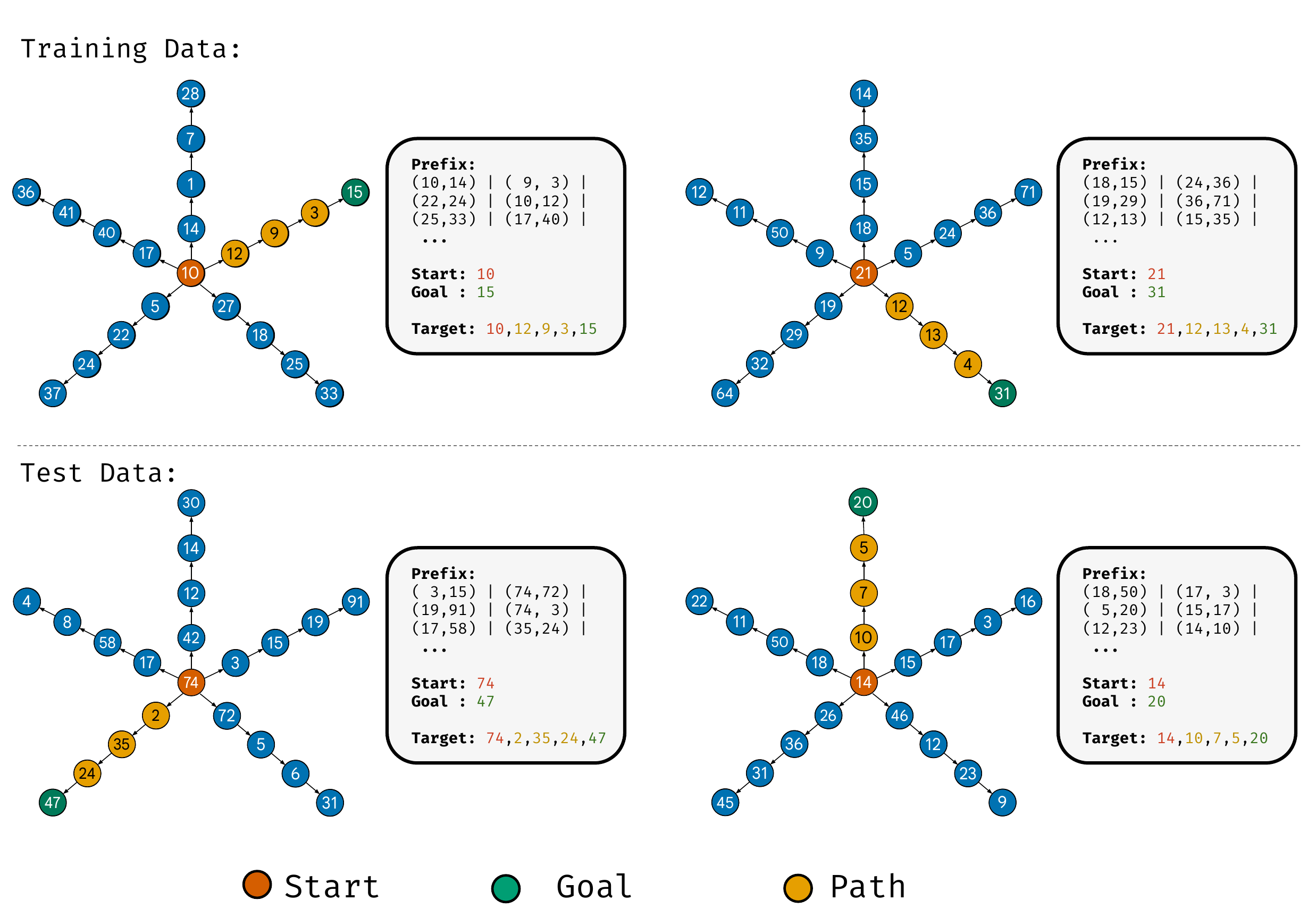}
  \caption{\textbf{Overview of in-context path-star task of \citetalias{bachmann24pitfalls}.}  Each training and test example corresponds to a fresh, randomly-labeled path-star graph (a tree graph where only the root node branches into $\degree$ paths of length $\pathlength$). For each example, the prefix specifies a randomized adjacency list (of edge bigrams) of the corresponding graph, followed by $(\start,\goal)$. The target is the full path $(\start \to \goal)$ in that graph.   }
  \label{fig:overview-in-context}
\end{figure}

\startobservationset
\begin{subobservation} (\textbf{Success of implicit in-weights reasoning}) \label{obs:success}
On in-weights path-star graphs of as many as $5\times10^4$ nodes, trained on $75\%$ of the total $10^4$ paths, both the Transformer and Mamba are able to predict unseen paths when conditioned on held-out leaves with as much as 100\% accuracy 
(see left plot of~\cref{fig:main_results} for Transformer), and left plot of~\cref{fig:main_results_mamba} for Mamba).
Similar positive results for some harder graph topologies are in \S\ref{sec:tree-star}.
\end{subobservation}

Shortly, we will isolate an even stronger instance of implicit reasoning from this task for analysis. To get there, let us scrutinize where the argument of 
\citetalias{bachmann24pitfalls} may go differently in the in-weights setting for the model to succeed in Observation~\ref{obs:success}.

\subsubsection{Where does the path-star-failure argument go wrong in the in-weights setting?}
\label{sec:in-context-in-weights-diff}

Recall that the path-star failure unfolds in two stages. Perhaps one of these stages does not play out in the in-weights setting.  A first possibility could be that the Clever Hans cheat is not picked up here. 
The cheat was a simple, left-to-right pattern that fits all but the first token as the unique neighbor of the preceding ground-truth token that was present as input. Such left-to-right cheats are simpler than the true right-to-left solution, and are thus quickly learned.
However, perhaps when the target we train on is an \textit{in-weights} path, the cheat is not easy to learn---say,
due to the nature of recalling from parametric memory. A complex cheat may not be learned quickly, allowing gradients from future tokens to reach the first token representation, in turn allowing the correct, right-to-left solution to compete and emerge. \ifarxiv Indeed, such gradients, termed as pre-caching gradients \citep{wu24plan} have been identified in a line of empirical works in  natural language \citep{pal23future,jenner24lookahead,rofin2025on,wu24plan,men24future}. \fi 
We summarize this hypothesis out below:

\starthypothesisset

\begin{subhypothesis} \label{hyp:future-token} \textbf{(Model may experience future-token gradients)} The in-weights path-star task is solved (Observation~\ref{obs:success}) because
 Clever Hans cheats are not learned quickly enough, allowing future-token gradients to persist, teaching the model to find the right-to-left solution. 
 \end{subhypothesis}

\begin{figure*}[t]
  \centering
  \begin{subfigure}[b]{0.33\linewidth}
    \centering
    \includegraphics[width=\linewidth]{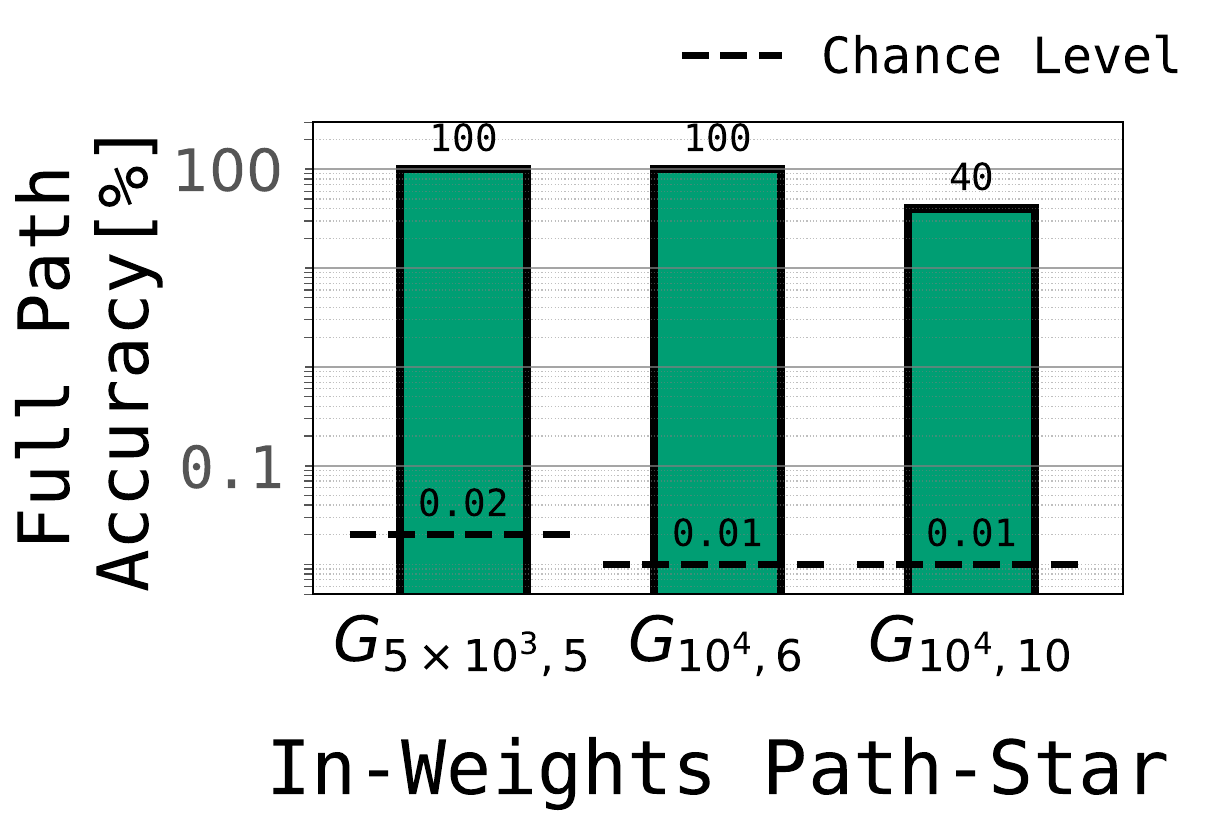}
  \end{subfigure}
  \hfill %
  \begin{subfigure}[b]{0.28\linewidth}
    \centering
    \raisebox{3pt}{\includegraphics[width=\linewidth]{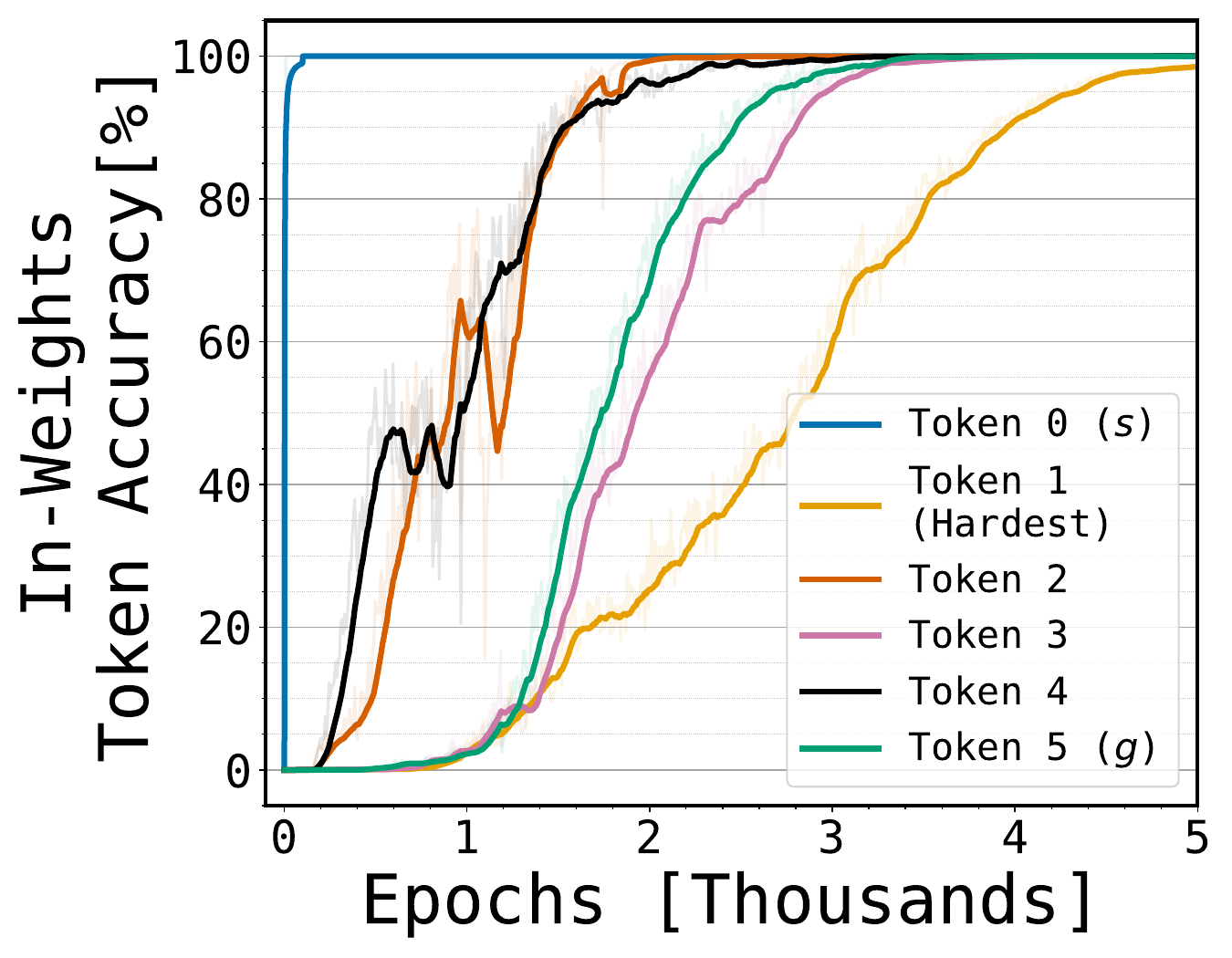}}
  \end{subfigure}
    \hfill
    \begin{subfigure}[b]{0.33\linewidth}
    \centering
    \includegraphics[width=\linewidth]{Figures/fig_indecipherable_in_weights.pdf}
  \end{subfigure}
  \caption{
    \textbf{Success of Transformer in in-weights path-star task. (\S\ref{sec:experiments})} \textbf{(left)}  A next-token-trained Transformer achieves perfect or highly non-trivial accuracy on large path-star graphs $\graph_{\degree, \pathlength}$.  \textbf{(middle) Learning order of tokens.} The tokens of a path are not learned in the reverse order i.e., the model does not learn the right-to-left solution. Thus, gradients from the future tokens are not critical for success.
    \textbf{(right) Success of hardest-token-only task.} 
    In fact, the hardest token (the first) given the leaf is learned in isolation to non-trivial accuracy (\cref{obs:first-token-easy}). \ifarxiv\else(This is a duplicate of the same plot in \cref{fig:hardest-token-in-weights_main})\fi Success of this $\pathlength$-fold composition task is hard to explain within the associative memory view (\S\ref{sec:contradiction}). Analogous plots for Mamba are in \cref{fig:main_results_mamba}.
    We invite the reader to contrast these in-weights task results with the in-context task ones in \cref{fig:in_context_failure_results}.  
  }
  \label{fig:main_results}
\end{figure*}

\begin{figure}[t]
  \centering
  \begin{subfigure}[b]{0.33\linewidth}
    \centering
    \includegraphics[width=\linewidth]{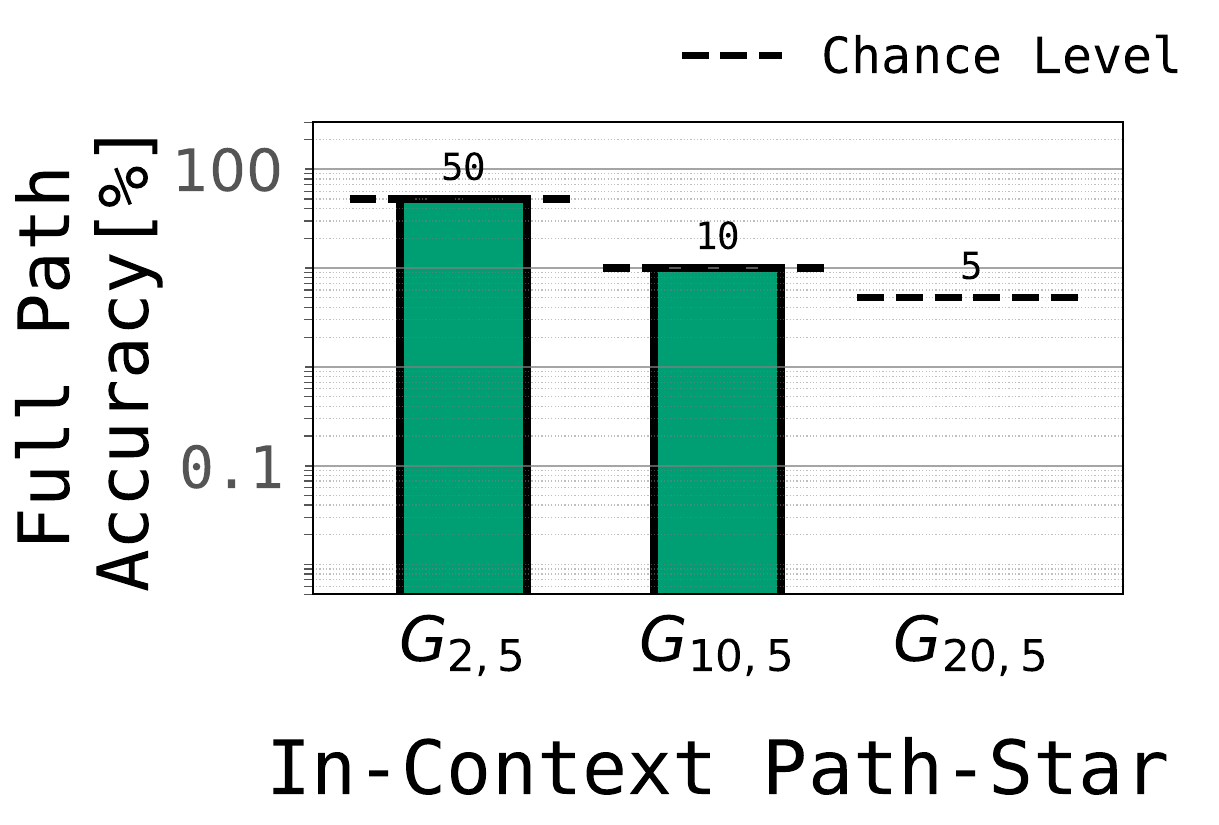}
  \end{subfigure}
  \hfill
  \begin{subfigure}[b]{0.28\linewidth}
    \centering
    \raisebox{3pt}{\includegraphics[width=\linewidth]{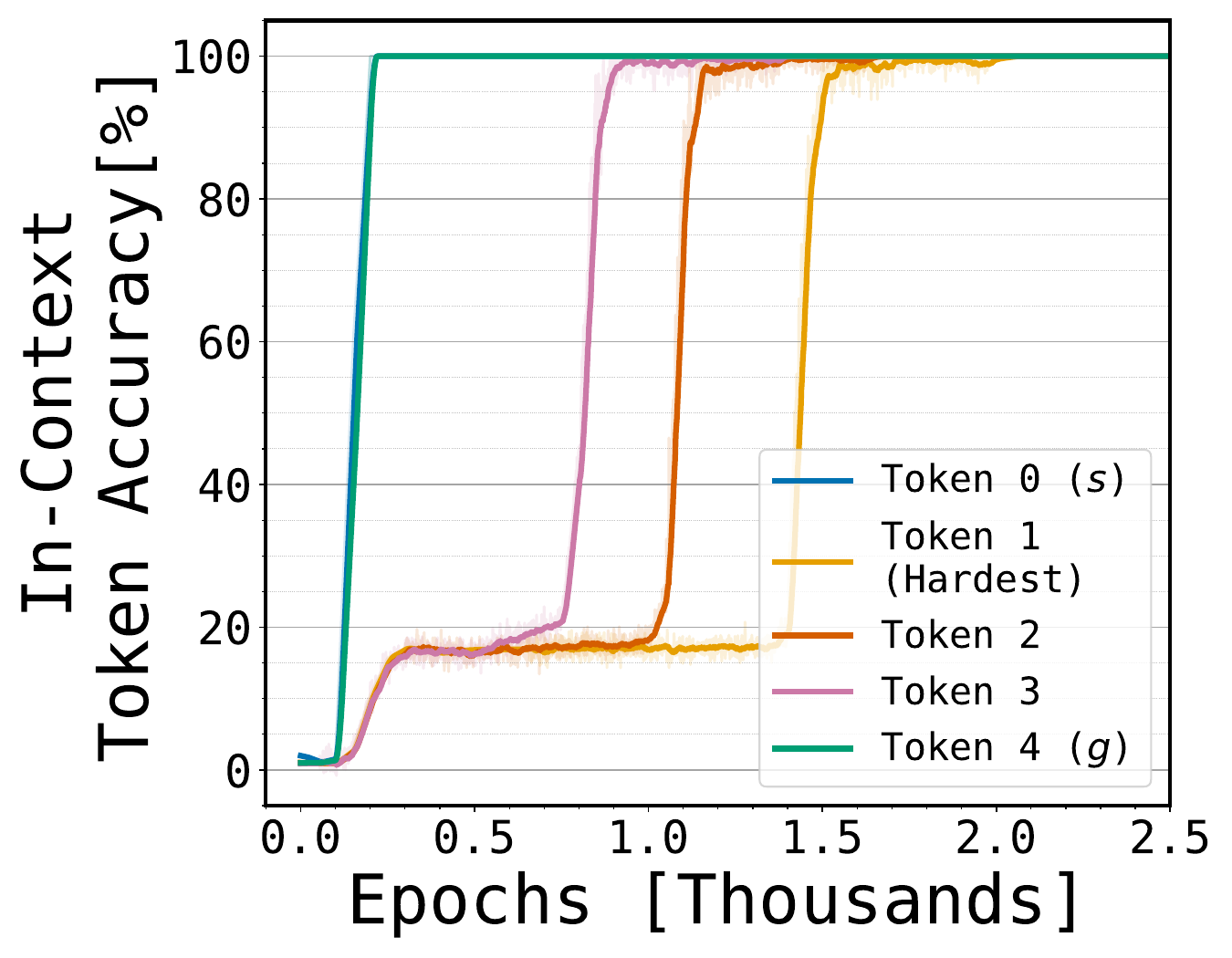}}
  \end{subfigure}
  \hfill %
  \begin{subfigure}[b]{0.33\linewidth}
    \centering
    \includegraphics[width=\linewidth]{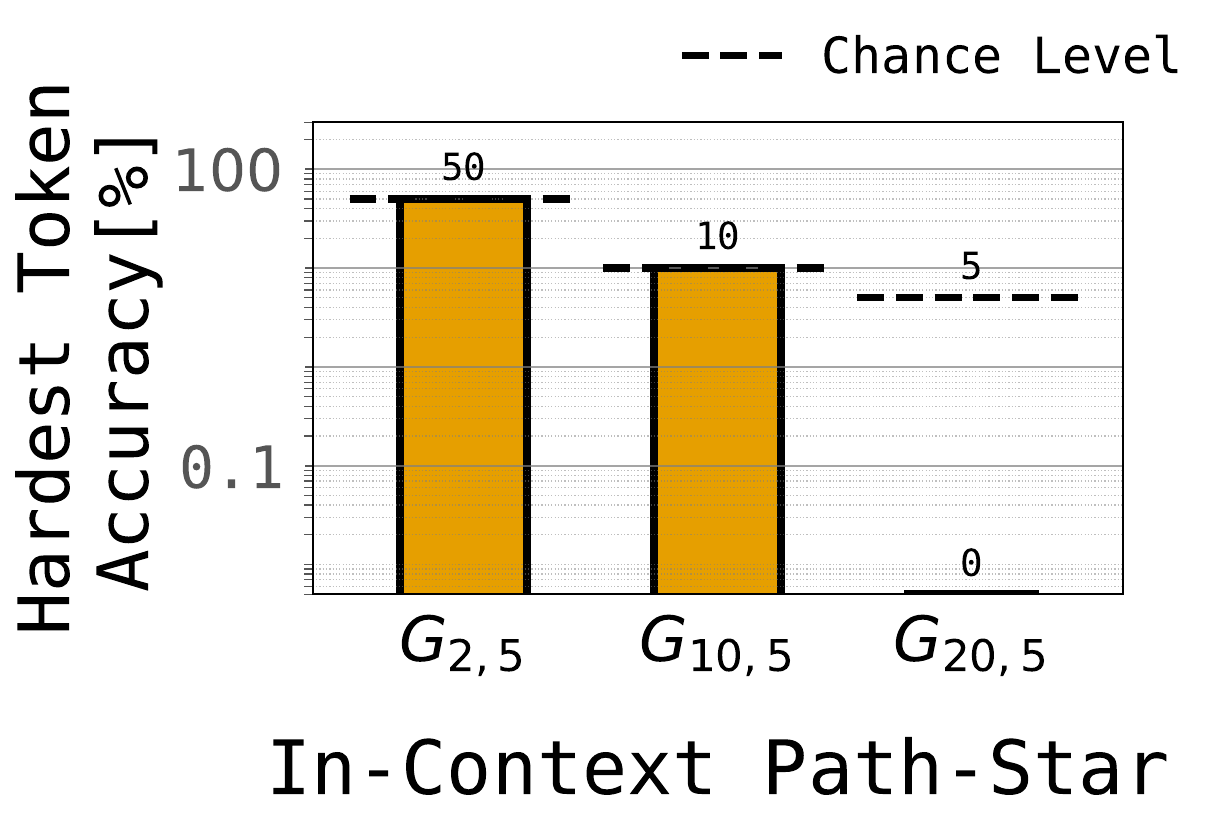}
  \end{subfigure}

  \caption{\textbf{Failure of Transformer in in-context path-star task. (\citetalias{bachmann24pitfalls})} We report the failure of next-token Transformers in the in-context version of the path-star task, reproducing results from \citetalias{bachmann24pitfalls}.  \textbf{(left)} Full path accuracy remains at chance level across different small graph sizes. \textbf{(middle)} Learning order of tokens with teacherless (multi-token-trained) objective shows a clear right-to-left learning cascade. \textbf{(right)} The hardest  (first) token given the leaf fails to be learned in isolation, contrasting sharply with in-weights success shown in \cref{fig:main_results}.
  }
  \label{fig:in_context_failure_results}
\end{figure}

We can indirectly test for this hypothesis as follows. If the model learned the right-to-left dependencies, it must also learn the tokens in the reverse order: the unique predecessor of the goal node is the easiest to identify (under the right-to-left dependencies) and will thus be learned first; the next-easiest is the next predecessor and so on, until the first node (which depends on all that has been learned so far). Indeed, in the in-context setting of \citetalias{bachmann24pitfalls} where they explicitly switch off the Clever Hans cheat (under teacherless training), such a reverse-learning cascade is observed (\cref{fig:in_context_failure_results}). However, we do not see this in the in-weights setting.

\begin{subobservation} (\textbf{No reverse-learning cascade})\label{obs:reverse-cascade}
    The target tokens in the in-weights path-star task are learned in no particular order by the Transformer and Mamba models (see middle plots of \cref{fig:main_results,fig:main_results_mamba} in contrast with \cref{fig:in_context_failure_results}). 
\end{subobservation}

The above observation weakens ~\cref{hyp:future-token}, encouraging us to search for another one. A second possibility is that the second stage of the failure of \citetalias{bachmann24pitfalls} does not trouble us in the in-weights task. Recall that in this stage, the first token is to be learned in isolation without future-token gradients. But this is
an $\pathlength$-fold composition task that is
theoretically well-understood to be computationally hard (as we elaborate later).  Perhaps, that is not the case for our in-weights task:

\begin{subhypothesis} \textbf{(First-token may be easy to learn)} \label{hyp:first-token-easy}
    Learning the key decision-making token  (the first token) in the in-weights path-star task is not computationally hard.
\end{subhypothesis}

\looseness-1 Testing ~\cref{hyp:first-token-easy} is easy: we train the model simply on the first token loss, instead of the loss over the full path sequence. We find (as discussed in the main paper) that this task is successful here, affirming ~\cref{hyp:first-token-easy}, and isolating a much stronger and cleaner instance of implicit in-weights reasoning.

\clearpage
\section{Experimental setup}
\label{app:exp-setup}

\subsection{Graphs, tokens, and data construction}
\label{app:data}

\textbf{Path-star graphs.} We denote by $\graph_{\degree,\pathlength}$ a path-star graph with central node $\start$, degree $\degree$ (number of arms), and path length $\pathlength$ per arm. The $i$-th arm consists of the node sequence $(\start=\node^{(i)}_{0}, \node^{(i)}_{1}, \ldots, \node^{(i)}_{\pathlength-1})$ where the last node in the sequence is $\node^{(i)}_{\pathlength-1}=\leaf^{(i)}$.

\textbf{Vocabulary.} Each node is represented by a unique numerical token. We reserve several special tokens: \pause{} (a compute/pause token), \texttt{[PAD]} (a padding token), optional directional tokens (\texttt{>}, \texttt{<}), and task-specific prefix tokens (\texttt{[EDGE]}, \texttt{[PATH]}). The effective vocabulary size is $|\vocab| = |\text{nodes}| + 9$.

\textbf{Edge-memorization datasets for local supervision.} We form directed bigrams $(u, v)$ when $v$ is a child of $u$ to define a distribution $\forwardedgedistr$. Conversely, we use examples of the form $(v, u)$ where $u$ is the parent of $v$ to define $\backwardedgedistr$; and $\edgedistr$ is their union. Training sequences of edge bigrams are simple two-token sequences “$u~v$” sampled uniformly. All these examples, be it forward or backward, provide only local supervision.

\textbf{Path-finding datasets.} We define a path distribution $\forwardpathdistr$, where an example is the pair $(\prefix,\response)$ with $\prefix=(\leaf^{(i)}, \overbrace{\pause{},\ldots,\pause{}}^{P~\text{tokens}})$ and $\response=(\start,\node^{(i)}_{1},\ldots,\node^{(i)}_{\pathlength-1})$. The leaf node is sampled uniformly. We finetune on a subset of leaves and \emph{evaluate on held-out leaves}. Unless stated otherwise, decoding is greedy (top-1). The arrow in $\forwardpathdistr$ denotes that this is the \textit{forward} path; we also experiment with predicting the reverse goal-to-start path, denoted by the distribution $\backwardpathdistr$.

\textbf{In-context datasets.} Adapted from \citetalias{bachmann24pitfalls}, the prefix contains a randomized adjacency serialization and $(\start,\goal)$; the target is the full path. Where \NTP{} fails in-context, we use the teacherless objective of \citetalias{bachmann24pitfalls} for comparison plots (details in \S\ref{app:train}).

\subsection{Model architecture}

\subsubsection{In-weights path-star task experiments}
\label{app:arch}
\textbf{Backbone.} The main experiments appearing in \S\ref{sec:experiments} use a decoder-only Transformer (\gptmid{}) with a causal mask, pre-norm LayerNorm, sinusoidal positional embeddings \citep{vaswani17attention}, {\tt GELU} MLPs, and tied input/output token embeddings. Additionally, experiments in \S\ref{sec:mamba} employ a Mamba sequence model \citep{gu2023mamba} of comparable scale.

\textbf{Default Transformer configuration.} 
\begin{itemize}\setlength{\itemsep}{2pt}
  \item Layers $\numlayers=12$, model width $\modelwidth=384$, heads $\numheads=8$. 
  \item Dropout $0$ on attention and {\tt MLP} blocks (synthetic setting), label smoothing $0$.
  \item Context length set to accommodate the longest training sequence (edges: $2$; paths: $\pathlength{+}1{+}\numpauses$) with margin.
\end{itemize}

\textbf{Mamba configuration.} 
The Mamba models in \S\ref{sec:mamba} use an equivalent depth and hidden dimension to the Transformer baseline. The sequence model parameters include the state dimension $d_{\text{state}}=16$, convolution kernel size $d_{\text{conv}}=4$, and expansion factor $\text{expand}=2$, which follow the standard values from the original Mamba implementation.

\textbf{Embedding layer.} 
Token embeddings are stored in $\embeddingmatrix \in \mathbb{R}^{|\vocab| \times \modelwidth}$, with output projection weights tied to $\embeddingmatrix$.

\textbf{Variants.} 
For larger graphs $\graph_{10^4,10}$, we scaled $\modelwidth$ proportionally to $784$ in our hyperparameter grid search. For the Mamba architecture, we also varied the expansion factor to 4 and the state dimension to 32.

\subsubsection{Tiny model architectures}
\label{app:tiny-model-architecture}
For toy experiments on small-scale graphs (\S\ref{sec:tiny-graphs}), including the \emph{Tiny Path–Star}, \emph{Tiny Grid}, \emph{Tiny Cycle}, and \emph{Tiny Irregular} graphs, we used reduced-size architectures.

\textbf{Models.} We evaluated three model types: (1) a Transformer (\tinyGPT), 
(2) a feed-forward \textit{linear} neural network  
with the same configuration but without residual connection or attention
 (\tinyNN) and
(3) a Mamba model (\tinySSM) of comparable scale to (1). Note that the \tinyNN{} is initialized the same way as the \tinyGPT{}: all trainable embedding and linear weights are initialized from $\mathcal{N}(0, 0.02^2)$, while linear biases are initialized to zero. With tied input/output embeddings, the language-model head shares the token embedding matrix and computes output logits via its transpose. Unless otherwise noted, all experiments use this \texttt{GPT}-style token embedding initialization. In some  analyses (see \S\ref{sec:other-regularizers}), we also consider the PyTorch default token embedding initialization, where embeddings are sampled from $\mathcal{N}(0,1)$.
Exact configurations of each of the models mentioned are described below.

\textbf{Configuration.} 

\begin{itemize}
    \item For all graphs, all models used a single layer ($\numlayers=1$) besides the embedding/unembedding layers, but the model specifications differed across graph types. 
\end{itemize}

For \tinyGPT{} and \tinySSM{} in particular:
\begin{itemize}
    \item For all tiny-graph experiments, \tinyGPT{} used a single attention head ($\numheads=1$) and embedding dimension $\modelwidth=32$. 
    \item For \emph{Tiny Path–Star}, \emph{Tiny Grid}, and \emph{Tiny Cycle}, \tinySSM{} used embedding dimension $8$ with $d_{\text{state}}=2$, $d_{\text{conv}}=2$, and $\text{expand}=1$. For \emph{Tiny Irregular}, \tinySSM{} used embedding dimension $8$ with $d_{\text{state}}=2$, $d_{\text{conv}}=4$, and $\text{expand}=1$. 
\end{itemize}

For \tinyNN{}, we used a large embedding dimension $512$ with $1$ head for the sake of empirical proofs of \cref{sec:emergence-global-geometry}.  We elaborate why. First, \tinyNN{} architectures are a clean architecture where one can be certain that freezing the (un)embedding layers results in a purely associative storage; thus they are appropriate for  our empirical proofs. Next, in our proofs, we wanted to make two demonstrations. First, that the model memorize geometrically even without bottleneck pressures (as is the case with a large embedding dimension); second, that in sufficiently wide-neck settings, the associative storage is realized within just $2$ learning steps (which we find is true here)

We clarify that even when reducing the embedding dimension of $\tinyNN{}$ to match that of \tinyGPT{} (\emph{Tiny Path–Star} and \emph{Tiny Grid}: $\modelwidth=32$; \emph{Tiny Irregular}: $\modelwidth=16$), we found that the geometric structure still persisted and that the associative memory variant continued to fully memorize the edge bigrams, albeit with longer optimization. In particular, \emph{Tiny Path–Star} and \emph{Tiny Grid} required roughly $500$ optimization steps with learning rate $0.01$, while \emph{Tiny Irregular} required roughly $1000$ optimization steps with learning rate $0.005$. For \emph{Tiny Cycle}, we continued to use a wider embedding dimension of $512$, which required roughly $10{,}000$ optimization steps at learning rate $0.01$. In contrast, in sufficiently wide settings (e.g., embedding dimension $512$), the associative storage mechanism is realized within just $2$ learning steps, which we find to hold both in these toy settings and in the associative path-star experiments of \cref{sec:emergence-global-geometry}..

\textbf{Associative memory variants of these architectures.} In associative-memory settings (e.g., left-column visualizations in \Cref{fig:overview}), token embeddings were frozen in all these architectures. All models employed weight tying between input and output embeddings.  Note that since we employ only one trainable layer in these settings (the embedding/unembedding layers frozen), we can be certain, at least in the case of $\tinyNN{}$ that storage is associative.

\textbf{Two-layer (\nodetovec{}-style) architectures.} For the shallow \nodetovec{}-style archirtectures (two weight-tied layers) used in  \S\ref{sec:geometry-spectral-bias}, we use wide models with embedding dimension $\embeddingdim=100$ to ensure that the models have no explicit bottleneck/rank pressure. We train these models with a learning rate of $0.01$, a batch size equal to the number of nodes in the graph, and $2000$ steps, which is well beyond fitting the training data to full accuracy.

\subsection{Training and optimization}
\label{app:train}
\textbf{Objective.} We use next-token cross-entropy over the causal prefix. For first-token-only experiments, the loss is restricted to the first target position. Although for large graph experiments we train all our models to $50{,}000$ epochs, we \emph{only need about a couple of thousand epochs} to see accuracy gains (e.g., see the per-token accuracy plots in \cref{fig:main_results}). 

\textbf{Optimizer and schedule.} We use the {\tt AdamW} optimizer with a weight decay of $0.01$. The learning rate follows a cosine decay schedule with a linear warm-up. 
In the two-phased edge-memorization ablation experiment of \S\ref{sec:interleaving}, the peak learning rate for edge memorization (Phase 1) is $1 \times 10^{-2}$ and for path finetuning (Phase 2) is $5 \times 10^{-5}$.

\textbf{Tiny-graph optimization details.} For the tiny-graph experiments in \S\ref{sec:tiny-graphs}, optimization hyperparameters varied by graph type. For \emph{Tiny Path–Star} and \emph{Tiny Grid}, both \tinyGPT{} and \tinyNN{} were trained for $500$ optimization steps with learning rate $0.01$. For \emph{Tiny Cycle}, \tinyGPT{} was trained for $700$ steps with learning rate $0.01$, while \tinyNN{} used embedding dimension $512$ and required $10{,}000$ optimization steps at learning rate $0.01$. For \emph{Tiny Irregular}, \tinyGPT{} was trained for $2000$ steps with learning rate $0.01$, while \tinyNN{} used embedding dimension $16$, learning rate $0.005$, and $1000$ optimization steps.

For all experiments in \cref{sec:other-regularizers}, we use a fixed training budget of $10{,}000$ optimization steps across all settings to ensure comparability between regularization strategies.

\textbf{Batching.} We used a range of different batch sizes of $\{64, 128, 256, 512, 1024\}$. 

\textbf{\texttt{PAUSE} tokens.} We append $\numpauses \in\{0,2,4,6, 10\}$ pauses in $\forwardpathdistr$ to provide compute budget (no labels on pause positions). The chosen $\numpauses$ for each $\graph_{\degree,\pathlength}$ is given in Table~\ref{tab:hparams}.

\begin{table}[ht]
  \centering
  \caption{Default hyperparameters by graph size. Values denote the settings used.}
  \label{tab:hparams}
  \begin{tabular}{lcccccc}
    \toprule
    Graph $\graph_{\degree,\pathlength}$ & $\numlayers$ & $\modelwidth$ & $\numheads$ & $\numpauses$ & Peak LR \\
    \midrule
    $\graph_{5\times10^3,5}$ (In-Weights) & 12 & 384 & 8 & 5 & $5 \times 10^{-5}$ \\
    $\graph_{10^4,6}$ (In-Weights) & 12 & 384 & 8 & 6 & $5 \times 10^{-5}$ \\
    $\graph_{10^4,10}$ (In-Weights) & 12 & 784 & 8 & 10 & $5 \times 10^{-5}$ \\
    \midrule
    $\graph_{2,5}$ (In-Context)   & 12 & 384 & 8 & n/a  & $1\!\times\!10^{-4}$ \\
    $\graph_{10,5}$ (In-Context)   & 12 & 384 & 8 & n/a  & $1\!\times\!10^{-4}$ \\
    $\graph_{20,5}$ (In-Context)   & 12 & 784 & 8 & n/a  & $1\!\times\!10^{-4}$ \\
    \bottomrule
  \end{tabular}
\end{table}

\subsection{Path-finding evaluation}
\label{app:eval}
\textbf{Forward vs. reverse.} We evaluate forward generation $(\start \to \leaf)$ and reverse generation $(\leaf \to \start)$. Reverse is algorithmically trivial after edge memorization; forward is the non-trivial case we care about.

\textbf{Metrics.} 
\begin{itemize}\setlength{\itemsep}{2pt}
  \item \emph{Exact-match path accuracy} / \emph{Full path accuracy}: fraction of held-out leaves whose entire path is generated correctly.
  \item \emph{First-token accuracy} / \emph{Hardest token accuracy}: accuracy of the first hop (hardest token) given the leaf.
  \item \emph{Per-token accuracy over epochs} / \emph{Token accuracy}: accuracy at each target position, tracked through training.
\end{itemize}
\textbf{Baselines.} Random choice among $\degree$ branches gives $1/\degree$ first-token accuracy and near-zero exact-match.

\subsection{Tests for (global) geometric vs associative storage}
\label{app:geometry-tests}
Path-finding is one way to test for the presence of a global geometry.  But in our smaller models, we have used various simpler tests. These tests detect whether the embedding matrix indeed has information about the graph or whether it is arbitrary (if it is, then we know that the storage is associative at least in 3-layered \tinyNN{} models).

\begin{remark}(\textbf{Global vs. local geometry})
    If there is a geometry in the embeddings, we can ask whether it is  global or relatively local e.g., is the embedding only close to its immediate neighbors but orthogonal to the rest?\footnote{We wish to thank Omar Salemohamed for suggesting the possibility of such local geometries.}). As a concrete example, consider the vertices of a path graph  embedded along the edges of a hypercube as $(1,1,0,0,\hdots), (0,1,1,0, \hdots), (0,0,1,1, \hdots)$ and so on. Such a solution encodes a local geometry in that each vertex has high cosine-similarity to its neighbors, but to no other vertices. On this regard, each of our tests comes with its own false positive or negative as discussed below. We use the sum total of all tests to ascertain the presence of global geometries as against local geometries.
\end{remark}

\begin{enumerate}
    \item \textbf{3D visualizations}: We found 3D visualizations like in \cref{fig:overview} to help track many qualitative behaviors e.g., whether the embeddings are arbitrary, the presence of negative eigendirections in the form of zigzags, the varying quality of geometries under various interventions etc., However, we warn the reader that these visualizations can suffer from a false positive on the local/global test: models with local geometries could still display a global geometry in the 3D visualization since the visualization method may inject some global information into it. 
    \item \textbf{Cosine heatmaps}: In plots such as \cref{fig:heatmap-leaf-first-main,fig:transformer_tied_adjacency}, we provide heatmaps of the cosine distances between embeddings. If the heatmap resembles multi-hop adjacency, it affirms a global geometry; if it appears random, the embedding matrix has no useful information. However this can yield false negatives: if the heatmap resembles a spiky one-hop adjacency matrix, the geometry may still be global (which is possible if the embeddings are a full-rank factorization of the adjacency matrix). 
    \item \textbf{Eigenvector projections}: We also provide projections of the embeddings onto the eigenvectors of the graph, like in \cref{fig:ce_lr_bias_main}.  A random embedding matrix would be roughly equally distributed along all directions, whereas a more geometric embedding would reflect the underlying graph spectrum. In fact, a global geometry would correspond to a spectrum that is even more skewed  towards the topmost directions (e.g., uses only the top two directions in the cycle). Our plots are designed as follows. We index the eigenvectors for the negative graph Laplacian by their \textit{signed} eigenvalues. We ignore the top eigenvector which is a degenerate eigenvector that assigns near-uniform values to all nodes. Then, for various indices $i$, we plot the magnitude of $\|\vecV(t)^T \vece_i\|/\|\vecV(t)^T \vece_{1}\|$, which is the projection of the embeddings along the $i$'th direction relative to the projection along the second-top direction. We compare the spread of these relative magnitudes against that of the eigenvalues themselves as $\lambda_i/\lambda_1$. 
    \item \textbf{Geometric margin}: As a more quantitative test for geometries, for our 3-layered \tinyNN{} model, we compute the fit of a modified model that is purely geometric. We do this by eliminating all cross-eigenvector connections in the middle layer $\vecW$. Specifically, if $\hat{\vecV}$ are the right-singular directions of the learned embedding matrix $\hat{\embedding}$, we replace the middle layer with $\hat{\vecV}^T \texttt{diag}(\hat{\vecV} \vecW \hat{\vecV}^T)  \hat{\vecV}$. We then compute the \textit{margin} for this modified model on the training set as $\min_{u}\{ \min_{w \in \neighbor(u)}f(u)[w] - \max_{v \notin \neighbor(u)} f(u)[v]\}$. If this is positive, it means that the embedding matrices encode the graph structure through cosine-similarity alone (modulo some sign and magnitudes prescribed by the diagonal values in the intermediate layer).  If the geometric margin is negative however, it may be a false negative because there may be different diagonal values that result in a fit of the data.

\end{enumerate}

\subsection{Implementation and compute}
\label{app:impl}
Code is implemented in {\tt PyTorch} with standard Transformer components. All runs fit on a single modern GPU (e.g., {\tt A100-40GB} or {\tt A100-80GB} for the largest batch-size); per-figure training time depends on $(d,\pathlength)$ and batch size.
Code for reproducing the experiments is available at: \url{https://github.com/shahriarnz14/geometric_memory}.

\clearpage
\section{Experiments on broader settings}

In this section, we demonstrate that our findings generalize to various other deep sequence architectures and various other large and smaller graphs.

\subsection{Path-star task on Mamba \SSM{}}
\label{sec:mamba}

This section (\cref{fig:main_results_mamba,fig:heatmap-leaf-first-mamba}) demonstrates that the implicit reasoning on path-star graphs observed for Transformers in \S\ref{sec:in-weights-path-star-task}, generalizes to the Mamba \SSM{} architecture \cite{gu2023mamba} too. %
A notable difference here is that the Mamba model presents a strong geometry even when trained only on the edges (\cref{fig:heatmap-leaf-first-edge-mamba}).

\begin{figure}[ht]
  \centering
  \begin{subfigure}[b]{0.33\linewidth}
    \centering
    \includegraphics[width=\linewidth]{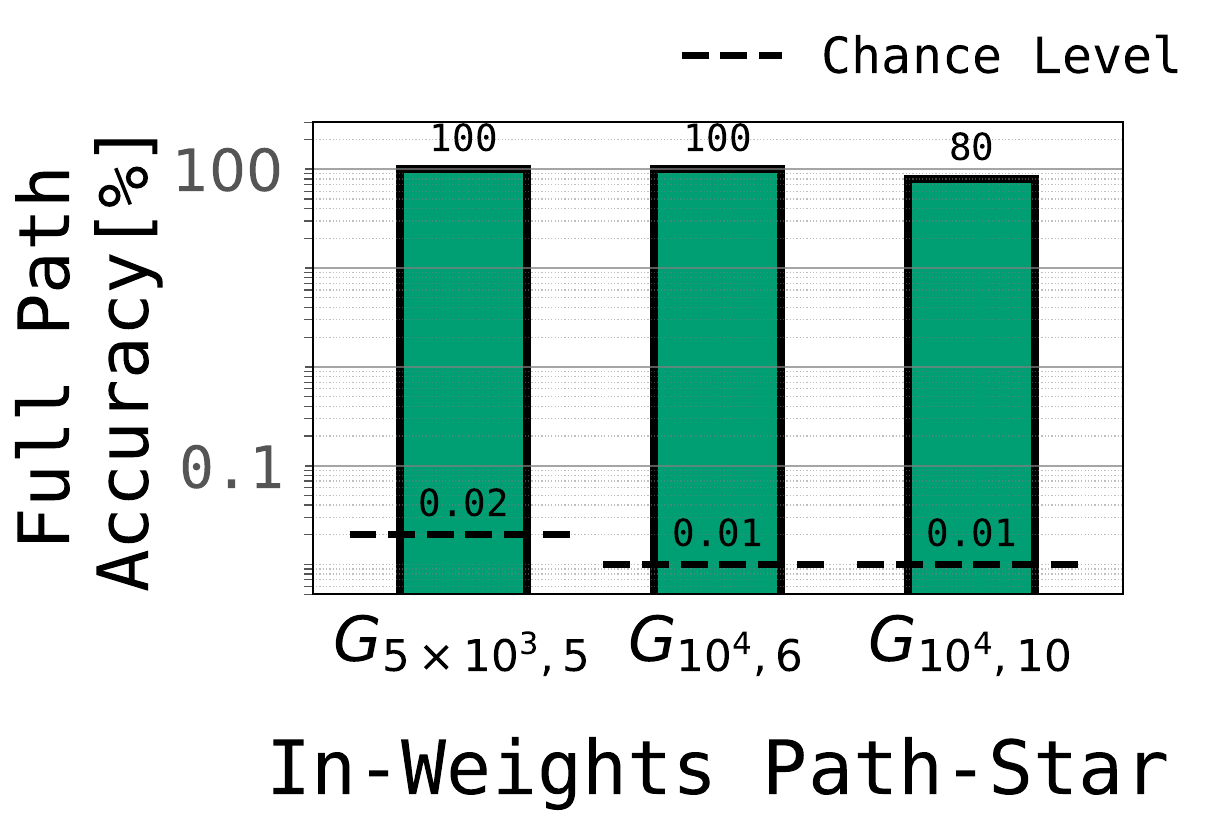}
  \end{subfigure}
  \hfill %
  \begin{subfigure}[b]{0.28\linewidth}
    \centering
    \raisebox{3pt}{\includegraphics[width=\linewidth]{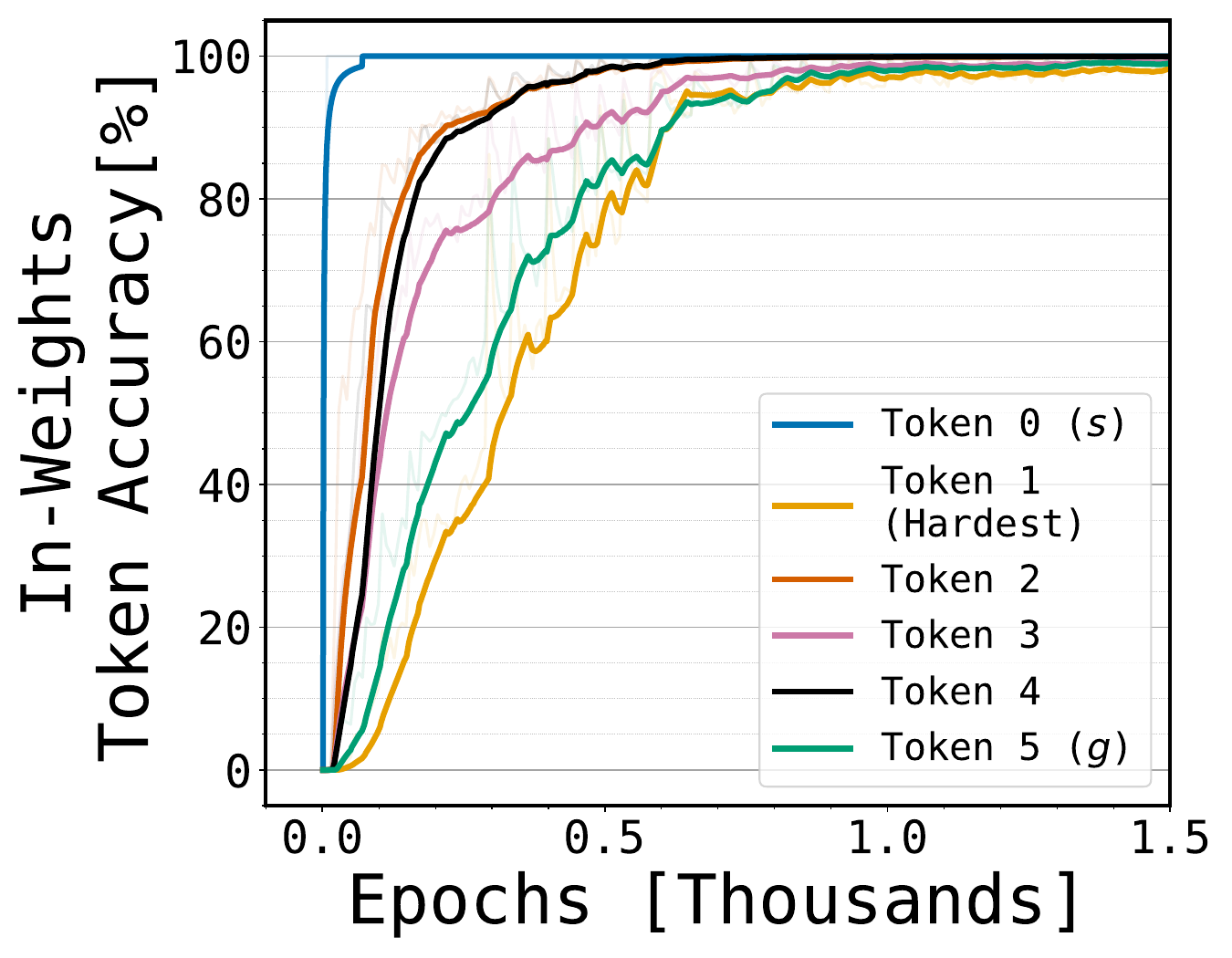}}
  \end{subfigure}
    \hfill
    \begin{subfigure}[b]{0.33\linewidth}
    \centering
    \includegraphics[width=\linewidth]{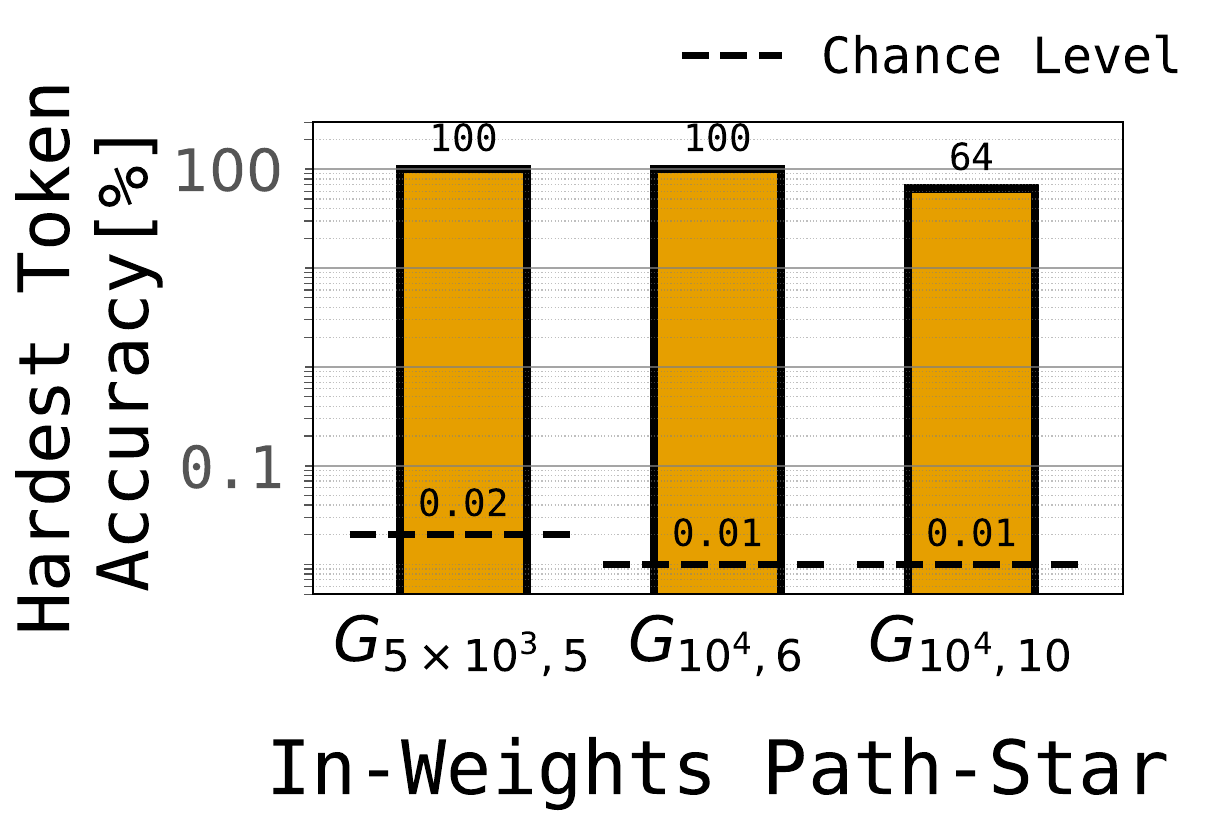}
  \end{subfigure}
  \caption{
    \textbf{(left) Success of in-weights path-star task for \textit{Mamba}.} This figure is a counterpart to \cref{fig:main_results}. A next-token-trained {Mamba} achieves perfect or highly non-trivial accuracy on large path-star graphs $\graph_{\degree, \pathlength}$ (\cref{obs:success}).  \textbf{(middle) Learning order of tokens.} The tokens of a path are not learned in the reverse order i.e., the model does not learn the right-to-left solution. Thus, gradients from the future tokens are not critical for success (\cref{obs:reverse-cascade}).
    \textbf{(right) Success of hardest-token-only task.} 
    In fact, the hardest token (the first token) given the leaf is learned in isolation to non-trivial accuracy (\cref{obs:first-token-easy}). Success of this $\pathlength$-fold composition task is hard to explain within the associative memory (\S\ref{sec:contradiction}). 
  }
  \label{fig:main_results_mamba}
\end{figure}

\begin{figure}[htb]
  \centering
  \vspace{-\topsep} %
      \begin{subfigure}[b]{0.45\linewidth}
    \centering
    \includegraphics[width=0.7\linewidth]{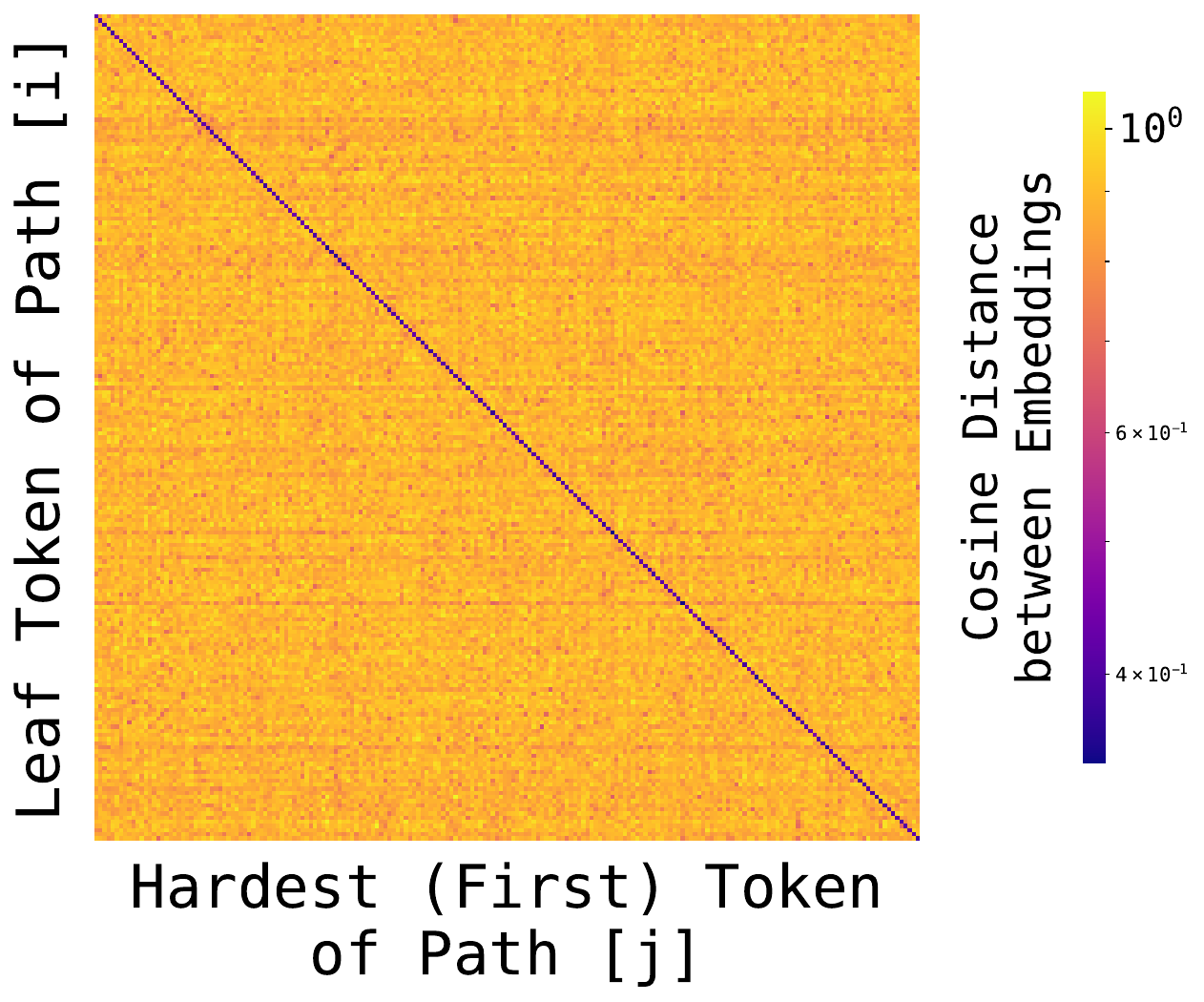}
    \caption{Edge-Memorized and \\Full-Path-Trained}
    \label{fig:heatmap-leaf-first-full-mamba}
  \end{subfigure}
  \hfill %
  \hfill %
    \begin{subfigure}[b]{0.45\linewidth}
    \centering
    \raisebox{10pt}{\includegraphics[width=0.7\linewidth]{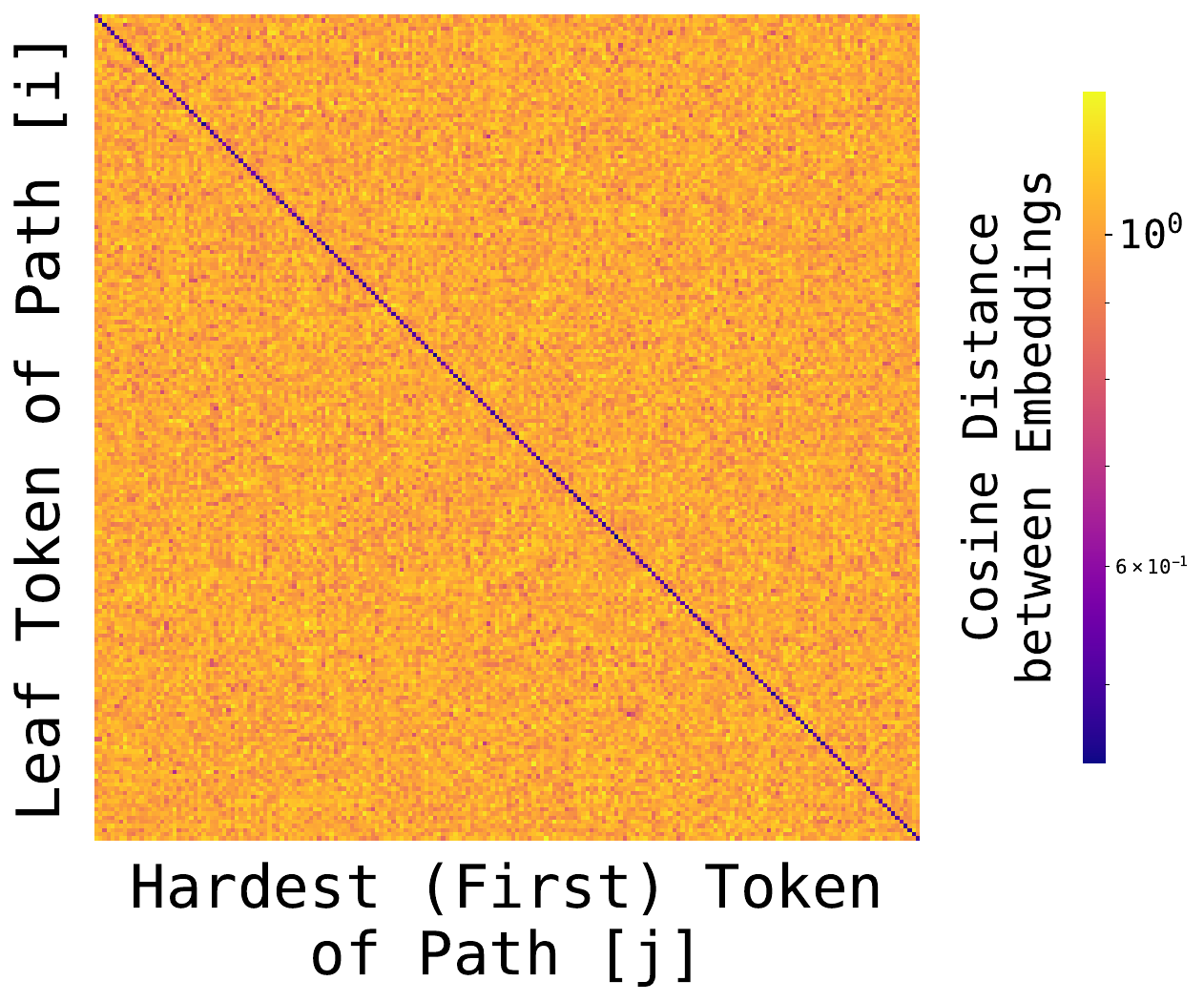}}
    \caption{Only Edge-Memorized}
    \label{fig:heatmap-leaf-first-edge-mamba}
  \end{subfigure}
  \caption{\textbf{Evidence of global geometry in path-star task for \emph{Mamba}.} This figure is a Mamba counterpart to the Transformer heatmaps in \cref{fig:heatmap-leaf-first-edge-main,fig:heatmap-leaf-first-main}.
  Recall that entry $(i, j)$ is the mean $\text{cosine}$ distance between the leaf token in (an unseen) path $i$ (row) and first/hardest token on (an unseen) path $j$ (col). 
  \textbf{Left} is trained on edges and path-finding task $(\edgedistr\cup\forwardpathdistr)$; 
  \textbf{right} on edges only $(\edgedistr)$.
  We find that even on these unseen paths, the leaf and first token embeddings cluster together, regardless of whether path-finding supervision exists. Interestingly, compared to the Transformer in \cref{fig:heatmap-leaf-first-edge-main}, the Mamba trained only on local supervision (right) exhibits a much stronger geometry.}
  \label{fig:heatmap-leaf-first-mamba}
\end{figure}

\clearpage
\subsection{Other large, harder path-finding graphs}
\label{sec:tree-star}
Next, we consider variants of the path-finding graph.  While the path-star graph is adversarially constructed in certain ways, there is only one decision-making step, which makes planning simpler in a certain way. To make the task harder, we introduce a branching at every node along each path. (Similar tree variants have been considered in \cite{frydenlund25language} for the in-context task, but we are interested in in-weights tasks.)

In the \textit{tree}-star graph, $\treeGraph_{d,\ell}$, there is a central node with degree $\degree$ and each child node except the leaf has $2$ children, as visualized in \cref{fig:tree-star-visualization}. The path-length from the root to any leaf is $\ell$. In this graph, two types of learning tasks can be considered, depending on how we split the test and training paths. For a no-overlap setting, we could sample all training paths from one subset of trees, and the test paths from the remaining trees; we call this the \textit{split at first token} setting, since the test/train split is determined by the first token. A second setting---with some test-train overlap---is one where we reserve some leaves as training goals, and the rest as test-goals. Here, on any test path, a prefix may have participated in a training path. We call this the \textit{split at leaf} setting. Note that in both variants, all nodes will be sampled as part of the edge-memorization task.

In \cref{fig:tree-star-results}, we find that on both variants the Transformer achieves non-trivial path-finding accuracy, generalizing our results beyond the path-star task. However, we note that the test-train split at the first token is much harder to succeed at, likely due to no overlaps.

\begin{figure}[ht]
  \centering
\begin{subfigure}[b]{\linewidth}
    \centering
    \includegraphics[width=0.7\linewidth]{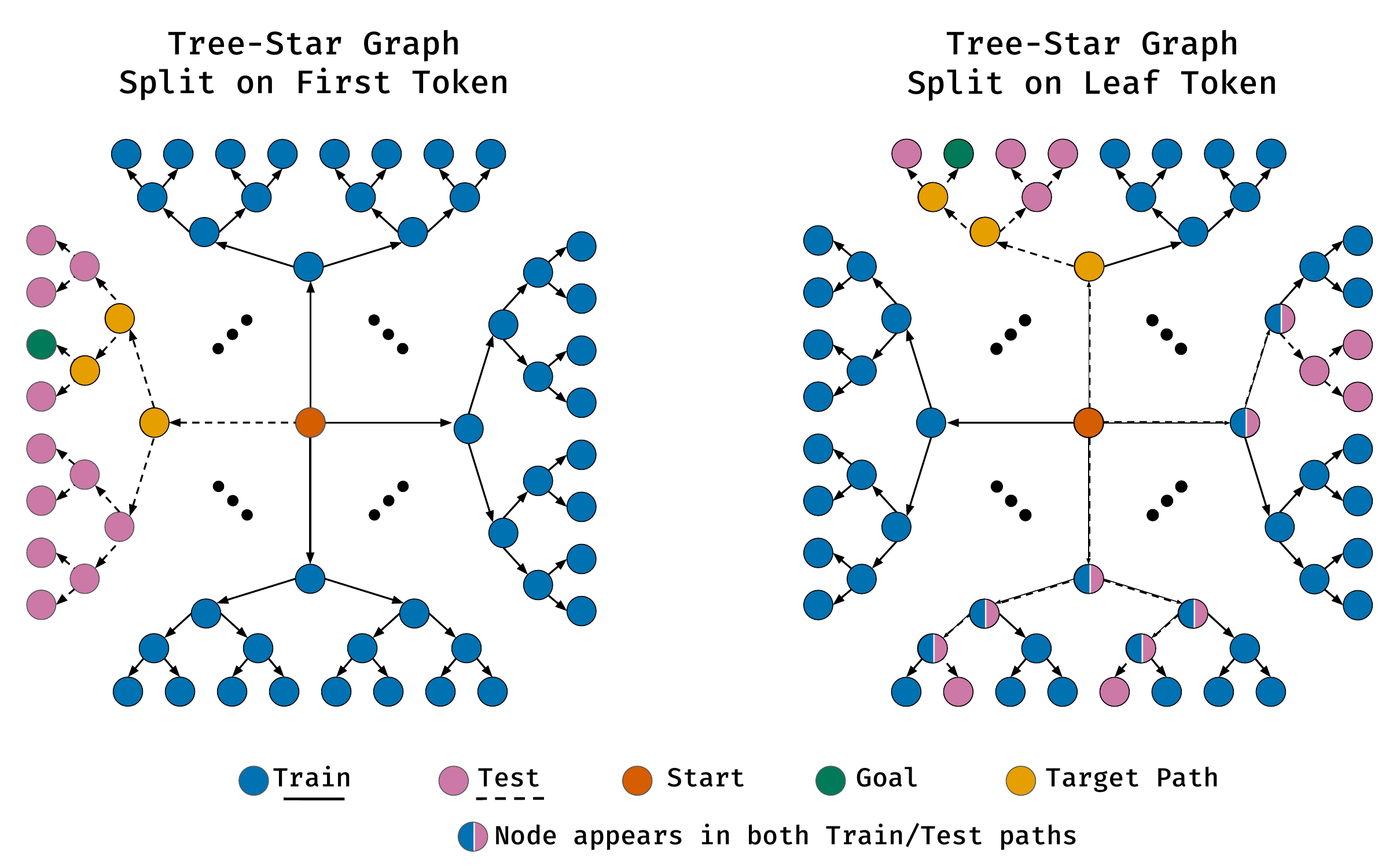}
    \caption{In-Weights Tree-Star}
    \vspace{2em}
    \label{fig:tree-star-visualization}
  \end{subfigure}
\begin{subfigure}[b]{0.9\linewidth}
    \centering
    \includegraphics[width=\linewidth]{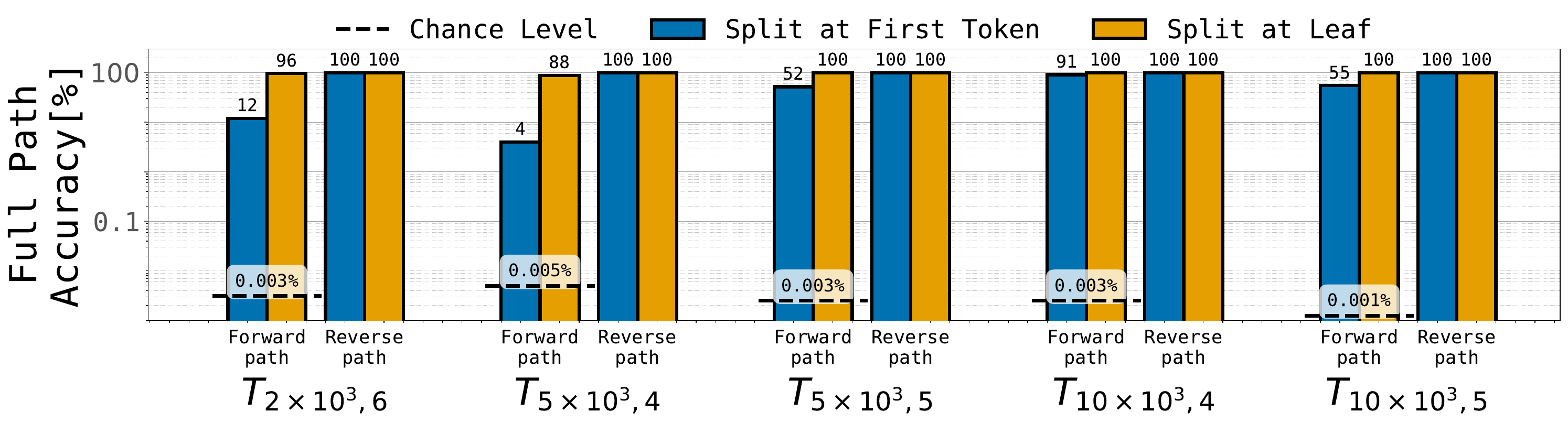}
    \caption{Task Success}
    \label{fig:tree-star-results}
  \end{subfigure}
  \hfill %
  \caption{\textbf{Transformer achieves non-trivial accuracy on the harder in-weights tree-star task.}  The tree-star task of \S\ref{sec:tree-star} introduces decision-making at every step of the path, not just the first token. There are two variants of this task based on the test-train split. In the \textit{split on first token} variant \textbf{(top-left)}, we reserve some of the trees for generating training paths, and the rest for test paths. In the \textit{split on leaf token} variant (\textbf{top-right}), we reserve some leaves for training and the rest for testing; some nodes may be sampled for path-finding during both test and train time. In both tasks, we have visualized a single target path. In the \textbf{bottom} figure, we report non-trivial path-finding accuracies on both tasks, above random chance defined as $1/$\texttt{num\_leaves}. 
  }
  \label{fig:overview-tree-star}
\end{figure}

\clearpage
\subsection{Tiny graphs}
\label{sec:tiny-graphs}

\textbf{Graphs} Besides the large path-star graph (of \S\ref{sec:experiments}) and tree-star graphs (of \ref{sec:tree-star}), we also report the embeddings learned on four tinier graphs for various architectures (details in \ref{app:tiny-model-architecture}); the figures here are an extension of the Transformer embeddings in \cref{fig:overview}. In these experiments, we train the models purely on local supervision (the edges, presented in both directions) to $100\%$ edge-memorization accuracy: for each vertex, we ensure that its $\degree$ neighbors appear in the top $\degree$ softmax probabilities. The graphs include (a) a tiny path-star graph with four paths of length $4$, (b) a $4\times4$ grid graph, (c) a $15$-node cycle graph and (d) an irregular graph with two asymmetric components.

\textbf{List of figures.} Our visualization techniques are based on the tests described in \S\ref{app:geometry-tests}. Our first set of figures (\cref{fig:tiny-path-star,fig:tiny-grid,fig:tiny-cycle,fig:tiny-irregular}) are 3D visualizations of the embeddings from various architectures: an associative memory model (implemented with a neural network with only one trainable matrix sandwiched between (un)embedding layers), a \nodetovec{} model, the eigenvectors of the graph Laplacian, a Transformer's token embeddings, a neural network's first layer, and a Mamba \SSM{}'s token embeddings. For the \nodetovec{} model we use the top eigenvectors, and for the rest we use \umap{} \citep{mcinnes2018umap} to choose the top directions. As noted in \S\ref{app:geometry-tests}, these visualizations by themselves do \textit{not} guarantee the presence of global information. As a rigorous test, in \cref{fig:node2vec_tied_adjacency,fig:transformer_tied_adjacency}, we provide node-node cosine-similarity heatmaps exhibiting multi-hop information. Next in \cref{fig:geometric_memorization_speed_all_lr,fig:associative_memorization_speed}, we present detailed plots of how fast a model stores associatively/geometrically over training time. 
In \S\ref{sec:other-regularizers}, we provide plots demonstrating the effect of various optimization hyperparameters. \ifarxiv\else Finally, in \S\ref{sec:weight-tying-and-svd}, we explain the presence of zigzagging directions in some settings, and the effect of weight-tying.\fi

\textbf{Consolidated Observations.} For easy reference, we consolidate all our observations from these plots below:

\begin{observation}
\label{obs:tiny-graph}
In the tiny graphs of \cref{fig:tiny-path-star,fig:tiny-grid,fig:tiny-cycle,fig:tiny-irregular,fig:geometric_memorization_speed_all_lr,fig:associative_memorization_speed,fig:node2vec_tied_adjacency,fig:transformer_tied_adjacency,} and \S\ref{sec:other-regularizers} on various architectures (\nodetovec{}, graph spectrum, Transformer, neural network, Mamba \SSM), we find that:
\begin{enumerate}
    \item A geometry arises in all these architectures even without global supervision from a path-finding task (\cref{obs:local-supervision-suffices}).
    \item The global information in these geometries can be traced back to the eigenvectors of the graph spectrum (\S\ref{sec:geometry-spectral-bias}).
    \item The geometry arises in all three deep sequence models (Transformer, Mamba \SSM, neural network) even though these models can learn the data associatively using the same learning setup, with just the (un)embedding matrices frozen (\cref{obs:associative-is-possible}). 
        \item While the geometry is significantly aided by higher weight decay or dropout, these regularizing effects are not necessary (\cref{obs:associative-is-possible}); (demonstrated in \S\ref{sec:other-regularizers}).
    \item The geometry of the \nodetovec{} model (which precludes associative memory), is much stronger than the deep sequence models, suggesting that the deep sequence models may be adulterated with associative memory \ifarxiv(\cref{hyp:spectral-bias})\fi.
    \item Associative storage can be quickly found by gradient descent (\cref{fig:associative_memorization_speed}), as claimed in \cref{obs:easy-to-find}, whereas geometric memorization takes longer to discover (\cref{fig:geometric_memorization_speed_all_lr})
        \item Geometries arise even with only one direction of the edges presented (\cref{fig:forward-only-tiny}). %
\end{enumerate}
\end{observation}

\ifarxiv
\else
\begin{figure}[ht]
  \centering
  \includegraphics[width=\textwidth]{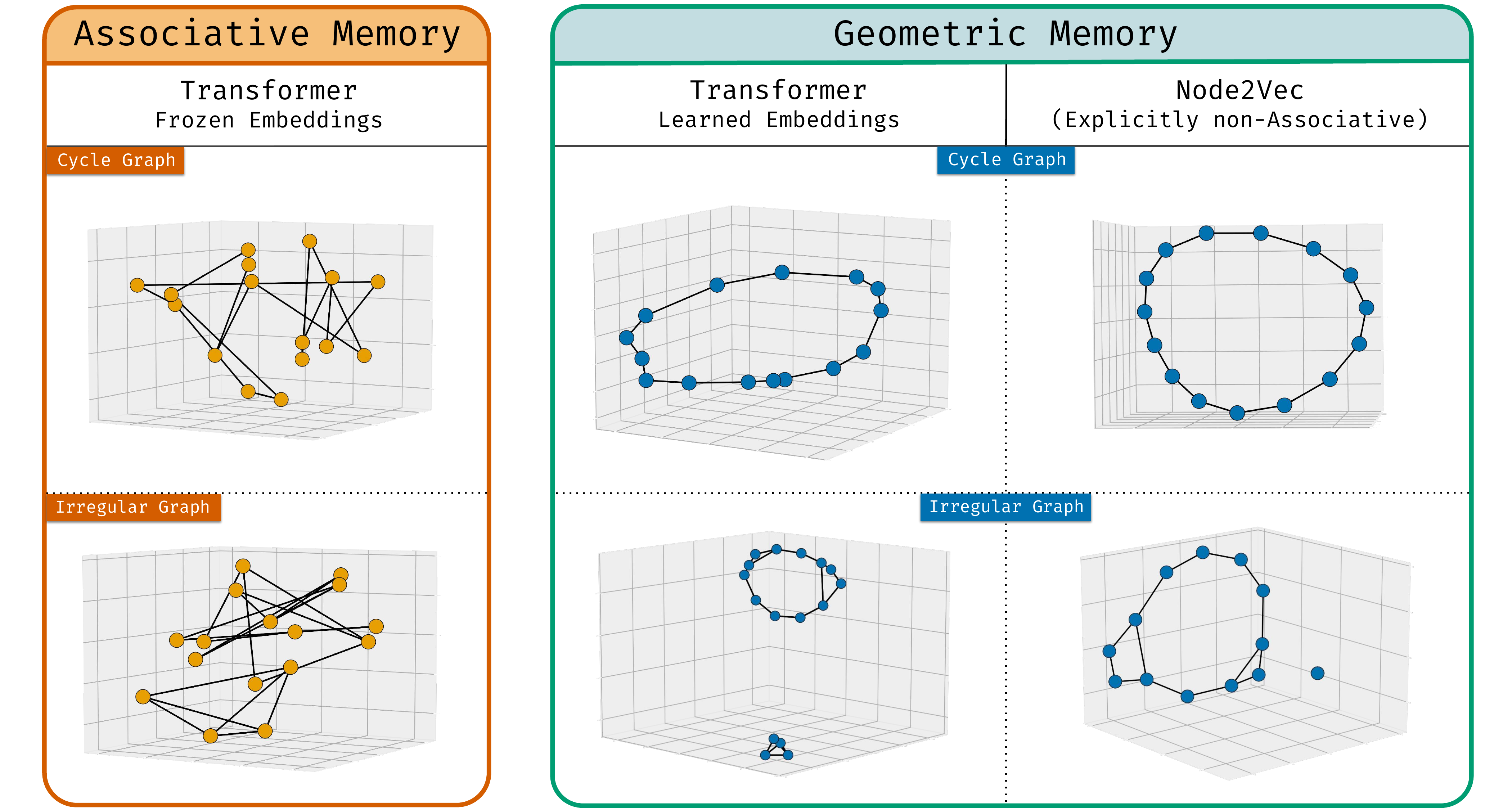
  }
  \caption{\textbf{Quality of geometric memory across graph types.} Extending \cref{fig:overview}, we show that 
  the contrast between associative memory (left), Transformer geometric memory (middle), and \nodetovec{} geometric memory (right) holds across different graph topologies. \nodetovec{} models, where associative memory is architecturally prohibited, consistently exhibit cleaner and more structured geometries than Transformers. This points to significant headroom for improving the geometric nature of Transformer memory. Details of the Transformer architecture used for this visualization are provided in \S\ref{app:tiny-model-architecture}.}
  \label{fig:overview-other-graphs}
\end{figure}
\fi

\begin{figure}[ht]
  \centering
    \centering
    \includegraphics[width=0.9\linewidth]{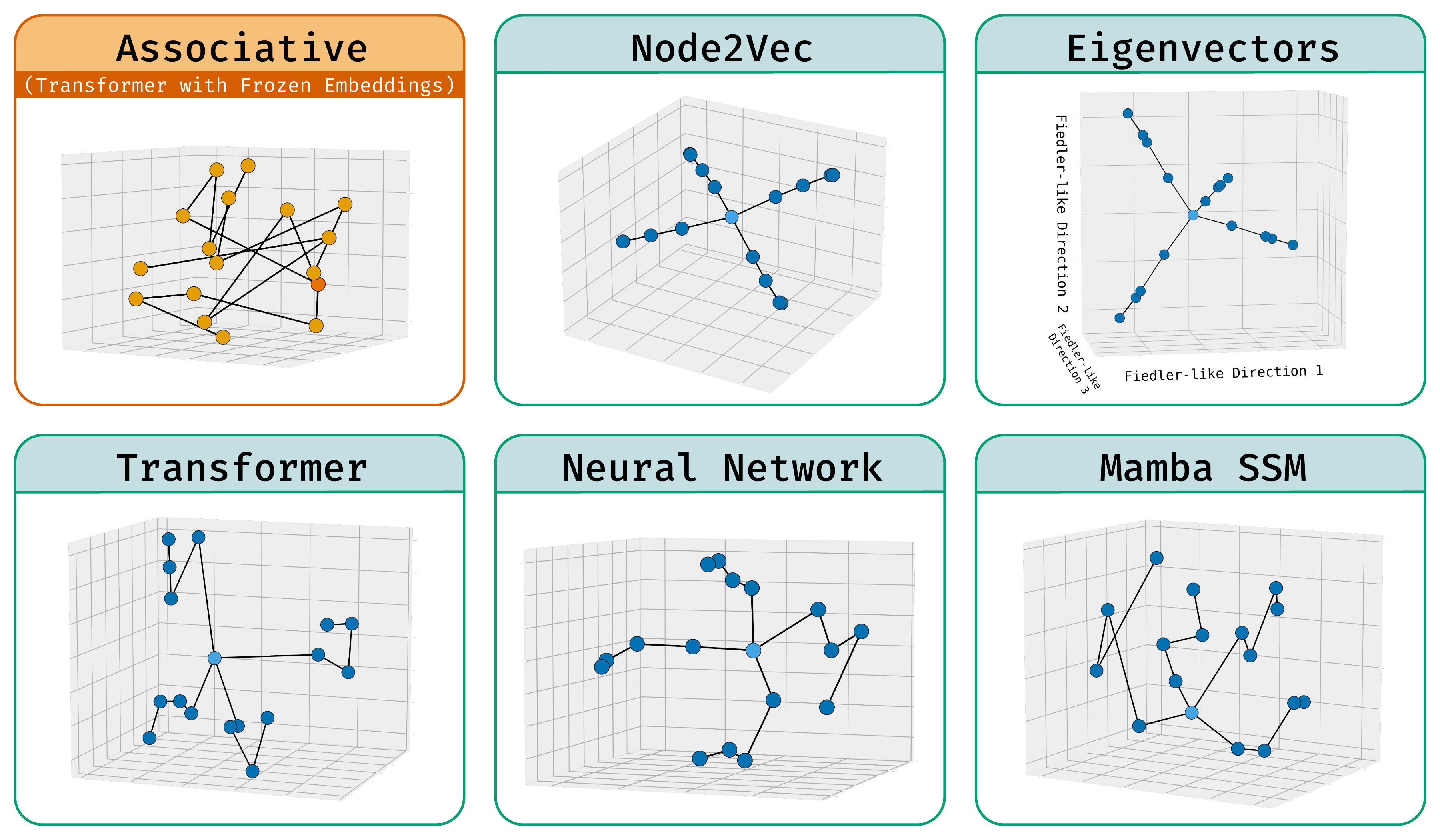}
  \caption{\textbf{Tiny path-star}: Geometries of various architectures on a smaller version of the path-star graph. See \cref{obs:tiny-graph}.}
  \label{fig:tiny-path-star}
\end{figure}

\begin{figure}[ht]
  \centering
    \centering
    \includegraphics[width=0.9\linewidth]{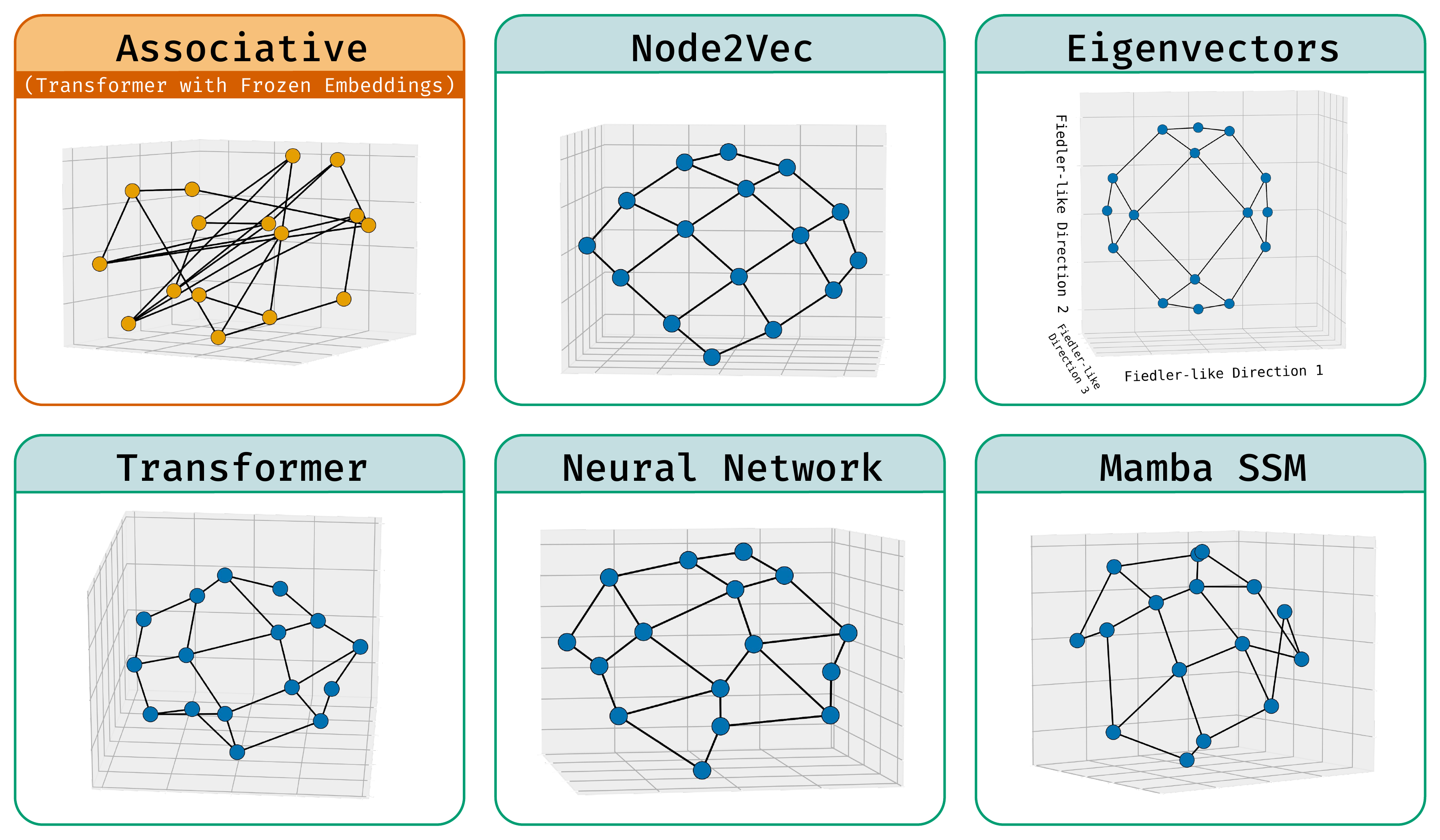}
  \caption{\textbf{Tiny grid}: Geometries of various architectures on a small $4\times4$ grid graph. See \cref{obs:tiny-graph}.}
  \label{fig:tiny-grid}
\end{figure}

\begin{figure}[ht]
  \centering
    \centering
    \includegraphics[width=0.9\linewidth]{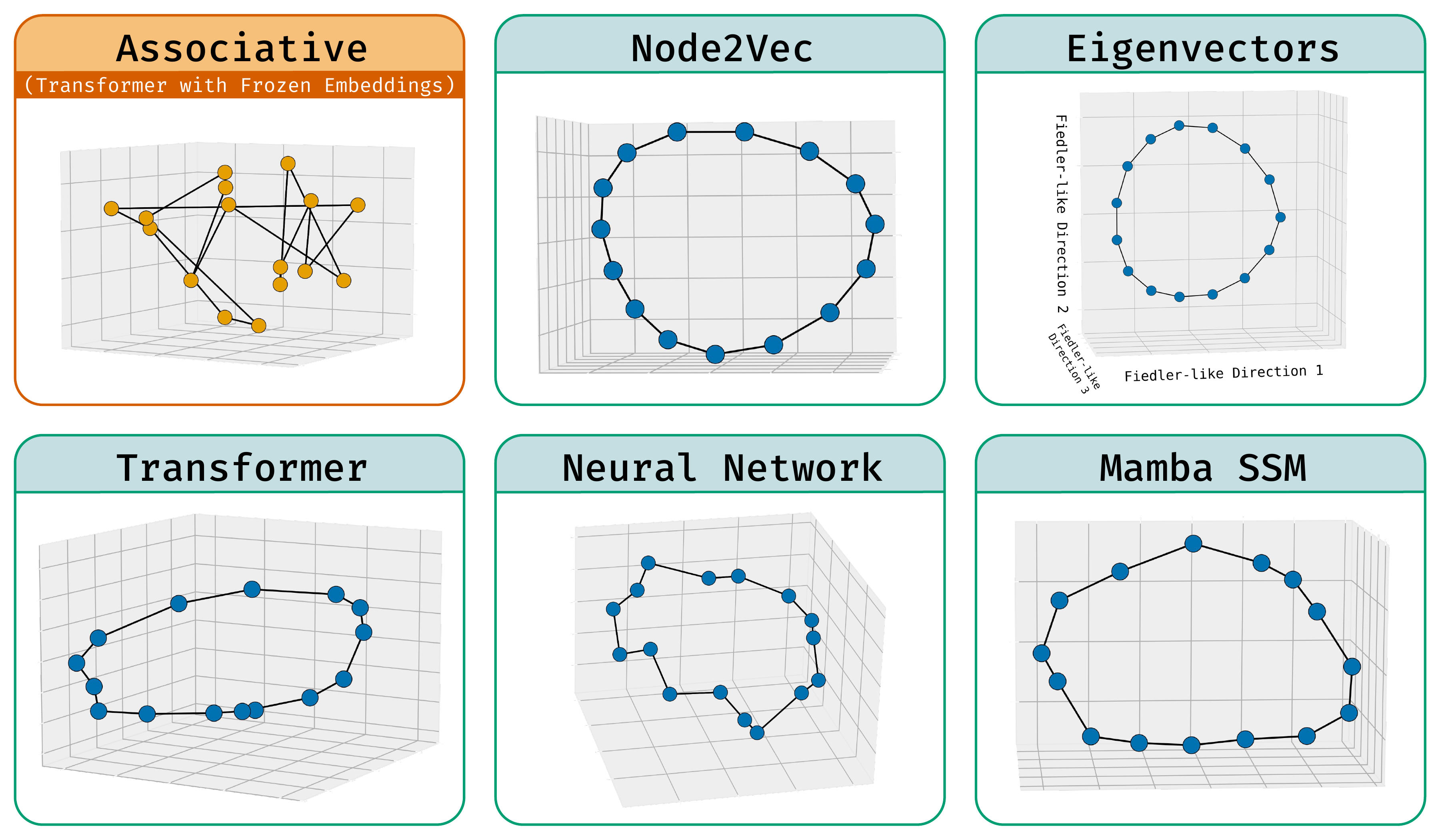}
  \caption{\textbf{Tiny cycle}: Geometries of various architectures on a small cycle graph. See \cref{obs:tiny-graph}.}
  \label{fig:tiny-cycle}
\end{figure}

\begin{figure}[ht]
  \centering
    \centering
    \includegraphics[width=0.9\linewidth]{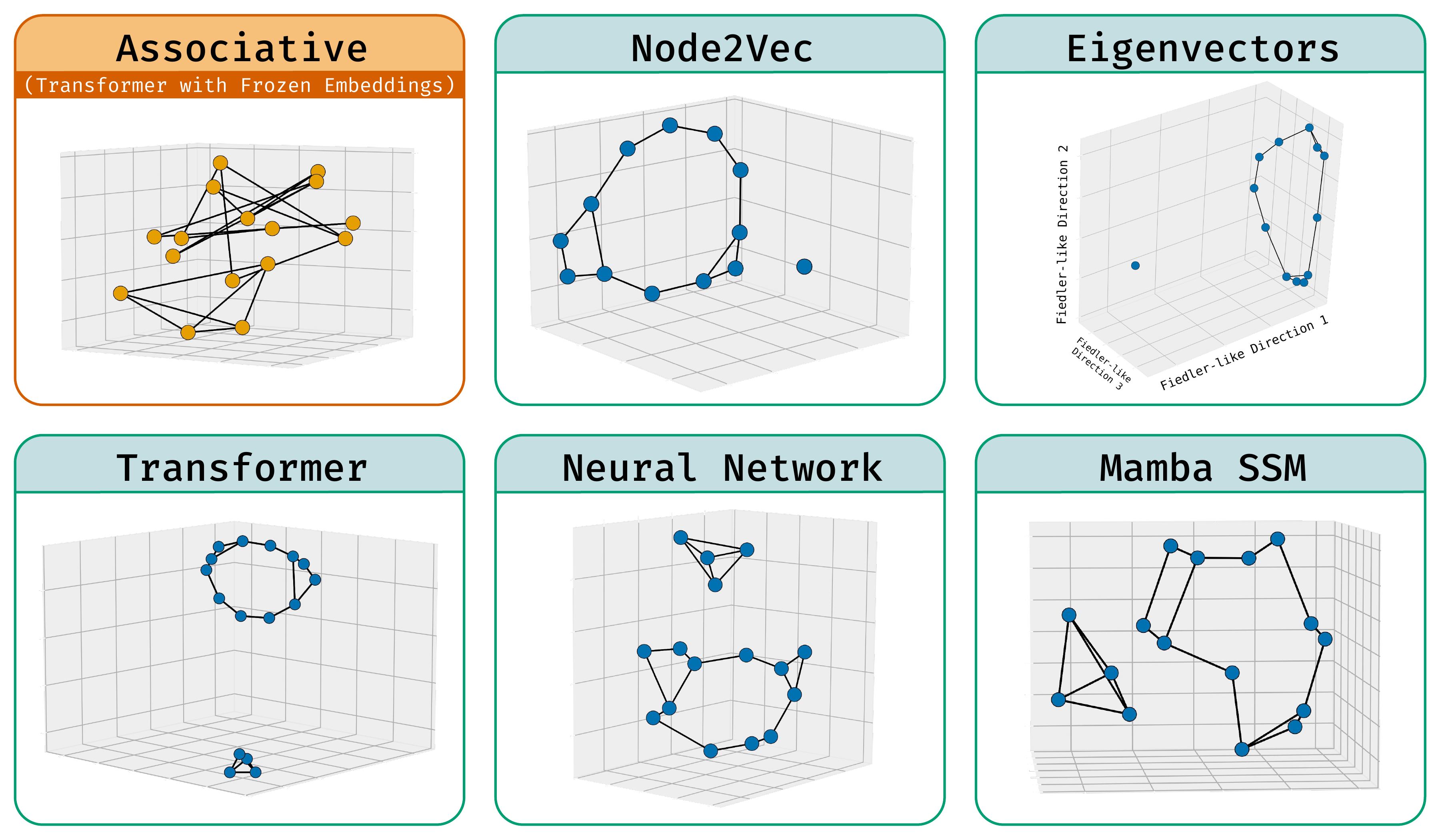}
  \caption{\textbf{Tiny irregular graph}: Geometries of various architectures on a small irregular graph of two connected components, both asymmetric. See \cref{obs:tiny-graph}.}
  \label{fig:tiny-irregular}
\end{figure}

\begin{figure}[ht]
  \centering
  \begin{subfigure}[b]{0.24\linewidth}
    \centering
    \includegraphics[width=\linewidth]{Figures/star_associative_memorization_curve.pdf}
    \caption{Tiny Path-Star}
    \label{fig:associative_memorization_path_star}
  \end{subfigure}\hfill
  \begin{subfigure}[b]{0.24\linewidth}
    \centering
    \includegraphics[width=\linewidth]{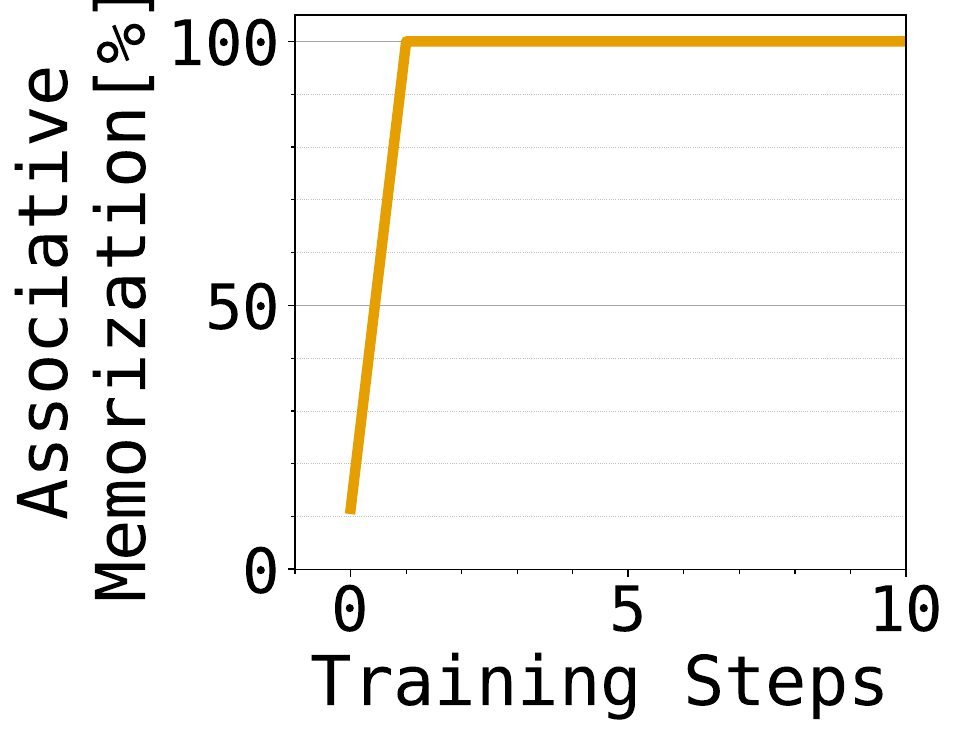}
    \caption{Tiny Grid}
    \label{fig:associative_memorization_grid}
  \end{subfigure}
  \begin{subfigure}[b]{0.24\linewidth}
    \centering
    \includegraphics[width=\linewidth]{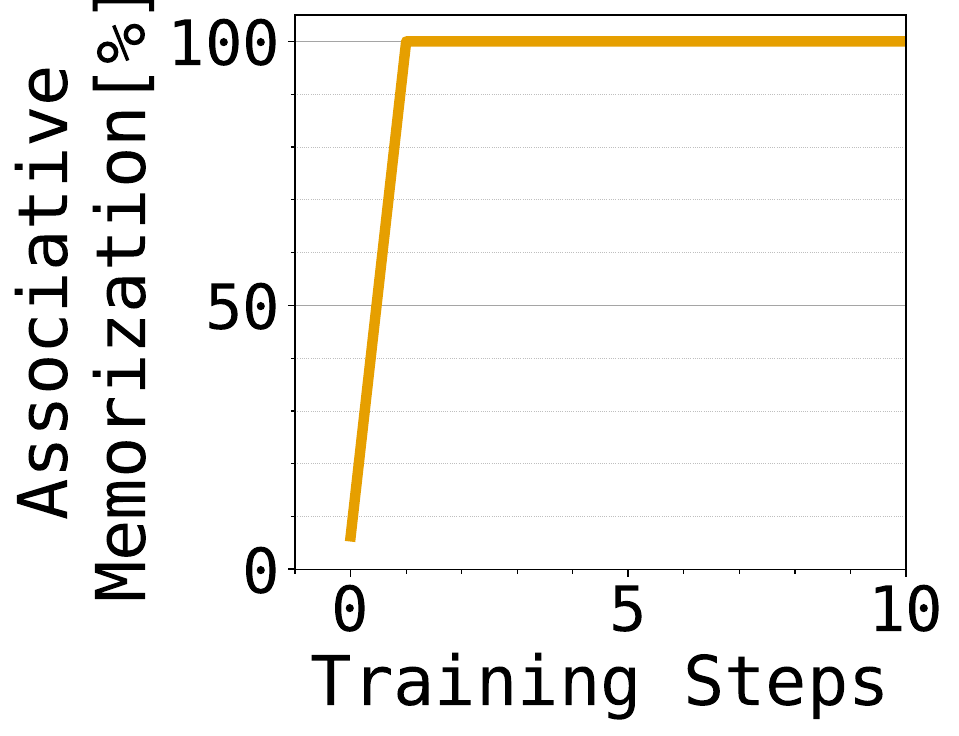}
    \caption{Tiny Cycle}
    \label{fig:associative_memorization_cycle}
  \end{subfigure}\hfill
  \begin{subfigure}[b]{0.24\linewidth}
    \centering
    \includegraphics[width=\linewidth]{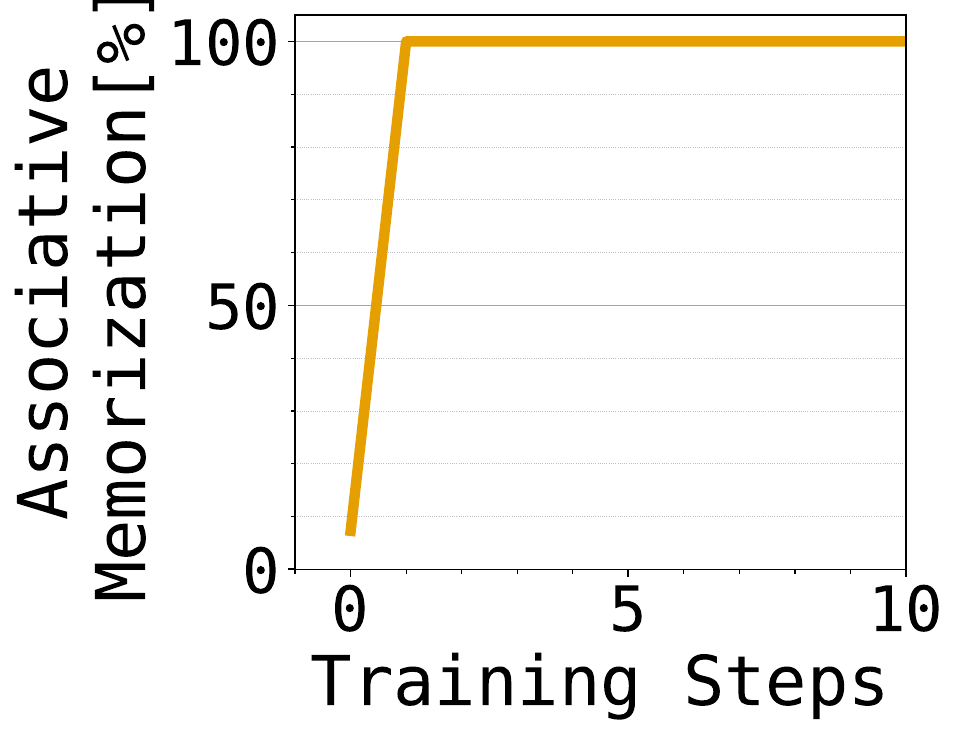}
    \caption{Tiny Irregular}
    \label{fig:associative_memorization_irregular}
  \end{subfigure}
  \caption{\textbf{Associative memory can be discovered quickly by gradient descent, given a sufficiently wide model, and a sufficiently large learning rate}: For the various tiny graphs described in \S\ref{sec:tiny-graphs}, and for our \tinyNN{} model (with frozen embedding/unembedding layers and one wide trainable weight matrix to prevent a geometry from taking over; see \S\ref{app:tiny-model-architecture}), we report memorization over timesteps of training with \textit{full-batch} gradient descent and constant learning rate of $0.1$. For each vertex, we compute what fraction of its $k$ neighbors appear in the top $k$ next token probabilities; this is averaged across vertices (where $k$ varies across vertices). This value, which quantifies associative memorization, reaches its maximum within $2$ steps of gradient descent; compare this to the longer time for geometric memory to form in  
  \cref{fig:geometric_memorization_speed_all_lr}.
  This is evidence for our \cref{obs:easy-to-find}: ease-of-discovery does \textit{not} dictate what memory is formed.}
  \label{fig:associative_memorization_speed}
\end{figure}

\begin{figure}[ht]
  \centering
  \begin{subfigure}[b]{0.33\linewidth}
    \centering
    \includegraphics[width=\linewidth]{Figures/Geometric-Evolution/star_geometric_evolution_lr_0.001.pdf}
    \caption{Tiny Path-Star ({\tt LR=$0.001$})}
    \label{fig:geometric_memorization_1e-3_path_star}
  \end{subfigure}\hfill
    \begin{subfigure}[b]{0.33\linewidth}
    \centering
    \includegraphics[width=\linewidth]{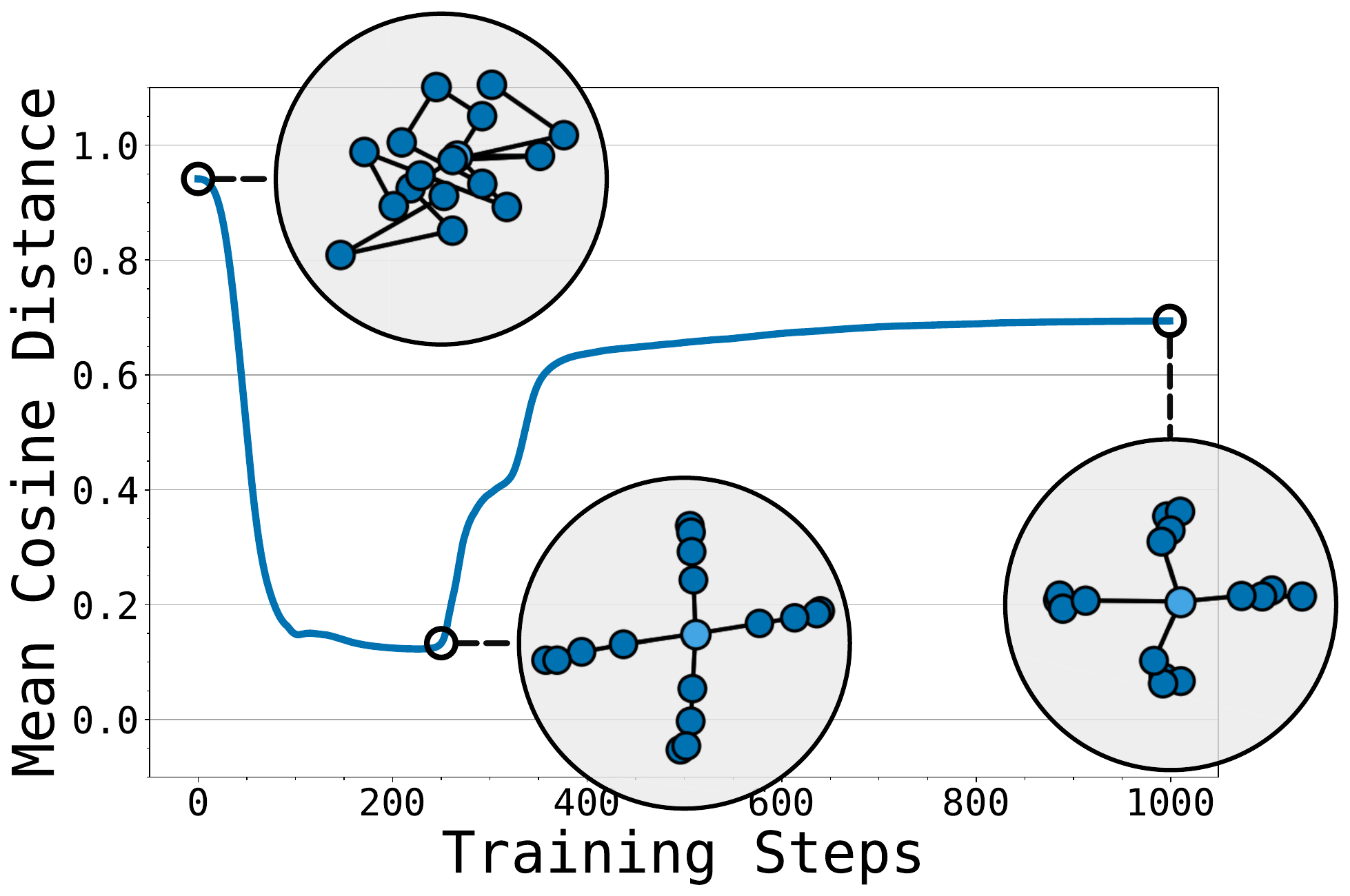}
    \caption{Tiny Path-Star ({\tt LR=$0.01$})}
    \label{fig:geometric_memorization_1e-2_path_star}
  \end{subfigure}\hfill
    \begin{subfigure}[b]{0.33\linewidth}
    \centering
    \includegraphics[width=\linewidth]{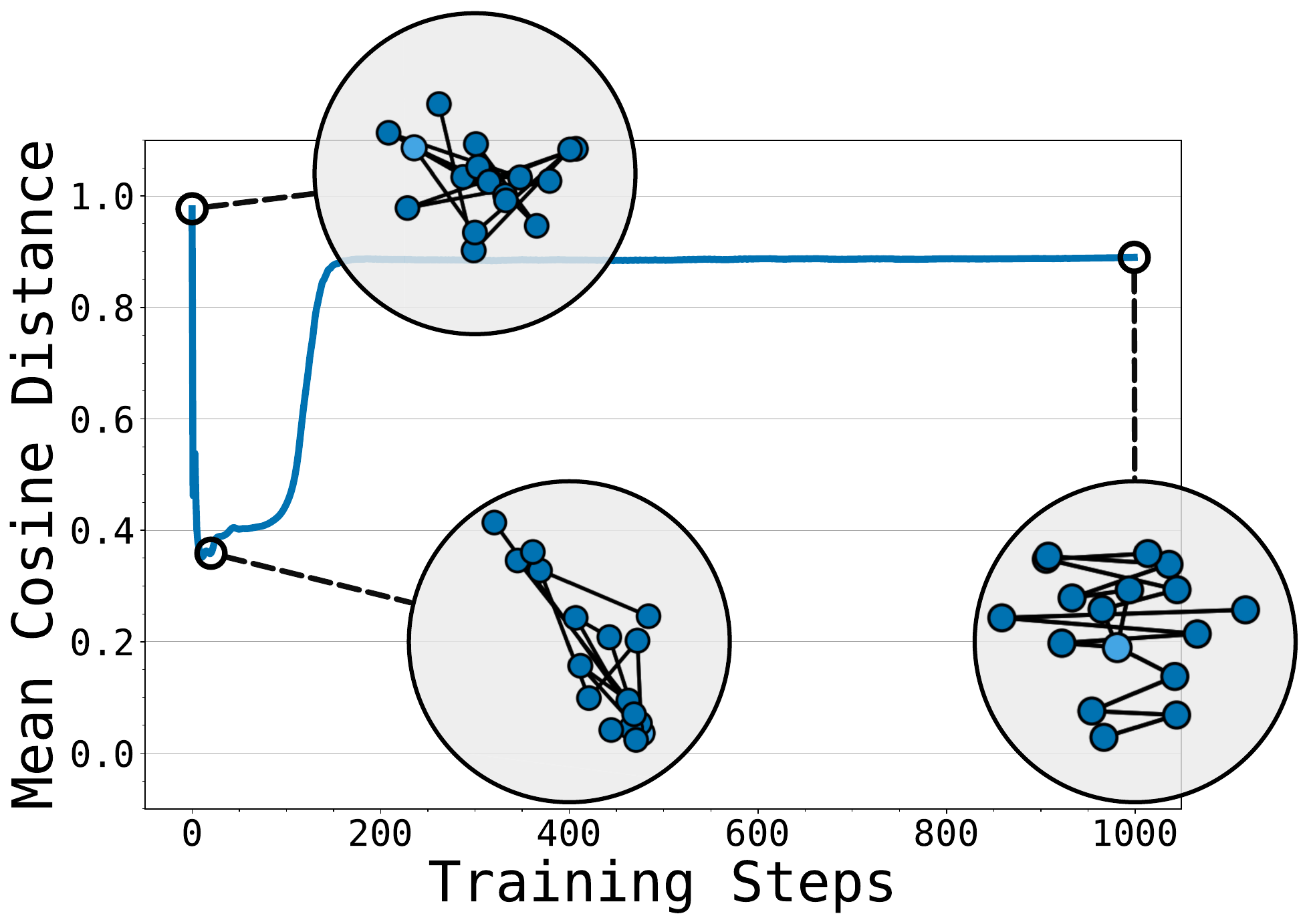}
    \caption{Tiny Path-Star ({\tt LR=$0.1$})}
    \label{fig:geometric_memorization_1e-1_path_star}
  \end{subfigure}\hfill

  \begin{subfigure}[b]{0.33\linewidth}
    \centering
    \includegraphics[width=\linewidth]{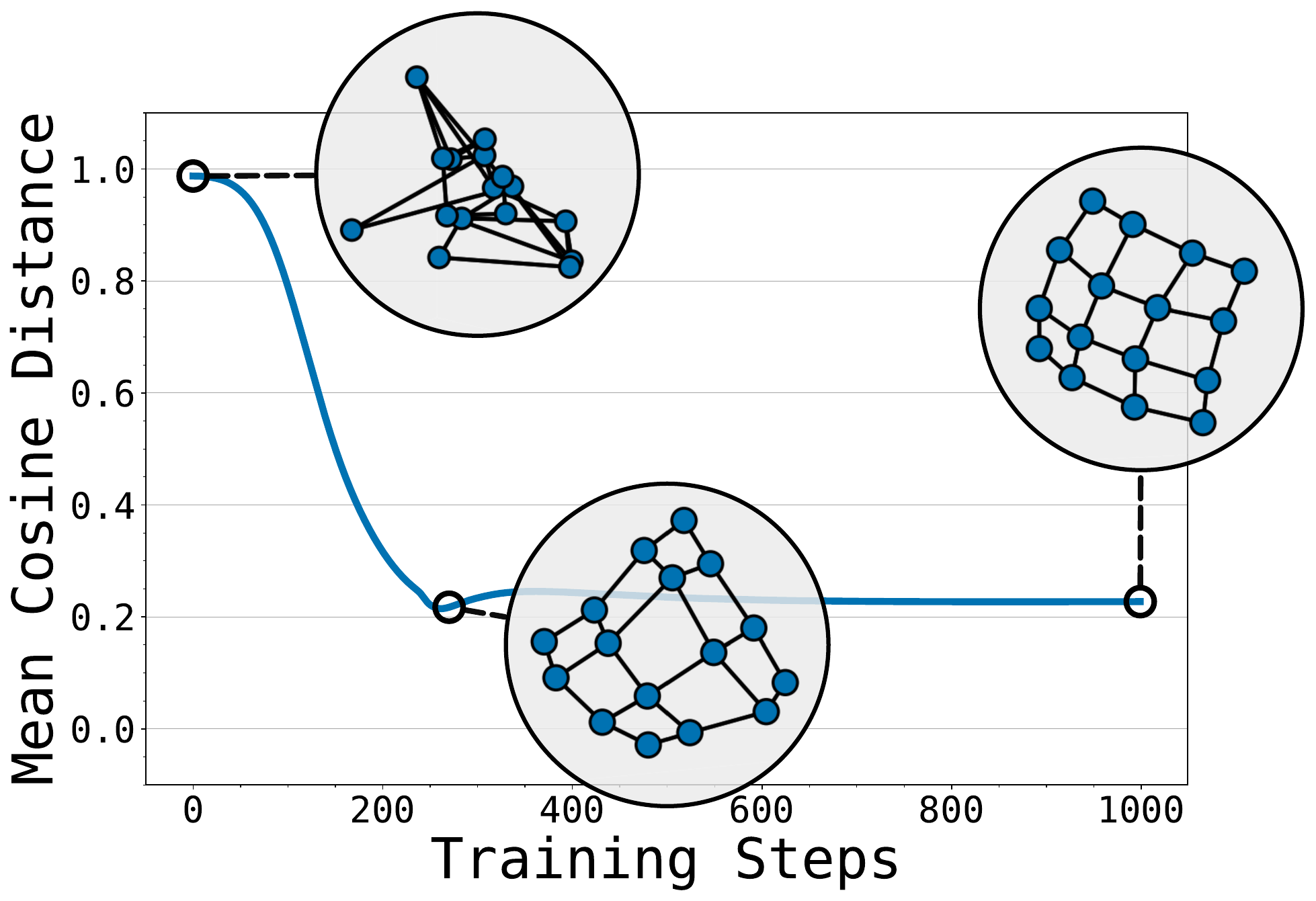}
    \caption{Tiny Grid ({\tt LR=$0.001$})}
    \label{fig:geometric_memorization_1e-3_grid}
  \end{subfigure}\hfill
    \begin{subfigure}[b]{0.33\linewidth}
    \centering
    \includegraphics[width=\linewidth]{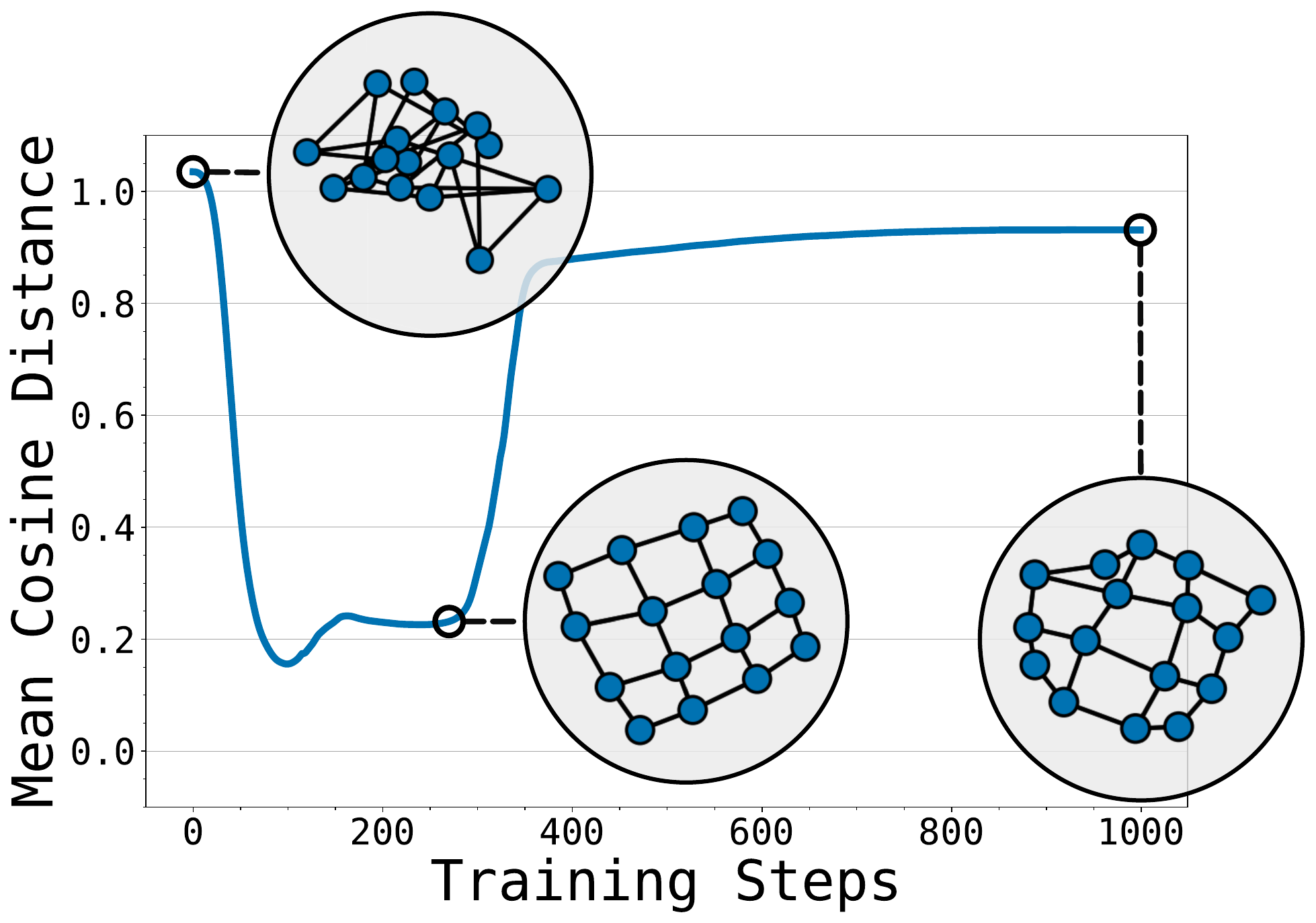}
    \caption{Tiny Grid ({\tt LR=$0.01$})}
    \label{fig:geometric_memorization_1e-2_grid}
  \end{subfigure}\hfill
    \begin{subfigure}[b]{0.33\linewidth}
    \centering
    \includegraphics[width=\linewidth]{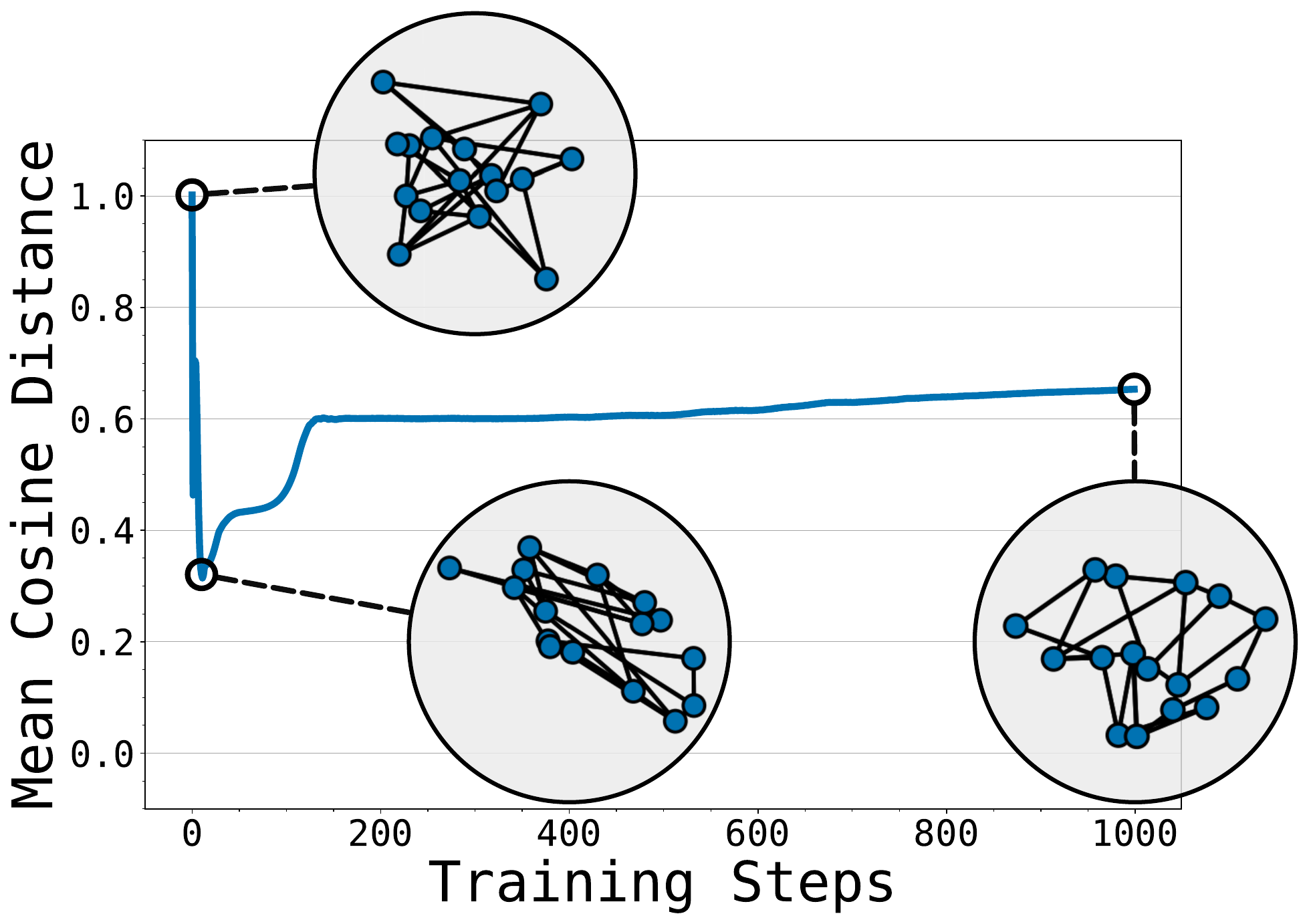}
    \caption{Tiny Grid ({\tt LR=$0.1$})}
    \label{fig:geometric_memorization_1e-1_grid}
  \end{subfigure}\hfill

  \begin{subfigure}[b]{0.33\linewidth}
    \centering
    \includegraphics[width=\linewidth]{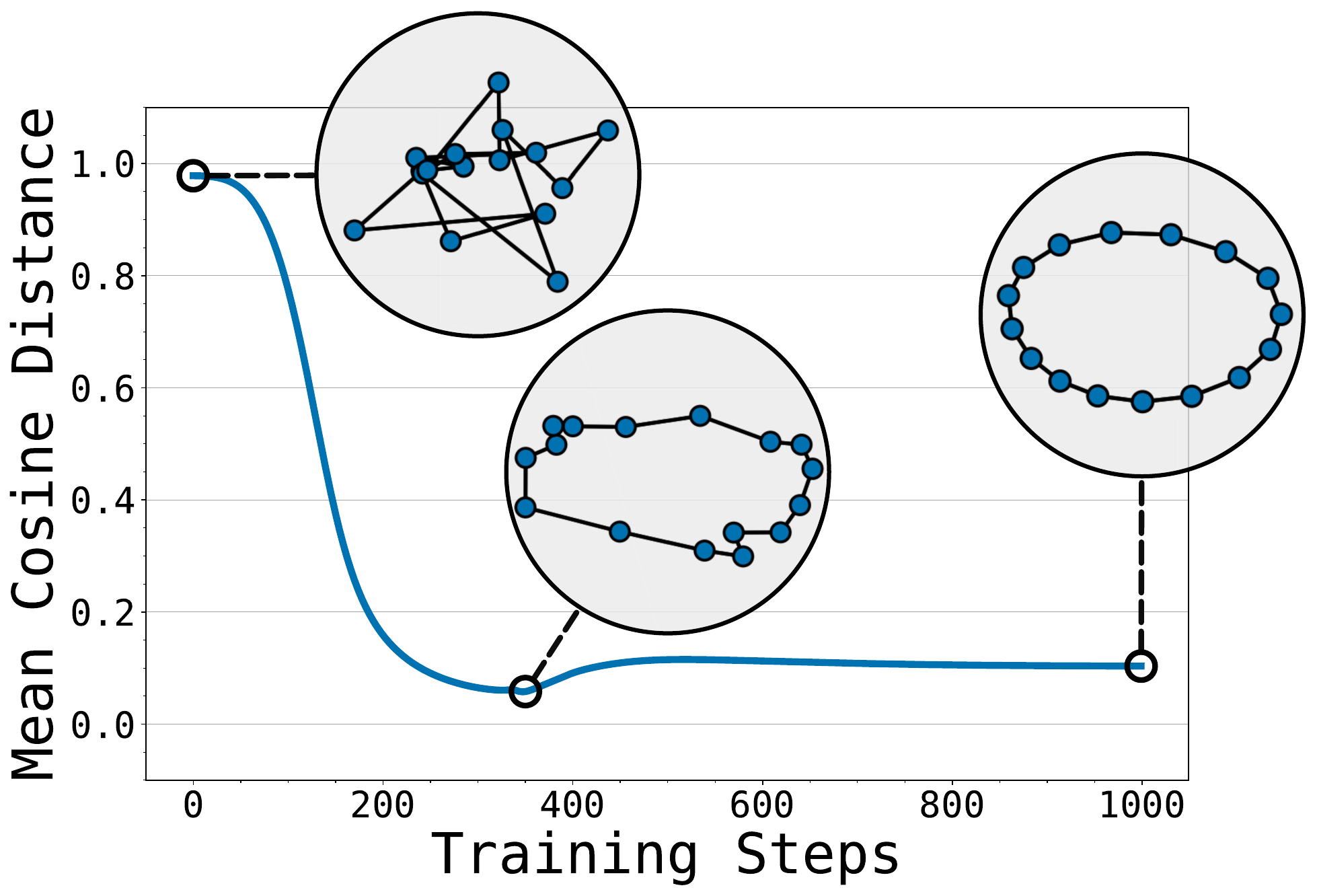}
    \caption{Tiny Cycle ({\tt LR=$0.001$})}
    \label{fig:geometric_memorization_1e-3_cycle}
  \end{subfigure}\hfill
    \begin{subfigure}[b]{0.33\linewidth}
    \centering
    \includegraphics[width=\linewidth]{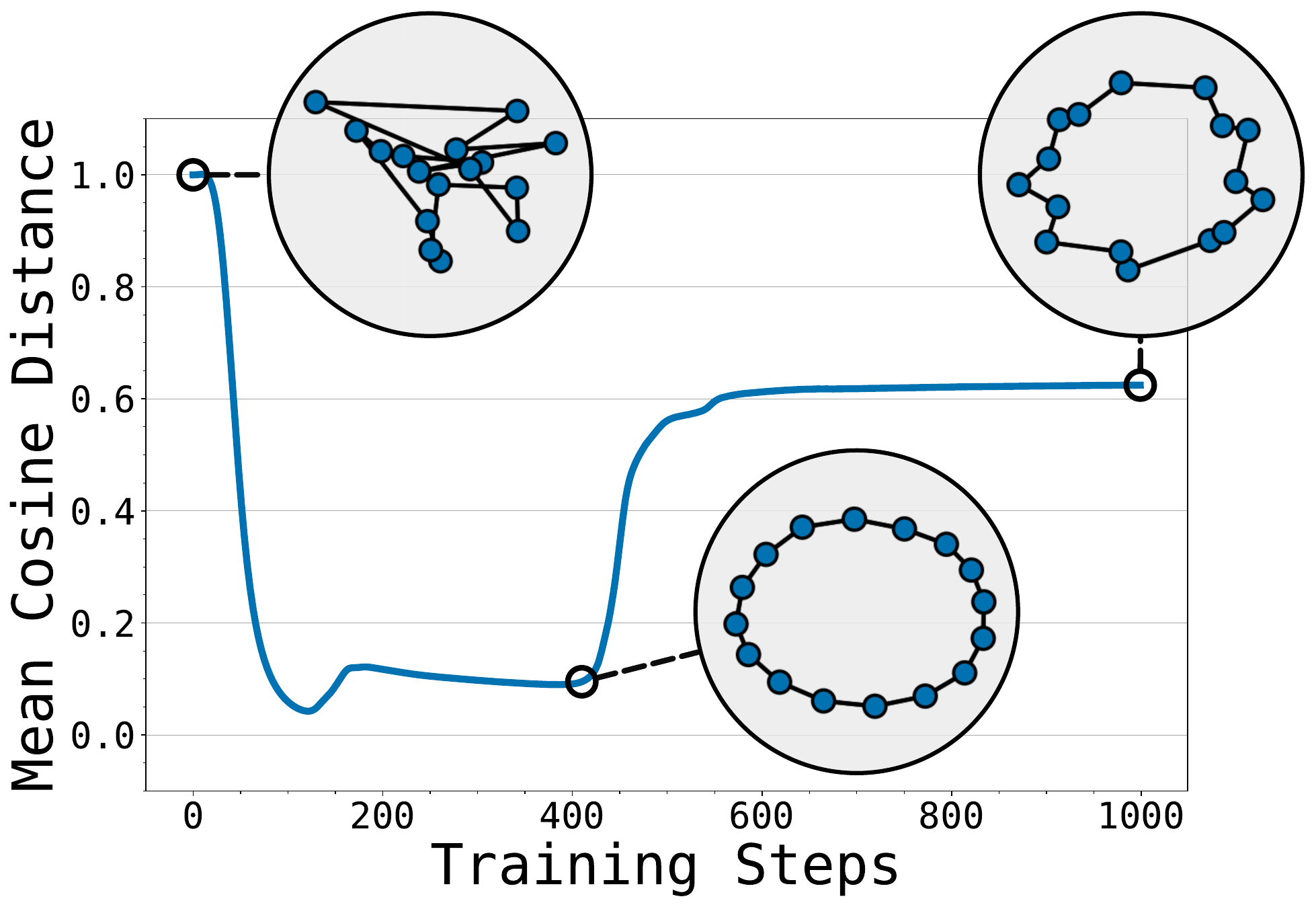}
    \caption{Tiny Cycle ({\tt LR=$0.01$})}
    \label{fig:geometric_memorization_1e-2_cycle}
  \end{subfigure}\hfill
    \begin{subfigure}[b]{0.33\linewidth}
    \centering
    \includegraphics[width=\linewidth]{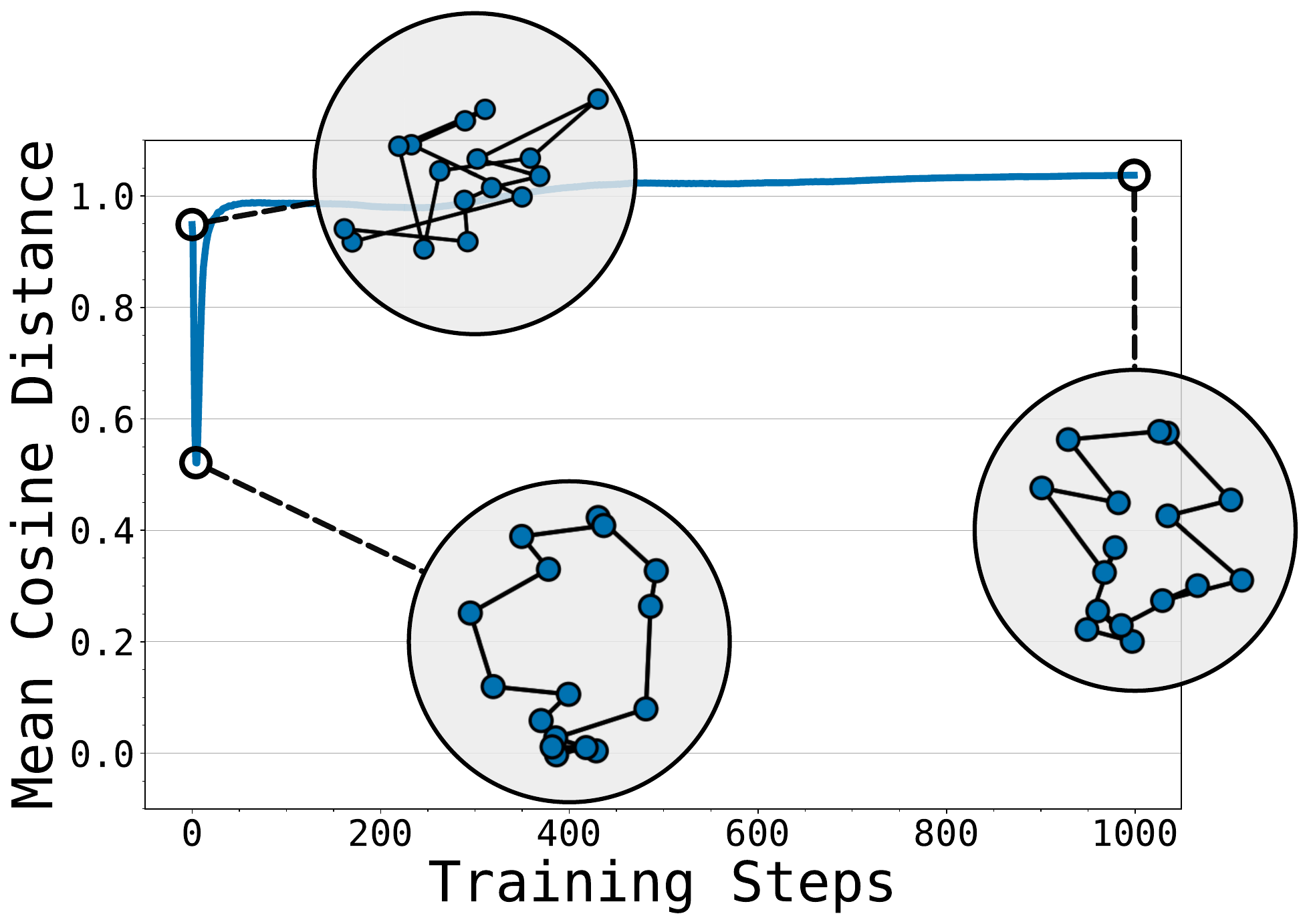}
    \caption{Tiny Cycle ({\tt LR=$0.1$})}
    \label{fig:geometric_memorization_1e-1_cycle}
  \end{subfigure}\hfill

  \begin{subfigure}[b]{0.33\linewidth}
    \centering
    \includegraphics[width=\linewidth]{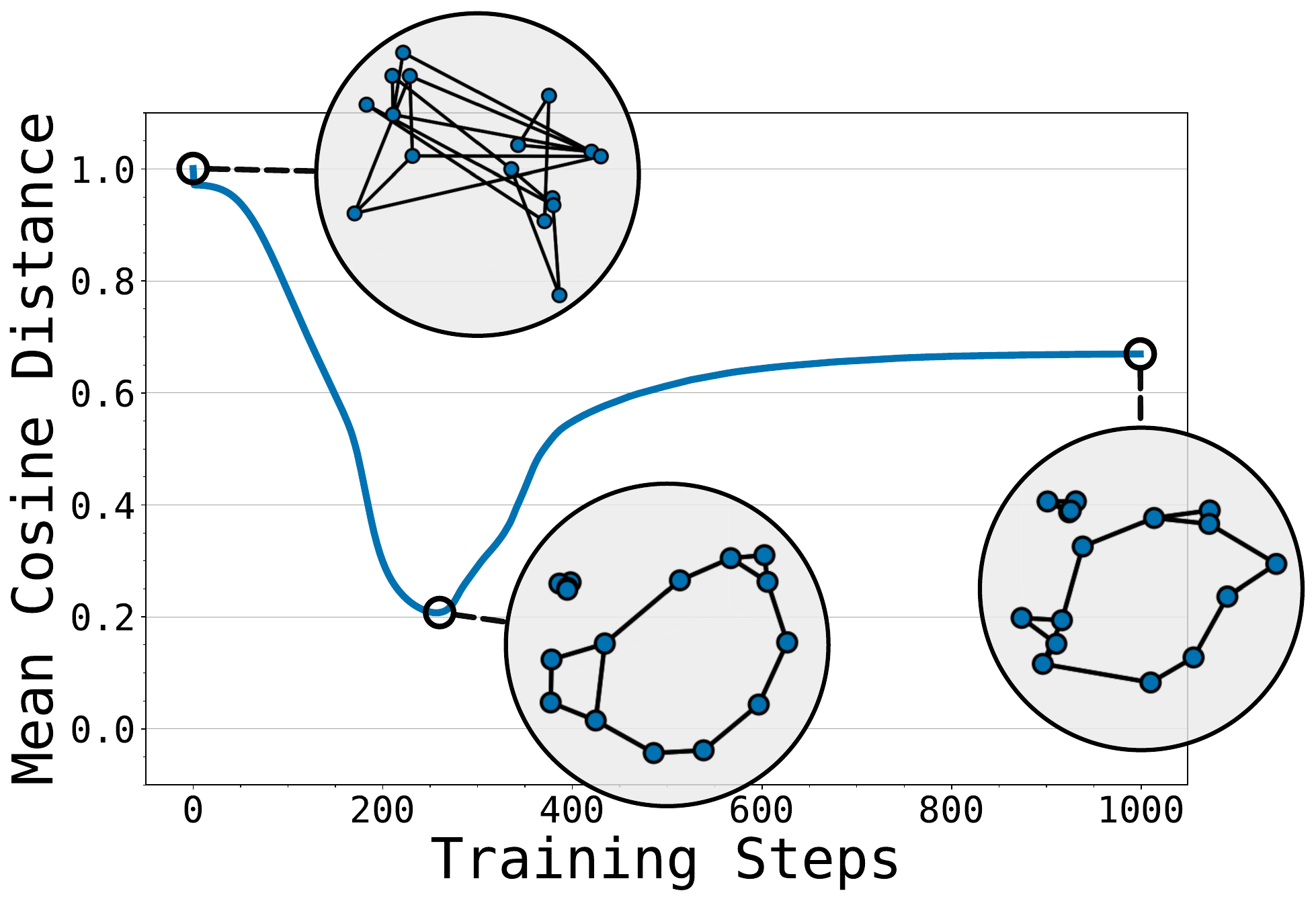}
    \caption{Tiny Irregular ({\tt {\tt LR=$0.001$}})}
    \label{fig:geometric_memorization_1e-3_irregular}
  \end{subfigure}\hfill
    \begin{subfigure}[b]{0.33\linewidth}
    \centering
    \includegraphics[width=\linewidth]{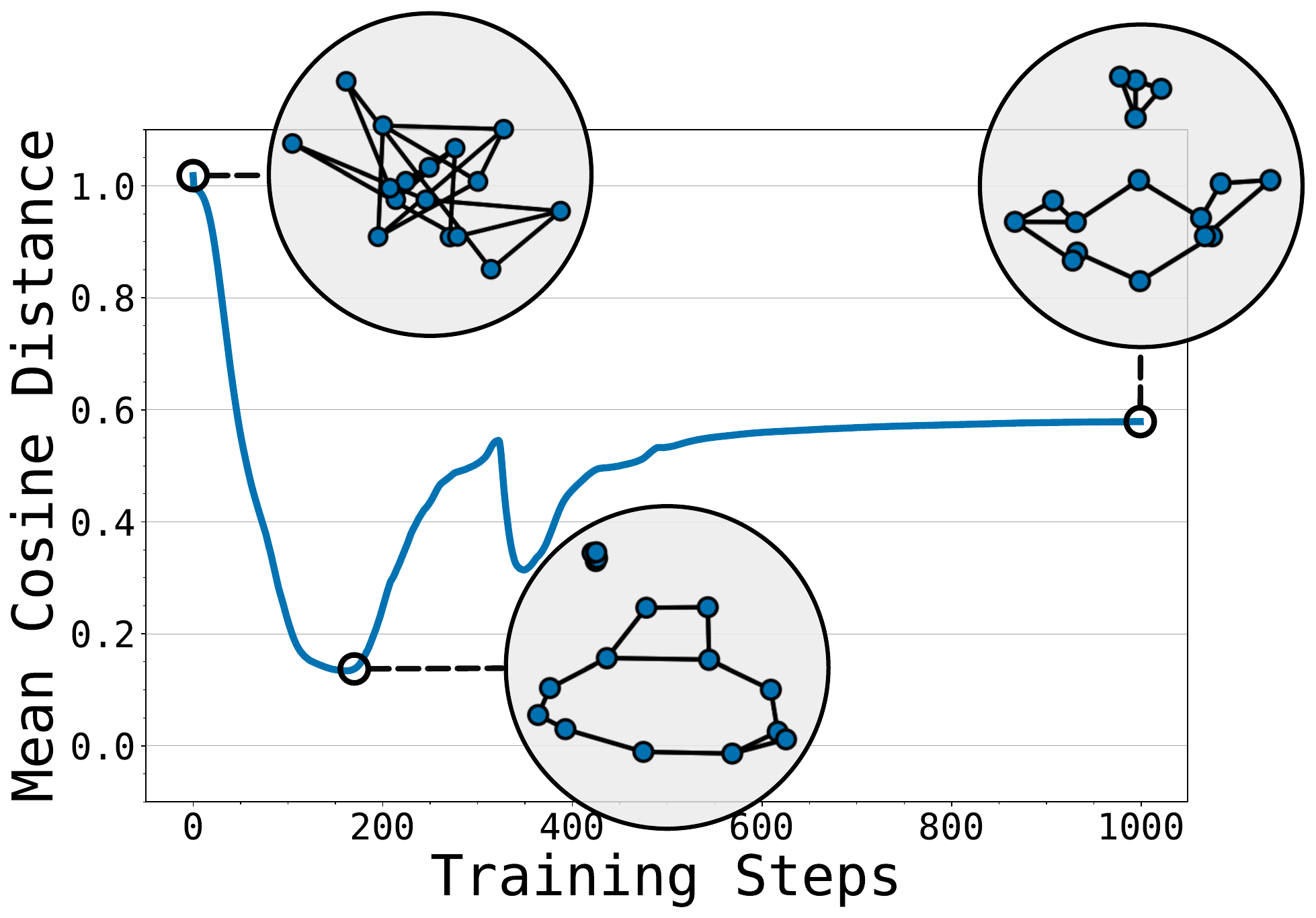}
    \caption{Tiny Irregular ({\tt LR=$0.01$})}
    \label{fig:geometric_memorization_1e-2_irregular}
  \end{subfigure}\hfill
    \begin{subfigure}[b]{0.33\linewidth}
    \centering
    \includegraphics[width=\linewidth]{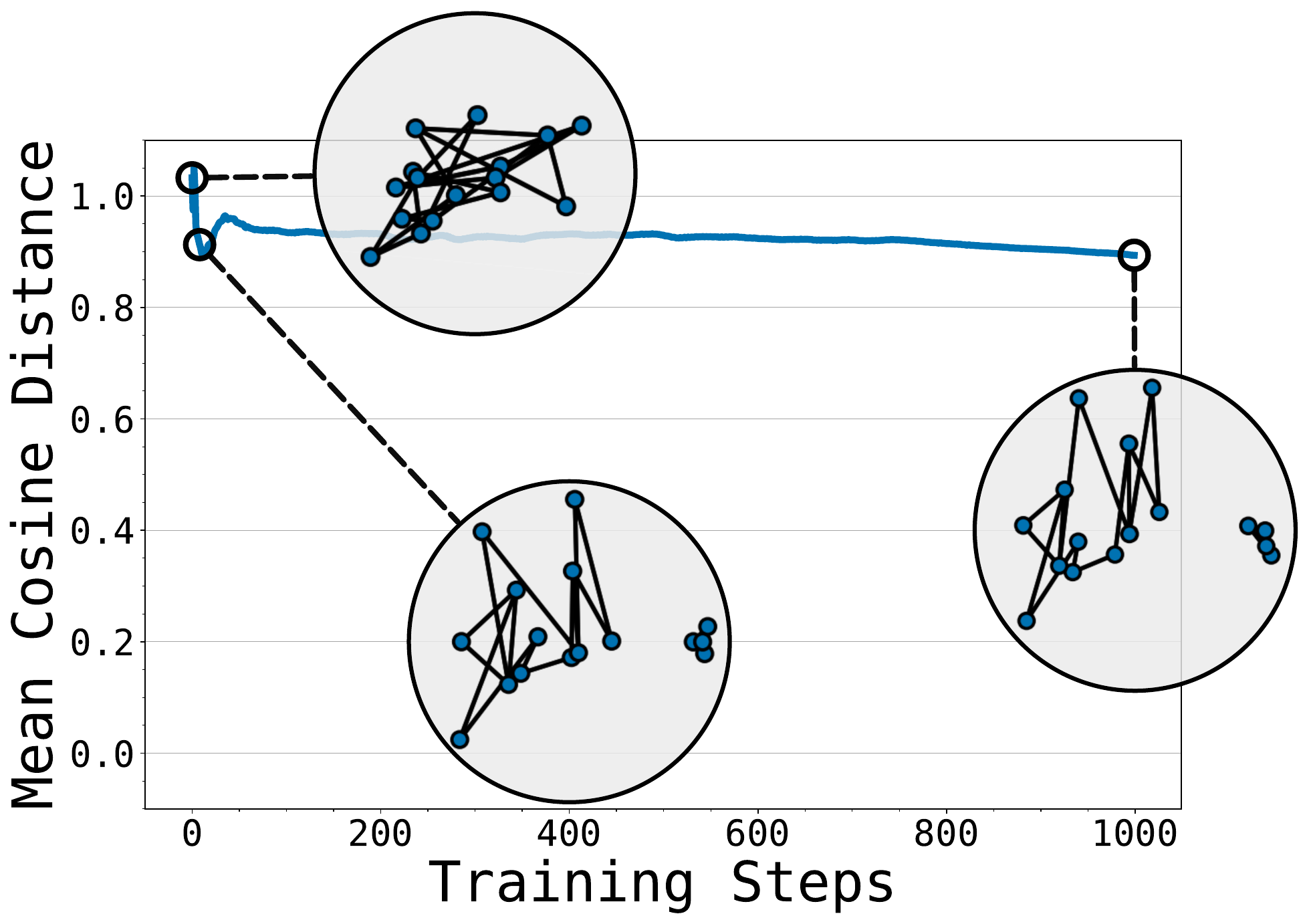}
    \caption{Tiny Irregular ({\tt LR=$0.1$})}
    \label{fig:geometric_memorization_1e-1_irregular}
  \end{subfigure}\hfill
  \caption{\textbf{For various learning rates, geometric memorization takes much longer for gradient descent to discover}: 
  This is an extended version of 
  \cref{fig:geometric_memorization_best_path_star_conf} for two more learning rates and all our tiny graphs.  Recall that for the same \tinyNN{} architecture as in \cref{fig:associative_memorization_speed} but without frozen embeddings, we roughly quantify geometric memorization over time via cosine distance across \textit{all} pairs of vertices, alongside 2D visualizations. Each row is a different graph, and each  column is a different learning rate. For both the learning rate $0.001$ and $0.01$, the geometries take about $200$ steps to appear; the larger learning rate of $0.1$ is too aggressive to create the geometry. Recall that, for associative memory, two steps with learning rate of $0.1$ sufficed (\cref{fig:associative_memorization_speed}).
 This is evidence for \cref{obs:easy-to-find}: ease of discovery does not determine what sort of memory is preferred. }
 \label{fig:geometric_memorization_speed_all_lr}
\end{figure}

\begin{figure}[ht]
  \centering
  \begin{subfigure}[b]{0.45\linewidth}
    \centering
    \includegraphics[width=\linewidth]{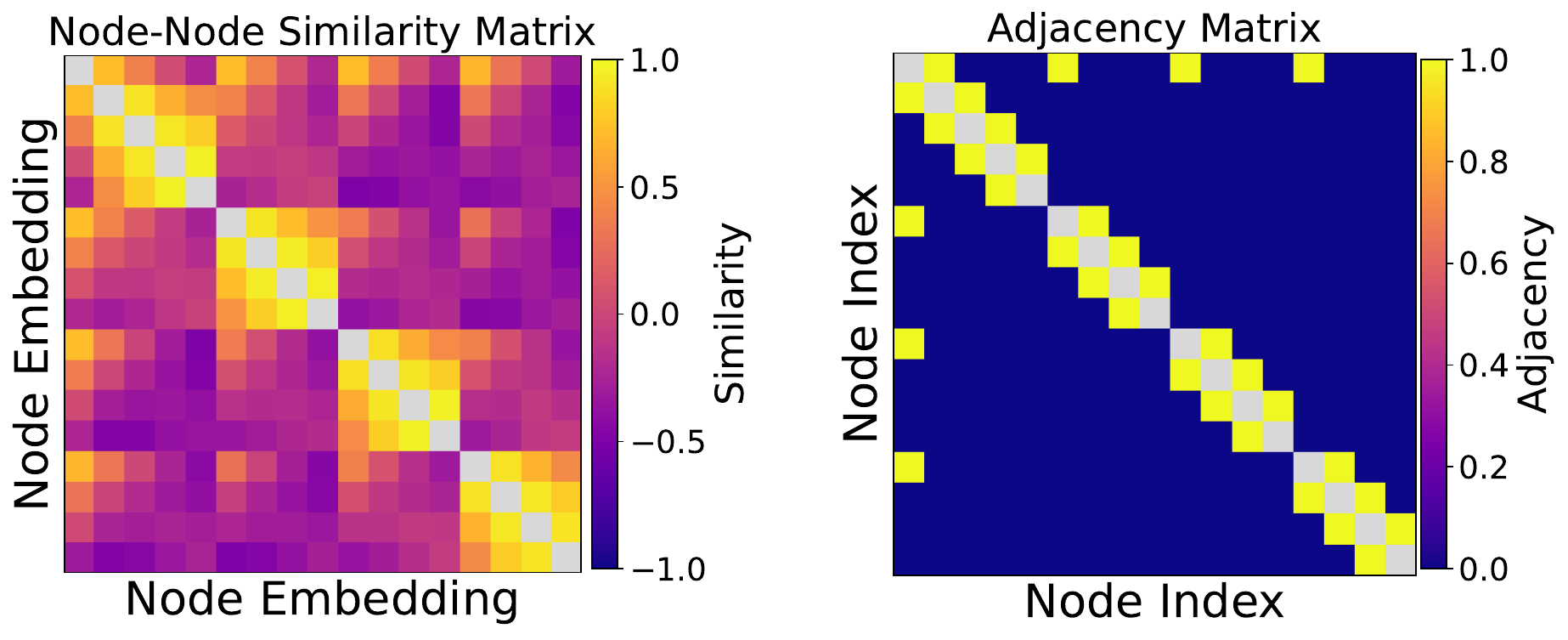}
    \caption{Tiny Path-Star}
  \end{subfigure}
  \hfill
  \begin{subfigure}[b]{0.45\linewidth}
    \centering
    \includegraphics[width=\linewidth]{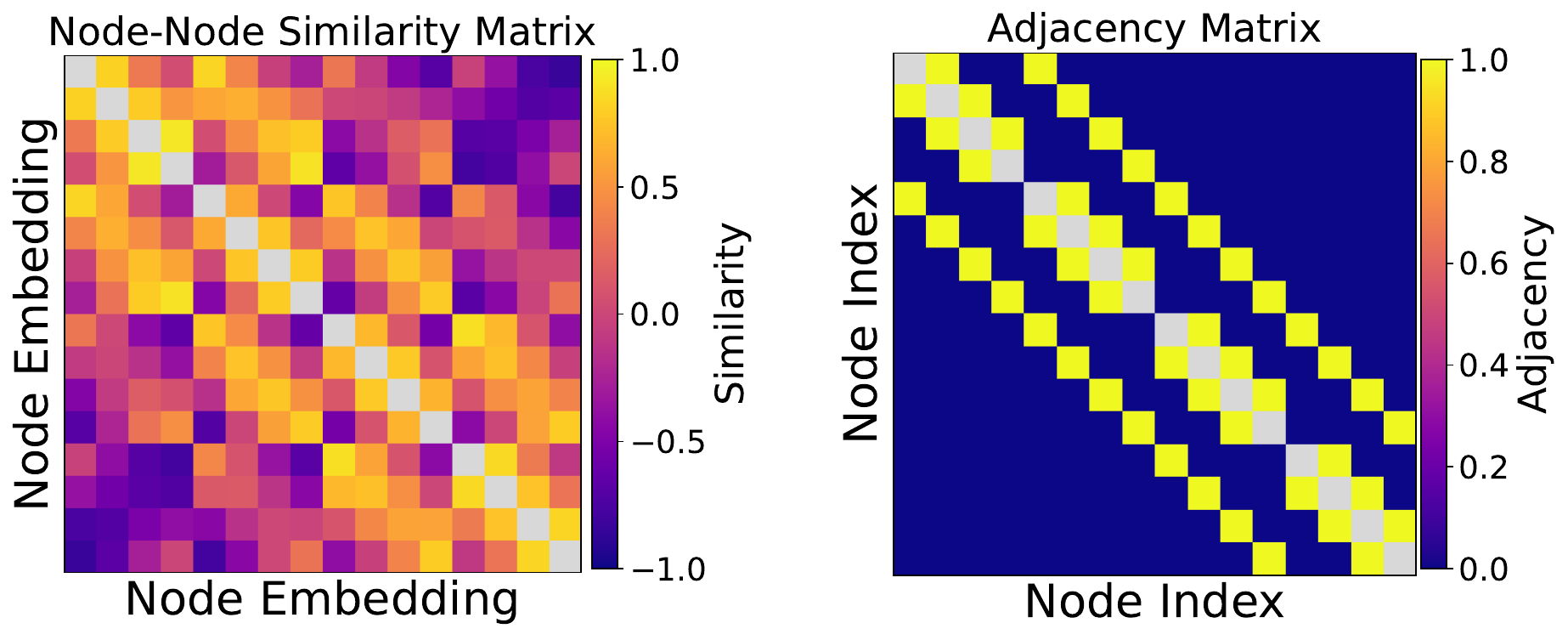}
    \caption{Tiny Grid}
  \end{subfigure}
  \begin{subfigure}[b]{0.45\linewidth}
    \centering
    \includegraphics[width=\linewidth]{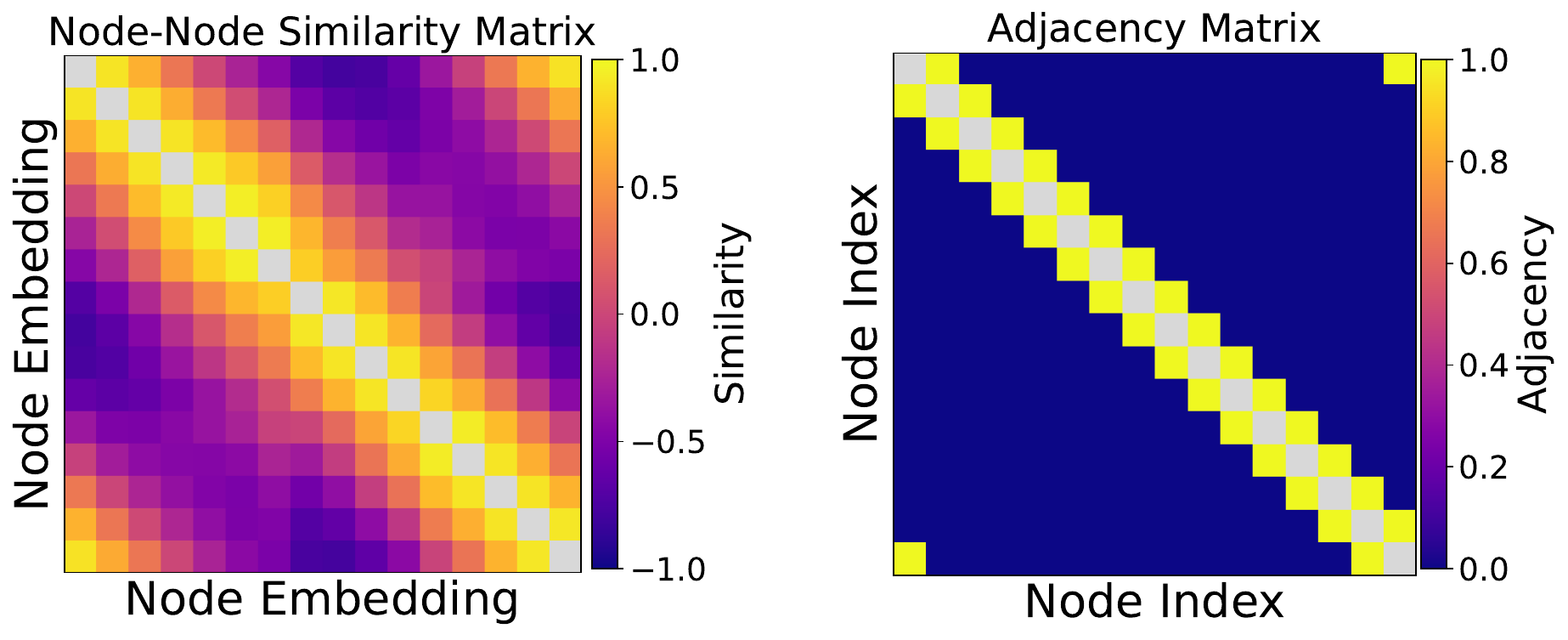}
    \caption{Tiny Cycle}
  \end{subfigure}
  \hfill
  \begin{subfigure}[b]{0.45\linewidth}
    \centering
    \includegraphics[width=\linewidth]{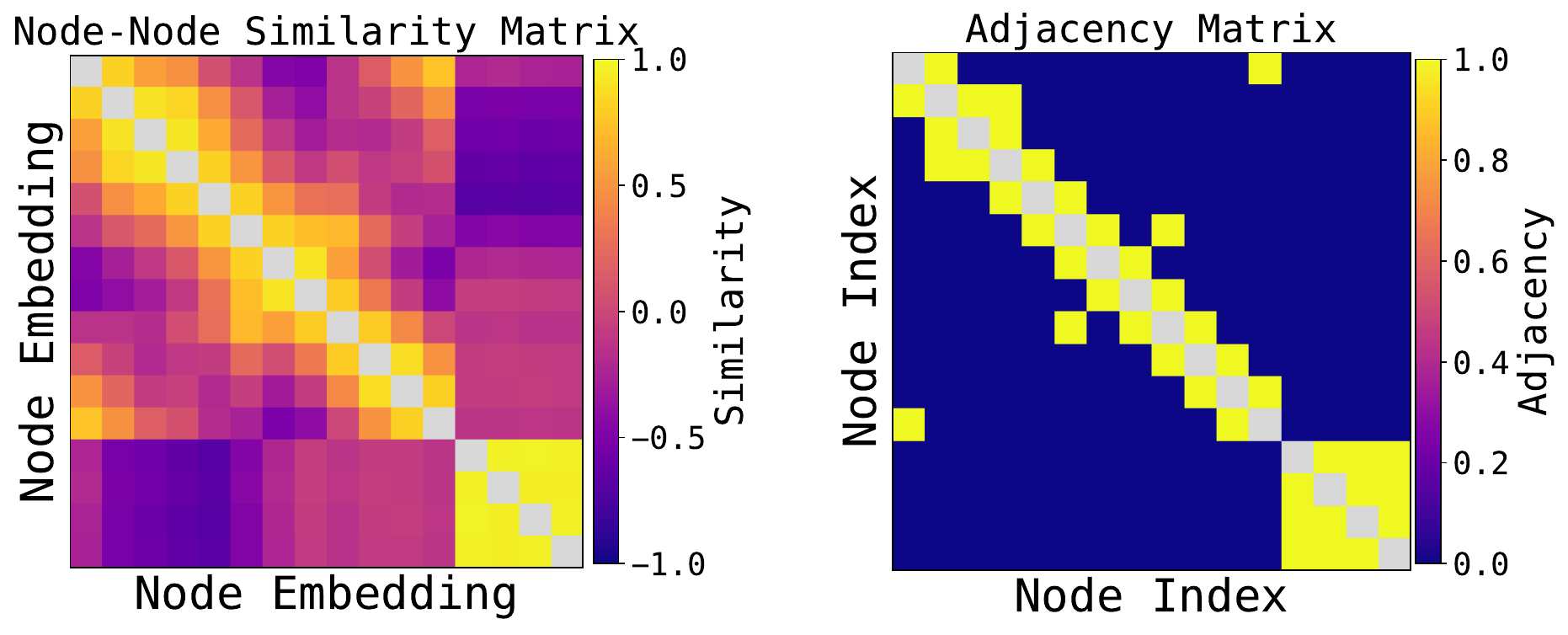}
    \caption{Tiny Irregular}
  \end{subfigure}
  \caption{\textbf{Node-node cosine similarity vs. adjacency matrix for the Transformer}: For each graph, we plot the cosine similarity between all node embeddings and compare it against the adjacency matrix. Observe that the cosine similarities exhibit a richer structure than the adjacency matrix, reflecting some notion of multi-hop distance e.g., in the cycle graph, there is a gradual decrease in similarity as we walk towards the diametrically opposite node in any given row. }
\label{fig:transformer_tied_adjacency}
\end{figure}

\begin{figure}[ht]
  \centering
  \begin{subfigure}[b]{0.45\linewidth}
    \centering
    \includegraphics[width=\linewidth]{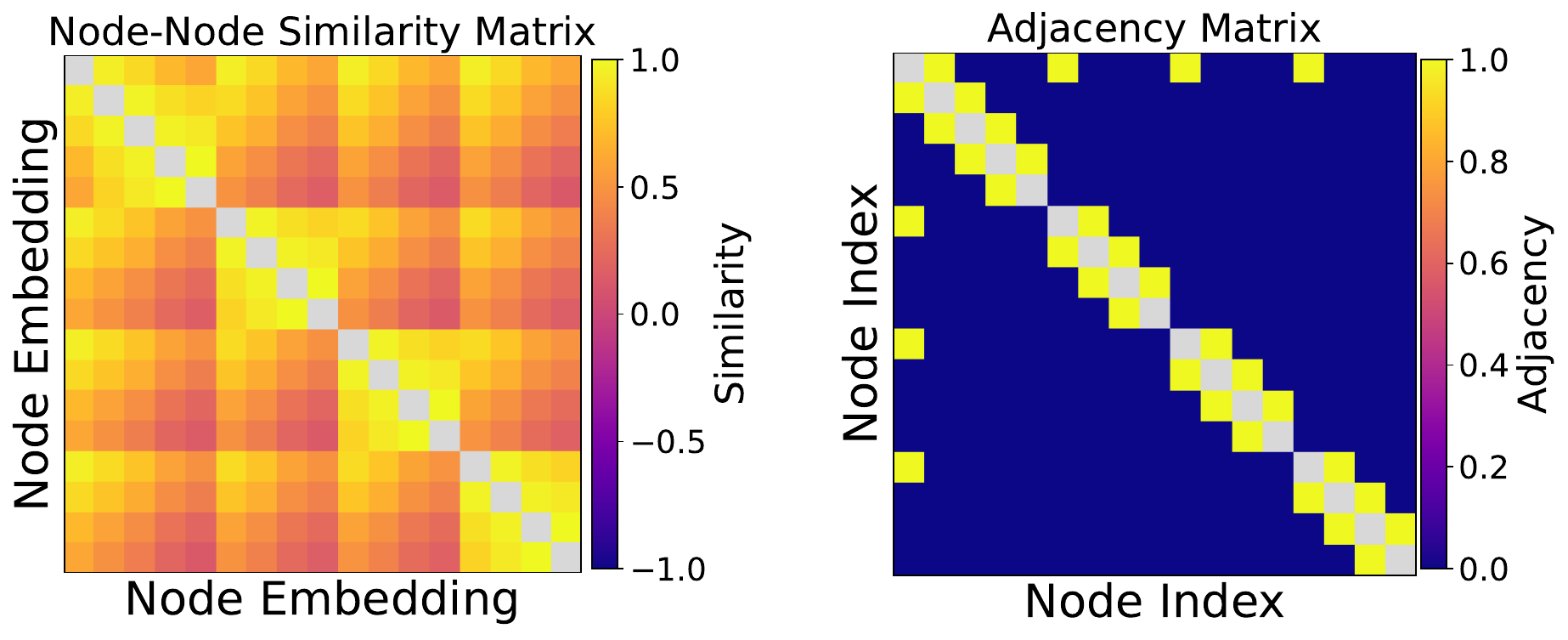}
    \caption{Tiny Path-Star}
  \end{subfigure}
  \hfill
  \begin{subfigure}[b]{0.45\linewidth}
    \centering
    \includegraphics[width=\linewidth]{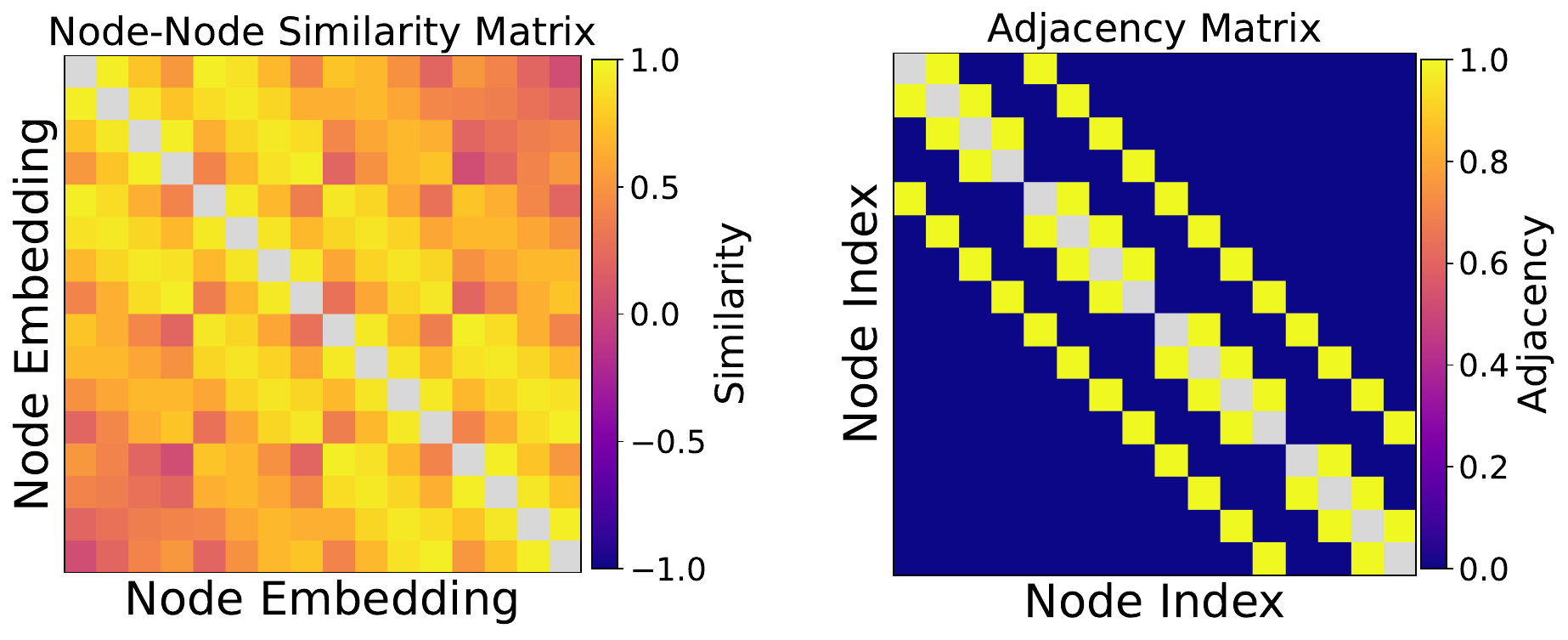}
    \caption{Tiny Grid}
  \end{subfigure}
  \begin{subfigure}[b]{0.45\linewidth}
    \centering
    \includegraphics[width=\linewidth]{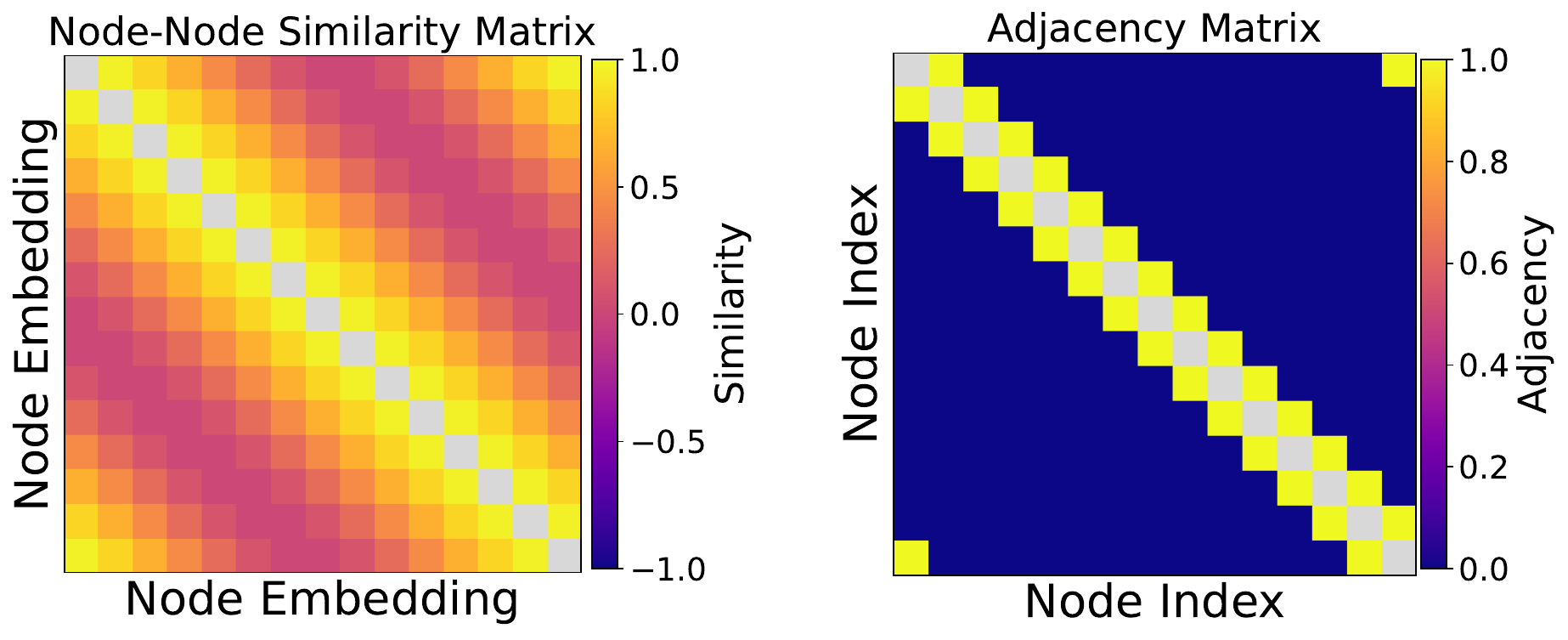}
    \caption{Tiny Cycle}
  \end{subfigure}
  \hfill
  \begin{subfigure}[b]{0.45\linewidth}
    \centering
    \includegraphics[width=\linewidth]{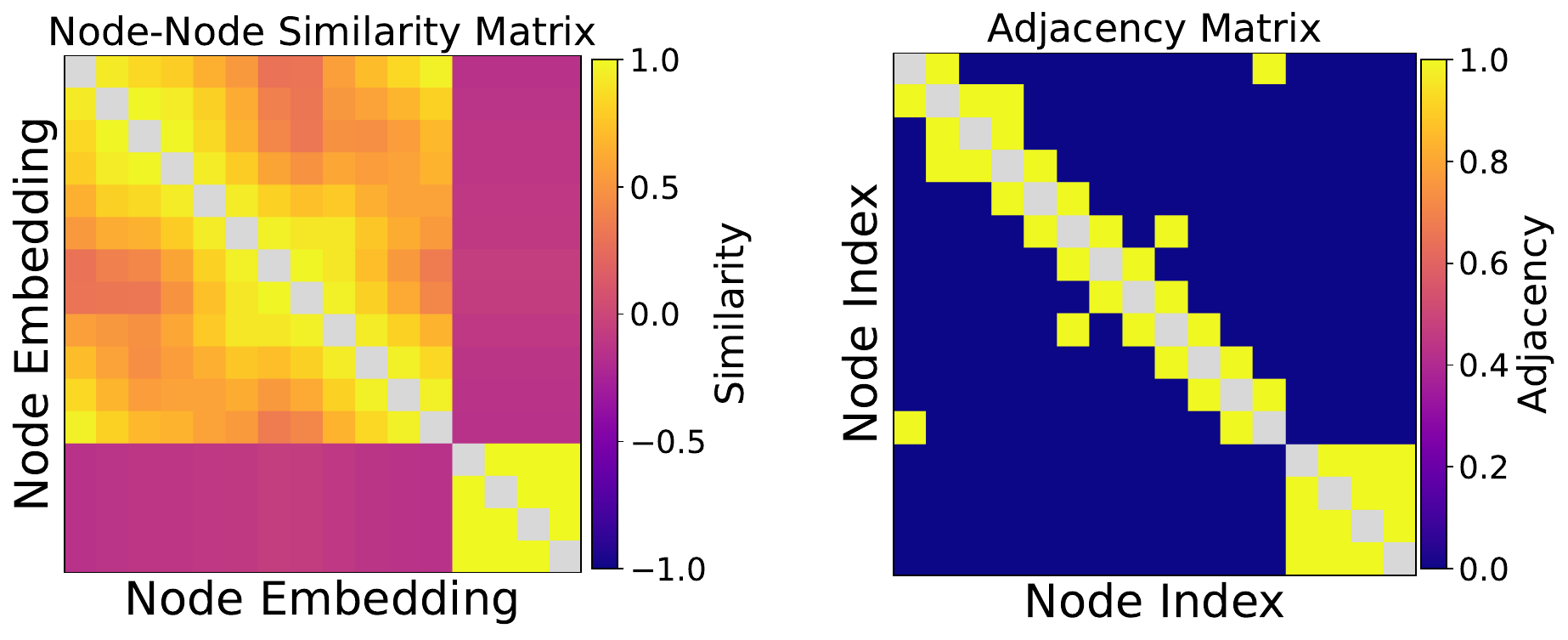}
    \caption{Tiny Irregular}
  \end{subfigure}
  \caption{\textbf{Node-node cosine similarity vs. adjacency matrix for the \nodetovec{} model}: Like in the Transformer (\cref{fig:transformer_tied_adjacency}), we again observe that cosine similarities exhibit a rich, multi-hop structure absent in the adjacency matrix. }
\label{fig:node2vec_tied_adjacency}
\end{figure}

\clearpage

\subsubsection{Effect of explicit regularizers and self-edges}
\label{sec:other-regularizers}
\label{sec:self-edges}

We provide an analysis of how the geometry is affected by various aspects of the optimization setting: adding self-edges, varying the initialization, the learning rate, weight decay, and dropout. For all experiments in this section, we train for $10{,}000$ optimization steps to ensure comparability across settings. For this study, we use the $\tinyNN$ architecture with embedding dimension $512$ trained on the easy-to-visualize cycle graph. For each setting, we provide a visualization of (a) the embeddings, (b) the cosine heatmap of the embeddings, and (c) the embeddings' projections onto the graph's eigenvectors, along with the geometric margin (defined in \S\ref{app:geometry-tests}), which when positive confirms that the storage is indeed geometric.

Our main observations are as follows:

\begin{enumerate}
    \item \textbf{Self-edges:} Adding self-edges to the training graph makes the visualizations significantly cleaner and less zigzaggy (see \cref{fig:selfedge-ablation}). This aligns with our understanding of the negative eigendirections in \S\ref{sec:weight-tying-and-svd}: adding self-edges makes such negative directions less dominant by bringing them closer to zero. Indeed, we see this in the eigenvector projections in \cref{fig:selfedge-ablation-ev}.
    \item \textbf{Initialization:} The scale of initialization has a strong effect on what is learned by the model. Specifically, our default {\tt GPT}-style initialization of embeddings ($\mathcal{N}(0,0.2^2)$) consistently provided more geometric results, \textit{even without any explicit regularization} (\cref{fig:no-explicit}). Whereas, a {\tt unit-normal} initialization of the embeddings (\cref{fig:non-geometric}) resulted in a wider range of behaviors, more sensitive to training conditions e.g., in \cref{fig:non-geometric}, we show a setting that produces an associative storage.
    \item \textbf{Learning rate}: We found that higher learning rate tends to improve the geometry. This result is most pronounced for the {\tt unit-normal} initialization of embeddings with $\mathcal{N}(0,1)$, (compare \cref{fig:non-geometric} with the higher learning rate \cref{fig:lr-ablation}), but is also seen for {\tt GPT}-style initialization (compare \cref{fig:no-explicit-low-lr} with the higher learning rate \cref{fig:no-explicit}).
    \item \textbf{Explicit regularization}: On the {\tt unit-normal} initialization of embeddings, which we found to be sensitive to optimizer hyperparameters, we find stronger geometries with increasing the weight decay (\cref{fig:wd-ablation}); increasing dropout also helps, though the effect is less pronounced (\cref{fig:do-ablation}). 
    As noted earlier, on the {\tt GPT}-style initialization, {\em such regularizers were not necessary} (\cref{fig:no-explicit}).
    \item \textbf{Forward vs. reverse edges}: The geometry does not seem contingent on there being both directions of the edges on the training set for our Transformer model (\cref{fig:forward-only-tiny})
\end{enumerate}

\textbf{Graph memorization as a minimal example of grokking.} These observations point to the fact that geometric memorization of a graph dataset is a minimal instance of grokking, a phenomenon studied for more complex algorithmic datasets (where closed form solutions to the weights are hard to obtain).  The helpful effect of weight decay may be related to its effects on grokking \citep{liu23grok} or rank-minimization  \citep{wang24wdjump,galanti25do,yunis2024grokking}; similar rank-minimizing effects have been identified for dropout \citep{cavazza18dropout} and learning rate \citep{Andriushchenko23lrsparsity}. The effect of initialization has also been pointed out in grokking \citep{liu23grok}, and connected to the idea of lazy vs. rich regimes \citep{kumar24grokking} from the perspective of neural tangent kernels, where rich feature-learning is analogous to not geometric representations. However such connections between grokking and the lazy/rich dichotomy is contentious because associative memory, unlike lazy learning, requires the weights to move substantially; similar points have been made in \citep{zheng25delays,chou25feature}.

\begin{figure}[htbp]
    \centering

    \begin{subfigure}{0.15\textwidth}
        \centering
        \raisebox{6pt}{\includegraphics[width=\linewidth]{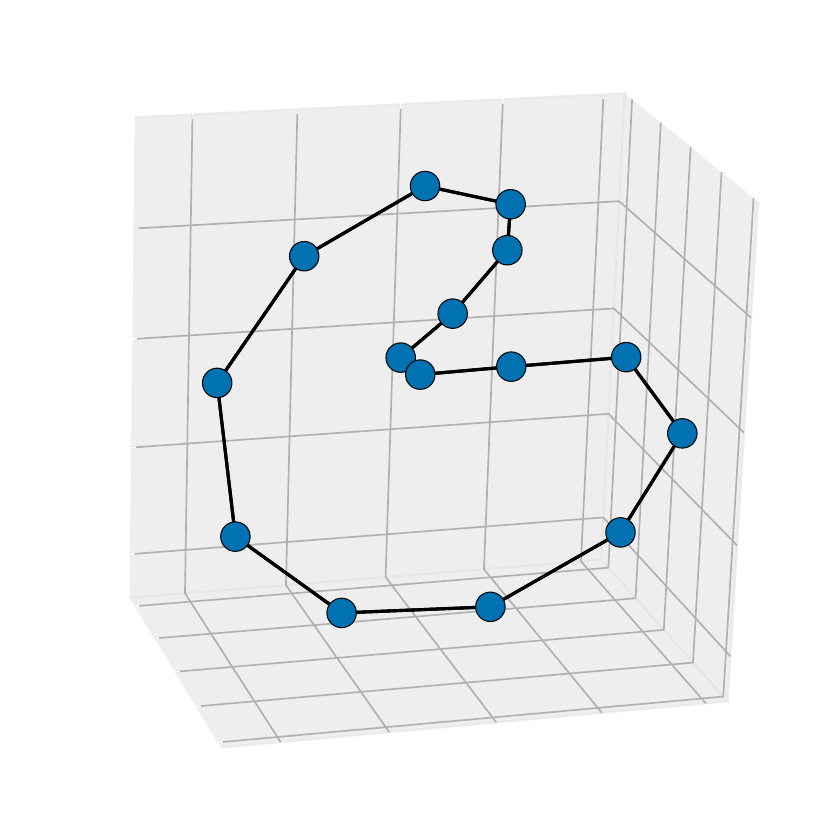}}
        \caption{PCA}
    \end{subfigure}%
    \begin{subfigure}{0.30\textwidth}
        \centering
        \raisebox{12pt}{\includegraphics[width=\linewidth]{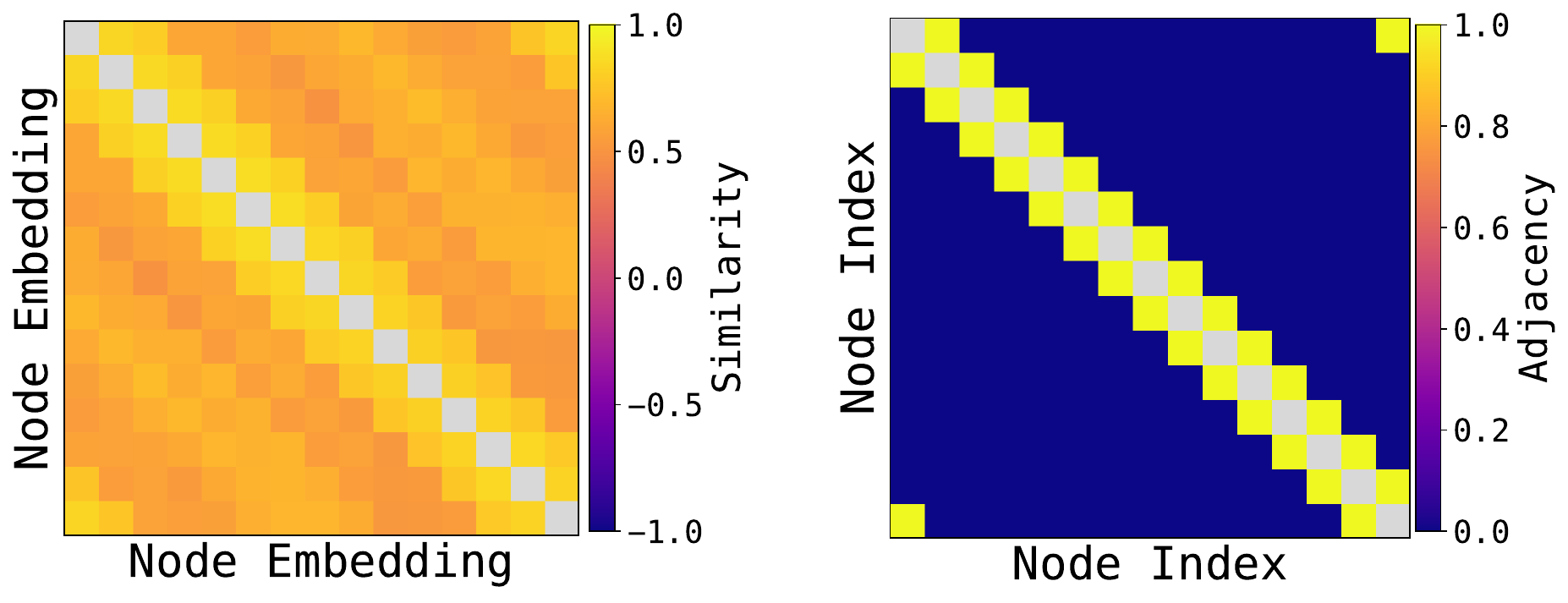}}
        \caption{Emb. Heatmap vs. Adjacency}
    \end{subfigure}%
    \begin{subfigure}{0.5\textwidth}
        \centering
        \includegraphics[width=\linewidth]{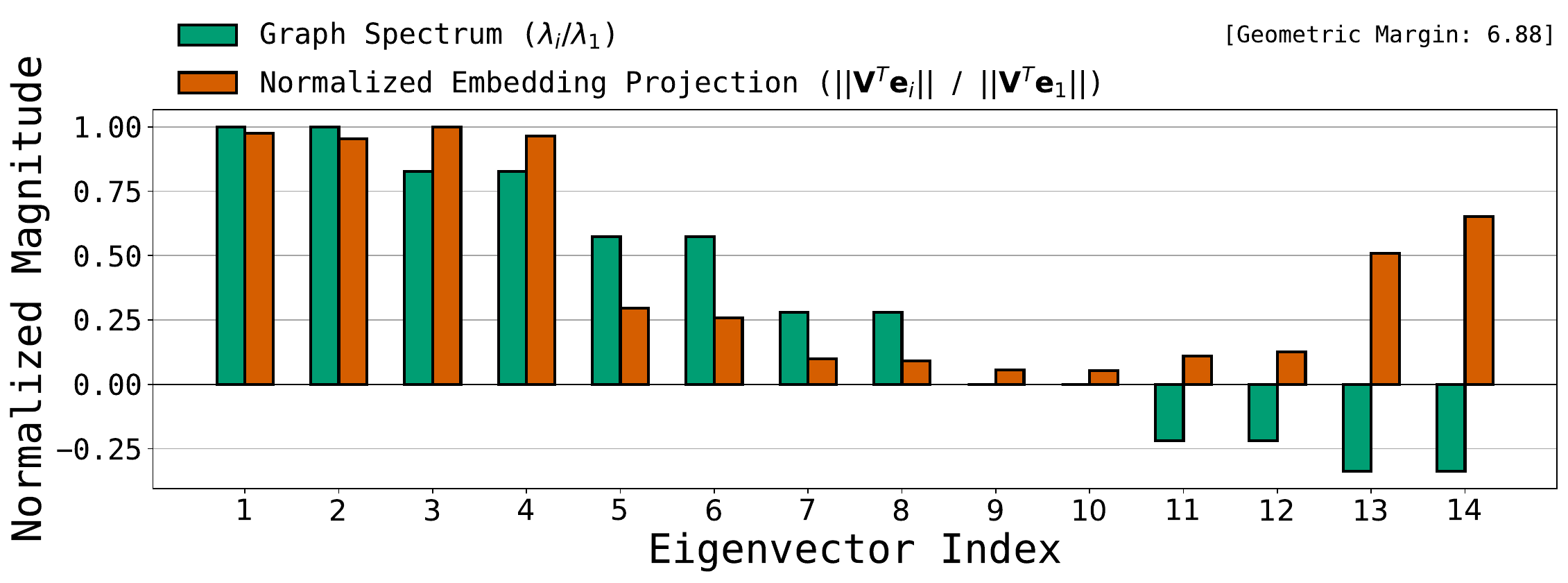}
        \caption{Eigenvector Projections}
        \label{fig:selfedge-ablation-ev}
    \end{subfigure}%

    \caption{\textbf{Self-edge reduces zig-zagging:} When self-edges are added to the training set, the model relies less on the bottommost eigendirections that are zig-zagging. This is evidenced by the smooth heatmap and the top-heavy barplot; compare this with zigzagging heatmaps and more bottom-heavy barplot in the no-self-edge setting of \cref{fig:no-explicit}. For this setup, we use a {\tt GPT}-style initialized network with {\tt LR: 0.01; Embedding Init: GPT-Default ($\mathcal{N}(0,0.02^2)$), Weight Decay: Off, Dropout: Off} }
    \label{fig:selfedge-ablation}
\end{figure}

\begin{figure}[htbp]
    \centering
    \begin{subfigure}{0.15\textwidth}
        \centering
        \raisebox{6pt}{\includegraphics[width=\linewidth]{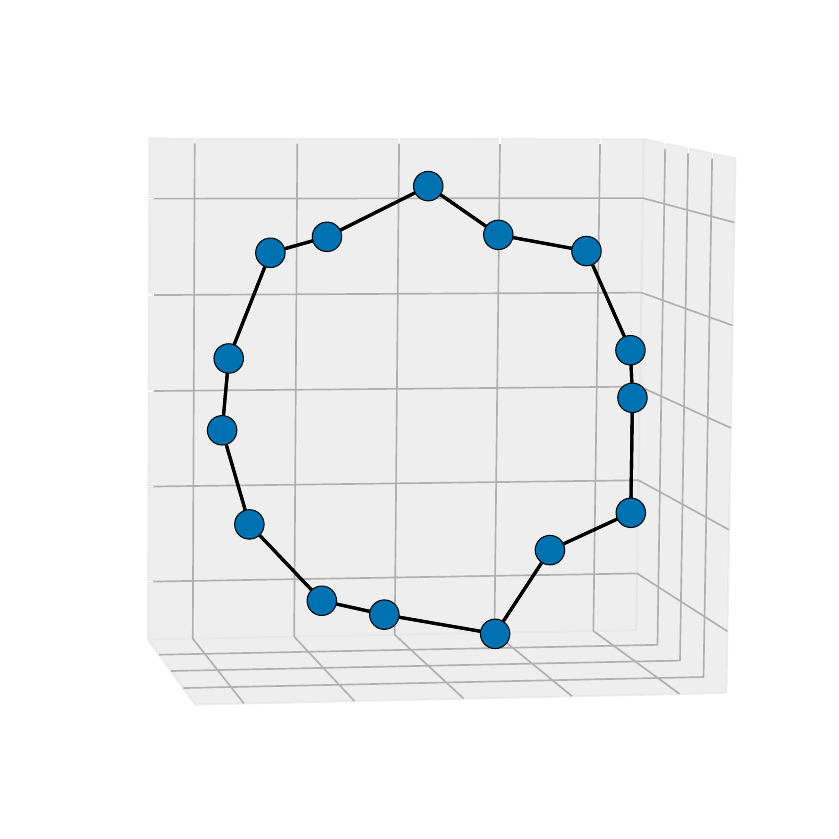}}
        \caption{PCA}
    \end{subfigure}%
    \begin{subfigure}{0.30\textwidth}
        \centering
        \raisebox{12pt}{\includegraphics[width=\linewidth]{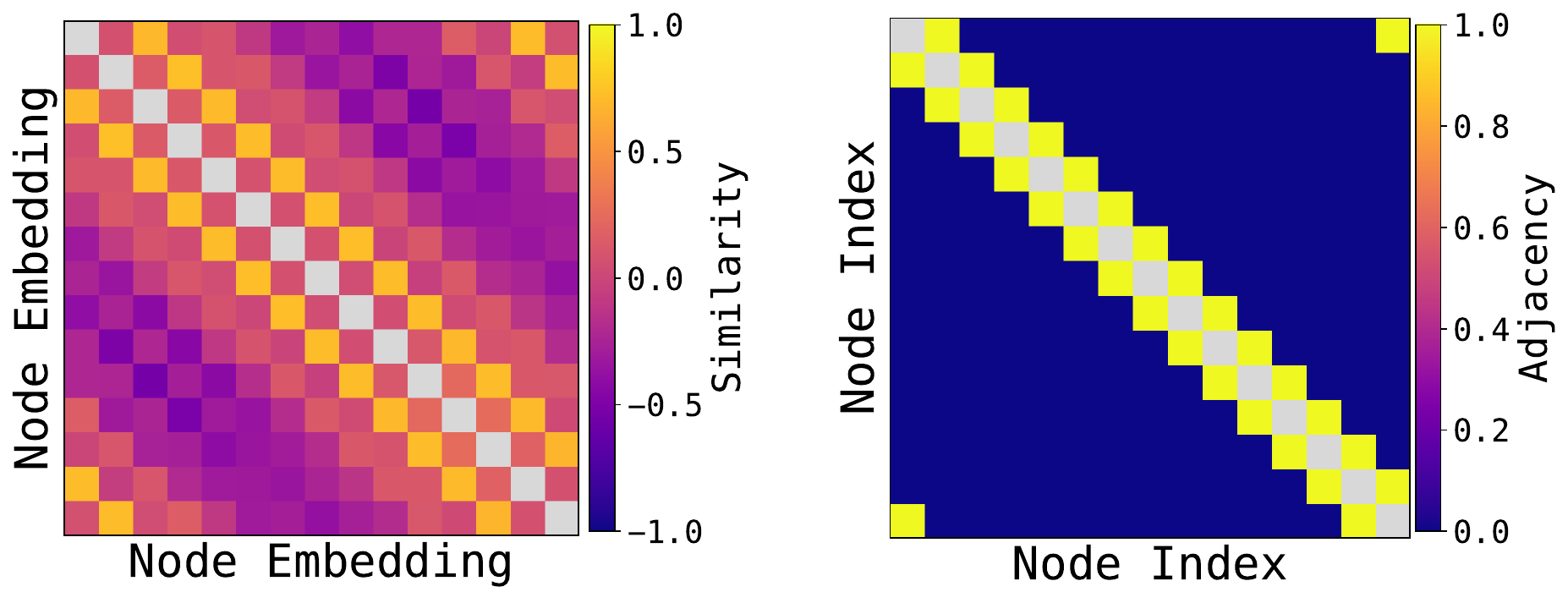}}
        \caption{Emb. Heatmap vs. Adjacency}
    \end{subfigure}%
    \begin{subfigure}{0.5\textwidth}
        \centering
        \includegraphics[width=\linewidth]{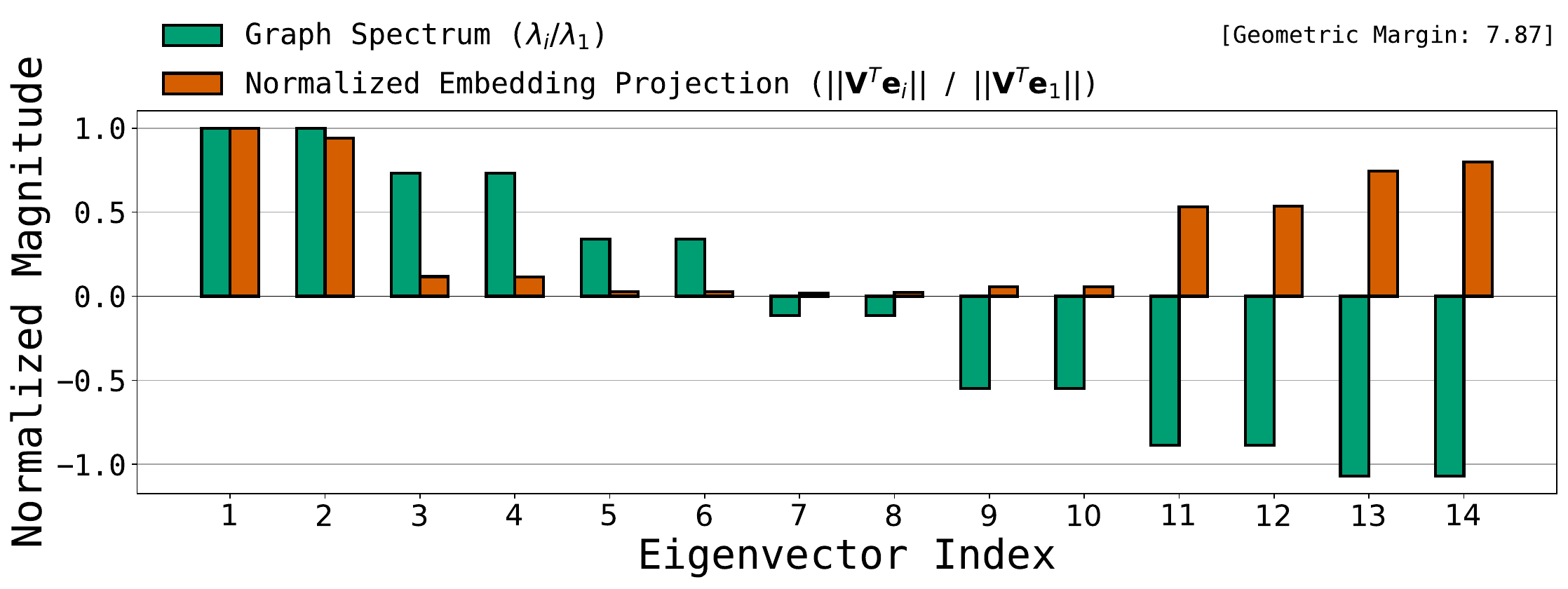}
        \caption{Eigenvector Projections}
    \end{subfigure}%
\caption{\textbf{Geometry arises without regularization:} Without self-edges or weight decay or dropout, we still see a global geometry in the heatmap, and a skew (towards the highest magnitude) eigendirections. Quantitatively, the geometric margin of this model is indeed positive ($7.87$), affirming that the graph is indeed stored in the embedding matrices. For this setup, we use a {\tt LR: 0.01; Embedding Init: GPT-Default ($\mathcal{N}(0,0.02^2)$), Weight Decay: Off, Dropout: Off} }
    \label{fig:no-explicit}
\end{figure}

\begin{figure}[htbp]
    \centering

    \begin{subfigure}{0.15\textwidth}
        \centering
        \raisebox{6pt}{\includegraphics[width=\linewidth]{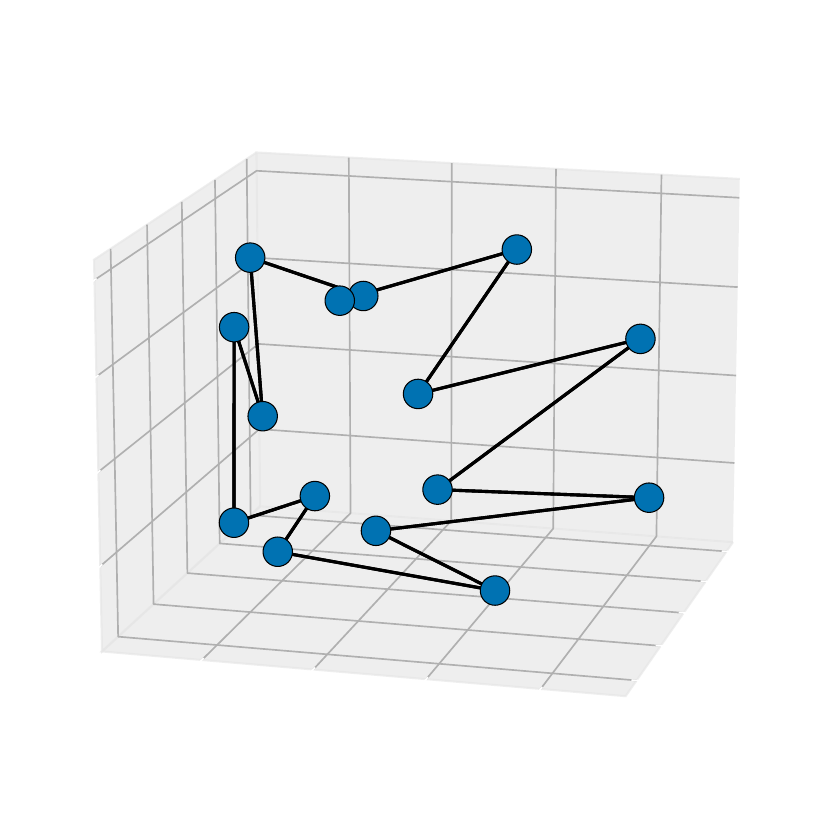}}
        \caption{PCA}
    \end{subfigure}%
    \begin{subfigure}{0.30\textwidth}
        \centering
        \raisebox{12pt}{\includegraphics[width=\linewidth]{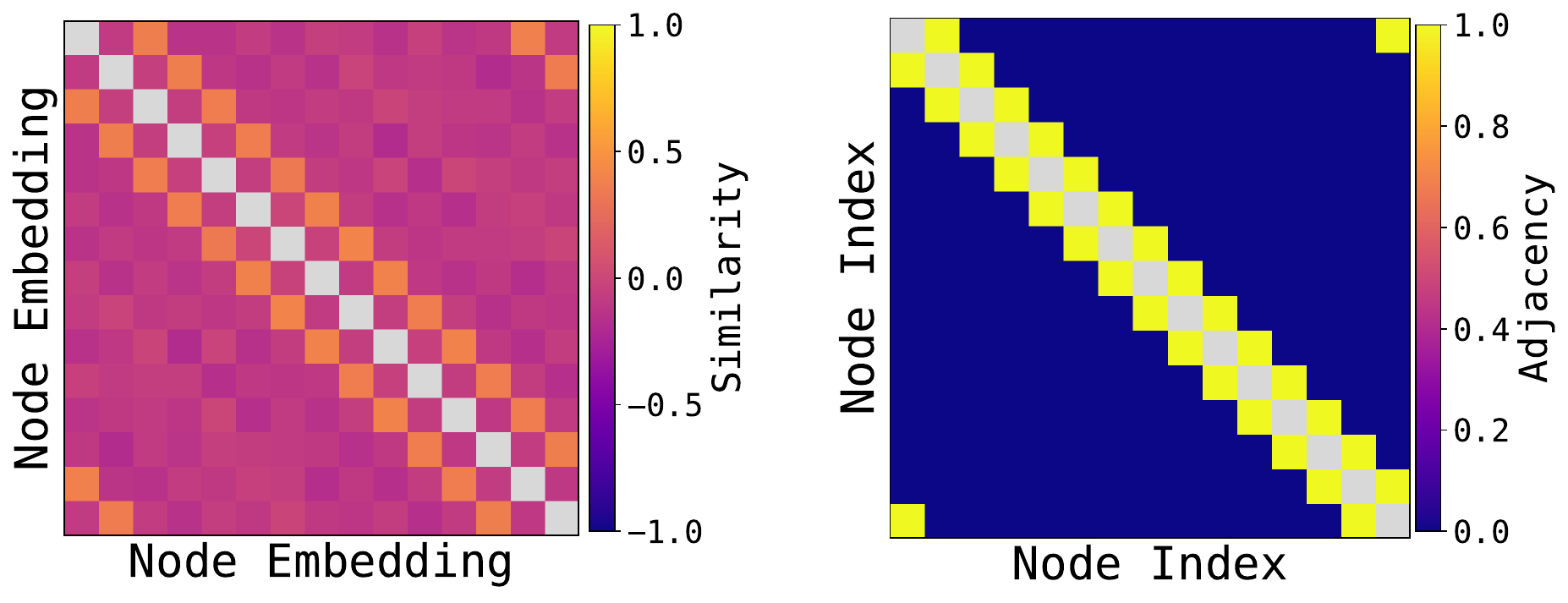}}
        \caption{Emb. Heatmap vs. Adjacency}
    \end{subfigure}%
    \begin{subfigure}{0.5\textwidth}
        \centering
        \includegraphics[width=\linewidth]{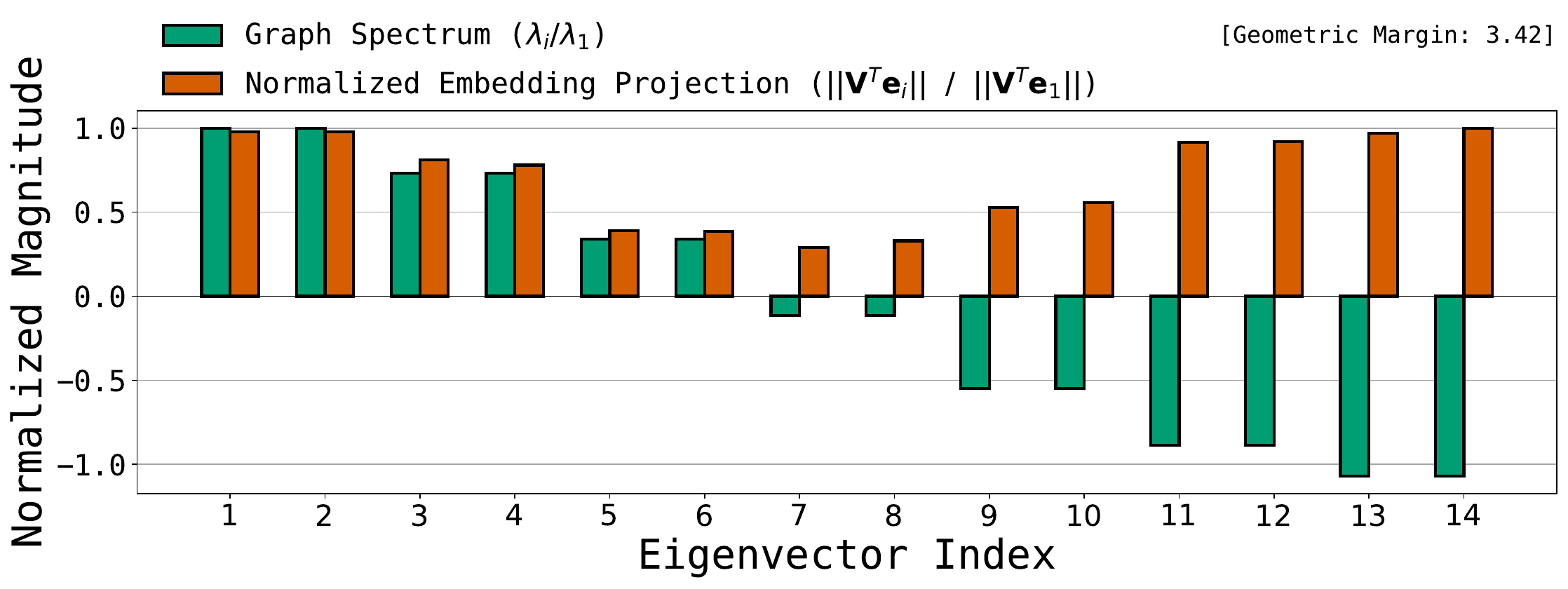}
        \caption{Eigenvector Projections}
    \end{subfigure}%
\caption{\textbf{Lower learning rate weakens the global geometry:} Reducing the learning rate from $10^{-2}$ in \cref{fig:no-explicit} to $10^{-4}$ still results in a geometry, but the eigendirections are less skewed towards the global directions, and the heatmap weaker. For this setup, we use {\tt LR: 0.0001; Embedding Init: GPT-Default ($\mathcal{N}(0,0.02^2)$), Weight Decay: Off, Dropout: Off} }
    \label{fig:no-explicit-low-lr}
\end{figure}

\begin{figure}[htbp]
    \centering

    \begin{subfigure}{0.15\textwidth}
        \centering
        \raisebox{6pt}{\includegraphics[width=\linewidth]{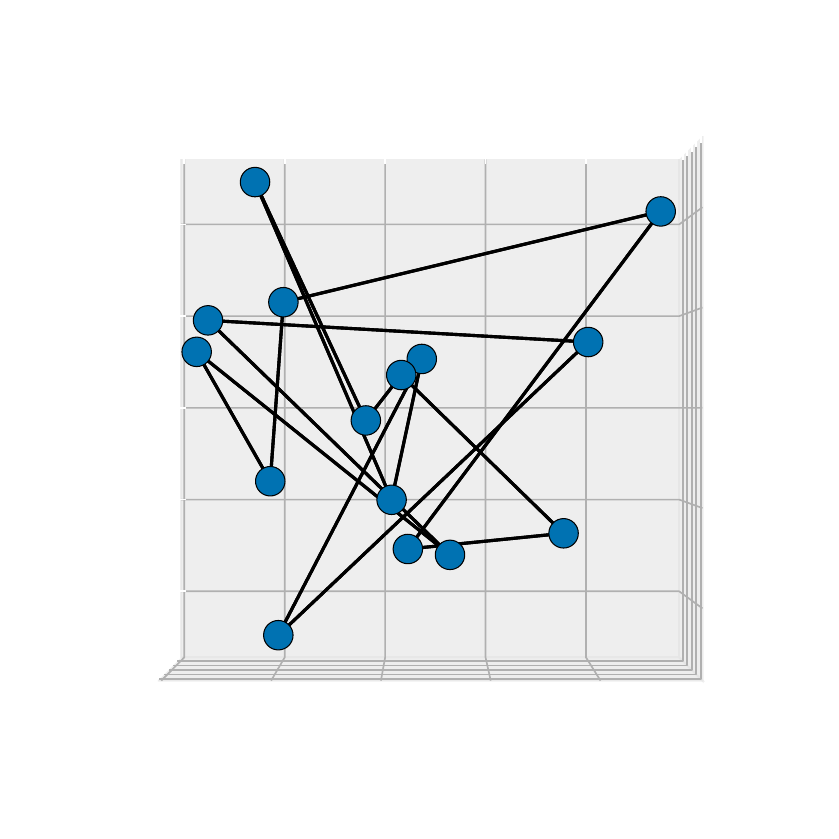}}
        \caption{PCA}
    \end{subfigure}%
    \begin{subfigure}{0.30\textwidth}
        \centering
        \raisebox{12pt}{\includegraphics[width=\linewidth]{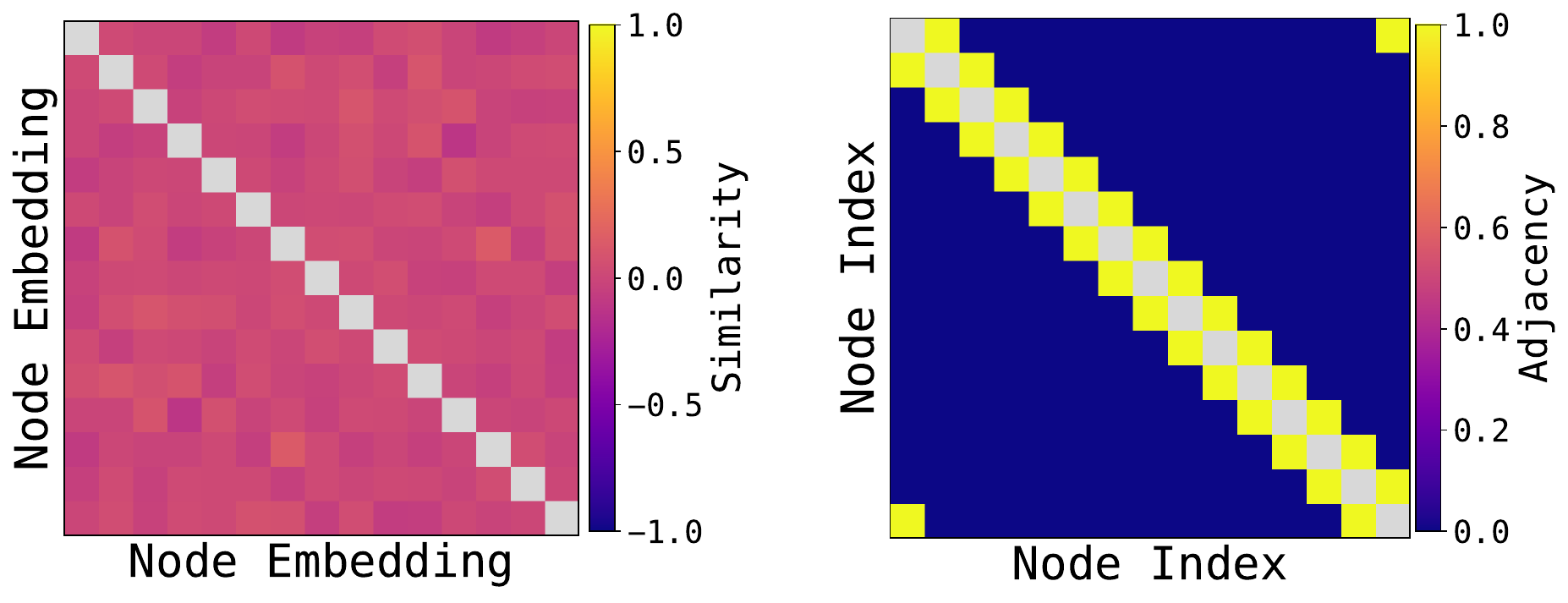}}
        \caption{Emb. Heatmap vs. Adjacency}
    \end{subfigure}%
    \begin{subfigure}{0.5\textwidth}
        \centering
        \includegraphics[width=\linewidth]{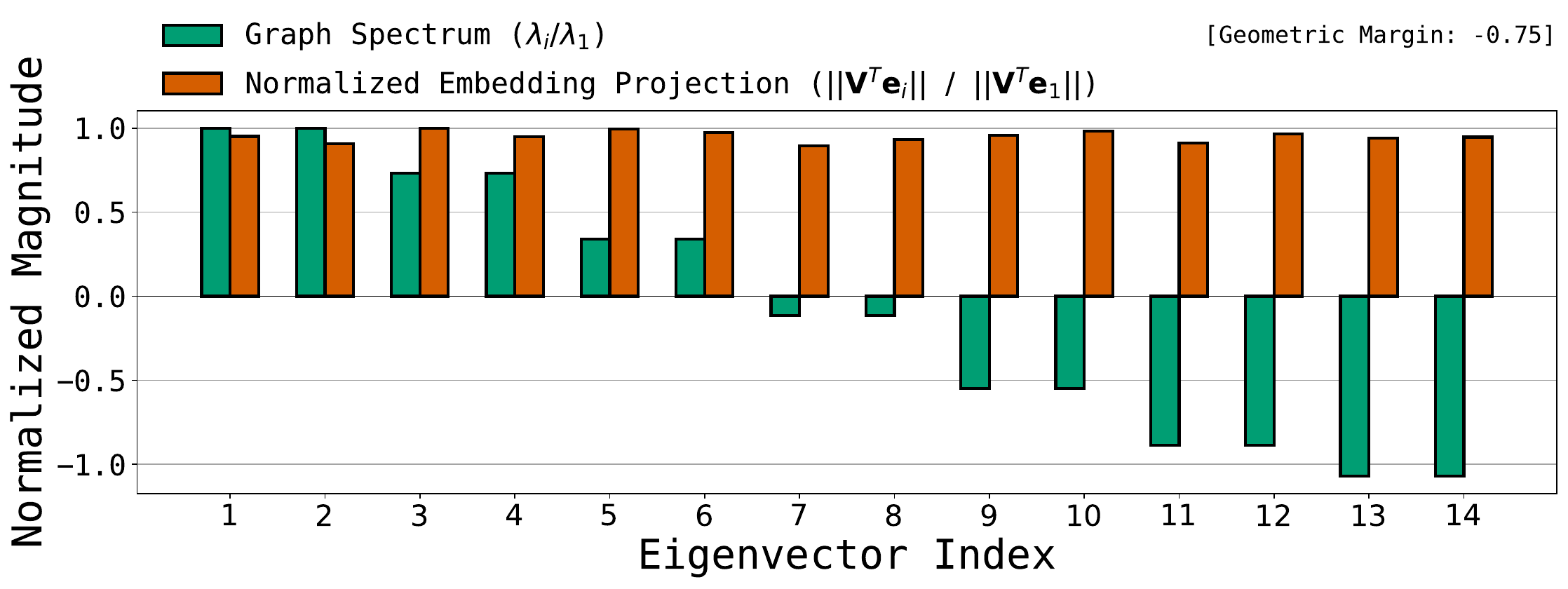}
        \caption{Eigenvector Projections}
    \end{subfigure}%
\caption{\textbf{{\tt Unit-Normal}-init learns less geometric solution:} Changing the initialization in \cref{fig:no-explicit-low-lr} results in what is now a purely-associative solution: the heatmap indicates orthogonal embeddings and equally distributed along all directions. The geometric margin of this solution turns out to be negative ($-0.75$). For this setup, we use a network with {\tt LR: 0.0001; Embedding Init: PyTorch-Default ($\mathcal{N}(0,1)$), Weight Decay: Off, Dropout: Off} }
    \label{fig:non-geometric}
\end{figure}

\begin{figure}[htbp]
    \centering

    \begin{subfigure}{0.15\textwidth}
        \centering
        \raisebox{6pt}{\includegraphics[width=\linewidth]{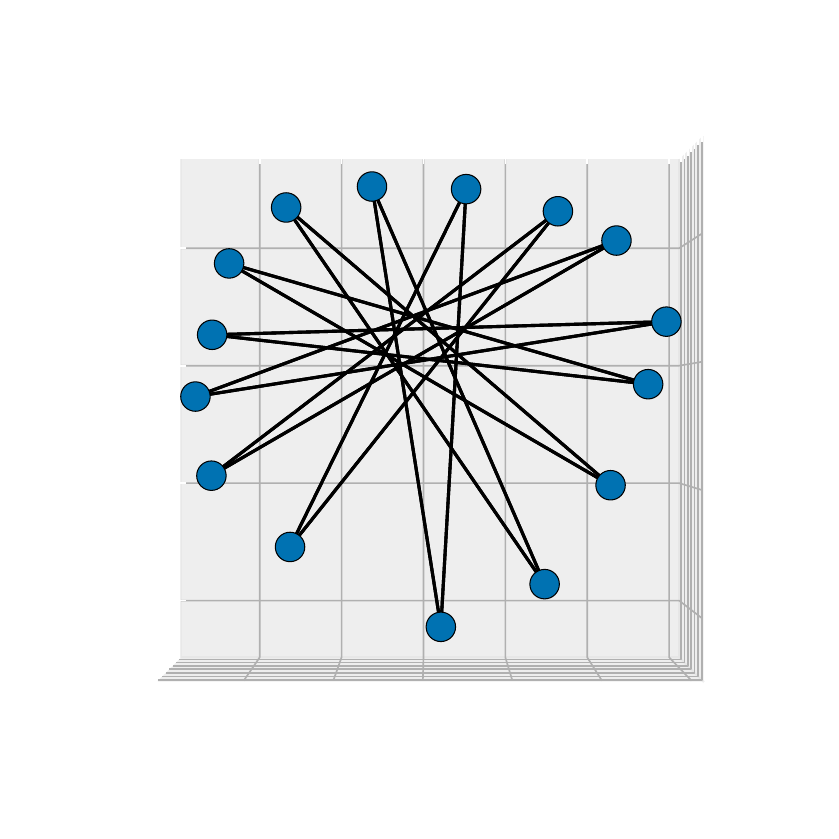}}
        \caption{PCA}
    \end{subfigure}%
    \begin{subfigure}{0.30\textwidth}
        \centering
        \raisebox{12pt}{\includegraphics[width=\linewidth]{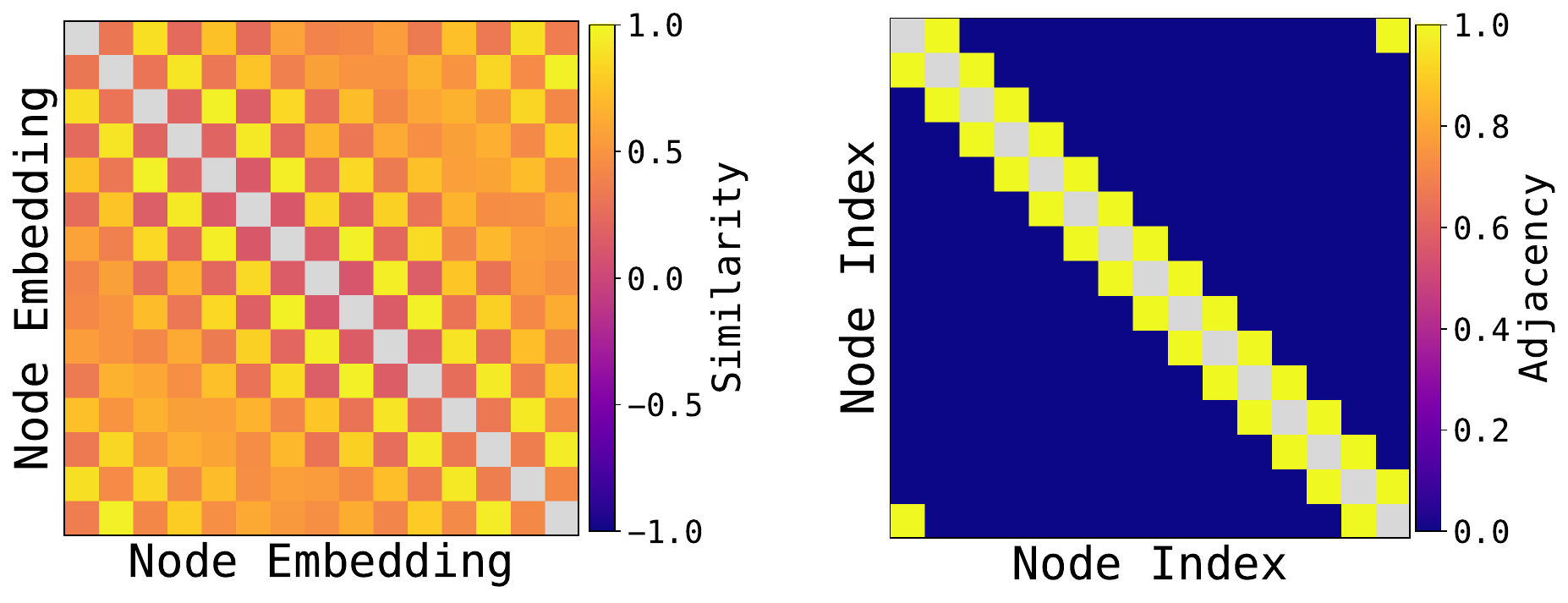}}
        \caption{Emb. Heatmap vs. Adjacency}
    \end{subfigure}%
    \begin{subfigure}{0.5\textwidth}
        \centering
        \includegraphics[width=\linewidth]{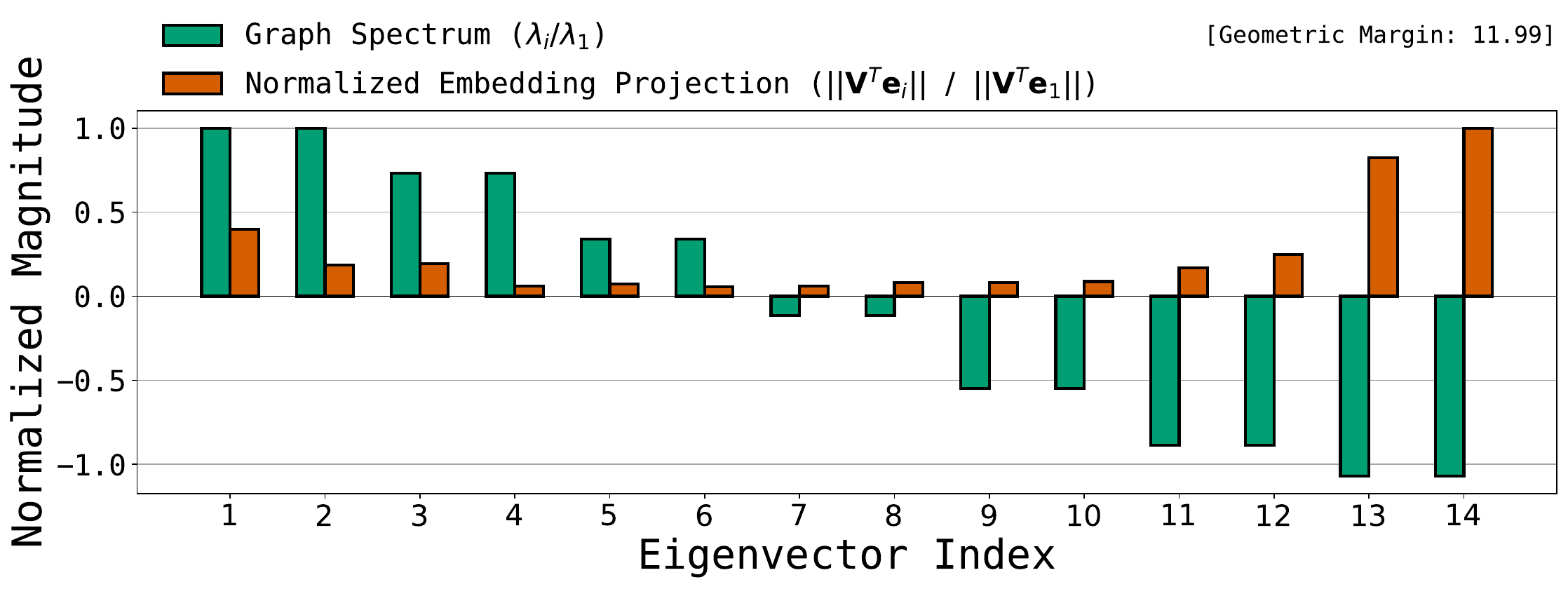}
        \caption{Eigenvector Projections}
    \end{subfigure}%

    \caption{\textbf{Higher learning rate results in more geometry:} Increasing the learning rate from $10^{-4}$ in \cref{fig:non-geometric} to $10^{-1}$ results in a more geometric solution. For this setup, we use a network with {\tt LR: 0.1; Embedding Init: PyTorch-Default ($\mathcal{N}(0,1)$), Weight Decay: Off, Dropout: Off} }

    \label{fig:lr-ablation}
\end{figure}

\begin{figure}[htbp]
    \centering

    \begin{subfigure}{0.15\textwidth}
        \centering
        \raisebox{6pt}{\includegraphics[width=\linewidth]{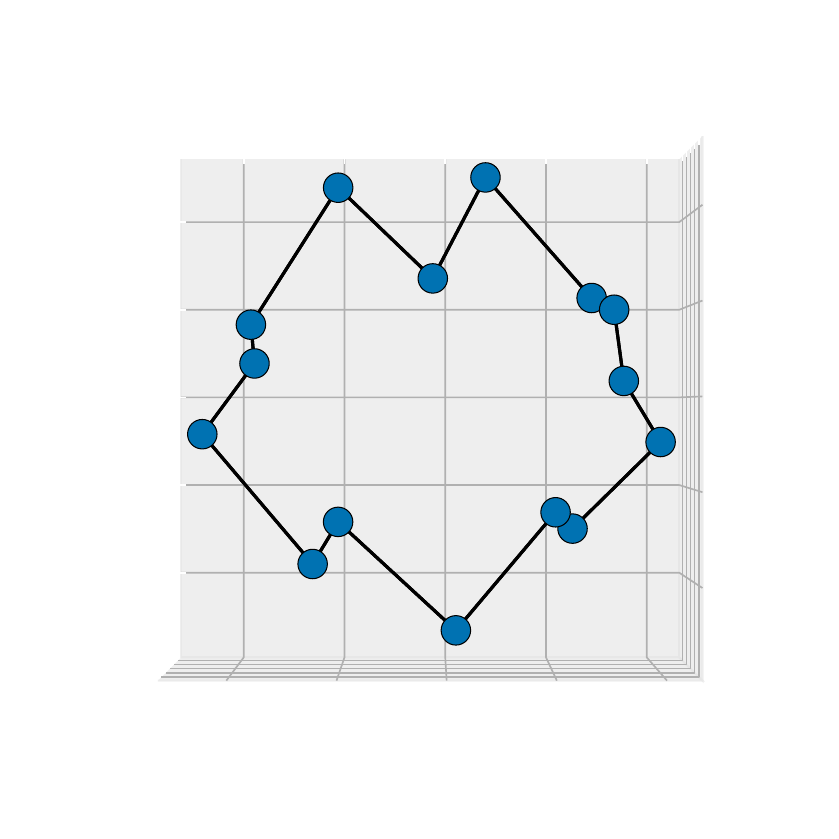}}
        \caption{PCA}
    \end{subfigure}%
    \begin{subfigure}{0.30\textwidth}
        \centering
        \raisebox{12pt}{\includegraphics[width=\linewidth]{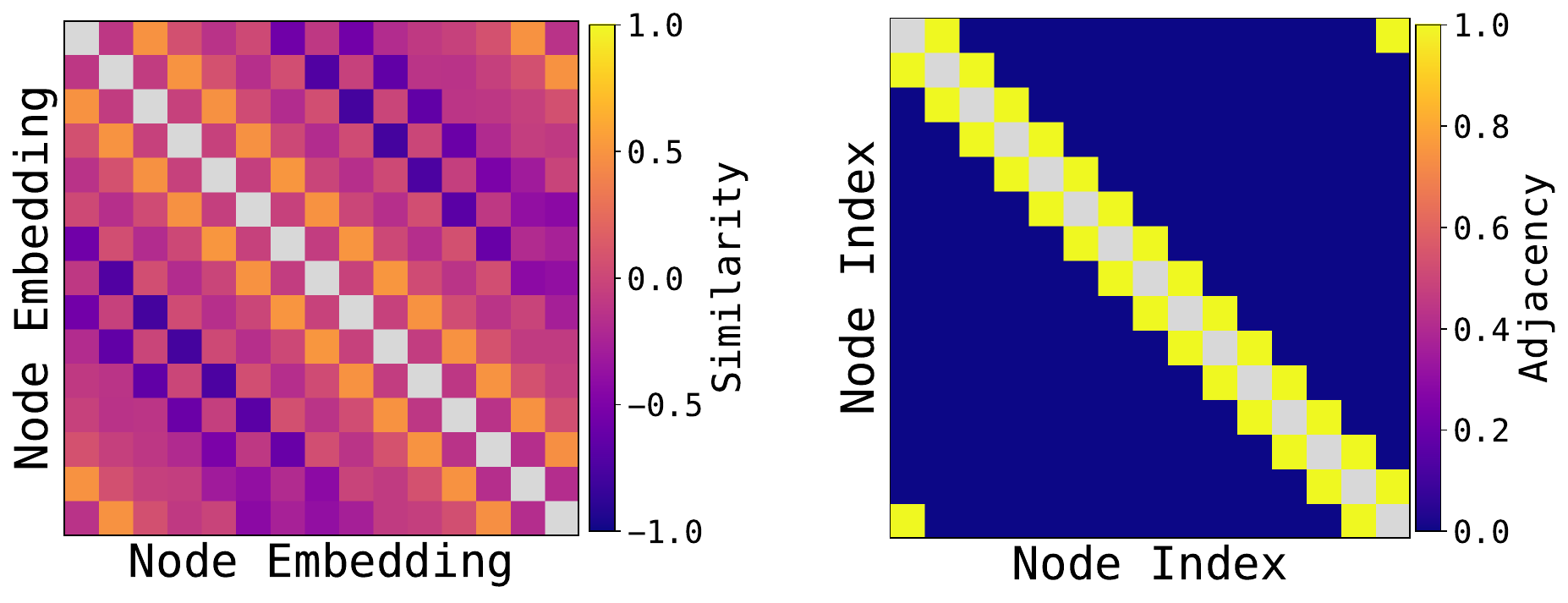}}
        \caption{Emb. Heatmap vs. Adjacency}
    \end{subfigure}%
    \begin{subfigure}{0.5\textwidth}
        \centering
        \includegraphics[width=\linewidth]{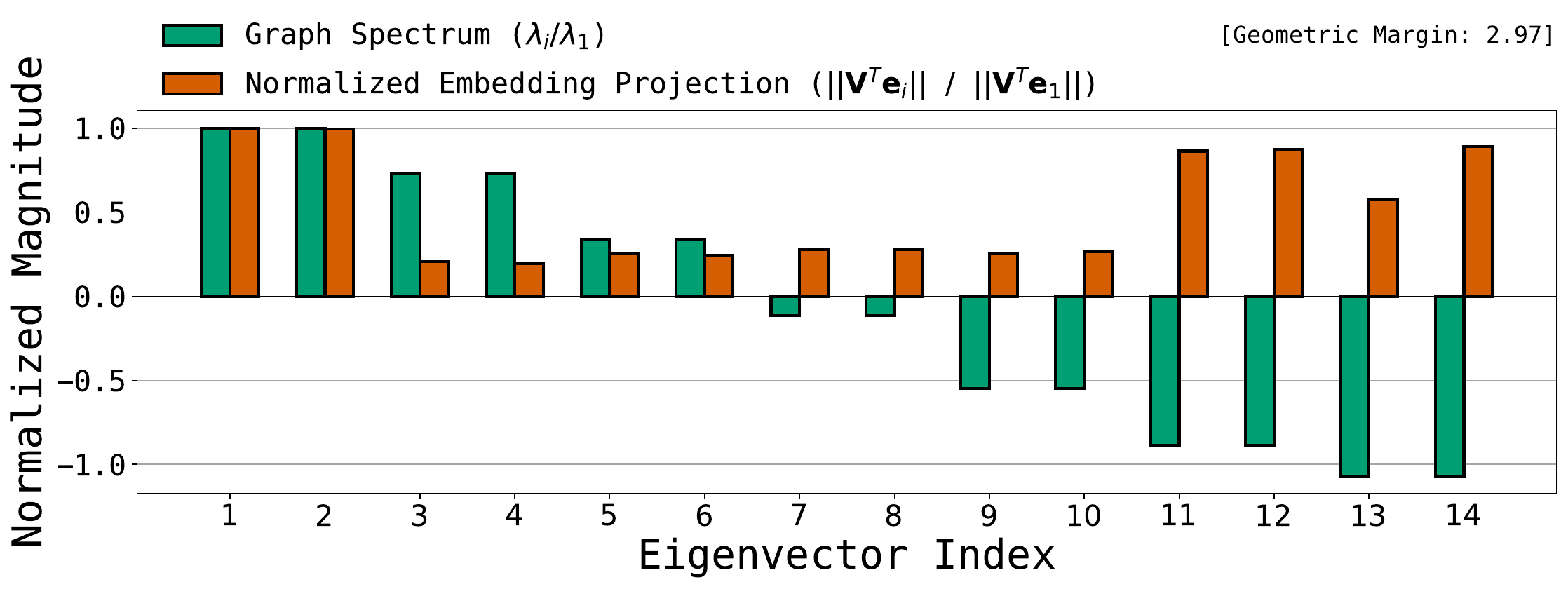}
        \caption{Eigenvector Projections}
    \end{subfigure}%
        \caption{\textbf{Higher weight decay results in more geometry:} Adding weight decay to the setting of \cref{fig:non-geometric} results in a more geometric (although zigzagging) solution.  For this setup, we use a network with {\tt LR: 0.0001; Embedding Init: PyTorch-Default ($\mathcal{N}(0,1)$), Weight Decay: 10, Dropout: Off} }
    \label{fig:wd-ablation}
\end{figure}

\begin{figure}[htbp]
    \centering

    \begin{subfigure}{0.15\textwidth}
        \centering
        \raisebox{6pt}{\includegraphics[width=\linewidth]{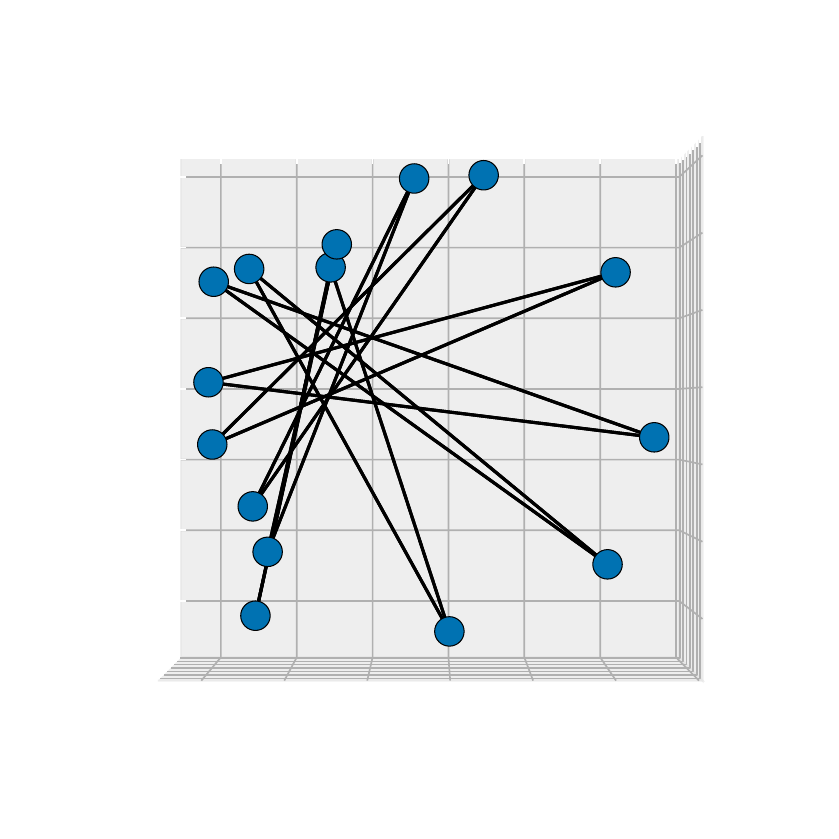}}
        \caption{PCA}
    \end{subfigure}%
    \begin{subfigure}{0.30\textwidth}
        \centering
        \raisebox{12pt}{\includegraphics[width=\linewidth]{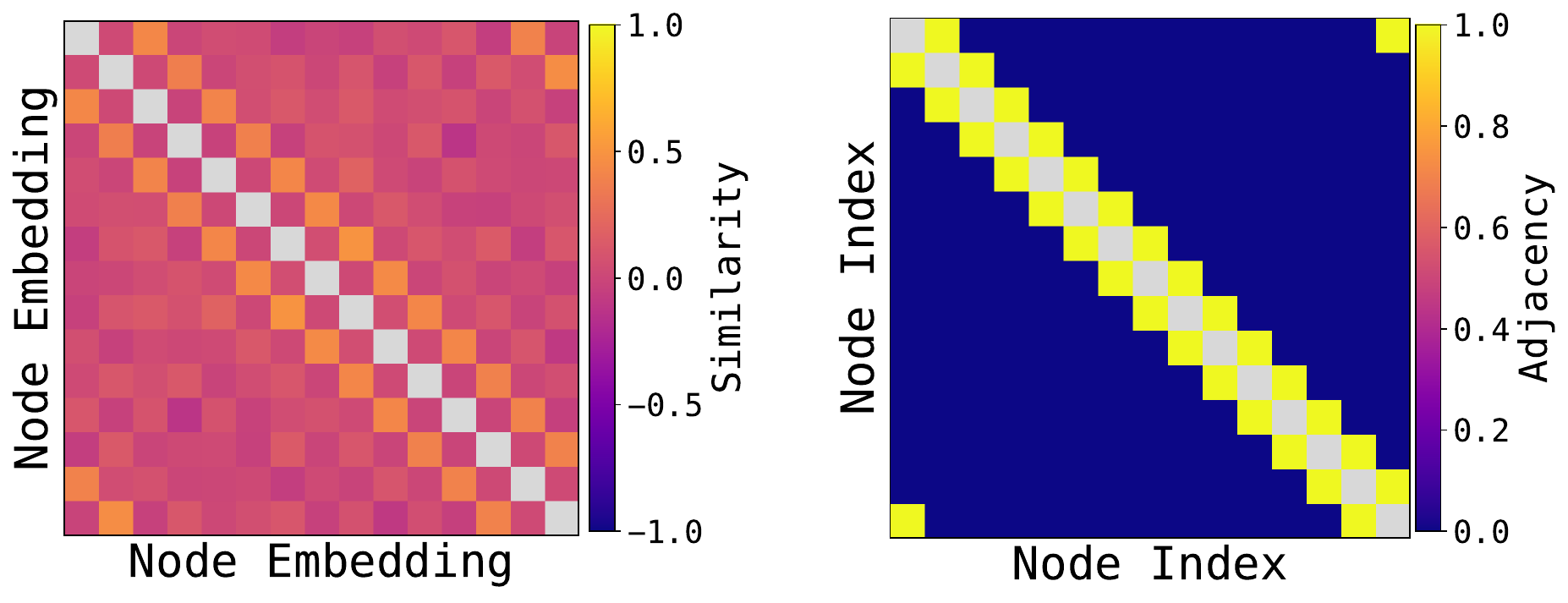}}
        \caption{Emb. Heatmap vs. Adjacency}
    \end{subfigure}%
    \begin{subfigure}{0.5\textwidth}
        \centering
        \includegraphics[width=\linewidth]{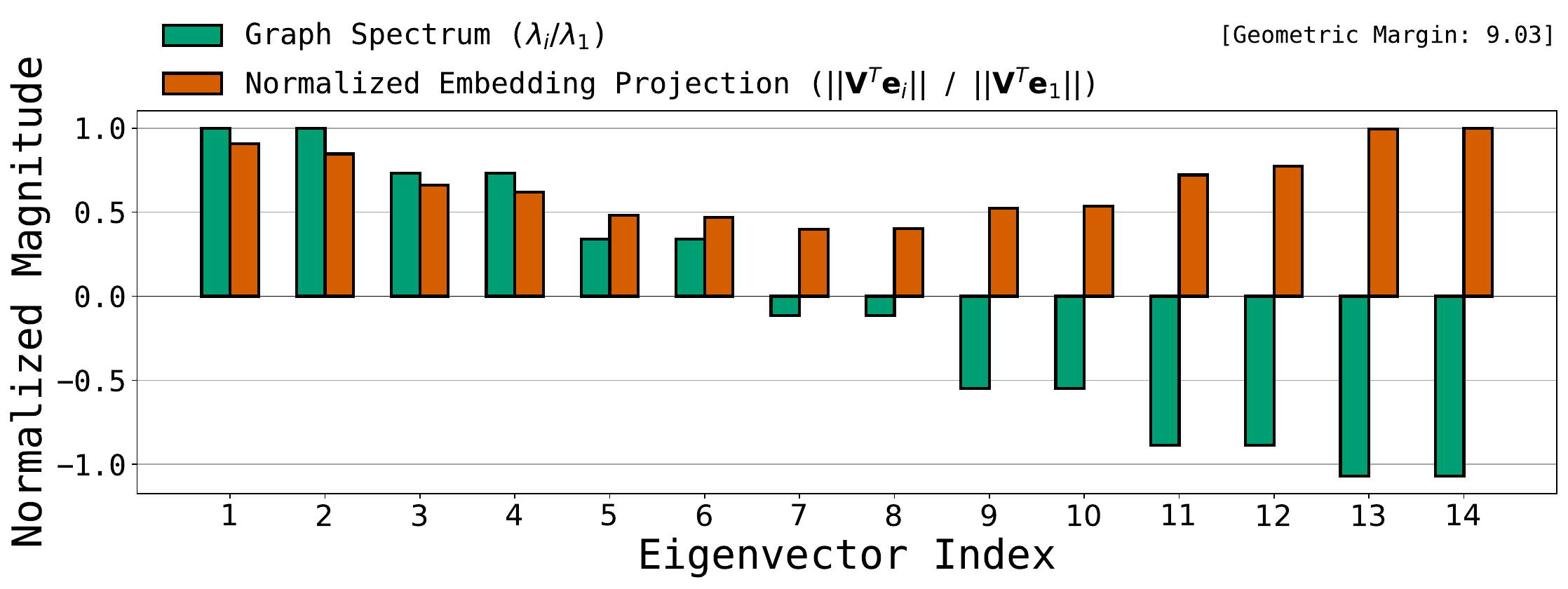}
        \caption{Eigenvector Projections}
    \end{subfigure}%
        \caption{\textbf{Higher dropout results in more geometry:} Adding dropout to the setting of \cref{fig:non-geometric} and a slight increase in the learning rate results in a more geometric (although zigzagging) solution.  For this setup, we use {\tt LR: 0.001; Embedding Init: PyTorch-Default ($\mathcal{N}(0,1)$), Weight Decay: Off, Dropout: 0.4} }
    \label{fig:do-ablation}
\end{figure}

\begin{figure}[htbp]
    \centering
    \begin{subfigure}{0.15\textwidth}
        \centering
        \raisebox{6pt}{\includegraphics[width=\linewidth]{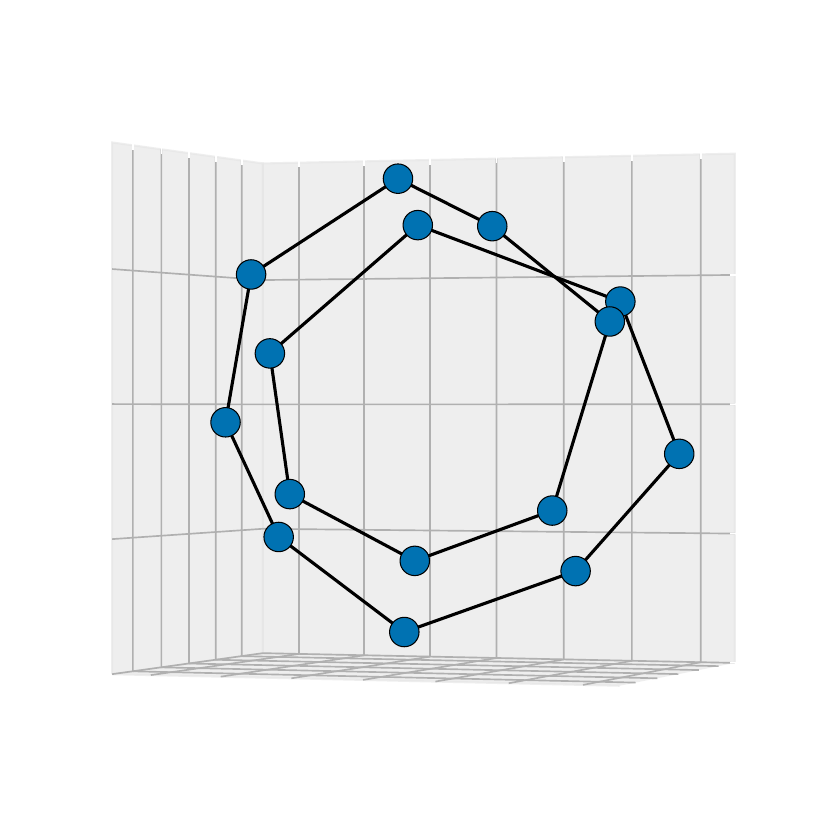}}
        \caption{PCA Embeddings}
    \end{subfigure}
    \hspace{0.5cm}
    \begin{subfigure}{0.30\textwidth}
        \centering
        \raisebox{12pt}{\includegraphics[width=\linewidth]{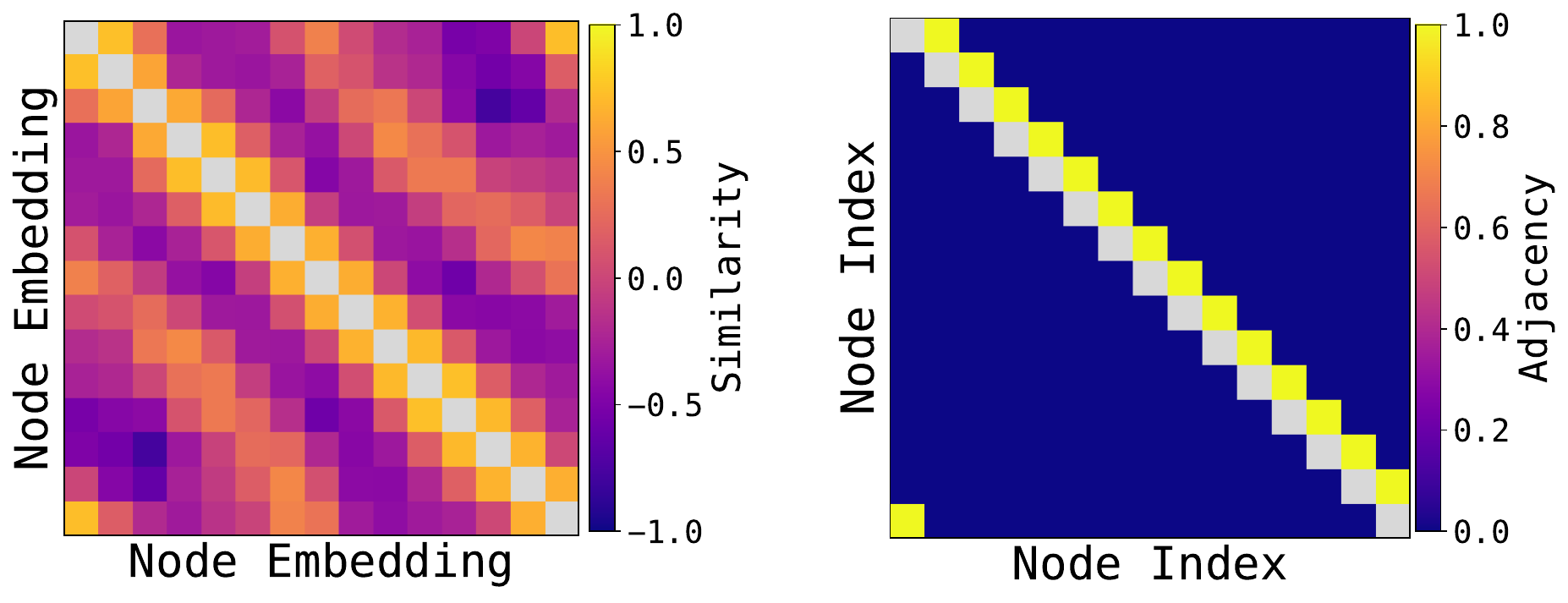}}
        \caption{Embedding Heatmap and Adjacency Matrix}
    \end{subfigure}
    \begin{subfigure}{0.5\textwidth}
        \centering
        \includegraphics[width=\linewidth]{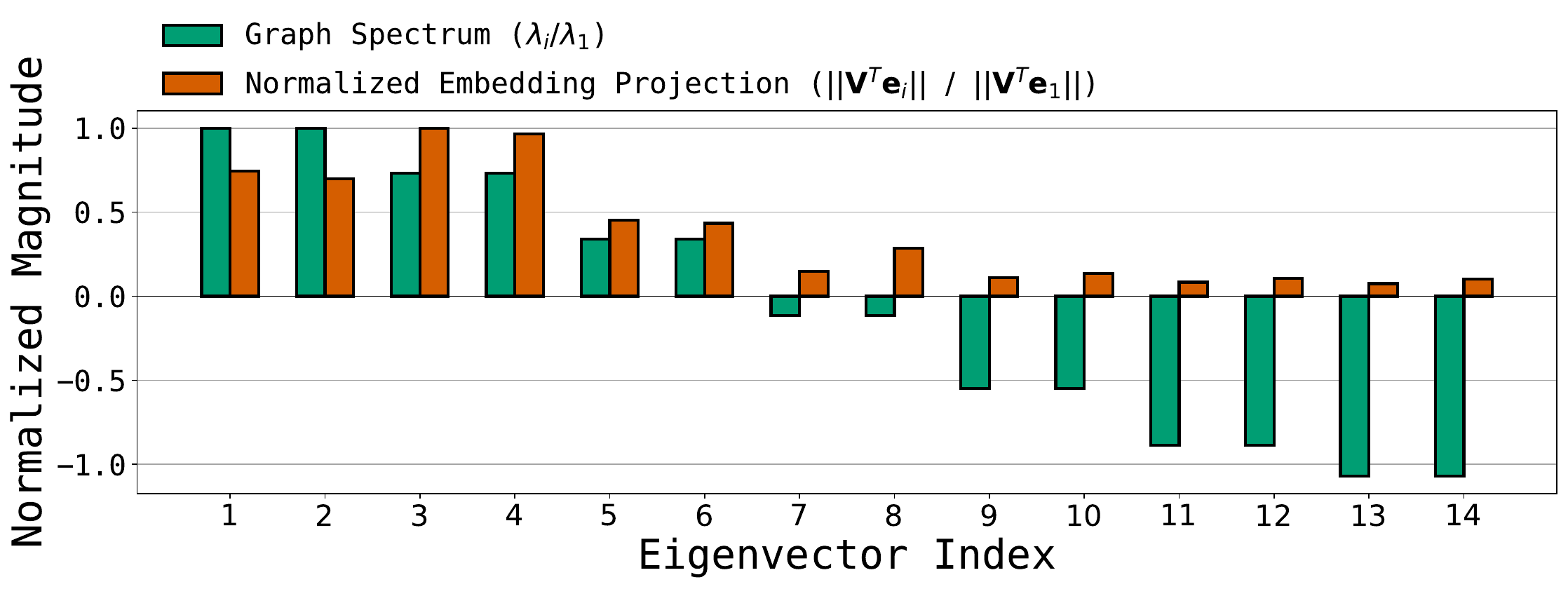}
        \caption{Eigenvector Projections}
    \end{subfigure}
    \caption{\textbf{Unidirectional graph can still result in geometry}: At least on small graphs, and for the Transformer (\tinyGPT{}), training on only one direction of the edges suffices for the model to store geometrically. Similar results hold for our other graphs.}
    \label{fig:forward-only-tiny}
\end{figure}

\clearpage

\ifarxiv
\else

\fi

\clearpage
\section{Edge supervision and training dynamics}
\label{app:other-insights}

There are tangential aspects of our large path-star graph training that are worth elaborating on: the roles of reverse edges (\S\ref{sec:reverse-edges}), of pause tokens (\S\ref{sec:pause-tokens}), of interleaving edge-memorization (\S\ref{sec:interleaving}).

\subsection{The role of reverse edges}
\label{sec:reverse-edges}
Reverse edges seem to play a nuanced role in our observations. On the large path-star task in \S\ref{sec:experiments}, we find it necessary to augment training on the reverse edges; on the other hand, for the tiny graphs, a geometry arises even without these reverse edges. Perhaps, reverse edges are needed for larger tasks; or perhaps, they are necessary to perform implicit reasoning and retrieval. We leave it for future work to gain greater clarity on this effect, which is tied to the reversal curse \citep{berglund24reversal,zhu23knowledge2}. 

\subsubsection{The critical role of reverse edges in the large path-star task}
\textbf{Edge supervision regimes.} We evaluate three edge supervision regimes for the fixed in-weights graph: (i) \emph{forward-only} edges $\forwardedgedistr$, (ii) \emph{backward-only} edges $\backwardedgedistr$, and (iii) their mixture $\edgedistr = \forwardedgedistr \cup \backwardedgedistr$, each combined with path supervision. %

We consider two types of path-finding tasks. The first, as discussed in \Cref{sec:in-weights-path-star-task}, is a \emph{forward generation} ($\start\to\leaf$) task defined by $\forwardpathdistr$. Another task is \emph{reverse generation} ($\leaf\to\start$), denoted by $\backwardpathdistr$.  Forward path generation is non-trivial to learn as it involves planning or look-ahead, and is adversarial towards next-token learning; the reverse path however is trivial to learn on path-star graphs because each node has a unique predecessor along the target path. We must also clarify that the presence of reverse edges in itself \emph{does not trivialize} the forward path-finding task---these edges provide only local information; thus, the success of the global path-finding task is still \emph{non-trivial}.

We enumerate our observations from these various edge-supervision regimes below:

\begin{observation} (\textbf{Role of reverse edges})
We find in \cref{fig:edge-ablation} that:
\begin{enumerate}
    \item A Transformer trained on only the forward edges, struggles on both forward and reverse path-finding tasks (see the middle color in \cref{fig:edge-ablation}).
    \item A Transformer trained on only the reverse edges, achieves non-trivial accuracy on the reverse path-finding task; however, it fails on the forward path-finding task (see the third color in \cref{fig:edge-ablation}).
\end{enumerate}
\end{observation}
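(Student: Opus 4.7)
The plan is to establish this observation empirically, since it is a statement about optimization behavior rather than a formal theorem, and then to offer a two-part mechanistic interpretation grounded in results already cited in the paper. Concretely, I would run six training configurations, crossing three edge-supervision regimes (forward-only $\forwardedgedistr$, reverse-only $\backwardedgedistr$, and mixed $\edgedistr$) with the two path-finding supervisions ($\forwardpathdistr$ for start$\to$leaf and $\backwardpathdistr$ for leaf$\to$start), using the same \gptmid{} architecture and hyperparameters as in \cref{sec:experiments} so that the only varied factor is edge direction. For each configuration I would report exact-match path accuracy on held-out leaves, keeping edge-memorization accuracy at (near) $100\%$ in both directions present to rule out confounds from under-trained memorization.

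The mechanistic argument has two independent parts. First, the failure of the forward-only regime on the \emph{reverse} task is attributable to the reversal curse \citep{berglund24reversal,zhu23knowledge2}: a Transformer trained to recall $v$ given $u$ does not, in general, learn to recall $u$ given $v$, so forward edges alone do not equip the model with the parent-lookup needed to walk leaf$\to$root. This immediately explains the failure, and it also implies that the forward-only regime cannot solve the \emph{forward} path-finding task, because emitting the root-first path given only the leaf requires---at minimum---locating the leaf's ancestor chain, and the reversal curse blocks this lookup. Second, the success of the reverse-only regime on the reverse task is a near-direct consequence of the path-star topology: each non-root node has a unique parent, so walking leaf$\to$root is a sequence of local one-hop lookups along exactly the edges that were memorized; no composition or planning is needed at any step.

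The more delicate half is the failure of the reverse-only regime on the forward path-finding task. Here the model \emph{does} have the information in principle---the ancestor chain of the leaf is traceable via reverse edges---but the forward generation order forces the root-to-leaf emission to happen before that chain can be traversed locally. This collapses to exactly the setting of \cref{sec:contradiction}: the first emitted token (the first-hop child of the root) must be computed by composing $\pathlength$ reverse-edge recalls starting from the leaf, with no intermediate supervision because the ground-truth prefix during forward training contains root-to-leaf tokens that give no useful local clue for a model that only knows child$\to$parent. By the hardness-of-composition discussion in Hypothesis 3 and the associative-memory view, this is precisely the $\exp(\pathlength)$ needle-in-the-haystack task. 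Thus the reverse-only regime solves the one task that decomposes into local lookups and fails on the one that demands composition.

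The main obstacle is disentangling which failures are due to the reversal curse versus due to the composition barrier, since the forward-only regime could fail on the reverse task for either reason. I would address this by an additional control: checking, via probing or direct decoding, whether the forward-only model has implicitly encoded parent information in its representations. If not, the reversal curse is the operative mechanism; if yes, then the composition barrier dominates. A secondary obstacle is ensuring that the reverse-only failure on the forward task is not merely a capacity artifact; this I would rule out by confirming that the mixed regime $\edgedistr$ succeeds at the forward task under identical model scale and training budget (as already shown in \cref{fig:main_results}), so that the differentiator is genuinely the direction of local supervision rather than the raw number of edge facts stored.
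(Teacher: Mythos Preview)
Your proposal is essentially correct and matches the paper's approach: this is an empirical observation, and the paper establishes it exactly as you propose---by training under the three edge regimes and reporting exact-match accuracy on the forward and reverse path tasks (\cref{fig:edge-ablation}). The paper's own discussion of mechanism is brief and tentative (``We suspect that the lack of reverse edges either hurts the geometry or hurts the retrieval ability''; the reverse task is ``algorithmically trivial'' since each node has a unique predecessor), so your more detailed interpretation and the probing controls you suggest go beyond what the paper offers.

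One point worth tightening in your mechanistic account: you explain the reverse-only failure on the forward task by invoking the $\exp(\pathlength)$ composition barrier of \cref{sec:contradiction}. But the paper's central finding is that this very barrier \emph{is} overcome in the mixed-edge regime---precisely because a global geometry emerges that collapses the $\pathlength$-fold composition to a one-step lookup. So appealing to the composition barrier alone does not explain why reverse-only fails while mixed succeeds; both regimes have access to the child$\to$parent lookup needed to trace the ancestor chain. The missing piece is why reverse-only edges fail to induce (or fail to make usable) the geometric memory that mixed edges do. The paper does not resolve this either, but your framing presents the composition barrier as the explanation when it is really only the default that geometric memory would have to override. Your proposed control (confirming mixed succeeds under identical budget) rules out capacity but not this mechanistic gap.
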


We suspect that the lack of reverse edges either hurts the geometry \textit{or} hurts the retrieval ability of the model. On the other hand, the success of the reverse path-finding task with reverse-only edge memorization could be explained by the fact that the task requires no planning, as discussed in the remark below.

\begin{remark} We note that the asymmetry between forward and reversed path-generation tasks stems from their algorithmic complexity. The reversed path generation is algorithmically trivial on path-star graphs because each node has a unique predecessor along any target path---the model simply needs to follow the unique backward edges. Forward generation, however, requires planning (examine each outgoing path) or lookahead (track the reverse path without explicit chain-of-thought and reverse it). Indeed, for the in-context task of \citetalias{bachmann24pitfalls}, the model fails on the forward task, but strikingly succeeds on the reverse task, as corroborated in \cref{fig:in_weights_in_context_with_reverse} (right).  Even in the in-weights setting 
\cref{fig:in_weights_in_context_with_reverse} (left), the reverse path is generally quicker to learn and yields higher accuracy.
\end{remark}

\begin{figure}[ht]
  \centering
  \includegraphics[width=0.8\textwidth]{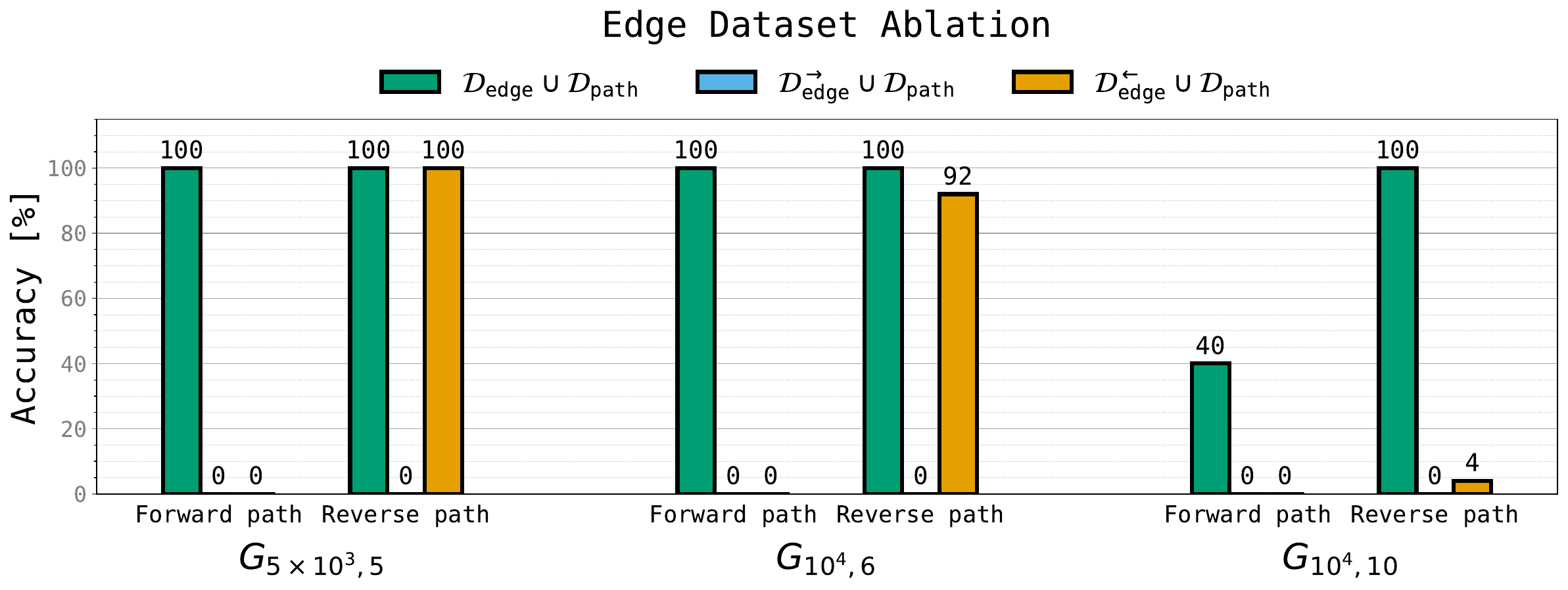}
  \caption{\textbf{Mixed edge supervision enables forward path generation while forward-only fails due to reversal curse.} Exact-match accuracy on held-out leaves for multiple path-star graphs (varying degree $\degree$ and path length $\pathlength$). As established, training on \emph{mixed} edges $\edgedistr$ yields high non-trivial \emph{forward} accuracy across graphs. But training on \emph{forward-only} $\forwardedgedistr$ fails  on both the forward and reverse tasks. This is indicative of the reversal curse. With \emph{backward-only} edges ($\backwardedgedistr$) the model attains high accuracy primarily on \emph{reverse} path generation for smaller graphs. This can be reconciled by noting that generating the reverse path is an easier retrieval task. Random forward path accuracy is $1/d$.}
  \label{fig:edge-ablation}
\end{figure}

\begin{figure}[ht]
  \centering
  \begin{subfigure}[b]{0.4\linewidth}
    \centering
    \includegraphics[width=\linewidth]{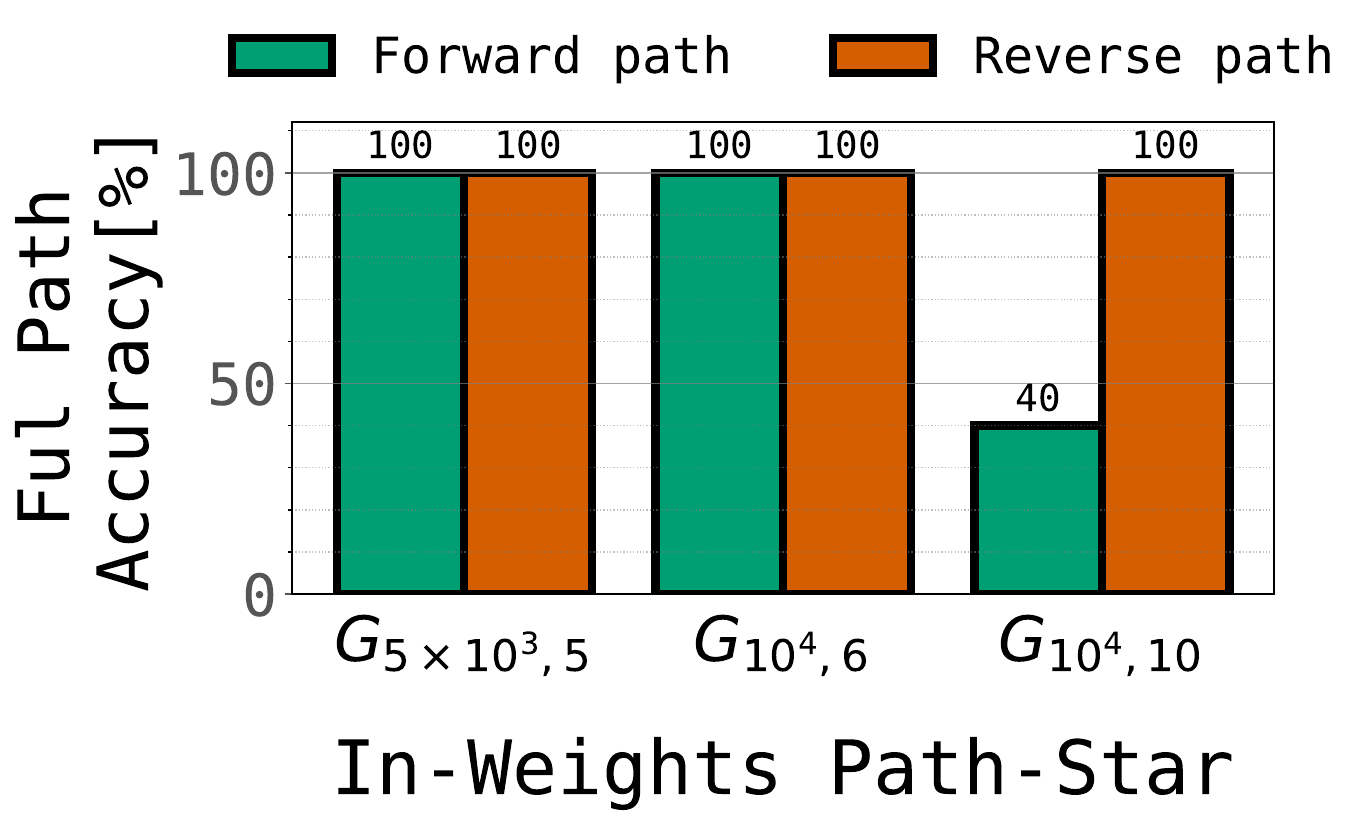}
  \end{subfigure}
  \begin{subfigure}[b]{0.4\linewidth}
    \centering
    \raisebox{3pt}{\includegraphics[width=\linewidth]{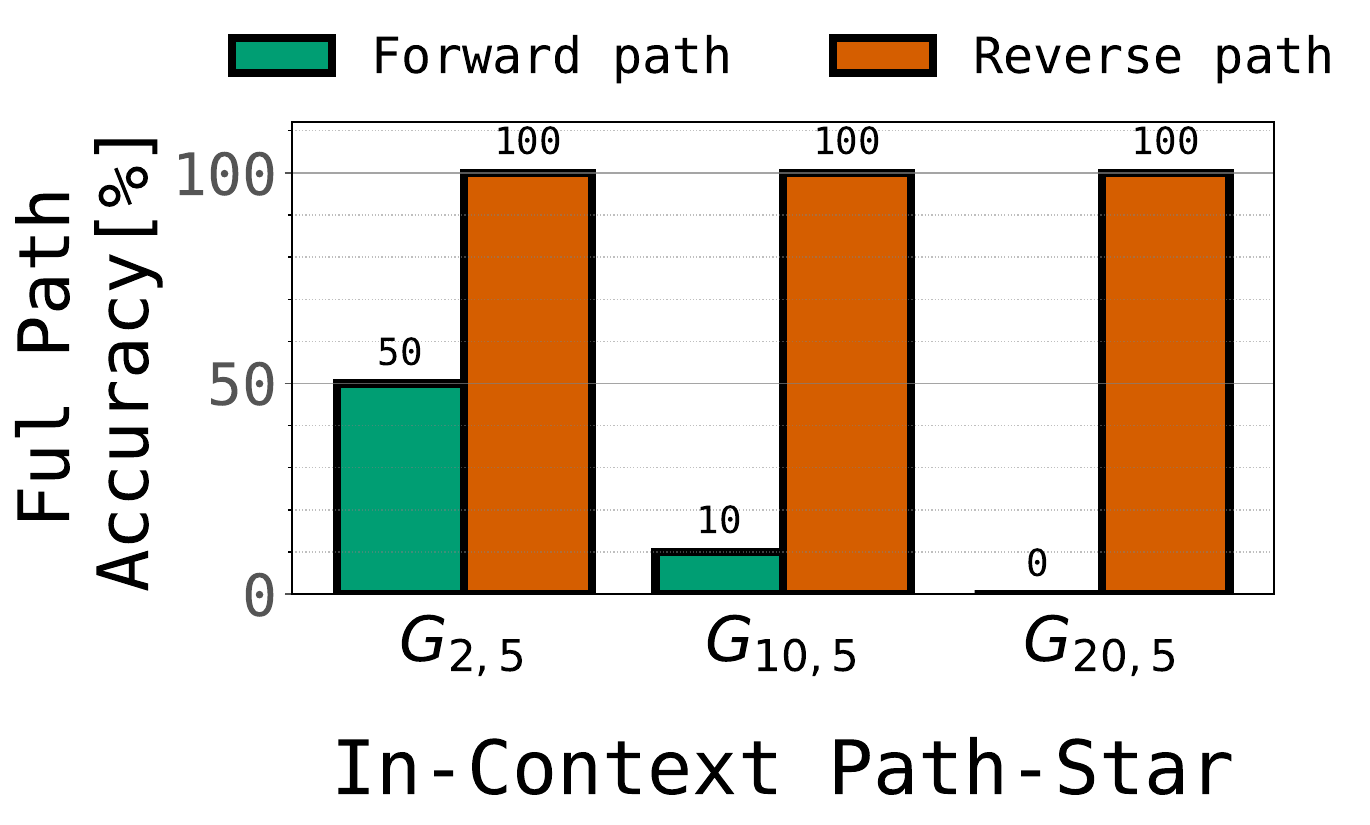}}
  \end{subfigure}  
  \caption{
    \textbf{Forward vs. reverse path generation:}
    The figure contrasts the model's performance on forward (start$\to$leaf) and reverse (leaf$\to$start) path generation tasks for path-star graphs learned either \textbf{in-weights} (left) or \textbf{in-context} (right). 
    While both methods achieve perfect accuracy on the algorithmically simple reverse path task, their performance on the forward task differs dramatically. 
    \textbf{(left)} The in-weights model succeeds at the forward task, which requires planning and look-ahead, demonstrating high accuracy even on large graphs with thousands of nodes. 
    \textbf{(right)} In contrast, the in-context model completely fails at forward path generation. 
    This stark difference highlights the superior capability of in-weights learning to internalize and utilize complex graph structures. 
}
  \label{fig:in_weights_in_context_with_reverse}
\end{figure}

\clearpage

\subsection{Pause tokens for computational slack}
\label{sec:pause-tokens}
In the same in-weights path-finding task of \S\ref{sec:experiments}, we find that it is helpful to insert pause tokens \citep{burtsev2020memory,goyal23think} to achieve quicker accuracy gains during training. Pause tokens are added by appending dummy tokens to the prefix of the path-finding task both during training and inference.

\cref{fig:pause-tokens} shows that adding a short sequence of pause tokens after the prompt reliably boosts exact-match accuracy across graphs, for a given amount of training time. Increasing the number of pause tokens increases speed of convergence. %

\begin{figure}[h]
  \centering
  \includegraphics[width=0.4\textwidth]{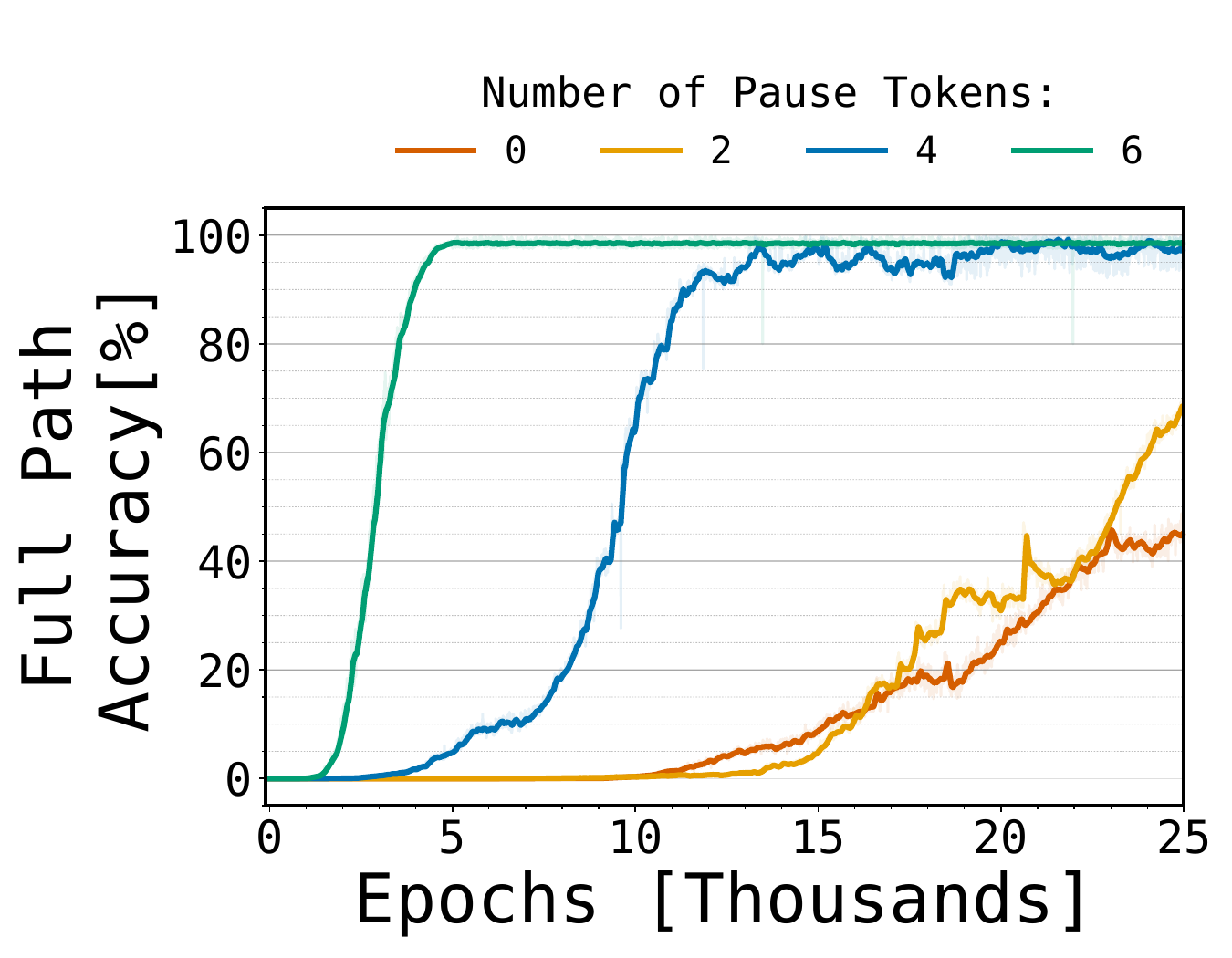}
  \caption{\textbf{Pause tokens boost convergence speed of in-weights path-star path-finding task of \S\ref{sec:experiments}.}}
  \label{fig:pause-tokens}
\end{figure}

\subsection{(Not) Interleaving edge-memorization}
\label{sec:interleaving}

In all our experiments, we have interleaved edge-memorization examples with path-finding examples. An alternative training method would be a two-phased approach, where we first enforce edge-memorization, and then follow up by finetuning on the path-finding task. 
We found this to be less stable, e.g., the model achieves a peak accuracy momentarily, only to deteriorate dramatically right after. This is a manifestation of the well-known effect that finetuning has on parametric memory \citep{li24forgetting,luo2025forgetting}. Since this is a confounding effect, we do not choose this regime for our experiments.

However, we confirm that even in this regime, our models do achieve a high peak accuracy (see \cref{fig:not-interleaving}). The fact that the composition task is learnable in this regime implies that the edge-pretrained model must have come with an adequate global geometry despite being trained only on local supervision (\cref{obs:local-supervision-suffices}).

\begin{figure}[h]
  \centering
  \includegraphics[width=0.5\textwidth]{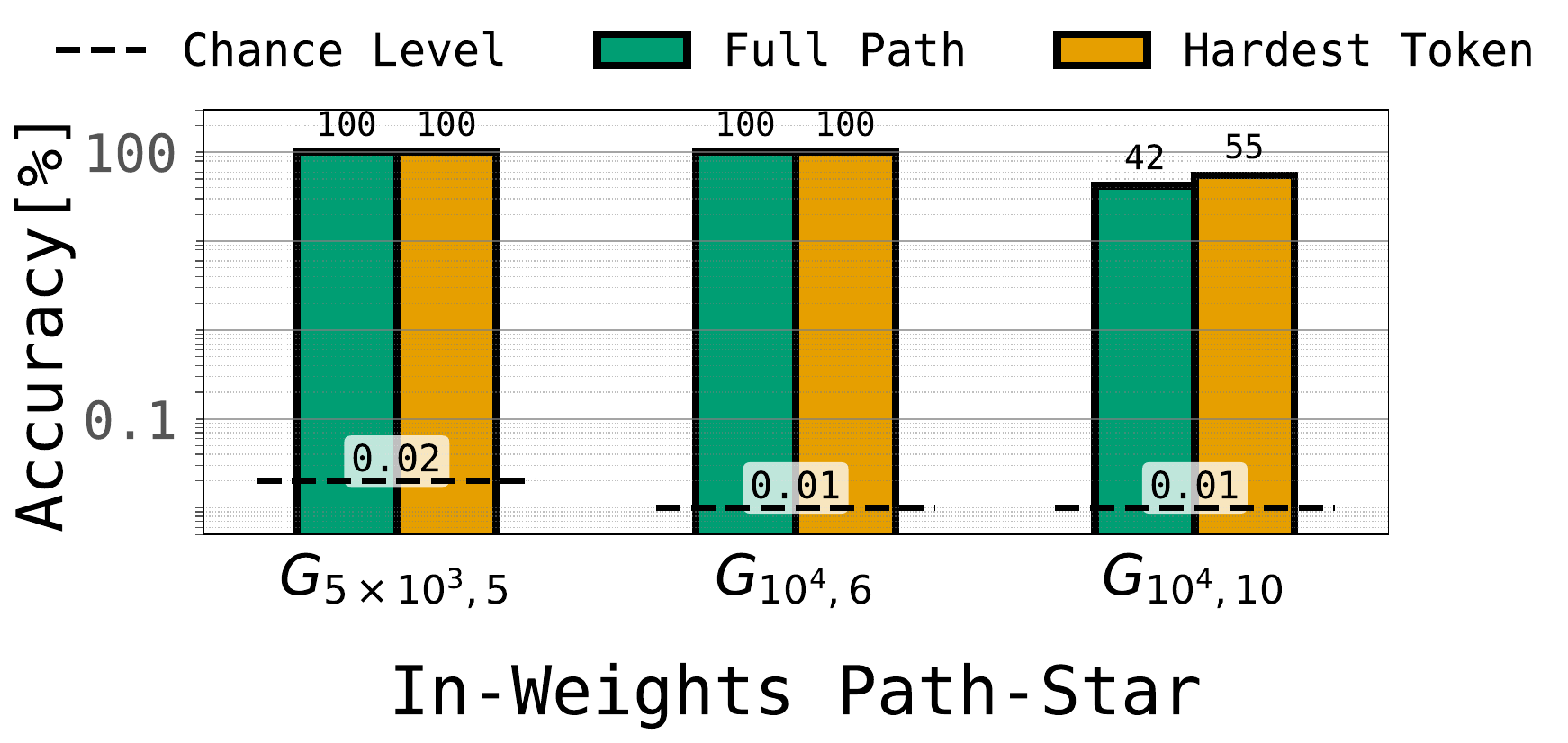}
  \caption{\textbf{Locally supervised model succeeds at path-finding task.} We report the \textit{peak} accuracy under finetuning an edge-memorizing model on our path-finding task of \S\ref{sec:experiments}. We emphasize that this accuracy value is only reached momentarily, and typically deteriorates quickly during further finetuning. Nevertheless, this suggests that 
  local supervision alone was adequate to synthesize a global geometry (\cref{obs:local-supervision-suffices}).  Learning rates of the two phases are given in \S\ref{app:train}.}
  \label{fig:not-interleaving}
\end{figure}

\clearpage
\section{Proofs about representational complexity}
\label{app:proofs}

\subsection{Empirical failure of composition learning under associative memory}
\label{sec:failure_associative}

As discussed in \S\ref{sec:contradiction}, it is well-known that certain compositional tasks are hard to learn empirically; theoretically, this has been proven in a certain sense. However, these prior discussions involve composing information available in the context, rather than in the weights. To test
 this intuition in our in-weights composition task, we design an experiment where we freeze the embeddings of a Transformer and train it on our path-star task. If the model succeeded in this task, it may mean one of two things: either the model develops a geometric memory in a subsequent layer, or the model does in fact efficiently learn how to compose associative matrix operations---going against our intuition derived from prior limits on composition learning. However, we find that even after $50{,}000$ training epochs (using the same \gptmid{} architecture described in Table~\ref{tab:hparams} and \S\ref{app:arch}, except with frozen token embeddings; and the optimization hyperparameter grid search reported in \S\ref{app:train}), the model fails to learn the in-weights path-star task. We believe, this is preliminary evidence that associative memory indeed struggles to learn compositional in-weights tasks. A fleshed-out proof of this negative result is left for future work.

\subsection{Succinctness does not break the tie: Proof of Proposition~\ref{prop:geometry_complexity}}
\label{app:proof_geometry_complexity}

In datasets where redundancies exist, the complexity of a lookup table scales quickly with the training set size (say $\numentities$), whereas the more succinct solution does not (or at worst, grows polynomially slower). For example, if the data is linearly separable in some constant dimensionality, the linear classifier can be described in $\Theta(1)$ bits, whereas a lookup table (such as a nearest neighbor model) would require $\Theta(\numentities)$ bits. 
However, this wide disparity in complexity does not necessarily surface in our setting, which is a \textit{memorization} task without redundancies. Concretely, at least in terms of bits and $\ell_2$ norms, there are  simple graphs where both the geometric and associative views are equally complex--- and at worst, differ by a constant factor of at most $2$ or so depending on whether edge/weight symmetries are constrained during training on not. Thus, the gap does not scale as $\Theta(n)$, which in our case would be the edge count or the vertex count of our graph.

We make two notes about the constant multiplicative factor of $2$ in the gap between the two complexities. First, the factor does not appear if only direction of the edges are stored, or if the embedding weights are untied.  Second, it may seem that this constant factor may partly explain the geometric bias; however, a geometry appears even when memorizing only one direction of the edges in the smaller graphs (see \cref{fig:forward-only-tiny}), where both forms of storage must be equally succinct.

\textbf{Proof intuition.} We first roughly derive the bit and norm complexity for a general graph. The key idea is that
associative memory scales with the edge count, whereas a geometric memory scales with the vertex count. Then, we argue how this resolves to similar values for graphs like the path-star or a cycle, where the edge count and the vertex count are the same (plus or minus $1$).

\textbf{Notation.} We let $|\vertices|$ be the number of entities and $|\edges|$ the number of associations.

\begin{proposition}\textbf{(Bit complexity)}\label{prop:bitcount}
Storing a graph $\graph = (\vertices, \edges)$
\begin{itemize}
    \item with associative memory requires $|\edges| \log |\vertices|$ many bits (with a multiplicative factor of $2$ if both direction of the edges must be stored).
    \item with geometric memory requires  $|\vertices| \embeddingdim \log \granularity$  many bits where $\embeddingdim$ is the embedding dimensionality, and $\granularity$ is the number of cells along each dimension required to avoid collision. This is doubled if (un)embedding weights are not tied. %
\end{itemize}
\end{proposition}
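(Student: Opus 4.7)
My plan is to derive each bound from the most natural minimal encoding of its underlying data structure, and then specialize to the stated graph families to show that the two encodings agree up to at most a factor of two.

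For the associative memory under \cref{hyp:associative}, the embeddings $\embedding(\cdot)$ are fixed (orthogonal or random) and carry no graph-specific information, so the only content actually being stored is the edge set $\edges$ itself, as realized by $\Wassoc$. The minimal natural encoding is an adjacency list: for each of $|\edges|$ edges we name one incident vertex out of $|\vertices|$ possibilities, giving $|\edges|\log|\vertices|$ bits; if both directions must be listed separately (no symmetry exploited), this doubles. For the geometric memory, every vertex needs its own embedding $\embeddinggeom(v)\in\mathbb{R}^\embeddingdim$ that the argmax can cleanly resolve against all non-adjacent vertices; discretizing each coordinate to $\granularity$ cells (enough to avoid collisions) costs $\log\granularity$ bits per coordinate, for a total of $|\vertices|\,\embeddingdim\log\granularity$ bits, doubled if the input and output embeddings are not weight-tied.

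To compare the two, I would specialize to the stated graph families: for a path-star graph $\graph_{\degree,\pathlength}$ or a cycle we have $|\edges|\in\{|\vertices|-1,|\vertices|\}$, and uniquely addressing $|\vertices|$ points under argmax requires at minimum $\embeddingdim\log\granularity\geq\log|\vertices|$. Substituting, the associative count becomes $\Theta(|\vertices|\log|\vertices|)$ and the geometric count becomes exactly $|\vertices|\log|\vertices|$, so the two agree up to lower-order slack. Layering on the two optional doublings (one per side) then forces the ratio into $[1,2]$: equality at $1$ when the asymmetries align (forward-only edges with weight tying, or both-directional edges without weight tying), and at most $2$ otherwise, matching the claim of the proposition.

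The main obstacle, and the reason this proposition is stated informally, is that ``complexity'' admits many reasonable definitions; the argument above commits to a specific minimal encoding on each side (adjacency list vs.\ coordinate quantization) and to the sufficiency condition $\embeddingdim\log\granularity\geq\log|\vertices|$. More clever encodings, e.g.\ exploiting graph automorphisms on the associative side or placing embeddings on a symmetric lattice on the geometric side, could shave constants from either count, but the two natural minimal encodings already coincide up to the factor of two. The broader point I would emphasize is that, unlike classical generalization settings where statistical redundancies allow the succinct solution to be strictly smaller by a factor scaling with dataset size, in this \emph{memorization} regime both data structures are fundamentally $\Theta(|\vertices|\log|\vertices|)$, which is why no straightforward capacity pressure can rule one out in favour of the other.
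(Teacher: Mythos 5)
Your derivation of both bounds matches the paper's own proof essentially exactly: the associative count comes from storing one $\log|\vertices|$-bit neighbor ID per (directed) edge in an adjacency-list fashion, and the geometric count from quantizing each of the $\embeddingdim$ coordinates of each of the $|\vertices|$ embeddings into $\granularity$ cells, with the same two optional doublings. The additional comparison you give for path-star and cycle graphs belongs to the subsequent proposition rather than this one, but it does not affect the correctness of the argument here.
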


\begin{proof}
In the local associative view, given as input any vertex $u$,
we must be able to lookup the vertex IDs of its neighbors. Thus, at the position of this vertex, we need a total of $ \degree(u) \log |\vertices|$ many bits (where $\degree(u)$ is the degree, and $\log |\vertices|$ is the bit length of each ID). Summing this over all vertices gives us $|\edges| \log |\vertices|$ many bits (since the sum of all degrees must equal the edge count). Note that an extra factor of $2$ appears if both the direction of the edges must be stored.

In the geometric embedding, each vertex is  stored as a vector in $\embeddingdim$ dimensions. Each dimension must store one of $\granularity$ values, which requires $\log \granularity$ bits. Summing this up across all dimensions and vertices gives us the result. This is doubled if the unembedding matrix is not weight-tied.

\end{proof}

\begin{proposition} (\textbf{$\ell_2$ norm complexity})\label{prop:ltwonorm}
    Given a graph $\graph = (\vertices, \edges)$, and without loss of generality, given the margin constraint that if $u$ is a neighbor of $v$, then $f(u)[v] - \arg\max_{w \notin \neighbor(u)} f(u)[w] \geq 1$ ($f(u)[v]$ denotes the logit of predicting $v$ given $u$; and $w$ is a non-neighbor), then
    \begin{enumerate}
        \item associative memory requires $\ell_2$ norm of at most $\sqrt{|\edges|}$ with an extra factor of $\sqrt{2}$ if both directions must be stored.
        \item geometric memory requires an $\ell_2$ norm of at least $\sqrt{|\vertices|}$ with an extra factor of $\sqrt{2}$ if (un)embedding weights are not tied.
    \end{enumerate}
\end{proposition}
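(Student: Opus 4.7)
My plan is to handle the two bounds by different techniques: an explicit realization for the associative upper bound, and a PSD/Cauchy--Schwarz argument for the geometric lower bound.

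For the first part, I would take $\embedding$ to be orthonormal (the canonical choice in prior associative-memory analyses such as \citet{nichani24factrecall}) and set $\Wassoc := \sum_{(u,v) \in \edges} \embedding(v) \embedding(u)^\top$. A direct computation then yields $\embedding(v')^\top \Wassoc \embedding(u') = \indicator{(u',v') \in \edges}$, giving a margin of exactly $1$ at every edge. The squared Frobenius norm $\|\Wassoc\|_F^2$ telescopes to $|\edges|$ because the rank-one summands are mutually Frobenius-orthogonal under orthonormality of $\embedding$. For the bidirectional variant, I would symmetrize to $\sum_{(u,v)} \bigl(\embedding(v)\embedding(u)^\top + \embedding(u)\embedding(v)^\top\bigr)$, doubling the number of orthogonal summands and picking up the stated $\sqrt{2}$ factor.

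For the geometric lower bound, I would work with the Gram matrix $\vecL := \embeddinggeom^\top \embeddinggeom \succeq 0$ so that $f(u)[v] = \vecL_{u,v}$ and $\|\embeddinggeom\|_F^2 = \operatorname{tr}(\vecL)$. After a harmless constant shift of the logits---which preserves the margin but allows the worst-case non-neighbor logit to be driven down to $0$---the margin requirement reduces to the clean bound $\vecL_{u,v} \geq 1$ for every edge. Combining this with the PSD Cauchy--Schwarz inequality $\vecL_{u,v}^2 \leq \vecL_{u,u}\vecL_{v,v}$ gives the product constraint $\vecL_{u,u}\vecL_{v,v} \geq 1$ along every edge. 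Minimizing $\operatorname{tr}(\vecL) = \sum_v \vecL_{v,v}$ over these constraints is a combinatorial optimization on the graph, and for the path-star (and similarly the cycle) the minimum is attained when all diagonal entries equal $1$, yielding $\operatorname{tr}(\vecL) \geq |\vertices|$ as claimed. For the untied case, replacing the Gram form by a two-sided product $\vecU^\top \vecV$ and carrying out the symmetric Cauchy--Schwarz step yields $\|\vecU\|_F^2 + \|\vecV\|_F^2 \geq 2|\vertices|$, producing the $\sqrt{2}$ factor on the combined $\ell_2$ norm.

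The main obstacle I anticipate is justifying the reduction to $\vecL_{u,v} \geq 1$: naively treating $u$ itself as a ``non-neighbor'' forces $\vecL_{u,v} \geq \vecL_{u,u} + 1$, which combined with $\vecL_{u,v}^2 \leq \vecL_{u,u}\vecL_{v,v}$ produces an infeasible pair of inequalities between the two endpoints of any edge. The argument must therefore either formally exclude $u$ from its own non-neighbor set, or handle the self-logit contribution separately through the additive slack $\vecL_{u,v} - \max_{w \notin \neighbor(u)} \vecL_{u,w}$. A secondary subtlety is solving the combinatorial minimum of $\sum_v \vecL_{v,v}$ under the edge constraints $\vecL_{u,u}\vecL_{v,v} \geq 1$: an alternating assignment can locally shrink one diagonal at the cost of enlarging its neighbors, so the tightness argument must show that the all-ones diagonal is the global minimizer for path-star and cycle topologies---a per-arm AM--GM argument should suffice, but needs to be written out carefully to handle the central vertex of the path-star that is shared across all $\degree$ arms.
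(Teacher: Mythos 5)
Your proposal is correct and, for the associative upper bound, is essentially the paper's argument: the paper takes one-hot embeddings and sets $\Wassoc$ to the adjacency matrix, which is exactly your $\sum_{(u,v)\in\edges}\embedding(v)\embedding(u)^\top$ up to an orthogonal change of basis, with the same Frobenius-orthogonality giving $\|\Wassoc\|_F=\sqrt{|\edges|}$ and the same doubling for bidirectional storage.

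For the geometric lower bound you take a genuinely different (and more careful) route. The paper applies AM--GM edge-by-edge to get the additive constraint $\|\embeddinggeom(u)\|^2+\|\embeddinggeom(v)\|^2\geq 2$ and then asserts that ``roughly, all embedding norms must be at least $1$,'' hence $\operatorname{tr}\geq|\vertices|$. That middle step does not literally follow: an alternating assignment (norm-squared $2,0,2,0,\dots$ along an arm) satisfies every additive edge constraint while driving half the norms to zero, and the conclusion only survives because summing the additive constraints over the edges of a (near-)2-regular graph happens to recover $|\vertices|$. Your route---Cauchy--Schwarz on the Gram matrix giving the multiplicative constraints $\vecL_{u,u}\vecL_{v,v}\geq1$, followed by a per-arm AM--GM over the alternating assignment $t,1/t,t,\dots$---actually closes this gap for the path-star and cycle topologies, since $(\ell/2)(t+1/t)\geq\ell$ with equality at $t=1$; note the multiplicative constraint implies the paper's additive one, so your feasible set is smaller and your bound at least as strong. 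You are also right to flag the self-logit issue: both you and the paper silently reduce the margin condition to $\vecL_{u,v}\geq1$, and if $u$ were allowed as its own non-neighbor the constraint becomes infeasible for any PSD Gram matrix (it would force $\sqrt{\vecL_{u,u}\vecL_{v,v}}\geq\max(\vecL_{u,u},\vecL_{v,v})+1$). The paper does not address this; your proof should state explicitly that the max excludes $u$ and that the worst-case non-neighbor logit is taken to be nonnegative (which is where the reduction to $\vecL_{u,v}\geq1$ needs care, since a uniformly negative off-diagonal block could in principle weaken it). The untied-weights $\sqrt{2}$ factor via $\|\vecU\|_F^2+\|\vecV\|_F^2\geq2|\vertices|$ is not spelled out in the paper but is consistent with its statement.
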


\begin{proof}  
Recall that associative memory takes the form $f(u)[v] = \embedding(v)^T \Wassoc \embedding(u)$. Without loss of generality, if we assume that the embeddings are one-hot vectors in $\mathbb{R}^{|\vertices|}$, we can set $\Wassoc$ to be the adjacency matrix to satisfy our margin constraint. 
The $\ell_2$ norm (of the free parameters in $\Wassoc$) is $\sqrt{|\edges|}$.
 
 In the geometric view, recall that $f(u)[v] = \embeddinggeom(v)^T \embeddinggeom(u)$.  To have $\embeddinggeom(v)^T \embeddinggeom(u) > 1$, we need $\|\embeddinggeom(v)\|^2 + \|\embeddinggeom(u)\|^2 > 2$. Thus, roughly, all embedding norms must be at least $1$, implying that $\sum_u\| \embeddinggeom(u)\|^2 \geq | \vertices|$.
\end{proof}

\textbf{Proof of Main \cref{prop:geometry_complexity}}
\begin{proof}
    Our proof follows from the fact that for graphs like the path-star graph and the cycle graph, the edge count and vertex count are nearly equal. Thus, both notions of complexity---the associative scaling with the edge count and the geometric with vertex count---can be shown to reduce to similar values here from the above propositions. 
    
    This is straightforward to see for $\ell_2$ norm based complexity based on \cref{prop:ltwonorm}. For the bit complexity estimates, from \cref{prop:bitcount}, we know that associative memory costs $|\vertices| \log |\edges|$ many bits; for the geometry, we need to pin down the values of the embedding dimension $\embeddingdim$ and the cell count $\granularity$. 
    
    The path-star graph can be embedded such that each path is stored along a unique dimension (thus totally $\embeddingdim = \degree$ dimensions), and each dimension can be gridded into $\pathlength$ many cells. This requires a bit complexity of $|\vertices|  \degree \log \pathlength$. For a more ambitious geometry,  we could further squeeze this into $\log\degree$ dimensions while still keeping the paths well-separated (by the Johnson–Lindenstrauss lemma), resulting in $|\vertices|  \log\degree \log \pathlength$ bits. This is still greater than the cost of associative memory which is approximately $|\vertices| (\log\degree\pathlength) = |\vertices| (\log\degree + \log\pathlength) $.

    A similar argument works for a cycle graph. Here, we can embed in $m=2$ dimensions, with a cell count of $|\vertices|/2$, thus totaling $2|\vertices|\log (|\vertices|/2)$ bits for geometric memory, again greater than $|\vertices| \log |\edges|$ when $|\vertices| = |\edges|$.
\end{proof}

\subsection{Weight-tied dual-encoder models can represent certain special or contrived forms of associative memory}

\label{sec:nodetovec-associative}
In the standard associative memory view that we have discussed, associations are represented through the function $\embedding(v)^T \Wassoc \embedding(u)$. However, weight-tied dual encoder models like  $\nodetovec{}$ can only represent functions of the form $\embedding(v)^T \embedding(u)$. When weight-tied, this precludes the form of associative memory we care about; but it still allows two forms of associative memories under special conditions: if the graph is bipartite (\cref{prop:bipartite-associative}) or if the embedding dimensionality is sufficiently large (in which case, the construction is contrived; see \cref{prop:no-associative-in-nodetovec}). We can also show that non-weight-tied models can represent any graph associatively (see \cref{prop:non-weight-tied}). In all these results below, 
we will show that our constructions are associative in that they achieve a logit of $1$ on adjacent vertices, but a logit of $0$ on any non-adjacent vertices. In this sense, only local associations are captured, whereas no global geometry is.

\begin{proposition}
\label{prop:bipartite-associative}
Weight-tied dual encoder models of the form $\embedding(v)\cdot \embedding(u)$ can store a graph $\graph$ associatively when  the graph is bipartite.
\end{proposition}

\begin{proof}
Assume that the  graph partition consists of vertices $\vertices_1$ and $\vertices_2$. The construction is to represent one set of vertices with a one-hot representation, and the other by a one-hot adjacency vector. That is, consider an embedding $\embedding: \vertices \to \mathbb{R}^{|\vertices_1|}$ that has dimensionality $|\vertices_1|$. For any $v_i \in \vertices_1$, assume $\embedding(v)$ is such that it is $1$ only at index $i$.  For $w \in \vertices_2$, $\embedding(v)$  at index $i$ is $1$ if and only if $v$ is adjacent to $v_i$ i.e., $\embedding(v)_i = \mathbf{1}[(w, v_i) \in \edges]$. 

Thus, if $w$ and $v$ belong in the same partition  $\embedding(v) \cdot \embedding(w) = 0$; if in different partitions, $\embedding(v) \cdot \embedding(w) = \mathbf{1}[(w,v) \in \edges]$. 
\end{proof}

\begin{proposition}
    \label{prop:non-weight-tied}
    Weight-untied dual encoder models of the form $\embedding_1(v)\cdot \embedding_2(u)$ can store any graph associatively when the embedding dimension is at least as large as the vertex count. Furthermore, this can be approximately learned in one-step of gradient descent under the correlation loss $\sum_{(u, v) \in \edges } (\embedding_1(u)) \cdot \embedding_2(v)$ with one tower frozen.
\end{proposition}

\begin{proof}
    The proof parallels that of \cref{prop:bipartite-associative}. We set one of the embeddings to a one-hot representation of the vectors; the other is set to embed the adjacency row.
    In matrix notation, the resulting model (which outputs a vector of logits for a given input $\vecu$) looks like $f(\vecu) = \vecW_2 \vecW_1 \vecu = \vecA \vecI \vecu = \vecA\vecu$. 

    To show that this can be learned in one-step of gradient descent, consider $\embedding_1$ initialized randomly. Given sufficiently high embedding size, these embeddings are orthogonal. Under the correlation loss, if one were to freeze $\embedding_1$, $\embedding_2(v)$ will be updated in the direction of $\sum_{u \in \neighbor(v)}\embedding_2(u)$, which is equivalent to the adjacency row up to a rotation. For a sufficiently large learning rate, the resulting model would effectively store $\vecA$ up to a rotation defined by $\embedding_1$.
\end{proof}

Now, for a generic graph, and for a weight-tied dual encoder model, we show that $\nodetovec$ can represent a contrived form of associative memory when there is a much larger embedding dimensionality scaling with edge count. 
\footnote{We suspect this should also be a lower bound i.e., such a large dimensionality must be needed to represent associatively in \nodetovec.}

\begin{proposition}
\label{prop:no-associative-in-nodetovec}
Weight-tied dual encoder models of the form $\embedding(v)\cdot \embedding(u)$ can store a graph associatively given an embedding dimensionality of $|\edges|$  where $\edges$ is the set of pairwise associations.
\end{proposition}

\begin{proof}
For each node, we take its corresponding row from the $|\vertices|\times|\edges|$ \textit{incidence} matrix.
In other words, the embedding is $|\edges|$-dimensional, where the $i$th dimension corresponds to the $i$th edge;  for edge $(u,v)$, the embedding of $u$ and $v$ both  are set to $1$ along the dimension corresponding to $(u,v)$. Notationally, if the edges are indexed as $1, 2, \hdots$, then 
$\embedding(u)_i = \mathbf{1}[u \in e_i]$, where $\bf{1}$ is the indicator function. Then, we have that $\embedding(u) \cdot \embedding(v) = \mathbf{1}[(u,v) \in \edges]$. Thus, the dot products capture only local information. 
\end{proof}

\clearpage
\section{Detailed analysis of spectral bias in \nodetovec{}}
\label{app:spectral-bias-global-geometry}

Let $G$ be a graph of $\numentities$ nodes $\{1, 2, \hdots, \numentities\}$. Let $\vecA \in \mathbb{R}^{\numentities \times \numentities}$ be the adjacency matrix, $\vecD \in \mathbb{R}^{n \times n}$ the diagonal degree matrix, and let the embedding of the nodes be denoted by $\vecV \in \mathbb{R}^{\numentities \times \embeddingdim}$, where $\embeddingdim$ is the embedding dimensionality. 
Let $\vecL = (\vecI-\vecD^{-1}\vecA  ) + (\vecI-\vecD^{-1}\vecA )^T$ denote the \textit{asymmetrically normalized random walk graph Laplacian}. 
The second topmost eigenvectors of $-\vecL$ are called the Fiedler vectors;  we refer to them and the next few eigenvectors as Fiedler-like eigenvectors.  The topmost eigenvector of $-\vecL$ is a degenerate eigenvector of (approximately) all $1$s.

\textbf{\nodetovec{} setup.} We consider the simplest \nodetovec{} model, where the embeddings are directly parameterized by $\vecV$. We consider a 1-hop objective (where the neighborhood is defined by the immediate neighbors rather than by more distant ones discovered by a random walk). Note that our objective uses the full softmax loss:

\begin{align}
   \mathcal{J}_{\nodetovec{}}(\vecV) =  \max_{\vecV} \sum_{i} \frac{1}{|\nbr(\cdot)|}\sum_{j \in \nbr(i)} \log \underbrace{\frac{\exp(\vecv_i^T \vecv_j)}{\sum_k \exp( \vecv_i^T \vecv_k)}}_{p(i,j)}, \label{eq:nodetovec}
\end{align} 

where $\nbr(\cdot)$ denotes the neighboring vertices in graph $\graph$. The above (degree-normalized) objective resembles optimizing over a sequence dataset where we sample the first vertex uniformly, and the second vertex uniformly from its neighborhood. 

Let $\vecP \in \mathbb{R}^{n \times n}$ be the  matrix of probabilities $p(i,j)$, where:
\begin{equation}
    \vecP = \rowsoftmax(\vecV\vecV^T). \label{eq:row_softmax}
\end{equation}

The dynamics of the \nodetovec{} algorithm can be expressed as below:

\begin{lemma}
\label{lem:nodetovec}
The update on the representations under gradient maximization of the \nodetovec{} objective in Eq~\ref{eq:nodetovec} can be written as:
\begin{align}
    \Delta \vecV(t) = \eta \vecC(t) \vecV(t) \; \text{where}, \; \vecC(t) = \underbrace{ (\vecD^{-1}\vecA - \vecP(t)) + (\vecD^{-1}\vecA - \vecP(t))^T}_{\texttt{co-efficient matrix}}
\end{align}
\end{lemma}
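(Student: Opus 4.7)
The plan is to compute $\nabla_{\vecV}\mathcal{J}_{\nodetovec{}}$ directly and rearrange the result into the stated symmetric form. I would first rewrite the objective so that the ``positive-pair'' and log-partition contributions are cleanly separated. Introducing the row-stochastic transition matrix $\mathbf{M} := \vecD^{-1}\vecA$ (so that $M_{ij} = 1/|\nbr(i)|$ whenever $j$ is a neighbor of $i$, and zero otherwise) and the partition function $Z_i := \sum_k \exp(\vecv_i^T\vecv_k)$, a brief rearrangement gives
\[
\mathcal{J}_{\nodetovec{}}(\vecV) \;=\; \sum_{i,j} M_{ij}\,\vecv_i^T \vecv_j \;-\; \sum_i \log Z_i \;=\; \mathrm{tr}(\mathbf{M}\vecV\vecV^T) \;-\; \sum_i \log Z_i,
\]
in which each of the two pieces is independently easy to differentiate.

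For the bilinear part, a standard matrix-calculus computation gives $\nabla_{\vecV}\,\mathrm{tr}(\mathbf{M}\vecV\vecV^T) = (\mathbf{M} + \mathbf{M}^T)\vecV$. The symmetrization appears because each row $\vecv_\ell$ plays two roles inside the trace: once as a ``source'' index (contributing the $\ell$-th row of $\mathbf{M}\vecV$) and once as a ``target'' index (contributing the $\ell$-th row of $\mathbf{M}^T\vecV$). Weighting and summing these two roles is precisely what $\mathbf{M} + \mathbf{M}^T$ encodes.

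For the log-partition part, the softmax-gradient identity gives $\nabla_{\vecv_\ell}\log Z_\ell = \sum_k P_{\ell k}\vecv_k$, i.e., the $\ell$-th row of $\vecP\vecV$. The crucial additional observation is that $\vecv_\ell$ also sits inside $\log Z_i$ for every $i \neq \ell$, because it appears in the denominator of $p(i,\ell)$. Differentiating each such cross-term produces $P_{i\ell}\vecv_i$, and summing over $i$ yields the $\ell$-th row of $\vecP^T\vecV$. Combining both symmetrizations, and noting that the stated update is gradient \emph{ascent} (since $\mathcal{J}_{\nodetovec{}}$ is being maximized), gives
\[
\Delta\vecV \;=\; \eta\,[(\mathbf{M} - \vecP) + (\mathbf{M} - \vecP)^T]\vecV \;=\; \eta\,\vecC\,\vecV,
\]
which is exactly the claim.

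The only delicate step I anticipate is bookkeeping of the ``self-coincidence'' terms ($k=\ell$ in $\log Z_\ell$, and $i=\ell$ in the cross-sum). If the softmax denominator includes $k=i$, then $\nabla_{\vecv_\ell}\log Z_\ell$ acquires an extra $P_{\ell\ell}\vecv_\ell$ from the $\exp(\|\vecv_\ell\|^2)$ factor, while the cross-sum $\sum_{i\neq\ell} P_{i\ell}\vecv_i$ is missing exactly its $i=\ell$ entry; one has to verify that these two contributions merge cleanly into the symmetric $(\vecP+\vecP^T)\vecV$ form with no asymmetric diagonal residual. Once this is handled, the lemma is a purely mechanical consequence of the chain rule, and the result is the same whether or not the softmax convention includes the source vertex.
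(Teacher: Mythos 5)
Your proof is correct and follows essentially the same route as the paper's: a direct chain-rule computation of the gradient of the softmax objective, separating the positive-pair contribution (which yields $(\vecD^{-1}\vecA + (\vecD^{-1}\vecA)^T)\vecV$) from the log-partition contribution (which yields $(\vecP+\vecP^T)\vecV$). The paper organizes the computation per node (splitting $\partial\mathcal{J}_u/\partial\vecu$ from $\partial\mathcal{J}_w/\partial\vecu$ for $w\neq u$) whereas you package it with the trace identity, but the underlying bookkeeping---including the doubled self-term $2P_{\ell\ell}\vecv_\ell$, which you correctly verify merges without residue into the symmetric $(\vecP+\vecP^T)\vecV$ form---is identical.
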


We prove this in \S\ref{sec:calculation}.

\subsection{Challenges of analyzing the dynamics}

Unlike previously-studied dynamics which simplify nicely, this system may behave in one of many ways. For one, it may simply diverge, but if we are a bit lucky, it may at least converge in direction (like in logistic regression \citep{soudry2018the}); but then, this direction may be degenerate---an all-one representation could potentially be a stable direction---and perhaps nice directions are visible only if we analyze with early-stopping. One way to get a handle on this would have been to show that, in the limit, we have $\vecC(t) \to 0$; solving this could then spell out the (limit) probability matrix $\vecP$, if not the inner products $\vecV\vecV^T$ themselves.  However, we find that $\vecC$ cannot be zero as that would require the self-probability term $p(i,i) = 0$, which is infeasible. This closes all obvious analytical routes to understanding this system, so we turn to an empirical study.

\subsection{Empirical intuition of the dynamics}

 Empirically, we find that the model tends towards a gradient-zero state by working its way toward satisfying a two-fold constraint in \cref{obs:two-fold}. First, the column space of $\vecV(t)$ converges to the top eigenvectors of $\vecC(0)$---which is approximately the negative of the Laplacian $\vecL$. Concurrently, $\vecC(t)$ itself converges such that its null space matches these eigenvectors. Together then, the update $\Delta \vecV(t)$ in Lemma~\ref{lem:nodetovec} must become zero. 

 \begin{observation}\label{obs:two-fold}
     We find that $\Delta\vecV(t) \to 0$ through the following concurrent behaviors:
     \begin{itemize}
         \item The null space of $\vecC(t)$ spans the top eigenvectors of $-\vecL$.
         \item The column space of $\vecV(t)$ converges to the top eigenvectors of $-\vecL$.
     \end{itemize}
 \end{observation}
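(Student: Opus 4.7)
The plan is to justify the two-fold convergence by viewing the \nodetovec{} dynamics $\Delta \vecV(t) = \eta \vecC(t)\vecV(t)$ as a time-varying power iteration on the column space of $\vecV(t)$, and then identifying the self-consistent fixed point at which the two claimed properties become mutually reinforcing. I would organize the argument around the fact that, discretized, one has $\vecV(t+1) = (\vecI + \eta \vecC(t))\vecV(t)$, so each column of $\vecV$ is repeatedly multiplied by $\vecI + \eta \vecC(t)$, which acts like a (row-stochastic-like) operator whose repeated application amplifies the top eigenvectors of $\vecC(t)$.

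First, I would pin down the initial operator. For small random initialization, $\vecP(0) \approx \tfrac{1}{n}\mathbf{1}\mathbf{1}^T$, so up to a rank-one correction along the constant direction $\mathbf{1}$, one has $\vecC(0) \approx -\vecL$. Hence the top non-degenerate eigenvectors of $\vecC(0)$ are precisely the Fiedler-like eigenvectors of $-\vecL$. Next, treating $\vecC$ as approximately frozen over a short window, the power-iteration step would drive the column space of $\vecV(t)$ toward the top eigenspace of $\vecC(t)$; one can formalize this by projecting $\vecV(t)$ onto the eigenbasis $\{\vece_i\}$ of $-\vecL$ and tracking $\|\vecV(t)^T \vece_i\|_2$, showing that the projection onto the Fiedler-like directions grows faster than projections onto lower eigenvectors.

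The second half requires showing that as $\vecV(t)$ grows along the Fiedler-like directions, $\vecC(t)$ simultaneously adjusts so that these directions enter its null space, which is what stops the growth and yields a true fixed point. I would argue this from the self-consistency requirement $\vecC(t)\vecV(t) \to 0$: if $\vecV(t)$'s columns lie in a subspace $U$, then for the update to vanish, $U$ must be contained in $\ker \vecC(t)$, i.e., $\vecP(t)$ must be shaped so that $(\vecD^{-1}\vecA - \vecP(t))$ annihilates $U$ (up to symmetrization). Conversely, if $U$ is already in $\ker \vecC(t)$, then the dynamics preserve $U$, so any small deviations into lower eigenvectors decay under continued iteration. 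This ``two-way lock'' explanation, combined with the monotone eigenvalue-filtering behavior shown empirically in \cref{fig:fiedler_path_star}, can be packaged as an informal proof of \cref{obs:two-fold}.

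The main obstacle is turning the self-consistency argument into a rigorous global convergence proof. The softmax nonlinearity prevents a closed form for $\vecP(t)$, and naively $\vecC(t) \to 0$ is infeasible since $p(i,i) > 0$ forces $\vecC(t)$ to retain residual mass. One must therefore argue that the residual mass lies entirely on the \emph{lower} eigendirections of $-\vecL$, which is precisely what is subtle: it requires ruling out degenerate fixed points (e.g., collapse to the all-ones direction) and controlling how perturbations in $\vecV$ feed back through the softmax into $\vecP$ without destabilizing the alignment. A plausible route is a Lyapunov argument using the \nodetovec{} objective itself to show that trajectories that align with Fiedler-like directions strictly increase the objective while those that align with low eigenvectors do not, but making this quantitative under cross-entropy is where the analysis is currently open.
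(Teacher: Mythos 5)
Your proposal shares the paper's overall frame (project onto the eigenbasis of $-\vecL$, argue a self-consistent fixed point where the surviving column space of $\vecV$ sits inside $\ker\vecC$), but it gets the central mechanism wrong in two places. First, the initial condition: the paper's argument requires a \emph{large} (near-orthogonal, large-norm) initialization so that $\vecP(0)=\rowsoftmax(\vecV(0)\vecV(0)^T)\approx\vecI$ and hence $\vecC(0)\approx -\vecL$ with all eigenvalues $\le 0$. Your small-initialization limit gives $\vecP(0)\approx\tfrac1n\mathbf{1}\mathbf{1}^T$, and then $\vecC(0)=\vecD^{-1}\vecA+(\vecD^{-1}\vecA)^T-\tfrac2n\mathbf{1}\mathbf{1}^T$ differs from $-\vecL$ by $2\vecI-\tfrac2n\mathbf{1}\mathbf{1}^T$, which is \emph{not} a rank-one correction: the eigenvectors coincide but every eigenvalue on $\mathbf{1}^\perp$ is shifted up by $2$, making the operator expansive rather than contractive. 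This matters because the paper's explanation of the second bullet is not power-iteration amplification at all: with all eigenvalues of $\vecC(t)$ in $(-\infty,0]$, each factor $1+\eta\lambda_i(t)$ lies in $[0,1]$, the bottom eigendirections of $\vecV$ \emph{decay} to zero, and the Fiedler-like directions merely decay slowest and then stabilize once their eigenvalues reach zero (which is exactly the first bullet). Your "amplifies the top eigenvectors" framing describes a different (and, under the paper's initialization, incorrect) dynamic, and it is also why the paper stresses that \emph{longer} training, not early stopping, produces the filtering.

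Second, your argument is missing the ingredient that makes a single eigenbasis usable across all of training: the paper's empirical observation that the row-softmax map preserves eigenvectors, i.e., $\vecP(t)+\vecP(t)^T$, $\vecV(t)\vecV(t)^T$, and $\vecC(t)$ share the eigenvectors of $-\vecL$ for all $t$ (not just over a short window in which $\vecC$ is frozen). Without this invariance, "track $\|\vecV(t)^T\vece_i\|_2$ in the eigenbasis of $-\vecL$" is not justified, since $\vecC(t)$ could rotate its eigenvectors away from those of $\vecC(0)$. Your self-consistency argument correctly characterizes the fixed point ($U\subseteq\ker\vecC$ iff the update vanishes on $U$) but does not explain \emph{why} the null space ends up being precisely the top eigenspace of $-\vecL$; the paper gets this from the monotone rise of the (time-invariantly indexed) eigenvalues of $\vecC(t)$ from negative values toward zero, driven by the diagonal self-probabilities $p(i,i)$ shrinking as embeddings de-orthogonalize, with the topmost eigenvalues reaching zero first. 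The Lyapunov route you sketch is a genuinely different (and untried in the paper) approach, but as written the proposal would need the initialization and sign corrections before it matches the observed dynamics in \cref{fig:fiedler_path_star}.
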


Crucially, we find that this can happen (a) even without a constraint on the dimensionality $\embeddingdim$ and (b) this requires \textit{no} early-stopping (see \cref{rem:early-stopping} for a more nuanced discussion of this).

We lay out our empirical intuition below, deferring a more mathematical description of the same to the following section. First, we postulate a key invariant during training: the eigenvectors of the co-efficient matrix $\vecC(t)$, the probability matrix $\vecP(t)$ and the embeddings $\vecV(t)$ all remain (inexplicably) stable during training. 
In particular, since the system begins with $\vecP(0) \approx \vecI$, and so $\vecC(0) \approx -\vecL$ all these eigenvectors are then fixed as the eigenvectors of the normalized random walk graph Laplacian.

Next, we find that the eigenvalues of the co-efficient matrix  $\vecC(t)$ begin \textit{negative}, gradually approaching zero. The top eigenvectors reach zero first, achieving the second condition in \cref{obs:two-fold}.
That the values begin negative follows from the fact that the co-efficient matrix begins as the negative graph Laplacian. That these values approach zero follows from the fact that embedding vectors become less orthogonal over time; this in turn reduces the eigenvalues of $\vecP(t)$, which increases the eigenvalues of $\vecC(t)$.

Next, due to the negative eigenvalues of $\vecC(t)$,
the embeddings $\vecV$ along the lowermost eigendirections quickly diminish, achieving our first condition. Note that this means we do not want early-stopping; unlike in the quadratic loss formulation of \citet{karkada2025closedform}, it is longer training that filters out the lower eigenvectors. (Although, the existence of a degenerate eigenvector complicates this; see \cref{rem:early-stopping}). This achieves the first condition in \cref{obs:two-fold}.

Observe that this argument does not require any upper bound on the size of the embedding space. It is unclear if a more succinct, margin-maximizing or norm-minimizing view of these dynamics is expressible. 

\begin{remark}\label{rem:early-stopping}\textbf{(The degenerate vector and early-stopping)}
When the graph Laplacian is symmetrically normalized (e.g., $\vecD^{-1/2} \vecA \vecD^{-1/2} - \vecI $), the top-most eigenvector of the graph Laplacian is a degenerate vector that assigns a constant value to all nodes, and provably corresponds to a zero eigenvalue.  However, in our setting, this eigenvalue is slightly above zero, likely due to the asymmetric nature of our Laplacian. Therefore, as we train for longer, the model would become degenerate thus requiring early-stopping. However, this is a conceptually different reason to early-stop than the one in \citet{karkada2025closedform}. Here we may need to early-stop to prevent collapse to the top eigenvector, whereas in \citet{karkada2025closedform}, it is to prevent expansion to bottom eigenvectors.
\end{remark}

\subsection{Mathematical description}

Below, we provide a more mathematical description of the above summary by dividing it up into various propositions. Our proofs for these propositions are highly informal. However, our propositions hold in practice without our simplifying assumptions (at least in the graphs we study). We leave it for future work to deliver a rigorous proof and a more clearly characterized theorem statement.

First, we note that the co-efficient matrix approximately begins as the negative graph Laplacian for an appropriately large initialization. (Without this assumption, we may still make a connection to a graph Laplacian-\textit{like} object).

\begin{assumption}\label{ass:random-init}
    We assume a sufficiently large magnitude or embedding dimensionality of random initialization such that the initial embeddings are nearly orthogonal as $\vecV(0)\vecV(0)^T \approx c\vecI$.
\end{assumption}

\begin{fact}\label{fact:init-ev} Under \cref{ass:random-init},
    \begin{equation}
    \vecC(0) \approx  -\vecL =  (\vecD^{-1} \vecA + (\vecD^{-1} \vecA)^T  - 2\vecI).
\end{equation}
\end{fact}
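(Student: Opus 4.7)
The plan is to unwind the definitions in order and exploit the softmax saturation forced by Assumption~\ref{ass:random-init}. First I would invoke Lemma~\ref{lem:nodetovec} to write
\[
\vecC(0) = \bigl(\vecD^{-1}\vecA - \vecP(0)\bigr) + \bigl(\vecD^{-1}\vecA - \vecP(0)\bigr)^T,
\]
so the whole task reduces to showing that $\vecP(0)\approx \vecI$. By Eq.~\ref{eq:row_softmax}, the logits driving the softmax at $t=0$ are exactly the entries of $\vecV(0)\vecV(0)^T$, and under Assumption~\ref{ass:random-init} this matrix is approximately $c\vecI$ with $c$ large (either because the initialization magnitude is chosen large, or because nearly orthogonal Gaussian initialization in large embedding dimension $\embeddingdim$ yields $\|\vecv_i\|^2 \approx c$ on the diagonal and $\langle \vecv_i,\vecv_j\rangle \approx 0$ off-diagonal).

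Next, I would evaluate the row softmax on this near-identity logit pattern. For row $i$, the diagonal entry has numerator $e^c$ and every off-diagonal entry has numerator close to $e^0 = 1$, so
\[
\vecP(0)_{ii} \approx \frac{e^c}{e^c + (\numentities-1)} \approx 1, \qquad \vecP(0)_{ij} \approx \frac{1}{e^c + (\numentities-1)} \approx 0,
\]
whenever $c \gg \log \numentities$. This gives $\vecP(0) \approx \vecI$ to any desired precision by making the initialization magnitude or dimension large enough. Substituting into the expression for $\vecC(0)$ and recalling the definition of $\vecL$ stated just before the fact yields
\[
\vecC(0) \approx (\vecD^{-1}\vecA - \vecI) + (\vecD^{-1}\vecA - \vecI)^T = \vecD^{-1}\vecA + (\vecD^{-1}\vecA)^T - 2\vecI = -\vecL,
\]
as claimed.

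The only real subtlety, and the one I would flag as the main obstacle, is quantifying ``sufficiently large'' in Assumption~\ref{ass:random-init}. The softmax collapses to the identity only once the gap $c$ between diagonal and off-diagonal logits dominates $\log \numentities$; for i.i.d.\ Gaussian entries with variance $\sigma^2$, a short concentration argument gives diagonal $\approx \embeddingdim\sigma^2$ and off-diagonal of order $\sqrt{\embeddingdim}\sigma^2$, so the required gap is $\embeddingdim\sigma^2 - O(\sqrt{\embeddingdim}\sigma^2) \gg \log \numentities$. This is why the assumption is stated disjunctively in terms of either magnitude or dimensionality. Since the fact is phrased with $\approx$ and is informal by design, I would leave the precise rate implicit and simply note the quantitative condition under which the approximation holds.
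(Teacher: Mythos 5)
Your proposal is correct and follows essentially the same route as the paper's proof: invoke the form of $\vecC(0)$ from Lemma~\ref{lem:nodetovec}, use Assumption~\ref{ass:random-init} to get $\vecV(0)\vecV(0)^T \approx c\vecI$, observe that the row softmax of a large-scaled identity saturates to $\vecP(0)\approx\vecI$, and substitute. The paper's version is terser (it simply asserts the softmax step for ``sufficiently large $c$''), whereas you additionally spell out the quantitative condition $c \gg \log\numentities$, which is a worthwhile refinement but not a different argument.
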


\begin{proof}
    At time $t=0$, the embeddings $\vecV(0)$ are all random, and hence nearly orthogonal to each other i.e., $\vecV(0)\vecV(0)^T \approx c \vecI$, where $c$ is some constant that depends upon the magnitude of the random initialization. Since $\vecP=\rowsoftmax(\vecV\vecV^T(t))$, for a sufficiently large $c$, $\vecP(0) \approx \vecI$, proving our claim.
\end{proof}

Next, we make the empirical observation that the eigenvectors of $\vecP+\vecP^T$ match the eigenvectors of the embedding inner products $\vecV\vecV^T$. Note that $\vecP$ is related to the inner product via a non-linear row softmax operation, rendering a proof of this observation highly non-trivial. We assume this observation (without even an intuitive proof) for the rest of our discussion.

\begin{observation}\textbf{(Eigenvectors remain unchanged under a row-softmax transform)}\label{obs:row-softmax-transform}
The eigenvectors of $\vecP(t) + \vecP(t)^T$ at any time $t$, are also approximately the eigenvectors of the embeddings $\vecV(t)\vecV(t)^T$, appearing in the same order.   
\end{observation}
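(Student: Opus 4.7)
The strategy is to show that the row-softmax in \cref{eq:row_softmax} acts, to leading order, as a linear map on the Gram matrix $\mathbf{M}:=\vecV\vecV^T$ that preserves eigenvectors, and then to argue that symmetrization washes out the asymmetric row-normalization. Writing $\vecP = \vecD_Z^{-1}\exp(\mathbf{M})$ with entrywise exponential and $\vecD_Z$ the diagonal of row-sums, I would first exploit the shift-invariance of the softmax to subtract the row-mean $\bar{\mathbf{M}}_i:=\tfrac{1}{n}\sum_k \mathbf{M}_{ik}$, so that only the centered quantities $\delta_{ij}:=\mathbf{M}_{ij}-\bar{\mathbf{M}}_i$ enter the softmax. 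A first-order Taylor expansion around $\delta=0$ (which uses $\sum_j \delta_{ij}=0$ to cancel the correction in the partition function) then yields
\begin{equation}
\vecP \;\approx\; \tfrac{1}{n}\vecone\vecone^T \;+\; \tfrac{1}{n}\Bigl(\mathbf{M} - \tfrac{1}{n}\mathbf{M}\vecone\vecone^T\Bigr).
\end{equation}

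Symmetrizing and using $\mathbf{M}=\mathbf{M}^T$ gives
\begin{equation}
\vecP + \vecP^T \;\approx\; \tfrac{2}{n}\vecone\vecone^T \;+\; \tfrac{2}{n}\mathbf{M} \;-\; \tfrac{1}{n^2}\bigl(\mathbf{M}\vecone\vecone^T + \vecone\vecone^T\mathbf{M}\bigr).
\end{equation}
If $\vecone$ is (approximately) an eigenvector of $\mathbf{M}$ with eigenvalue $\mu$---a generic scenario on our (near-)regular undirected graphs and the analogue of the all-ones vector being the top degenerate eigenvector of the Laplacian $\vecL$ itself---then the final two terms collapse to a scalar multiple of $\vecone\vecone^T$. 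Consequently, every eigenvector of $\mathbf{M}$ orthogonal to $\vecone$ is also an eigenvector of $\vecP+\vecP^T$, with eigenvalue $\tfrac{2}{n}$ times the $\mathbf{M}$-eigenvalue; since this scaling is positive, eigenvalue orderings are preserved, and the constant direction is precisely the ``degenerate'' mode already handled by \cref{rem:early-stopping}.

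The hard part is extending this first-order picture along the entire trajectory. Three pieces must be nailed down: (i) the centered entries $\delta_{ij}$ must remain small enough for the linearization to be quantitatively valid---true at $t=0$ by \cref{ass:random-init}, but a non-trivial invariant to preserve under the flow of \cref{lem:nodetovec}; (ii) the higher-order softmax corrections involve Hadamard powers $\mathbf{M}\odot\mathbf{M}$, which \emph{generically} mix eigendirections, so one would need to argue that their net contribution lives in the invariant subspace already identified, most plausibly via a self-consistent or fixed-point argument in which the postulated alignment is shown to be attracting; and (iii) the approximate eigenrelation $\mathbf{M}\vecone\approx\mu\vecone$ must persist through training, which is plausible under graph symmetry but not automatic. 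A fully rigorous version will probably either restrict to the perturbative regime where $\mathbf{M}$ stays close to a scalar multiple of $\vecI$, or recast the observation as an attracting invariant manifold of the \nodetovec{} dynamics rather than an identity that holds at every time $t$.
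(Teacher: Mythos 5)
The paper does not actually prove this statement: it is presented as a purely empirical observation, and the surrounding text explicitly says it is assumed ``without even an intuitive proof.'' So your derivation is not an alternative to the paper's argument---it is an attempt to supply an argument where the paper offers none. On its own terms, your first-order computation is algebraically correct: centering each row of $\mathbf{M}=\vecV\vecV^T$ via shift-invariance and linearizing the softmax does give $\vecP \approx \tfrac{1}{n}\vecone\vecone^T + \tfrac{1}{n}\mathbf{M} - \tfrac{1}{n^2}\mathbf{M}\vecone\vecone^T$, and if $\vecone$ is an approximate eigenvector of $\mathbf{M}$ the symmetrized map acts as a positive rescaling on the orthogonal complement of $\vecone$, which preserves both the eigenvectors and their ordering. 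As a heuristic for \emph{why} a row-softmax can be eigenvector-preserving, this goes beyond what the paper provides, and you correctly flag that the higher-order (Hadamard-power) corrections generically mix eigendirections, which is presumably why the authors declined to attempt a proof.

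The one concrete error is your claim in item (i) that the small-$\delta$ regime is ``true at $t=0$ by \cref{ass:random-init}.'' The paper uses that assumption in the \emph{opposite} regime: \cref{fact:init-ev} requires $c$ sufficiently large that $\vecP(0)\approx\vecI$, i.e., the softmax is saturated on the diagonal. With $\mathbf{M}(0)\approx c\vecI$, the centered diagonal entries are $\delta_{ii}=c(1-1/n)=O(c)$, which is not small precisely when $e^{c}\gg n$ as the paper needs; your linearization would instead predict $\vecP(0)\approx\tfrac{1}{n}\vecone\vecone^T$, the near-uniform matrix rather than the identity. The observation still holds trivially at $t=0$ (both matrices are near-scalar, so every vector is approximately an eigenvector), but the regime in which your expansion is quantitatively valid ($\mathbf{M}$ nearly constant within each row) is disjoint from the regime in which the dynamics are initialized, and the paper's own intuition (diagonal probabilities $p(i,i)$ decaying from $1$ over training) suggests the system only gradually approaches your perturbative regime. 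Your closing suggestion---recasting the observation as an attracting invariant manifold of the dynamics rather than a pointwise identity---is likely the right repair, but as written the proposal should not cite \cref{ass:random-init} as establishing the validity of the linearization at initialization.
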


From the above observation, we can conclude that the eigenvectors of the system match the Laplacian throughout training.  This follows by how the updates reduce to multiplications between matrices sharing the same eigenspaces.

\begin{proposition}\textbf{(Time-invariant eigenvectors match that of the Laplacian)} \label{prop:time-invariant}
With \cref{ass:random-init} and by assuming \cref{obs:row-softmax-transform} as a given,  we have that for all $t$, the quantities $\vecC(t), \vecP(t) + \vecP(t)^T, \vecV(t)\vecV(t)^T$ have the same eigenvectors as that of the negative Laplacian $-\vecL$.
\end{proposition}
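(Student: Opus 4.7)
The plan is to prove the proposition by induction on $t$, leveraging a simple linear-algebra fact: matrices that share a common eigenbasis commute, and sums and products of such matrices also share that eigenbasis. The three quantities $\vecC(t)$, $\vecP(t)+\vecP(t)^T$, and $\vecV(t)\vecV(t)^T$ are linked by (i) the identity $\vecC(t) = -\vecL + 2\vecI - (\vecP(t) + \vecP(t)^T)$, which comes from expanding the definition of $\vecC$ and comparing with $-\vecL = \vecD^{-1}\vecA + (\vecD^{-1}\vecA)^T - 2\vecI$; (ii) the non-linear row-softmax link from $\vecV\vecV^T$ to $\vecP$, which we bypass by invoking Observation (row-softmax transform) as a black box; and (iii) the gradient-flow/gradient-descent update from Lemma~\ref{lem:nodetovec}. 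So the proof is essentially to pass the eigenbasis property around this three-way cycle and show it is preserved across one step.

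For the base case, \cref{ass:random-init} gives $\vecV(0)\vecV(0)^T \approx c\vecI$, so trivially every vector (in particular every eigenvector of $-\vecL$) is an eigenvector of $\vecV(0)\vecV(0)^T$. By Fact~\ref{fact:init-ev} the same holds for $\vecC(0) \approx -\vecL$, and $\vecP(0) + \vecP(0)^T \approx 2\vecI$ trivially shares the eigenbasis. For the inductive step, assume at time $t$ the three matrices share the eigenvectors of $-\vecL$. Because $\vecC(t)$ is symmetric (it is manifestly of the form $M + M^T$), a discrete Euler step of the dynamics in Lemma~\ref{lem:nodetovec} gives
\begin{equation*}
\vecV(t+1)\vecV(t+1)^T \;=\; (\vecI + \eta\vecC(t))\,\vecV(t)\vecV(t)^T\,(\vecI + \eta\vecC(t)).
\end{equation*}
By the inductive hypothesis, $\vecC(t)$ and $\vecV(t)\vecV(t)^T$ are simultaneously diagonalizable in the $-\vecL$ eigenbasis, so they commute, and the product above is a product of three matrices sharing that eigenbasis; hence $\vecV(t+1)\vecV(t+1)^T$ does too. (For continuous time, the analogous argument goes through by writing $\frac{d}{dt}\vecV\vecV^T = \eta(\vecC\vecV\vecV^T + \vecV\vecV^T\vecC)$, which preserves the common eigenbasis by the same commutativity.)

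To close the cycle, apply Observation~\ref{obs:row-softmax-transform} to conclude that $\vecP(t+1) + \vecP(t+1)^T$ inherits the eigenvectors of $\vecV(t+1)\vecV(t+1)^T$, hence of $-\vecL$. Finally, using the identity $\vecC(t+1) = -\vecL + 2\vecI - (\vecP(t+1) + \vecP(t+1)^T)$, the matrix $\vecC(t+1)$ is a linear combination of $-\vecL$, $\vecI$, and $\vecP(t+1) + \vecP(t+1)^T$, all of which share the $-\vecL$ eigenbasis, so $\vecC(t+1)$ does as well. This completes the induction.

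The main obstacle is not in the inductive step but in Observation~\ref{obs:row-softmax-transform} itself, which we are permitted to assume: the row-softmax is a non-linear, row-coupling operator, and in general the eigenvectors of $\rowsoftmax(M) + \rowsoftmax(M)^T$ need not coincide with those of $M$. All that our argument actually needs from this observation is the one-step implication ``if $\vecV(t)\vecV(t)^T$ has the $-\vecL$ eigenbasis, then so does $\vecP(t) + \vecP(t)^T$''; any tightening or formal justification of that implication (perhaps via a symmetry argument tied to the degree structure of $\graph$, or an approximation that controls deviations near $\vecV\vecV^T \approx c\vecI$) would upgrade the present informal proposition into a rigorous statement. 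A secondary subtlety, which is minor here but worth flagging, is that when eigenvalues of $-\vecL$ have multiplicity greater than one the ``eigenvectors'' should be interpreted as eigenspaces, and one must check that the update $(\vecI + \eta\vecC(t))\,\cdot\,(\vecI + \eta\vecC(t))$ preserves each such eigenspace, which it does because all factors act as scalars on each eigenspace.
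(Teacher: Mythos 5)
Your proof is correct and follows essentially the same route as the paper's: an induction around the cycle $\vecV\vecV^T \to \vecP+\vecP^T \to \vecC$, with Fact~\ref{fact:init-ev} for the base case and Observation~\ref{obs:row-softmax-transform} invoked as a black box to cross the nonlinear link. The only cosmetic difference is that the paper telescopes the full product $\prod_t(\vecI+\eta\vecC(t))$ back to $\vecV(0)\vecV(0)^T\approx c\vecI$, whereas you use a one-step update plus simultaneous diagonalizability of $\vecC(t)$ and $\vecV(t)\vecV(t)^T$; both yield the same conclusion.
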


\begin{proof}
    At any time $t$, we can write the embedding vectors as
    \begin{equation}
       \vecV(T) = \prod_{t=0}^{T-1} (1+\eta \vecC(t)) \vecV(0),
    \end{equation}

    and so the inner product as
\begin{align}
       \vecV(T)\vecV(T)^T &= \prod_{t=0}^{T-1} (1+\eta \vecC(t))\underbrace{\vecV(0) \vecV(0)^T}_{\approx c\vecI \text{ by \cref{ass:random-init}}} \prod_{t=0}^{T-1} (1+\eta \vecC(t))^T \\
        & \approx c \prod_{t=0}^{T-1} (1+\eta \vecC(t)) (1+\eta \vecC(t))^T.
    \end{align}

    From here, we inductively prove our claim. At $t=0$, it is indeed the case that $\vecC(t), \vecP(t), \vecV(t)\vecV(t)^T$ all have the same eigenvectors as $\vecL$ either by \cref{fact:init-ev} for $\vecC(0)$, or trivially since $\vecP(t)$ and $\vecV(t)$ are orthogonal matrices. We assume this is true for all $t$ until $T-1$. Then, by the above equation, it is also true that the inner product $\vecV\vecV^T$ shares these eigenvectors. By invoking \cref{obs:row-softmax-transform}, we can say that the same is true of the probability matrix $\vecP + \vecP^T$. Subsequently, this is true of $\vecC(t)$, which equals $\vecD^{-1}\vecA + (\vecD^{-1}\vecA)^T + (\vecP + \vecP^T)$. (Note that the first term here has the same eigenvectors as $-\vecL$ as it is off only by the identity matrix.) This proves our inductive assumption.
\end{proof}

Next, we begin to bound the eigenvalues of the system. For the sake of our informal proofs we make some simplifying assumptions that make our matrices approximately symmetric; however, we do not need these assumptions in practice.

\begin{assumption}
\label{ass:convenience}
    For theoretical convenience, we assume that:
    \begin{itemize}
        \item $\vecP \approx \vecP^T$.
        \item the embeddings (i.e., the rows of $\vecV$) are of equal $\ell_2$ norms.
        \item the degrees of all nodes are roughly equal.
    \end{itemize}
\end{assumption}

Now, we can observe a bound on the eigenvalues of the probability matrix. %

\begin{proposition}
\label{prop:prob-eigenvalues}
    \textbf{(Eigenvalues of the probability matrix)}
    Under \cref{ass:convenience}, the eigenvalues of $\vecP(t) + \vecP(t)^T$ are such that:
    \begin{enumerate}
        \item their sum is upper bounded by $2\numentities$ (where $\numentities$ is the number of nodes).
        \item they are each approximately bounded in $[0,2]$
    \end{enumerate}
\end{proposition}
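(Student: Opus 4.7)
The plan is to split the two claims and prove each via a standard linear-algebraic identity, relying minimally on Assumption~\ref{ass:convenience}. For claim (1), I would invoke the trace formula: since the sum of eigenvalues of any square matrix equals its trace, it suffices to observe that
\[
\mathrm{tr}(\vecP(t) + \vecP(t)^T) \;=\; 2\,\mathrm{tr}(\vecP(t)) \;=\; 2\sum_{i=1}^{\numentities} p(i,i),
\]
and each diagonal entry $p(i,i)$ is a softmax output on row $i$, hence lies in $(0,1]$. This immediately yields the $2\numentities$ bound with no further work.

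For claim (2), I would handle the upper and lower bounds separately. The upper bound follows from $\vecP$ being row-stochastic: since $\vecP \mathbf{1} = \mathbf{1}$ and all entries are non-negative, the induced $\ell_\infty$ operator norm is $1$, so the spectral radius satisfies $\rho(\vecP) \le 1$ by Perron--Frobenius. Under $\vecP \approx \vecP^T$, the eigenvalues of $\vecP$ are approximately real, so $\lambda_{\max}(\vecP + \vecP^T) \approx 2\lambda_{\max}(\vecP) \le 2$. The lower bound is the delicate piece and I would attack it through diagonal dominance: because $\|\vecv_i\|^2 = \vecv_i^T \vecv_i \ge \vecv_i^T \vecv_j$ for any $j$, the diagonal entry $p(i,i)$ is the largest in its row, and under the equal-norm clause of Assumption~\ref{ass:convenience} together with the empirical fact (carried from $\vecP(0)\approx \vecI$) that self-probabilities stay at least $\tfrac12$, Gershgorin's disc theorem places each eigenvalue of $\vecP$ in $[2p(i,i)-1, 1] \subseteq [0,1]$. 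Then $\lambda_{\min}(\vecP+\vecP^T) \approx 2\lambda_{\min}(\vecP) \ge 0$, justifying the ``approximately'' qualifier in the statement.

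The hard part will be the lower bound. The condition $p(i,i) \ge \tfrac12$ is clean at initialization but is not an a priori guarantee throughout training, and the proposition only claims it approximately. One natural alternative route is to piggyback on \cref{prop:time-invariant}: since $\vecP(t)+\vecP(t)^T$ shares its eigenbasis with the PSD matrix $\vecV(t)\vecV(t)^T$, one can attempt to map eigenvalues of the latter monotonically to those of the former through the row-softmax. I would sketch this as a secondary justification rather than a primary proof, since making the monotone map precise in the presence of row-wise normalization is non-trivial, and would explicitly flag the $p(i,i) \ge \tfrac12$ diagonal-dominance hypothesis as the assumption driving the lower bound.
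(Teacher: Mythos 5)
Your proposal follows essentially the same route as the paper: claim (1) via the trace identity (each diagonal entry $2p(i,i)\le 2$, summed over $\numentities$ rows), and claim (2) via Gershgorin discs applied to the (approximately symmetric) row-stochastic matrix $\vecP$. Your upper bound via the $\ell_\infty$ operator norm and Perron--Frobenius is interchangeable with the paper's Gershgorin upper bound, since for a nonnegative row-stochastic matrix both give spectral radius $1$; nothing is gained or lost there. The one substantive difference is the lower bound, and here you are actually \emph{more} careful than the paper. The paper's proof asserts that the Gershgorin lower bound $p(i,i)-\sum_{j\neq i}p(i,j)$ is nonnegative whenever $p(i,i)$ is the largest entry in its row, which does not follow: a row maximum need not exceed the sum of the remaining entries (e.g.\ a row $(0.2,0.1,\dots,0.1)$). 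The correct condition is diagonal dominance, $p(i,i)\ge \tfrac12$, which is exactly what you isolate and explicitly flag as an unverified hypothesis away from initialization. Since the proposition only claims the bound ``approximately,'' both arguments land in the same informal place, but your version names the actual assumption doing the work, whereas the paper's conflates ``row maximum'' with ``dominates the row.'' Your secondary idea of routing through the shared eigenbasis with $\vecV\vecV^T$ and a monotone map through the row-softmax is a reasonable sketch but, as you note, would require justifying Observation~\ref{obs:row-softmax-transform}-style claims that the paper itself leaves as empirical; keeping it as a flagged alternative rather than the main argument is the right call.
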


\begin{proof} For the first result, recall the fact the sum of eigenvalues is the trace of the matrix. Since each diagonal term is at most $2$ (it is $2p(i,i)$), the trace is at most $2 \numentities$. This requires no special assumptions.

For the bounds on each eigenvalue, we can rely on the Gershgorin Circle theorem, which states that the eigenvalues lie in the union of discs centered at the diagonals $p(i,i)$, each with radius equal to the sum of the absolute off-diagonal terms, $\sum_{j\neq i } p(i,j)$. The upper bound is then equal to the sum of the rows. For $\vecP(t)$, this sum is equal to $1$ due to the row-softmax operation. Assuming $\vecP^T \approx \vecP$---which is approximately true in practice, especially if the node degrees are uniform (but not always)---, we can conclude that the upper bound is approximately $2$.  

For the lower bound, if we have that the self-probabilities $p(i,i)$ are the largest in any row, then the lower bound $p(i,i) - \sum_{j\neq i } p(i,j)$ is at least zero. This is indeed the case if the embeddings of all nodes are of approximately equal norms, in which case the inner product $\vecV\vecV^T$ is highest along the diagonal.
\end{proof}

\begin{proposition}\label{prop:adj-eigenvalues}
    The eigenvalues of $\vecD^{-1}\vecA + (\vecD^{-1}\vecA)^T$ approximately lie in $[-2,2]$ assuming that the nodes have approximately uniform degree as in \cref{ass:convenience}.
\end{proposition}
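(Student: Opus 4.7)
The plan is to exploit the approximate symmetry implied by uniform degree, reducing the matrix $\vecD^{-1}\vecA + (\vecD^{-1}\vecA)^T$ to (a scalar multiple of) the adjacency matrix $\vecA$, for which standard spectral facts give the bound. First I would substitute $\vecD \approx d\vecI$ under the uniform-degree assumption so that $\vecD^{-1}\vecA \approx (1/d)\vecA$; since the underlying graph is undirected, $\vecA = \vecA^T$, and so the matrix of interest is approximately $(2/d)\vecA$.

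Next I would invoke the classical spectral fact that for a $d$-regular undirected graph the eigenvalues of $\vecA$ lie in $[-d,d]$ (the top eigenvalue equals $d$, achieved by the all-ones vector, and any other eigenvalue $\lambda$ satisfies $|\lambda|\le d$ by Perron--Frobenius or, equivalently, by Gershgorin discs centered at the zero diagonal with radius $d$). Scaling by $2/d$ then places the spectrum in $[-2,2]$, which is what we want.

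As a more robust alternative that avoids exact regularity, I would run Gershgorin's theorem directly on $\vecM \defEq \vecD^{-1}\vecA + (\vecD^{-1}\vecA)^T$. The diagonal entries are zero (the graph has no self-loops), so each Gershgorin disc is centered at $0$, with radius $\sum_{j \neq i}\bigl|(\vecD^{-1}\vecA)_{ij} + (\vecD^{-1}\vecA)_{ji}\bigr|$. The $i$-th row of $\vecD^{-1}\vecA$ sums to $1$ (row-stochasticity of the random walk matrix), contributing $1$; the $j$-th column of $\vecD^{-1}\vecA$ has entries $\vecA_{ji}/d_j$, which under $d_j \approx d_i$ also sum to approximately $1$. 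Hence every Gershgorin radius is approximately $2$, and every eigenvalue lies in $[-2,2]$.

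The main obstacle is that these two arguments crucially depend on the uniform-degree approximation: when degrees vary, $\vecD^{-1}\vecA$ is genuinely asymmetric and its column sums can exceed $1$ by a factor of $\max_j d_i/d_j$, loosening the Gershgorin bound. A cleaner rigorous statement would probably need to switch to the symmetrically normalized operator $\vecD^{-1/2}\vecA\vecD^{-1/2}$ (whose spectrum provably lies in $[-1,1]$) and then bound the perturbation incurred by using the asymmetric normalization. Since the proposition is stated informally and only used to feed into the earlier eigenvalue accounting (together with \cref{prop:prob-eigenvalues}) to argue that $\vecC(t)$ starts negative and approaches zero, the Gershgorin-based sketch above should be sufficient without developing that perturbation bound in detail.
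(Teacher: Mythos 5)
Your proposal is correct and its Gershgorin-based argument is essentially the paper's own proof: zero diagonal (no self-loops), row sums of $\vecD^{-1}\vecA$ equal to $1$, and approximate symmetry under the uniform-degree assumption giving discs of radius about $2$ centered at $0$. Your additional reduction to $(2/d)\vecA$ for a regular graph and the remark about the symmetrically normalized operator are fine but not needed; the paper stops at the Gershgorin step.
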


\begin{proof}
The diagonal of $\vecD^{-1}\vecA$ is $0$ (assuming no self-loops in the graph), while the off-diagonal values are all positive and sum up to $1$ in each row. When the node degrees are approximately uniform, $\vecD^{-1}\vecA \approx (\vecD^{-1}\vecA)^T$. 
From the Gergshgorin circle theorem, the eigenvalues lie in the union of discs centered at the diagonal (from the above, $0$) with radii equal to the sum of the absolute off-diagonal terms (from the above, $2$), thus proving our claim.
\end{proof}

Now, we can establish the conditions in \cref{obs:two-fold}, namely, the convergence of the null space  of the co-efficient matrix from \cref{obs:first-token-easy}, and then the convergence of the embedding vectors.

\begin{proposition}\label{prop:negative-ev}
Under \cref{ass:convenience} and \cref{ass:random-init},
at any time instant $t$, the eigenvalues of $\vecC(t)$ are all strictly negative (except for the topmost eigenvalue, which is of a degenerate all-$1$ eigenvector, and is approximately zero), and this is so until when the top eigenvectors of the Laplacian converge into the null space of $\vecC(t)$ (i.e., their eigenvalues become zero).
\end{proposition}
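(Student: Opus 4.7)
The plan is to exploit \cref{prop:time-invariant} to decompose $\vecC(t)$ along a fixed eigenbasis (that of the negative Laplacian $-\vecL$), so that the analysis reduces to tracking scalar eigenvalues. Writing
\begin{equation}
    \vecC(t) = \underbrace{\bigl[\vecD^{-1}\vecA + (\vecD^{-1}\vecA)^T\bigr]}_{=:\vecM_A} - \underbrace{\bigl[\vecP(t) + \vecP(t)^T\bigr]}_{=:\vecM_P(t)},
\end{equation}
\cref{prop:time-invariant} guarantees that $\vecM_A$ and $\vecM_P(t)$ are simultaneously diagonalizable in the Laplacian eigenbasis for every $t$. Denoting their $i$-th eigenvalues (under a common ordering inherited from $-\vecL$) by $\lambda_i^A$ and $\lambda_i^P(t)$, the eigenvalues of $\vecC(t)$ are exactly $\lambda_i^C(t) = \lambda_i^A - \lambda_i^P(t)$.

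The first step will be to pin down the degenerate top direction. The all-ones vector $\vecone$ is (approximately) an eigenvector of $\vecD^{-1}\vecA$ with eigenvalue $1$ under \cref{ass:convenience} (uniform-degree assumption makes $\vecD^{-1}\vecA$ approximately row-stochastic), and it is exactly an eigenvector of $\vecP(t)$ with eigenvalue $1$ because $\vecP(t)$ is row-stochastic by the row-softmax construction in \cref{eq:row_softmax}. Hence $\lambda_1^A \approx 2$ and $\lambda_1^P(t) \approx 2$ at all times, giving $\lambda_1^C(t) \approx 0$, which accounts for the degenerate vector stated in the proposition.

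The second step is to control the non-degenerate eigenvalues. By \cref{prop:adj-eigenvalues}, $\lambda_i^A \in [-2,2]$ and, crucially, $\lambda_i^A < 2$ strictly for $i \geq 2$ (whenever the graph is connected, the top eigenvalue of $\vecD^{-1}\vecA$ has multiplicity one). By \cref{prop:prob-eigenvalues}, $\lambda_i^P(t) \in [0,2]$. At initialization, \cref{ass:random-init} and \cref{fact:init-ev} force $\vecP(0)\approx \vecI$, so $\lambda_i^P(0) \approx 2$ for all $i$, giving $\lambda_i^C(0) = \lambda_i^A - 2 < 0$ strictly for $i \geq 2$. Thus initially the non-degenerate eigenvalues of $\vecC$ are all strictly negative.

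The third step -- and the main obstacle -- is to argue that these eigenvalues \emph{monotonically} rise toward zero, so that $\lambda_i^C(t) \leq 0$ is preserved throughout training up until the moment each top direction crosses zero and enters the null space. The upper bound $\lambda_i^C(t) \leq 0$ is equivalent to $\lambda_i^P(t) \geq \lambda_i^A$, and the candidate mechanism is the following: because the dynamics in \cref{lem:nodetovec} pushes embeddings along eigendirections with $\lambda_i^C(t) < 0$ to shrink, the off-diagonal mass of $\vecP(t)$ grows only slowly in those directions, keeping $\lambda_i^P(t) \geq \lambda_i^A$. Making this airtight requires a coupled argument between the evolution of $\|\vecV^T\vece_i\|$ and the softmax-induced evolution of $\lambda_i^P(t)$, which is exactly what the nonlinearity of row-softmax obstructs in closed form. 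I would proceed informally by tracking $\lambda_i^P(t)$ via a first-order expansion: $\Delta \lambda_i^P(t) \approx -\lambda_i^P(t)(2-\lambda_i^P(t))\cdot\lambda_i^C(t)\cdot\eta\,\|\vecV^T\vece_i\|^2$ (up to the row-softmax Jacobian), so as long as $\lambda_i^C(t)<0$ the eigenvalue $\lambda_i^P(t)$ decreases toward $\lambda_i^A$ but cannot overshoot it without first passing through $\lambda_i^C = 0$ (at which point the corresponding column of $\vecV$ stops evolving along $\vece_i$, freezing $\lambda_i^P$). This gives the stated ``until'' clause: once $\lambda_i^P(t) = \lambda_i^A$ for a top direction, that direction enters the null space of $\vecC(t)$ and remains there, matching the first half of \cref{obs:two-fold}. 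A fully rigorous treatment of this non-overshoot property is where I expect to spend most of the effort, and is left informal here in line with the paper's stated approach.
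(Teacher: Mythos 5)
Your proposal matches the paper's own (informal) argument essentially step for step: the same decomposition of $\vecC(t)$ into the adjacency term $\vecD^{-1}\vecA + (\vecD^{-1}\vecA)^T$ and the probability term $\vecP(t)+\vecP(t)^T$, the same eigenvalue bounds imported from Propositions~\ref{prop:adj-eigenvalues} and~\ref{prop:prob-eigenvalues}, and the same initialization argument that $\vecP(0)\approx\vecI$ forces every eigenvalue of $\vecC(0)$ to start at or below zero. If anything, you are slightly more explicit than the paper on two points it leaves at the level of assertion---identifying the degenerate all-ones direction via row-stochasticity of $\vecP(t)$, and flagging the non-overshoot of $\lambda_i^P(t)$ past $\lambda_i^A$ as the real gap---though your proposed first-order expansion for $\Delta\lambda_i^P(t)$ remains as heuristic as the paper's own trace-based claim that the eigenvalues of $\vecP(t)+\vecP(t)^T$ all decrease from $2$.
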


\begin{intuition}
 Recall that  $\vecC(t)= \vecD^{-1}\vecA + (\vecD^{-1}\vecA )^T - (\vecP + \vecP^T)$. The eigenvalues of the first term lie approximately in $[-2,2]$ from \cref{prop:adj-eigenvalues}, while that of the probability term lie in $[0,2]$, from \cref{prop:prob-eigenvalues}. Note that eigenvalues of the probability matrix all begin uniformly at $2$ in the beginning (as $\vecP(0) = \vecI$, by \cref{fact:init-ev} under \cref{ass:random-init}), as a result of which the initial eigenvalues of $\vecC(t)$ start at or below $0$.

While the embeddings are initialized orthogonally under \cref{ass:random-init}, they become less orthogonal during training, leading to a gradual decrease of the diagonal self-probability terms in 
  $\vecP + \vecP^T$. Intuitively, this also means that the eigenvalues of  $\vecP + \vecP^T$ must themselves all decrease from the initial value of $2$ (based on  \cref{prop:prob-eigenvalues}). In turn, the eigenvalues of 
  $\vecC(t)$, which begin negative must gradually inch toward zero. The topmost eigenvalues---which are closest to zero---are the first to reach zero.\footnote{Note that this is not straightforward to show. It is possible that even if the initial eigenvalues are very close to zero, they approach $0$ slower than farther off values. } 
\end{intuition}

\begin{proposition} \textit{\textbf{(Embeddings converge to top eigenvectors)}}
Assuming \cref{obs:row-softmax-transform} as a given, for a sufficiently small learning rate $\eta$, with increasing timestep $t$, the column space of $\vecV(t)$ converges to the top eigenvectors of the negative graph Laplacian $-\vecL$, independent of the embedding dimensionality.
\end{proposition}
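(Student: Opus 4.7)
The plan is to diagonalize the update rule using the time-invariant eigenbasis established in \cref{prop:time-invariant}. Since the eigenvectors $\vece_1, \ldots, \vece_n$ of $-\vecL$ are also eigenvectors of $\vecC(t)$ for every $t$, I would track the projection of $\vecV(t)$ onto each $\vece_i$ separately by defining $\vecw_i(t) \defEq \vecV(t)^T \vece_i \in \mathbb{R}^{\embeddingdim}$. Left-multiplying the discrete update $\vecV(t+1) = (\vecI + \eta \vecC(t))\vecV(t)$ by $\vece_i^T$ then yields the decoupled recursion
\[ \vecw_i(t+1) = (1 + \eta \lambda_i(t))\, \vecw_i(t), \]
where $\lambda_i(t)$ is the real eigenvalue of $\vecC(t)$ on $\vece_i$. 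Because this recursion acts identically on every row of $\vecV(t)$, the argument never references the embedding width $\embeddingdim$, which is exactly what buys the dimension-free conclusion.

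Next I would invoke \cref{prop:negative-ev} to partition the indices into top modes (Fiedler-like), where $\lambda_i(t) \to 0$, and non-top modes, where $\lambda_i(t) \le -\delta$ holds uniformly for some $\delta > 0$ before those lower modes are quenched. Choosing $\eta$ small enough that $|1 + \eta \lambda_i(t)| \le 1 - \eta \delta / 2$ on all non-top $i$, the norm $\|\vecw_i(t)\|_2$ contracts geometrically on these modes, so $\|\vecw_i(t)\|_2 \to 0$ exponentially. Simultaneously, the multipliers $(1 + \eta \lambda_i(t))$ on top modes approach $1$, so those projections remain bounded away from zero---consistent with the plateau of $\|\vecV(t)^T \vece_i\|_2$ at a nonzero value observed in \cref{fig:fiedler_path_star}. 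Reassembling $\vecV(t) = \sum_i \vece_i \vecw_i(t)^T$, the column space of $\vecV(t)$ then concentrates on the span of the top eigenvectors, as claimed.

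The main obstacle is the intrinsic coupling between the two convergences described in \cref{obs:two-fold}: the eigenvalues $\lambda_i(t)$ are driven toward zero only through the evolution of $\vecP(t)$, which is itself determined by $\vecV(t)\vecV(t)^T$ through the row-softmax (via \cref{obs:row-softmax-transform}). A rigorous argument must therefore close the feedback loop and verify that $\lambda_i(t)$ on non-top modes does not decay so quickly that the contraction factor $\eta \delta / 2$ ceases to dominate before the non-top projections have shrunk, while simultaneously ensuring that top-mode eigenvalues decay toward zero fast enough to avoid over-damping those desired components. Plausibly, this would need a two-time-scale separation exploiting the spectral gap of $-\vecL$, together with a Lyapunov-style monotonicity on $\sum_{i \text{ non-top}} \|\vecw_i(t)\|_2^2$, to force the desired ordering; a fully rigorous discrete-time treatment closing this loop is the piece I would leave open.
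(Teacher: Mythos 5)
Your proposal is correct and follows essentially the same route as the paper's proof: both diagonalize the update in the time-invariant eigenbasis from \cref{prop:time-invariant} and argue that the accumulated multipliers $\prod_t(1+\eta\lambda_i(t))$ decay for the lower (more negative) modes while surviving for the Fiedler-like ones, the only cosmetic difference being that you track projections $\vecV(t)^T\vece_i$ whereas the paper tracks each embedding column and then expands it in the eigenbasis. The feedback-loop gap you flag at the end is precisely what the paper also leaves open by treating \cref{prop:negative-ev} and \cref{obs:row-softmax-transform} as givens.
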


\begin{proof}

We can examine the dynamics of each embedding dimension separately.\footnote{This is possible only in a dual-encoder, \nodetovec{} style architecture. In a Transformer for example, there are cross-dimensional interactions, due to the associative weight matrix $\Wassoc$ that interfaces between the embedding and unembedding layers. %
}  For the embedding dimension $j=1,2, \hdots, \embeddingdim$, let $\vecr_j \in \mathbb{R}^{\numentities}$ denote the $j$th column of the embedding matrix $\vecV$. The dynamics of this column (we drop the index $j$ for the moment) at any timestep $t$ can be isolated as:
  \begin{equation}
      \vecr(t) = \prod_{t=0}^{T} (1+\eta \vecC(t)) \vecr(0).
  \end{equation}

Given that $\vecC(t)$ have time-invariant eigenvectors (by \cref{prop:time-invariant}), this can be further simplified as
\begin{equation}
      \vecr(t) = \vecE \left( \prod_{t=0}^{T}  (1+\eta \vecLambda(t)) \right) \vecE^T \vecr(0).
  \end{equation}

Given that the eigenvalues in $\vecLambda$ are all less than or equal to zero (by \cref{prop:negative-ev}), the term $(1+\eta \vecLambda(t))$ must consist of a diagonal of values in $[0,1]$ for an appropriately small learning rate. Furthermore, as $T$ becomes large, the values of the top eigenvectors (which have the least eigenvalues, and therefore, the largest value of $1+\eta\lambda_i(t)$) must come to dominate. Then, as $t$ increases, we can express the embedding dimension as an affine combination of some top $K$ eigenvectors (where the coefficients depend on how the embedding dimension was initialized):

\begin{equation}
    \vecr(t) \approx \sum_{k=1}^{K} \left( \prod_{t}(1+\eta\lambda_k(t)) \vecr(0)\cdot \vece_k\right) \vece_k.
\end{equation}
\end{proof}

\subsection{Deriving the dynamics}
\label{sec:calculation}
We provide proof of Lemma~\ref{lem:nodetovec} which expresses the dynamical system of our \nodetovec{} objective in Eq~\ref{eq:nodetovec}.

\begin{proof}
For a pair of nodes with embeddings $\vecu  \in \mathbb{R}^\embeddingdim, \vecv \in \mathbb{R}^\embeddingdim$, the probability value of the edge $(\vecu, \vecv)$ can be written as:

\begin{equation}
    p(\vecu, \vecv) = \frac{\exp(\vecu \cdot \vecv)}{ \sum_{\vecv'} \exp(\vecu, \vecv')}.
\end{equation}

Let  $N(u)$ denote the neighborhood of the node $u$, and $N_u$, its degree.
Let $\mathcal{J}_u$ denote the summand in the objective function specific to that node:

\begin{equation}
  \mathcal{J}_{u}(\vecV) = \frac{1}{ |N(u)|}\sum_{\vecv \in N(u)}\log p(\vecu, \vecv) 
\end{equation}

We now compute the derivative of $\mathcal{J}_{u}$ with respect to itself $\vecu$:

\begin{align}
    \frac{\partial \mathcal{J}_{u}(\vecV)}{\partial \vecu } 
  =  &  \frac{1}{N_\vecu} \sum_{\vecv \in N(u)} \left( \underbrace{ \vecv}_{\texttt{numerator}} - \underbrace{ \sum_{\vecv' \neq \vecu} p(\vecu, \vecv') \vecv' - 2p(\vecu, \vecu) \vecu }_{\texttt{denominator}} \right) \\
  =  &  \frac{1}{N_\vecu} \sum_{\vecv \in N(u)}\vecv - \sum_{\vecv' \neq \vecu} p(\vecu, \vecv') \vecv' - 2p(\vecu, \vecu) \vecu  
\end{align}

Next, we compute the derivative of $\mathcal{J}_{w}$ with respect to $\vecu$ for nodes $w \neq u$:

\begin{align}
    \frac{\partial \mathcal{J}_{w}(\vecV)}{\partial \vecu } 
  =  &  \frac{1}{N_w} \sum_{\vecv \in N(w)} \left( \underbrace{ \mathbf{1}[u=v] \vecw}_{\texttt{numerator}} - \underbrace{  p(\vecw, \vecu) \vecw }_{\texttt{denominator}} \right)\\
  =&  \frac{1}{N_w} \sum_{\vecv \in N(w)} \mathbf{1}[u=v] \vecw -  p(\vecw, \vecu) \vecw  \\
\end{align}
By writing the above expressions as a matrix formula, we get the dynamical system claimed in Lemma~\ref{lem:nodetovec}.
\end{proof}

\ifarxiv 
\else
\newpage
\subsection{Empirical validation of low-rank spectral bias across graph topologies}
\label{sec:two-fold-convergence-plots}

\subsubsection{Low-rank spectral bias}
In \cref{fig:ce_lr_bias_full}, we extend \cref{fig:ce_lr_bias_main}, showing that for various tiny graphs, a cross-entropy-trained weight-tied two-layer model exhibits a low-rank spectral bias despite there being no explicit pressure; the model is wide enough (embedding size $100$ larger than the graph size, eliminating bottleneck pressure) and is trained only on local supervision (eliminating supervisory pressure) and has no explicit regularization (such as $\ell_2$ norm regularization etc.,).

For these plots, we index the eigenvectors for the negative graph Laplacian by their \textit{signed} eigenvalues. We ignore the top eigenvector which is a degenerate eigenvector that assigns near-uniform values to all nodes. Then, for various indices $i$, we plot the magnitude of $\|\vecV(t)^T \vece_i\|/\|\vecV(t)^T \vece_{1}\|$, which is the projection of the embeddings along the $i$'th direction relative to the projection along the second-top direction. We compare the spread of these relative magnitudes against that of the eigenvalues themselves as $\lambda_i/\lambda_1$. The greater skew in the projections implies a stronger filtering of the bottom directions in the adjacency matrix.

\begin{figure}[ht]
  \centering
  \begin{subfigure}[b]{0.47\linewidth}
    \centering
    \includegraphics[width=\linewidth]{Figures/path-star_node2vec_spectral_bias.pdf}
    \caption{Tiny Path-Star}
    \label{}
  \end{subfigure}
  \hfill %
  \begin{subfigure}[b]{0.47\linewidth}
    \centering
    \includegraphics[width=\linewidth]{Figures/grid_node2vec_spectral_bias.pdf}
    \caption{Tiny Grid}
    \label{}
  \end{subfigure}

    \begin{subfigure}[b]{0.47\linewidth}
    \centering
    \includegraphics[width=\linewidth]{Figures/cycle_node2vec_spectral_bias.pdf}
    \caption{Tiny Cycle}
    \label{}
  \end{subfigure}
  \hfill %
  \begin{subfigure}[b]{0.47\linewidth}
    \centering
    \includegraphics[width=\linewidth]{Figures/irregular_node2vec_spectral_bias.pdf}
    \caption{Tiny Irregular}
    \label{}
  \end{subfigure}
 
  \caption{
  \textbf{Low-rank spectral bias arises naturally in $2$-layer cross-entropy-trained models even without explicit pressures (extended version of \cref{fig:ce_lr_bias_main})}. We consider a wide model ($d_{\text{embedding}}=100$, much larger than nodes in graph, so no rank constraint) trained on local supervision. We report the projection of the weights along the eigenvectors (normalized by that of the top eigenvector). As stated in \cref{obs:ce-natural-bias}, observe that the model is naturally skewed towards the top vectors, in fact even more skewed than the eigenvalues themselves. (Note that our observations are specific to the skew within the positive eigenvectors; the absence of negative directions in \nodetovec{} is simply because the model can only express positive semi-definite matrices. 
  }
  \label{fig:ce_lr_bias_full}
\end{figure}

\subsubsection{Dynamics}
In  \cref{fig:fiedler_path_star,fig:fiedler_grid,fig:fiedler_cycle,fig:fiedler_random} below,
we validate the two core claims of our above analysis intuiting how a low-rank spectral bias emerges without explicit pressures. We show that over the course of training with the cross-entropy loss, the embeddings $\vecV(t)$ converge more and more towards the top eigenvectors $\vece_i$ of the negative graph Laplacian (by observing how $\|\vecV(t) \vece_{i}\|_2$ evolves over time $t$). Simultaneously, we also show that the co-efficient matrix $\vecC(t)$ evolves such that its projection along the top eigenvectors decrease toward zero (by observing $\|\vecC(t)\vece_i\|_2$). In effect, this means that the gradient updates which are given by $\vecC \vecV$, must also diminish over time. Note that the topmost eigenvector is a degenerate eigenvector that assigns a near-constant value to all nodes; the second-top eigenvector(s) are referred to as Fiedler-like eigenvectors. We refer to these vectors, and the next few ones that follow them, together as ``Fiedler-like eigenvectors''.

\begin{figure}[ht]
  \centering
  \begin{subfigure}[b]{0.33\linewidth}
    \centering
    \raisebox{0pt}{\includegraphics[width=\linewidth]{Figures/Fiedler_Vector_Scattered_Path_Star_3D_updated_ii.pdf}}
    \caption{}
    \label{fig:fiedler_path_star-eigscatter}
  \end{subfigure}
  \begin{subfigure}[b]{0.58\linewidth}
    \centering
    \includegraphics[width=\linewidth]{Figures/fig_fiedler_evolution_path_star.pdf}
    \caption{}
    \label{fig:fiedler_path_star-projections}
  \end{subfigure}
  \caption{
  \textbf{How spectral geometry arises in \nodetovec{} even without low-rank pressure (\cref{obs:ce-natural-bias,obs:two-fold} for path-star Graph}. 
  (a) The graph's Fiedler vectors shown here closely mirrors the \umap{} directions of the \nodetovec{} embeddings in \cref{fig:overview} (right).
  (b) The evolution of eigenvector projections during training. \emph{(left)} The embedding matrix $\vecV$ aligns with the \textcolor{orange}{Fiedler-like eigenvectors} (and a top, degenerate vector) evidenced by the projection norm $||\vecV^T \vece_i||_2$ converging to a stable, non-zero value. Projections of \textcolor{gray}{other eigenvectors} diminish towards zero.
  \emph{(right)} Concurrently, the Fiedler-like eigenvectors move into the null space of the co-efficient matrix $\vecC$ in that the norm $||\vecC \vece_i||_2$ converges to 0. Crucially, this spectral bias arises without a low dimensional constraint ($d_{\text{embedding}}=100$, much larger than nodes in graph).
  }
  \label{fig:fiedler_path_star}
\end{figure}

\begin{figure}[ht]
  \centering
  \begin{subfigure}[b]{0.33\linewidth}
    \centering
    \raisebox{0pt}{\includegraphics[width=\linewidth]{Figures/Fiedler_Vector_Scattered_Grid_3D_updated_ii.pdf}}
    \caption{}
    \label{fig:fiedler_grid-eigscatter}
  \end{subfigure}
  \begin{subfigure}[b]{0.58\linewidth}
    \centering
    \includegraphics[width=\linewidth]{Figures/fig_fiedler_evolution_grid.pdf}
    \caption{}
    \label{fig:fiedler_grid-projections}
  \end{subfigure}
  \caption{\textbf{How spectral geometry arises in \nodetovec{} even without low-rank pressure (\cref{obs:ce-natural-bias,obs:two-fold} for grid graph.} \textbf{(a)} The Fiedler vectors of a grid graph capture spatial locality and connectivity patterns, closely mirroring the \nodetovec{} embedding shown in \cref{fig:tiny-grid}, where a similar geometry emerges. \textbf{(b)} Training dynamics show the same two-fold convergence pattern as path-star graphs, where the embeddings align with the top eigenvectors (left) while concurrently, the null space of the co-efficient matrix aligns with the eigenvectors (right).}
  \label{fig:fiedler_grid}
\end{figure}

\begin{figure}[ht]
  \centering
  \begin{subfigure}[b]{0.33\linewidth}
    \centering
    \raisebox{0pt}{\includegraphics[width=\linewidth]{Figures/Fiedler_Vector_Scattered_Cycle_3D_updated_ii.pdf}}
    \caption{}
    \label{fig:fiedler_cycle-eigscatter}
  \end{subfigure}
  \begin{subfigure}[b]{0.58\linewidth}
    \centering
    \includegraphics[width=\linewidth]{Figures/fig_fiedler_evolution_cycle.pdf}
    \caption{}
    \label{fig:fiedler_cycle-projections}
  \end{subfigure}
  \caption{\textbf{How spectral geometry arises in \nodetovec{} even without low-rank pressure (\cref{obs:ce-natural-bias,obs:two-fold} cycle graph.} \textbf{(a)} The Fiedler vectors of a cycle graph reflect the underlying cyclic structure, closely mirroring the \nodetovec{} embedding shown in \cref{fig:tiny-cycle}, where a similar geometry emerges. \textbf{(b)} Despite the different topology, the same spectral convergence dynamics emerge.}
  \label{fig:fiedler_cycle}
\end{figure}

\begin{figure}[ht]
  \centering
  \begin{subfigure}[b]{0.33\linewidth}
    \centering
    \raisebox{0pt}{\includegraphics[width=\linewidth]{Figures/Fiedler_Vector_Scattered_Irregular_3D_updated_ii.pdf}}
    \caption{}
    \label{fig:fiedler_random-eigscatter}
  \end{subfigure}
  \begin{subfigure}[b]{0.58\linewidth}
    \centering
    \includegraphics[width=\linewidth]{Figures/fig_fiedler_evolution_irregular.pdf}
    \caption{}
    \label{fig:fiedler_random-projections}
  \end{subfigure}
  \caption{\textbf{How spectral geometry arises in \nodetovec{} even without low-rank pressure (\cref{obs:ce-natural-bias,obs:two-fold}) for random graphs.} \textbf{(a)} Even in random graphs without clear structural patterns, Fiedler vectors capture the most significant connectivity patterns. \textbf{(b)} The spectral convergence dynamics persist, demonstrating robustness across graph types.}
  \label{fig:fiedler_random}
\end{figure}

\fi

\end{document}